\DeclareMathOperator*{\E}{\mathbb{E}}
\let\Pr\relax
\DeclareMathOperator*{\Pr}{\mathbb{P}}
\DeclareMathOperator*{\poly}{\textnormal{poly}}
\let\emph\relax 
\DeclareTextFontCommand{\emph}{\bfseries\em}
\newcommand{\eps}{\epsilon}
\newcommand{\inprod}[1]{\left\langle #1 \right\rangle}
\newcommand{\R}{\mathbb{R}}
\newcommand{\norm}[1]{\left\|#1 \right\|}
\newcommand{\ceil}[1]{\left\lceil #1 \right\rceil}
\newcommand{\logit}{\textbf{logit}}
\title{\huge Pruning Before Training May Improve Generalization, Provably}
\author
{
     Hongru Yang \thanks{Department of Computer Science, The University of Texas at Austin; e-mail:  {\tt hy6385@utexas.edu}} 
    ~~~
	Yingbin Liang \thanks{Department of Electrical and Computer Engineering, The Ohio State University; e-mail: {\tt liang.889@osu.edu}} 
	~~~
	Xiaojie Guo\thanks{IBM Thomas.J. Watson Research Center; e-mail: {\tt xguo7@gmu.edu}} 
	~~~
	Lingfei Wu\thanks{Pinterest; e-mail: {\tt lwu@email.wm.edu}}
	~~~
	Zhangyang Wang \thanks{Department of Electrical and Computer Engineering, The University of Texas at Austin; e-mail: {\tt atlaswang@utexas.edu}}
}
\date{}
\newtheorem{condition}[theorem]{Condition}
\begin{document}

\maketitle

\begin{abstract}
It has been observed in practice that applying pruning-at-initialization methods to neural networks and training the sparsified networks can not only retain the testing performance of the original dense models, but also sometimes even slightly boost the generalization performance. 
Theoretical understanding for such experimental observations are yet to be developed. 
This work makes the first attempt to study how different pruning fractions affect the model's gradient descent dynamics and generalization. 
Specifically, this work considers a classification task for overparameterized two-layer neural networks, where the network is randomly pruned according to different rates at the initialization. 
It is shown that as long as the pruning fraction is below a certain threshold, gradient descent can drive the training loss toward zero and the network exhibits good generalization performance.  
More surprisingly, the generalization bound gets better as the pruning fraction gets larger. 
To complement this positive result, this work further shows a negative result: there exists a large pruning fraction such that while gradient descent is still able to drive the training loss toward zero (by memorizing noise), the generalization performance is no better than random guessing. 
This further suggests that pruning can change the feature learning process, which leads to the performance drop of the pruned neural network. 
\end{abstract}

\section{Introduction}
Neural network pruning can be dated back to the early stage of the development of neural networks \citep{lecun1989optimal}. 
Since then, many research works have been focusing on using neural network pruning as a model compression technique, e.g. \citep{molchanov2019pruning, luo2017entropy, ye2020good, yang2021dynamic}. 
However, all these work focused on pruning neural networks after training to reduce inference time, and, thus, the efficiency gain from pruning cannot be directly transferred to the training phase. 
It is not until the recent days that \citet{frankle2018lottery} showed a surprising phenomenon: a neural network pruned at the initialization can be trained to achieve competitive performance to the dense model. 
They called this phenomenon the lottery ticket hypothesis. 
The lottery ticket hypothesis states that there exists a sparse subnetwork inside a dense network at the random initialization stage such that when trained in isolation, it can match the test accuracy of the original dense network after training for at most the same number of iterations. 
On the other hand, the algorithm \citet{frankle2018lottery} proposed to find the lottery ticket requires many rounds of pruning and retraining which is computationally expensive. 
Many subsequent works focused on developing new methods to reduce the cost of finding such a network at the initialization \citep{lee2018snip, wang2019picking, tanaka2020pruning, liu2020finding, chen2021only}. 
A further investigation by \citet{frankle2020pruning} showed that some of these methods merely discover the layer-wise pruning ratio instead of sparsity pattern. 

The discovery of the lottery ticket hypothesis sparkled further interest in understanding this phenomenon.
Another line of research focused on finding a subnetwork inside a dense network at the random initialization such that the subnetwork can achieve good performance \citep{zhou2019deconstructing, ramanujan2020s}. 
Shortly after that, \citet{malach2020proving} formalized this phenomenon which they called the strong lottery ticket hypothesis: under certain assumption on the weight initialization distribution, a sufficiently overparameterized neural network at the initialization contains a subnetwork with roughly the same accuracy as the target network. 
Later, \citet{pensia2020optimal} improved the overparameterization parameters and \citet{sreenivasan2021finding} showed that such a type of result holds even if the weight is binary. 
Unsurprisingly, as it was pointed out by \citet{malach2020proving}, finding such a subnetwork is computationally hard. 
Nonetheless, all of the analysis is from a function approximation perspective and none of the aforementioned works have considered the effect of pruning on gradient descent dynamics, let alone the neural networks' generalization.

Interestingly, via empirical experiments, people have found that sparsity can further improve generalization in certain scenarios \citep{chen2021sparsity, ding2021audio, he2022sparse}. There have also been empirical works showing that {\em random pruning} can be effective \citep{frankle2020pruning, su2020sanity,liu2021unreasonable}.
However, theoretical understanding of such benefit of pruning of neural networks is still limited. 
In this work, we take the first step to answer the following important open question from a theoretical perspective: 
\begin{quote}
   {\em How does pruning fraction affect the training dynamics and the model's generalization, if the model is pruned at the initialization and trained by gradient descent? }
\end{quote}
We study this question using random pruning. 
We consider a classification task where the input data consists of class-dependent sparse signal and random noise. 
We analyze the training dynamics of a two-layer convolutional neural network pruned at the initialization. Specifically, this work makes the following contributions:
\begin{itemize}
    \item \textbf{Mild pruning.} We prove that there indeed exists a range of pruning fraction where the pruning fraction is small and the generalization error bound gets better as pruning fraction gets larger. 
    In this case, the signal in the feature is well-preserved and due to the effect of pruning purifying the feature, the effect from noise is reduced. 
    We provide detailed explanation in {Section \ref{section: mild_pruning}}.
    
    \item \textbf{Over pruning.} To complement the above positive result, we also show a negative result: if the pruning fraction is larger than a certain threshold, then the generalization performance is no better than a simple random guessing, although gradient descent is still able to drive the training loss toward zero. 
    This further suggests that the performance drop of the pruned neural network is not solely caused by the pruned network's own lack of trainability or expressiveness, but also by the change of gradient descent dynamics due to pruning. 
    
    \item Technically, we develop novel analysis to bound pruning effect to weight-noise and weight-signal correlation. Further, in contrast to many previous works that considered only the binary case, our analysis handles multi-class classification with general cross-entropy loss. Here, a key technical development is a gradient upper bound for multi-class cross-entropy loss, which might be of independent interest. 
    
\end{itemize}
Pictorially, our result is summarized in Figure \ref{figure: result_demo}. 
We point out that the neural network training we consider is in the {\em feature learning} regime, where the weight parameters can go far away from their initialization. 
This is \emph{fundamentally different} from the popular {\em neural tangent kernel} regime, where the neural networks essentially behave similar to its linearization. 

\begin{figure}
\vskip -0.2in
    \centering
    \includegraphics[scale=0.3]{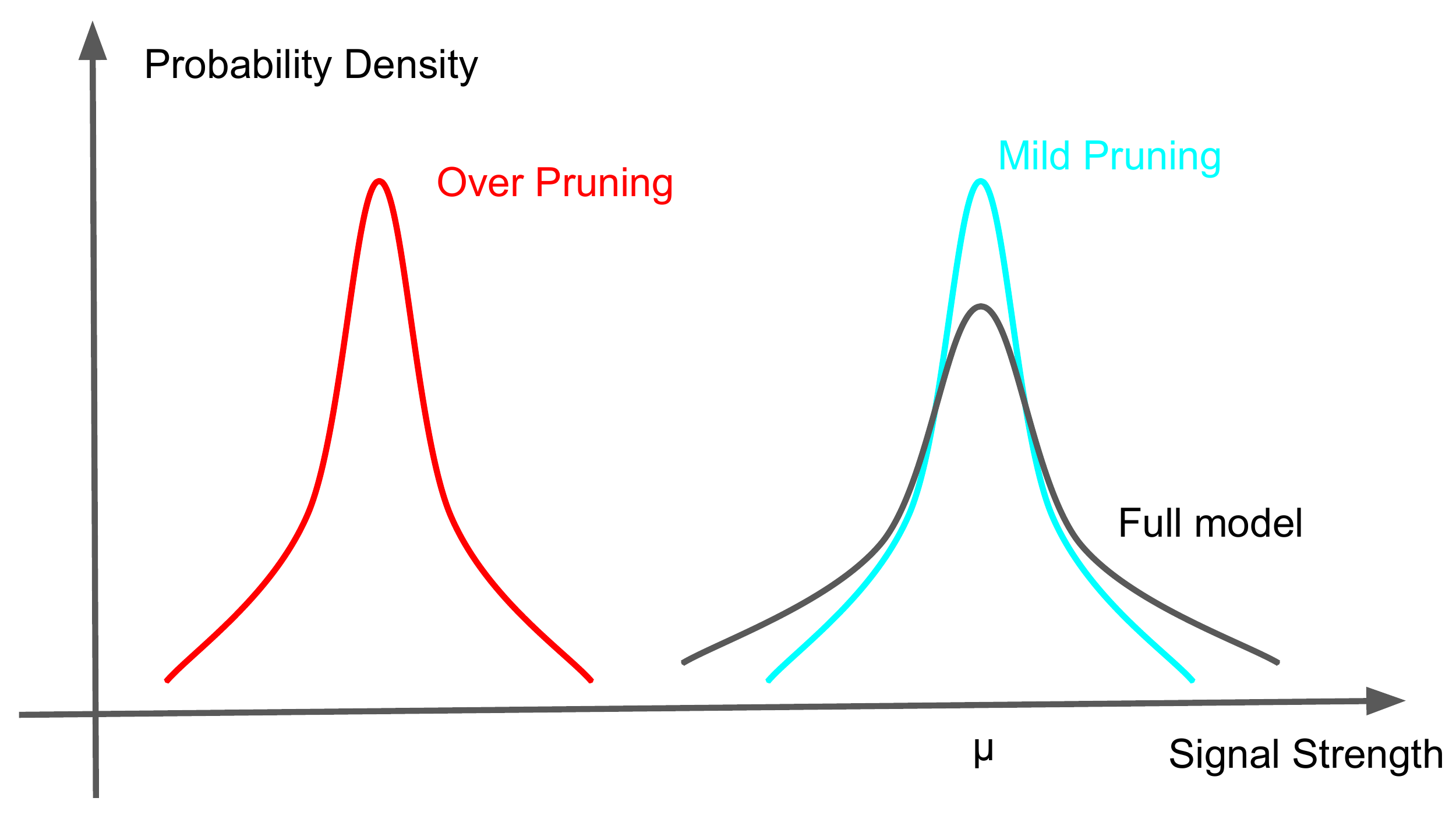}
    \caption{
    A pictorial demonstration of our results. 
    The bell-shaped curves model the distribution of the signal in the features, where the mean represents the signal strength and the width of the curve indicates the variance of noise. Our results show that mild pruning preserves the signal strength and reduces the noise variance (and hence yields better generalization), whereas over pruning lowers signal strength albeit reducing noise variance.}
    \label{figure: result_demo}
    \vskip -0.2in
\end{figure}

\subsection{Related Works}
\textbf{The Lottery Ticket Hypothesis and Sparse Training.}
The discovery of the lottery ticket hypothesis \citep{frankle2018lottery} has inspired further investigation and applications. 
One line of research has focused on developing computationally efficient methods to enable sparse training: the static sparse training methods are aiming at identifying a sparse mask at the initialization stage based on different criterion such as SNIP (loss-based) \citep{lee2018snip}, GraSP (gradient-based) \citep{wang2019picking}, SynFlow (synaptic strength-based) \citep{tanaka2020pruning}, neural tangent kernel based method \citep{liu2020finding} and one-shot pruning \citep{chen2021only}. 
Random pruning has also been considered in static sparse training such as uniform pruning \citep{mariet2015diversity, he2017channel, gale2019state, suau2018network}, non-uniform pruning \citep{mocanu2016topological}, expander-graph-related techniques \citep{prabhu2018deep, kepner2019radix} Erd\"os-R\'enyi \citep{mocanu2018scalable} and Erd\"os-R\'enyi-Kernel \citep{evci2020rigging}.
On the other hand, dynamic sparse training allows the sparse mask to be updated \citep{mocanu2018scalable, mostafa2019parameter, evci2020rigging, jayakumar2020top, liu2021sparse, liu2021we, liu2021sparsetraining, peste2021ac}.
The sparsity pattern can also be learned by using sparsity-inducing regularizer \citep{Yang2020DeepHoyer:}.
Recently, \citet{he2022sparse} discovered that pruning can exhibit a double descent phenomenon when the data-set labels are corrupted.

Another line of research has focused on studying pruning the neural networks at its random initialization to achieve good performance \citep{zhou2019deconstructing, ramanujan2020s}. 
In particular, \citet{ramanujan2020s} showed that it is possible to prune a randomly initialized wide ResNet-50 to match the performance of a ResNet-34 trained on ImageNet.
This phenomenon is named the strong lottery ticket hypothesis. 
Later, \citet{malach2020proving} proved that under certain assumption on the initialization distribution, a target network of width $d$ and depth $l$ can be approximated by pruning a randomly initialized network that is of a polynomial factor (in $d,l$) wider and twice deeper even without any further training. 
However finding such a network is computationally hard, which can be shown by reducing the pruning problem to optimizing a neural network.
Later, \citet{pensia2020optimal} improved the widening factor to being logarithmic and \citet{sreenivasan2021finding} proved that with a polylogarithmic widening factor, such a result holds even if the network weight is binary.
A follow-up work shows that it is possible to find a subnetwork achieving good performance at the initialization and then fine-tune \citep{sreenivasan2022rare}.
Our work, on the other hand, analyzes the gradient descent dynamics of a pruned neural network and its generalization after training. 

\textbf{Analyses of Training Neural Networks by Gradient Descent.}
A series of work \citep{allen2019convergence, du2019gradient, lee2019wide, zou2020gradient, zou2019improved, ji2019polylogarithmic, chen2020much, song2019quadratic, oymak2020toward} has proved that if a deep neural network is wide enough, then (stochastic) gradient descent provably can drive the training loss toward zero in a fast rate based on neural tangent kernel (NTK) \citep{jacot2018neural}. 
Further, under certain assumption on the data, the learned network is able to generalize \citep{cao2019generalization, arora2019fine}. 
However, as it is pointed out by \citet{chizat2019lazy}, in the NTK regime, the gradient descent dynamics of the neural network essentially behaves similarly to its linearization and the learned weight is not far away from the initialization, which prohibits the network from performing any useful feature learning. 
In order to go beyond NTK regime, one line of research has focused on the mean field limit \citep{song2018mean, chizat2018global, rotskoff2018neural, wei2019regularization, chen2020generalized, sirignano2020mean, fang2021modeling}. 
Recently, people have started to study the neural network training dynamics in the feature learning regime where data from different class is defined by a set of class-related signals which are low rank \citep{allen2020towards, allen2022feature, cao2022benign, shi2021theoretical, telgarsky2022feature}. 
However, all previous works did not consider the effect of pruning.
Our work also focuses on the aforementioned feature learning regime, but for the first time characterizes the impact of pruning on the generalization performance of neural networks.

A prior work that is related to ours is \citep{zhang2021lottery} which assumes the underlying data is generated by a sparse network and some variant of the true mask is known before training.
They analyze training a sparse neural network and show that the sparse network can enjoy both faster convergence and smaller sample complexity. 
Our work also analyzes training a sparse neural network using gradient descent. 
However, different from their work, we don't assume any information related to sparsity of the data is known before training and we prune the neural network simply by random pruning. 

\section{Preliminaries and Problem Formulation}
In this section, we introduce our notation, data generation process, neural network architecture and the optimization algorithm.

\textbf{Notations.}
We use lower case letters to denote scalars and boldface letters and symbols (e.g. $\mathbf{x}$) to denote vectors and matrices.
We use $\odot$ to denote element-wise product. 
For an integer $n$, we use $[n]$ to denote the set of integers $\{1,2,\ldots, n\}$.
We use $x = O(y), x = \Omega(y), x = \Theta(y)$ to denote that there exists a constant $C$ such that $x \leq Cy,\ x \geq Cy, x = Cy$ respectively.
{We use $\widetilde{O}, \widetilde{\Omega}$ and $\widetilde{\Theta}$ to hide polylogarithmic factor in these notations. }
Finally, we use $x = \poly(y)$ if $x = O(y^C)$ for some positive constant $C$, and $x = \poly \log y$ if $x = \poly (\log y)$.

\subsection{Settings}

\begin{definition}[Data distribution of $K$ classes]
Consider we are given the set of signal vectors $\{\mu \mathbf{e}_i\}_{i=1}^K$, where $\mu > 0$ denotes the strength of the signal, and $ \mathbf{e}_i$ denotes the $i$-th standard basis vector with its $i$-th entry being 1 and all other coordinates being 0. 
Each data point $(\mathbf{x},y)$ with $\mathbf{x} = [\mathbf{x}_1^\top, \mathbf{x}_2^\top]^\top \in \R^{2d}$ and $y \in [K]$ is generated from the following distribution $\mathcal{D}$:
\begin{enumerate}
    \item The label $y$ is generated from a uniform distribution over $[K]$. 
    \item A noise vector $\boldsymbol{\xi}$ is generated from the Gaussian distribution $\mathcal{N}(\mathbf{0}, \sigma_n^2 \mathbf{I})$.
    \item With probability $1/2$, assign $\mathbf{x}_1 = \boldsymbol{\mu}_y,\ \mathbf{x}_2 = \boldsymbol{\xi}$; with probability $1/2$, assign $\mathbf{x}_2 = \boldsymbol{\mu}_y,\ \mathbf{x}_1 = \boldsymbol{\xi}$ where $\boldsymbol{\mu}_y = \mu \mathbf{e}_y$.
\end{enumerate}
\end{definition}
The sparse signal model is motivated by the empirical observation that during the process of training neural networks, the output of each layer of ReLU is usually sparse instead of dense. 
This is partially due to the fact that in practice the bias term in the linear layer is used \citep{song2021does}.
For samples from different classes, usually a different set of neurons fire. 
Our study can be seen as a formal analysis on pruning the second last layer of a deep neural network in the layer-peeled model as in \cite{zhu2021geometric, zhou2022optimization}. 
We also point out that our assumption on the sparsity of the signal is necessary for our analysis. 
If we don't have this sparsity assumption and only make assumption on the $\ell_2$ norm of the signal, then in the extreme case, the signal is uniformly distributed across all coordinate and the effect of pruning to the signal and the noise will be essentially the same: their $\ell_2$ norm will both be reduced by a factor of $\sqrt{p}$.

{\bf Network architecture and random pruning.} 
We consider a two-layer convolutional neural network model with polynomial ReLU activation $\sigma(z) = (\max\{0, z\})^q$, where we focus on the case when $q = 3$ \footnote{We point out that as many previous works \citep{allen2020towards, zou2021understanding, cao2022benign}, polynomial ReLU activation can help us simplify the analysis of gradient descent, because polynomial ReLU activation can give a much larger separation of signal and noise (thus, cleaner analysis) than ReLU. 
Our analysis can be generalized to ReLU activation by using the arguments in \citep{allen2022feature}.
}
The network is pruned at the initialization by mask $\mathbf{M}$ where each entry in the mask $\mathbf{M}$ is generated i.i.d.\ from $\textnormal{Bernoulli}(p)$. 
Let $\mathbf{m}_{j,r}$ denotes the $r$-th row of $\mathbf{M}_j$.
Given the data $(\mathbf{x},y)$, the output of the neural network can be written as $F(\mathbf{W \odot M}, \mathbf{x}) = (F_{1}(\mathbf{W}_{1} \odot \mathbf{M}_{1}, \mathbf{x}), F_2(\mathbf{W}_2 \odot \mathbf{M}_2, \mathbf{x}), \ldots, F_{k}(\mathbf{W}_{k} \odot \mathbf{M}_{k}, \mathbf{x}))$ where the $j$-th output is given by
\begin{align*}
    F_j(\mathbf{W}_j \odot \mathbf{M}_j, \mathbf{x}) &= \sum_{r= 1}^m [\sigma(\inprod{\mathbf{w}_{j,r} \odot \mathbf{m}_{j,r}, \mathbf{x}_1}) + \sigma(\inprod{\mathbf{w}_{j,r} \odot \mathbf{m}_{j,r}, \mathbf{x}_2})] \\
    &= \sum_{r=1}^m [\sigma(\inprod{\mathbf{w}_{j,r} \odot \mathbf{m}_{j,r}, \boldsymbol{\mu}}) + \sigma(\inprod{\mathbf{w}_{j,r} \odot \mathbf{m}_{j,r}, \boldsymbol{\xi}})].
\end{align*}
The mask $\mathbf{M}$ is only sampled once at the initialization and remains fixed through the entire training process. 
From now on, \textbf{we use tilde over a symbol to denote its masked version, e.g., $\widetilde{\mathbf{W}} = \mathbf{W} \odot \mathbf{M}$ and $\widetilde{\mathbf{w}}_{j,r} = \mathbf{w}_{j,r} \odot \mathbf{m}_{j,r}$.}

Since ${\boldsymbol{\mu}}_j \odot \mathbf{m}_{j,r} = \mathbf{0}$ with probability $1 - p$, some neurons will not receive the corresponding signal at all and will only learn noise. Therefore, for each class $j \in [k]$, we split the neurons into two sets based on whether it receives its corresponding signal or not: 
\begin{align*}
    \mathcal{S}_{\textnormal{signal}}^j = \{r \in [m]: \boldsymbol{\mu}_j \odot \mathbf{m}_{j,r} \neq \mathbf{0} \}, \qquad
    \mathcal{S}_{\textnormal{noise}}^j = \{r \in [m]: \boldsymbol{\mu}_j \odot \mathbf{m}_{j,r} = \mathbf{0} \}.
\end{align*}
{\bf Gradient descent algorithm.} 
We consider the network is trained by cross-entropy loss with softmax. 
We denote by $\logit_i(F, \mathbf{x}) := \frac{e^{F_i(\mathbf{x})}}{\sum_{j \in [k]} e^{F_j(\mathbf{x})}}$ and the cross-entropy loss can be written as $ \ell(F(\mathbf{x},y)) = - \log \logit_y(F, \mathbf{x})$.
The convolutional neural network is trained by minimizing the {\bf empirical cross-entropy loss} given by
\begin{align*}
    L_S(\mathbf{W}) = \frac{1}{n} \sum_{i=1}^n \ell[ F(\mathbf{W} \odot \mathbf{M}; \mathbf{x}_i, y_i)] = \E_{S} \ell[ F(\mathbf{W} \odot \mathbf{M}; \mathbf{x}_i, y_i)] ,
\end{align*}
where $S = \{(\mathbf{x}_i, y_i)\}_{i=1}^n$ is the training data set. 
Similarly, we define the {\bf generalization loss} as
$$L_\mathcal{D} := \E_{(\mathbf{x},y)}[\ell(F(\mathbf{W} \odot \mathbf{M}; \mathbf{x}, y))].$$
The model weights are initialized from a i.i.d. Gaussian $\mathcal{N}(0, \sigma_0^2)$. 
%
The gradient of the cross-entropy loss is given by $\ell_{j,i}' := \ell_j'(\mathbf{x}_i, y_i) = \logit_j(F, \mathbf{x}_i) - \mathbb{I}(j = y_i)$.
Since
\begin{align*}
    \nabla_{\mathbf{w}_{j,r}} L_S(\mathbf{W} \odot \mathbf{M}) = \nabla_{\mathbf{w}_{j,r} \odot \mathbf{m}_{j,r}} L_S(\mathbf{W} \odot \mathbf{M}) \odot \mathbf{m}_{j,r} = \nabla_{\widetilde{\mathbf{w}}_{j,r}} L_S(\widetilde{\mathbf{W}}) \odot \mathbf{m}_{j,r},
\end{align*}
we can write the full-batch gradient descent update of the weights as
\begin{align*}
    \widetilde{\mathbf{w}}_{j,r}^{(t+1)} &= \widetilde{\mathbf{w}}_{j,r}^{(t)} - \eta \nabla_{\widetilde{\mathbf{w}}_{j,r}} L_S(\widetilde{\mathbf{W}}) \odot \mathbf{m}_{j,r} \\
    &= \widetilde{\mathbf{w}}_{j,r}^{(t)} - \frac{\eta}{n} \sum_{i=1}^n \ell_{j,i}'^{(t)} \cdot \sigma'\left(\inprod{\widetilde{\mathbf{w}}_{j,r}^{(t)}, \boldsymbol{\xi}_i}\right) \cdot \widetilde{\boldsymbol{\xi}}_{j,r,i} - \frac{\eta}{n} \sum_{i=1}^n \ell_{j,i}'^{(t)} \sigma'\left( \inprod{\widetilde{\mathbf{w}}_{j,r}^{(t)},  \boldsymbol{\mu}_{y_i} } \right) {\boldsymbol{\mu}}_{y_i} \odot \mathbf{m}_{j,r},
\end{align*}
for $j \in [K]$ and $r \in [m]$, where $\widetilde{\boldsymbol{\xi}}_{j,r,i} = \boldsymbol{\xi}_{i} \odot \mathbf{m}_{j,r}$.

\begin{condition}\label{condition: params}
We consider the parameter regime described as follows: 
(1) Number of classes $K  = O(\log d)$. 
(2) Total number of training samples $n = \poly \log d$. 
(3) Dimension $d \geq C_d$ for some sufficiently large constant $C_d$.
(4) Relationship between signal strength and noise strength: $\mu = \Theta(\sigma_n \sqrt{d} \log d) = \Theta(1)$.
(5) The number of neurons in the network $m = \Omega(\poly \log d)$.
(6) Initialization variance: $\sigma_0 = \widetilde{\Theta}(m^{-4} n^{-1} \mu^{-1})$.
(7) Learning rate: $\Omega(1/\poly(d)) \leq \eta \leq \widetilde{O}(1/\mu^2)$.
(8) Target training loss: $\eps = \Theta(1/\poly(d))$. 
\end{condition}
Conditions (1) and (2) ensure that there are enough samples in each class with high probability.
Condition (3) ensures that our setting is in high-dimensional regime.  
Condition (4) ensures that the full model can be trained to exhibit good generalization. 
Condition (5), (6) and (7) ensures that the neural network is sufficiently overparameterized and can be optimized efficiently by gradient descent. 
Condition (7) and (8) further ensures that training time is polynomial in $d$. 
We further discuss the practical consideration of $\eta$ and $\eps$ to justify their condition in \Cref{remark: discuss_eps_eta}.

\section{Mild Pruning}\label{section: mild_pruning}
\subsection{Main result}
The first main result shows that there exists a threshold on the pruning fraction $p$ such that pruning helps the neural network's generalization. 

\begin{theorem}[Main Theorem for Mild Pruning, Informal]\label{thm: main_thm_mild_pruning_abbr}
Under Condition \ref{condition: params}, if $p \in [C_1 \frac{\log d}{m}, 1]$ for some constant $C_1$, then with probability at least $1 - O(d^{-1})$ over the randomness in the data, network initialization and pruning, there exists $T = \widetilde{O}(K \eta^{-1} \sigma_0^{2-q} \mu^{-q} + K^2 m^4 \mu^{-2} \eta^{-1} \eps^{-1})$ 
such that 
\begin{enumerate}
    \item The training loss is below $\eps$: $L_S(\widetilde{\mathbf{W}}^{(T)}) \leq \eps$.
    \item The generalization loss can be bounded by $ L_\mathcal{D}(\widetilde{\mathbf{W}}^{(T)}) \leq O(K \eps) + \exp(-n^2 / p)$. 
\end{enumerate}
\end{theorem}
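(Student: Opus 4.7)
The plan is to adapt the signal-noise decomposition framework for feature learning to the pruned network. Because the mask $\mathbf{M}$ is fixed once and for all at initialization, the update equation derived above shows that the signal and noise directions of each neuron $\widetilde{\mathbf{w}}_{j,r}^{(t)}$ decouple up to the scalar factor $\sigma'(\cdot)$ and the logit gradients $\ell'_{j,i}$. I would define a signal coefficient $\gamma_{j,r}^{(t)} := \langle \widetilde{\mathbf{w}}_{j,r}^{(t)}, \boldsymbol{\mu}_j\rangle / \mu^2$, meaningful only for $r \in \mathcal{S}_{\textnormal{signal}}^j$, and per-sample noise coefficients $\rho_{j,r,i}^{(t)} := \langle \widetilde{\mathbf{w}}_{j,r}^{(t)}, \widetilde{\boldsymbol{\xi}}_{j,r,i}\rangle$, then track their coupled recursions, treating the off-diagonal inner products $\langle \widetilde{\boldsymbol{\xi}}_{j,r,i}, \widetilde{\boldsymbol{\xi}}_{j,r,i'}\rangle$ and $\langle \widetilde{\mathbf{w}}_{j,r}^{(0)}, \boldsymbol{\mu}_y\rangle$ (for $y\neq j$ or $r\notin\mathcal{S}_{\textnormal{signal}}^j$) as perturbations bounded via near-orthogonality of masked Gaussians.

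A preliminary concentration step handles the joint randomness at initialization, pruning and data. A Chernoff bound together with $p \geq C_1 \log d / m$ gives $|\mathcal{S}_{\textnormal{signal}}^j| = \Theta(pm) = \widetilde\Omega(\log d)$ for all $j\in[K]$ with probability $1 - O(d^{-1})$; standard Gaussian tail bounds on the masked initialization yield $|\langle \widetilde{\mathbf{w}}_{j,r}^{(0)},\boldsymbol{\xi}_i\rangle| = \widetilde{O}(\sigma_0 \sigma_n \sqrt{pd})$ and $|\langle \widetilde{\mathbf{w}}_{j,r}^{(0)},\boldsymbol{\mu}_j\rangle| = \widetilde{O}(\sigma_0 \mu)$ for the surviving signal neurons, together with $\|\widetilde{\boldsymbol{\xi}}_{j,r,i}\|^2 = \Theta(pd\sigma_n^2)$ and cross-correlations $|\langle \widetilde{\boldsymbol{\xi}}_{j,r,i}, \widetilde{\boldsymbol{\xi}}_{j,r,i'}\rangle| \ll \|\widetilde{\boldsymbol{\xi}}_{j,r,i}\|^2$ for $i \neq i'$. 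The crucial observation is that pruning shrinks the effective noise dimension from $d$ to $pd$, so noise inner products scale as $\sqrt{p}$, while the signal magnitude on the surviving signal neurons is essentially untouched.

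With these bounds in hand I would prove, by induction on $t$, an invariant asserting that for each training example $i$ there exists $r \in \mathcal{S}_{\textnormal{signal}}^{y_i}$ with $\gamma_{y_i,r}^{(t)}\cdot \mu \gg \max_{j,r'}|\rho_{j,r',i}^{(t)}|$, so that the signal dominates every class's logit. Because the polynomial ReLU with $q=3$ amplifies the already-larger signal coordinate relative to the noise coordinate by an extra factor of order $\mu^{q-1}/(pd\sigma_n^2)^{(q-1)/2}$ at each step, the signal coefficient reaches $\Theta(1)$ within $T_1 = \widetilde{O}(K\eta^{-1}\sigma_0^{2-q}\mu^{-q})$ iterations; this is Phase 1. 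In Phase 2, once every correct-class logit is $\Theta(\mu^q)$ and dominates the incorrect ones, a standard descent-lemma argument combined with the multi-class cross-entropy gradient upper bound $\|\nabla L_S\|^2 \le C\,L_S$ advertised in the introduction drives $L_S$ below $\eps$ in an additional $T_2 = \widetilde{O}(K^2 m^4\mu^{-2}\eta^{-1}\eps^{-1})$ iterations, matching the $T$ in the statement. Generalization follows by decomposing $F_j(\widetilde{\mathbf{W}}^{(T)},\mathbf{x})$ on a fresh sample into a signal part of size $\Theta(|\mathcal{S}_{\textnormal{signal}}^y|\cdot\mu^q)$ when $j=y$ and a noise-only part otherwise, and then applying Gaussian concentration to $\langle \widetilde{\mathbf{w}}_{j,r}^{(T)},\boldsymbol{\xi}\rangle$ conditional on the mask; since the learned noise directions live in a $pd$-dimensional random subspace, a fresh noise vector projects onto them only weakly, yielding the $\exp(-n^2/p)$ term after summing the $\binom{n}{2}$ cross-correlations allowed by memorization.

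The main obstacle is closing the Phase 1 signal-over-noise invariant in the multi-class setting. Pruning simultaneously reduces the noise term and thins out $\mathcal{S}_{\textnormal{signal}}^j$, so the threshold $p \ge C_1\log d/m$ is tight: below it some class loses all its signal neurons and the induction fails, which is precisely what fuels the over-pruning negative result. The invariant must jointly enforce polynomial growth of $\gamma_{y_i,r}^{(t)}$, prevent any spurious class's noise coefficient $\rho_{j\neq y_i,r,i}^{(t)}$ from saturating the polynomial ReLU, and remain valid across all $\Theta(nKm)$ triples throughout $T$ steps; because the logit gradients $\ell'_{j,i}$ couple all $K$ classes, a per-class argument does not close, and the multi-class cross-entropy gradient bound has to be deployed in tandem with the coefficient dynamics rather than as an afterthought.
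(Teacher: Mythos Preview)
Your overall framework---signal-noise decomposition of each neuron, concentration at initialization, a two-phase analysis, and Gaussian concentration on a fresh sample---matches the paper, but two of your stated mechanisms would not close.

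For Phase 2, a descent lemma together with the gradient \emph{upper} bound $\|\nabla L_S\|^2 \le C\,L_S$ cannot drive $L_S$ below $\eps$: an upper bound on the gradient gives no lower bound on the per-step loss decrease. The paper instead constructs an explicit reference point $\widetilde{\mathbf{w}}_{j,r}^\star = \widetilde{\mathbf{w}}_{j,r}^{(0)} + \Theta(m\log(1/\eps))\,\boldsymbol{\mu}_j/\mu^2$ and proves the one-step inequality $\|\widetilde{\mathbf{W}}^{(t)} - \widetilde{\mathbf{W}}^\star\|_F^2 - \|\widetilde{\mathbf{W}}^{(t+1)} - \widetilde{\mathbf{W}}^\star\|_F^2 \ge C\eta\,L_S(\widetilde{\mathbf{W}}^{(t)}) - \eta\eps$ via convexity of softmax cross-entropy and the lower bound $\langle\nabla F_{y_i}, \widetilde{\mathbf{W}}_{y_i}^\star\rangle - \langle\nabla F_j, \widetilde{\mathbf{W}}_j^\star\rangle \ge q\log(2qK/\eps)$ for $j\neq y_i$; the gradient upper bound enters only to absorb the $\eta^2\|\nabla L_S\|^2$ term. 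Telescoping together with $\|\widetilde{\mathbf{W}}^{(T_1)} - \widetilde{\mathbf{W}}^\star\|_F^2 = O(Km^3\mu^{-2}\log^2(1/\eps))$ is precisely what produces the $K^2 m^4\mu^{-2}\eta^{-1}\eps^{-1}$ piece of $T$; without the reference-point construction you have no handle on this constant.

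For generalization, the $\exp(-n^2/p)$ term does not come from summing $\binom{n}{2}$ cross-correlations. The paper shows $\|\widetilde{\mathbf{w}}_{j,r}^{(T)}\|_2 = O(\sigma_0\sqrt{pd})$ (the noise coefficients stay $\widetilde O(\sigma_0\sigma_n\sqrt{pd})$ throughout, so the initialization norm dominates), hence $\langle \widetilde{\mathbf{w}}_{j,r}^{(T)},\boldsymbol{\xi}\rangle$ is conditionally Gaussian with variance $O(\sigma_0^2\sigma_n^2\,pd)$; plugging in $\sigma_0 = \widetilde\Theta(m^{-4}n^{-1}\mu^{-1})$ and $\mu = \Theta(\sigma_n\sqrt{d}\log d)$ makes this variance $\widetilde O(p/n^2)$, and the Gaussian tail then gives $\exp(-\Omega(n^2/p))$. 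The $p$-dependence is thus a pure variance reduction on the trained weight norm, not a combinatorial count. Separately, after Phase 1 the signal coefficient $\gamma_{j,r,j}$ only reaches $m^{-1/q}$ and the correct-class logit is $\Theta(1)$, not $\Theta(|\mathcal{S}_{\textnormal{signal}}^y|\cdot\mu^q)$ as you write; this mis-scaling would also break your logit comparison on a fresh sample.
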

\Cref{thm: main_thm_mild_pruning_abbr} indicates that there exists a threshold in the order of $\Theta(\frac{\log d}{m})$ 
such that if $p$ is above this threshold (i.e., the fraction of the pruned weights is small), gradient descent is able to drive the training loss towards zero (as item 1 claims) and the overparameterized network achieves good testing performance (as item 2 claims). In the next subsection, we explain why pruning can help generalization via an outline of our proof, and we defer all the detailed proofs in \Cref{sec: proof_mild_pruning}.

\subsection{Proof Outline}

Our proof contains the establishment of the following two properties: 
\begin{itemize}
    \item First we show that after mild pruning the network is still able to learn the signal, and the magnitude of the signal in the feature is preserved.
    \item Then we show that given a new sample, pruning reduces the noise effect in the feature which leads to the improvement of generalization.
\end{itemize}
We first show the above properties for three stages of gradient descent: initialization, feature growing phase, and converging phase, and then establish the generalization property.

\textbf{Initialization.}
First of all, readers might wonder why pruning can even preserve signal at all. 
Intuitively, a network will achieve good performance if its weights are highly correlated with the signal (i.e., their inner product is large). 
Two intuitive but misleading heuristics are given by the following:
\begin{itemize}
    \item Consider a fixed neuron weight. 
    At the random initialization, in expectation, the signal correlation with the weights is given by $\E_{\mathbf{w,m}}[|\inprod{\mathbf{w \odot m}, \boldsymbol{\mu}}|] \leq p \sigma_0 \mu$ and the noise correlation with the weights is given by $\E_{\mathbf{w}, \mathbf{m}, \boldsymbol{\xi}}[|\inprod{\mathbf{w \odot m}, \boldsymbol{\xi}}|] \leq \sqrt{\E_{\mathbf{w}, \mathbf{m}, \boldsymbol{\xi}}[\inprod{\mathbf{w \odot m}, \boldsymbol{\xi}}^2]} =  \sigma_0 \sigma_n \sqrt{pd}$ by Jensen's inequality.
    Based on this argument, taking a sum over all the neurons, pruning will hurt weight-signal correlation more than weight-noise correlation. 

    \item Since we are pruning with $\textnormal{Bernoulli}(p)$, a given neuron will not receive signal at all with probability $1 - p$. 
    Thus, there is roughly $p$ fraction of the neurons receiving the signal and the rest $1 - p$ fraction will be purely learning from noise. 
    Even though for every neuron, roughly $\sqrt{p}$ portion of $\ell_2$ mass from the noise is reduced, at the same time, pruning also creates $1 - p$ fraction of neurons which do not receive signals at all and will purely output noise after training. 
    Summing up the contributions from every neuron, the signal strength is reduced by a factor of $p$ while the noise strength is reduced by a factor of $\sqrt{p}$.
    We again reach the conclusion of pruning under any rate will hurt the signal more than noise. 
\end{itemize}
The above analysis shows that under any pruning rate, it seems pruning can only hurt the signal more than noise at the initialization. 
Such analysis would be indicative if the network training is under the {\em neural tangent kernel regime}, where the weight of each neuron does not travel far from its initialization so that the above analysis can still hold approximately after training. 
However, when the neural network training is in the {\em feature learning regime}, this average type analysis becomes misleading. Namely, in such a regime, the weights with large correlation with the signal at the initialization will quickly evolve into singleton neurons and those weights with small correlation will remain small. 
In our proof, we focus on the {\em featuring learning regime}, and analyze how the network weights change and what are the effect of pruning during various stages of gradient descent.

We now analyze the effect of pruning on weight-signal correlation and weight-noise correlation at the initialization. 
Our first lemma leverages the sparsity of our signal and shows that if the pruning is mild, then it will not hurt the maximum weight-signal correlation much at the initialization. 
On the other hand, the maximum weight-noise correlation is reduced by a factor of $\sqrt{p}$. 
\begin{lemma}[Initialization]\label{lemma: initialization_max_signal_informal}
With probability at least $1 - 2/d$, for all $i \in [n]$, 
\begin{align*}
    \sigma_0 \sigma_n \sqrt{pd} \leq \max_r \inprod{\widetilde{\mathbf{w}}_{j,r}^{(0)}, \boldsymbol{\xi}_i} \leq \sqrt{2\log (Kmd)} \sigma_0 \sigma_n \sqrt{pd}.
\end{align*}
Further, suppose $p m \geq \Omega(\log (Kd))$, with probability $1 - 2/d$, for all $j \in [K]$,
\begin{align*}
    \sigma_0 \norm{\boldsymbol{\mu}_j}_2 \leq \max_{r \in \mathcal{S}^j_\textnormal{signal}}  \inprod{\widetilde{\mathbf{w}}^{(0)}_{j,r}, \boldsymbol{\mu}_j} \leq \sqrt{2 \log (8 p m Kd)} \sigma_0 \norm{\boldsymbol{\mu}_j}_2.
\end{align*}
\end{lemma}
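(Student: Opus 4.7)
The plan is to handle the two inequalities in the lemma separately, treating the noise correlation first and the signal correlation second, since both reduce to maxima of conditionally Gaussian random variables after some concentration of the mask and noise.

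For the noise bound, I would condition on $\mathbf{m}_{j,r}$ and $\boldsymbol{\xi}_i$. Then $\inprod{\widetilde{\mathbf{w}}_{j,r}^{(0)}, \boldsymbol{\xi}_i} = \sum_k m_{j,r,k} w_{j,r,k} \xi_{i,k}$ is a centered Gaussian with variance $\sigma_0^2 \sum_k m_{j,r,k} \xi_{i,k}^2$. Two concentration steps handle this variance: Gaussian concentration (Laurent--Massart) gives $\norm{\boldsymbol{\xi}_i}_2^2 = (1\pm o(1))\sigma_n^2 d$ for all $i\in[n]$ with high probability, and then a Bernstein/Chernoff bound on the sum $\sum_k m_{j,r,k}\xi_{i,k}^2$ (sub-exponential once $\boldsymbol{\xi}_i$ is fixed) gives $(1\pm o(1)) p \norm{\boldsymbol{\xi}_i}_2^2$ uniformly over $j,r$; throughout, $pd = \Omega(\log d)$ is implied by the parameter regime so the relative errors are negligible. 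The upper bound on $\max_r \inprod{\widetilde{\mathbf{w}}_{j,r}^{(0)}, \boldsymbol{\xi}_i}$ then follows from the maximum-of-$m$-Gaussians inequality $\E \max_r |Z_r| \leq \sqrt{2\log(2m)} \cdot \text{std}$ combined with a Gaussian tail, and the lower bound follows from the standard fact that the maximum of $m\geq 1$ i.i.d.\ $\mathcal{N}(0,\tau^2)$ variables exceeds $\tau$ with constant probability (improvable to $1-d^{-c}$ via $m$ independent trials). A union bound over $i\in[n]$, $j\in[K]$, and $r\in[m]$ gives the claimed $1-2/d$ after absorbing log factors into $\sqrt{2\log(Kmd)}$.

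For the signal bound, the key simplification is that $\boldsymbol{\mu}_j = \mu \mathbf{e}_j$ is $1$-sparse, so $\inprod{\widetilde{\mathbf{w}}_{j,r}^{(0)}, \boldsymbol{\mu}_j} = \mu \, m_{j,r,j}\, w_{j,r,j}$. Restricting to $r \in \mathcal{S}^j_{\textnormal{signal}}$ kills the mask factor, leaving $\mu w_{j,r,j} \sim \mathcal{N}(0,\sigma_0^2\mu^2) = \mathcal{N}(0,\sigma_0^2 \norm{\boldsymbol{\mu}_j}_2^2)$. By Chernoff on $|\mathcal{S}^j_{\textnormal{signal}}| \sim \text{Binomial}(m,p)$, this set has size in $[pm/2, 2pm]$ simultaneously for all $j$, provided $pm = \Omega(\log(Kd))$, which is exactly the lemma's hypothesis. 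Applying the same Gaussian extreme-value bounds over a set of size $\Theta(pm)$ yields $\max_{r\in\mathcal{S}^j_{\textnormal{signal}}} \inprod{\widetilde{\mathbf{w}}_{j,r}^{(0)}, \boldsymbol{\mu}_j} \in [\sigma_0\norm{\boldsymbol{\mu}_j}_2,\, \sqrt{2\log(8pmKd)}\,\sigma_0\norm{\boldsymbol{\mu}_j}_2]$ with probability $1-O(1/d)$ after union-bounding over $j \in [K]$.

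I expect the main obstacle to be not any single step but the careful bookkeeping of constants and of the conditioning order: the Gaussian maximum bound is applied after conditioning on the mask and noise, so the variance proxy $\sigma_0^2 \sigma_n^2 p d$ appearing in the final statement must be shown to be the right one uniformly over $j,r,i$, and the error from replacing $\sum_k m_{j,r,k}\xi_{i,k}^2$ by its mean must be small enough that it can be absorbed into the $\sqrt{2\log(Kmd)}$ prefactor without increasing the constant. This is the reason the argument requires $pd$ and $pm$ to exceed logarithmic thresholds, and is what distinguishes the 1-sparse signal analysis (clean, single-coordinate) from the high-dimensional noise analysis (two-layer concentration needed).
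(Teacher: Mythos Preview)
Your proposal is correct and follows essentially the same approach as the paper: condition on the mask (and, for the noise term, on $\boldsymbol{\xi}_i$), reduce to maxima of independent Gaussians with variance $\Theta(\sigma_0^2\sigma_n^2 pd)$ or $\sigma_0^2\|\boldsymbol{\mu}_j\|_2^2$, and apply Gaussian tail/anti-concentration together with a Chernoff bound on $|\mathcal{S}^j_{\textnormal{signal}}|=\Theta(pm)$. The only cosmetic difference is that the paper packages the lower bound via a dedicated lemma giving $\max_i g_i \geq \sigma\sqrt{\log(m/\log(1/\delta))}$ and then uses $pm/\log(Kd)\geq e$, whereas you invoke the cruder ``$\Pr[Z>\sigma]$ is a constant, amplify over $\Theta(pm)$ trials'' argument; both yield the stated bound.
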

Given this lemma, we now prove that there exists at least one neuron that is heavily aligned with the signal after training. 
Similarly to previous works \citep{allen2020towards, zou2021understanding, cao2022benign}, the analysis is divided into two phases: feature growing phase and converging phase. 

\textbf{Feature Growing Phase.}
In this phase, the gradient of the cross-entropy is large and the weight-signal correlation grows much more quickly than weight-noise correlation thanks to the polynomial ReLU. 
We show that the signal strength is relatively unaffected by pruning while the noise level is reduced by a factor of $\sqrt{p}$.
\begin{lemma}[Feature Growing Phase, Informal]\label{lemma: phase1_mild_informal}
Under Condition \ref{condition: params}, there exists time {$T_1$} such that
\begin{enumerate}
    \item The max weight-signal correlation is large: $\max_r \inprod{\widetilde{\mathbf{w}}_{j,r}^{(T_1)}, \boldsymbol{\mu}_j} \geq m^{-1/q}$ for $j \in [K]$. 
    \item The weight-noise and cross-class weight-signal correlations are small: if $j \neq y_i$, then $\max_{j,r,i} \left|\inprod{\widetilde{\mathbf{w}}_{j,r}^{(T_1)}, \boldsymbol{\xi}_i} \right| \leq {O}(\sigma_0 \sigma_n \sqrt{pd})$ and $\max_{j,r,k} \left| \inprod{\widetilde{\mathbf{w}}_{j,r}^{(T_1)}, \boldsymbol{\mu}_k} \right| \leq \widetilde{O}(\sigma_0 \mu)$.
\end{enumerate}
\end{lemma}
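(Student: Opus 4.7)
The plan is to track three scalar quantities for every pruned neuron over the window $[0,T_1]$: the target-class signal correlation $\gamma_{j,r}^{(t)} := \langle \widetilde{\mathbf{w}}_{j,r}^{(t)},\boldsymbol{\mu}_j\rangle$, the cross-class signal correlation $\zeta_{j,r,k}^{(t)} := \langle \widetilde{\mathbf{w}}_{j,r}^{(t)},\boldsymbol{\mu}_k\rangle$ for $k\neq j$, and the noise correlation $\rho_{j,r,i}^{(t)} := \langle \widetilde{\mathbf{w}}_{j,r}^{(t)},\boldsymbol{\xi}_i\rangle$. Taking the inner product of the gradient-descent update against $\boldsymbol{\mu}_k$ and $\boldsymbol{\xi}_i$ gives clean scalar recurrences for each. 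The overall strategy is a continuous induction: under a high-probability good event (the initialization estimates of Lemma \ref{lemma: initialization_max_signal_informal} together with Gaussian concentration bounds on $\|\widetilde{\boldsymbol{\xi}}_{j,r,i}\|_2$ and the pairwise near-orthogonality $|\langle\widetilde{\boldsymbol{\xi}}_{j,r,i},\boldsymbol{\xi}_{i'}\rangle|$), I will maintain sharp upper bounds on $(\zeta,\rho)$ while pushing a lower bound on the maximal $\gamma_{j,r^*}^{(t)}$ through the tensor-power dynamics induced by $\sigma(z)=z^q$ with $q=3$.

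\textbf{Signal growth.} Since $\boldsymbol{\mu}_j=\mu\mathbf{e}_j$ is one-sparse, the update for $\gamma_{j,r}$ decouples cleanly: only samples with $y_i=j$ contribute, each producing $-\frac{\eta}{n}\ell_{j,i}'^{(t)} q(\gamma_{j,r}^{(t)})^{q-1}\mu^2\,\mathbf{m}_{j,r}[j]$. For any $r\in\mathcal{S}_{\mathrm{signal}}^j$ this yields a recurrence of the form $\gamma_{j,r}^{(t+1)}\ge\gamma_{j,r}^{(t)}+c\eta\mu^2(-\ell_{j,i}'^{(t)})(\gamma_{j,r}^{(t)})^{q-1}$. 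Choose the leading neuron $r^*$ from Lemma \ref{lemma: initialization_max_signal_informal} with $\gamma_{j,r^*}^{(0)}\ge\sigma_0\mu$; during Phase 1 we will also show $-\ell_{j,i}'^{(t)}=\Theta(1/K)$, since no logit can yet dominate while every $\gamma<m^{-1/q}$. A standard analysis of the tensor-power recurrence then bounds the number of iterations to cross the threshold $m^{-1/q}$ by $T_1=\widetilde{O}(K\eta^{-1}\mu^{-q}\sigma_0^{2-q})$, matching the first term in the Main Theorem.

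\textbf{Noise and cross-class control.} For the noise correlation, the diagonal piece contributes $-\frac{\eta}{n}\ell_{j,i}'^{(t)} q(\rho_{j,r,i}^{(t)})^{q-1}\|\widetilde{\boldsymbol{\xi}}_{j,r,i}\|_2^2$ with $\|\widetilde{\boldsymbol{\xi}}_{j,r,i}\|_2^2=\Theta(\sigma_n^2 pd)$, while off-diagonal cross-sample contributions are suppressed by the high-probability bound $|\langle\widetilde{\boldsymbol{\xi}}_{j,r,i},\boldsymbol{\xi}_{i'}\rangle|=\widetilde{O}(\sigma_n^2\sqrt{pd})$. Comparing drivers, the signal rate $\eta\mu^2=\eta\cdot\Theta(\sigma_n^2 d\log^2 d)$ exceeds the noise rate $\eta\sigma_n^2 pd$ by a factor $\Theta(\log^2 d/p)$, and the initial gap $\gamma_{j,r^*}^{(0)}/\rho_{j,r,i}^{(0)}=\widetilde{\Omega}(\log d/\sqrt{p})$ is compounded super-linearly by the tensor-power evolution. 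Hence throughout $[0,T_1]$, the magnitude of $\rho_{j,r,i}^{(t)}$ grows by at most a constant factor and stays $O(\sigma_0\sigma_n\sqrt{pd})$ for the cases $j\neq y_i$ (where moreover the update has sign opposite to $\rho^{(t)}$ and pushes toward zero). For cross-class signal $\zeta_{j,r,k}$ with $k\neq j$, only samples with $y_i=k$ contribute and with a sign that keeps $|\zeta_{j,r,k}^{(t)}|$ at its initial scale $\widetilde{O}(\sigma_0\mu)$ by a symmetric argument.

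\textbf{Main obstacle.} The delicate part is sustaining the bound $-\ell_{y_i,i}'^{(t)}=\Theta(1/K)$ throughout Phase 1 under multi-class cross-entropy: this requires simultaneous upper bounds on \emph{all} output logits $F_{j'}(\widetilde{\mathbf{W}}^{(t)};\mathbf{x}_i)$, forcing the induction to track not just the maximum but the whole profile of $(\gamma_{j,r},\zeta_{j,r,k},\rho_{j,r,i})$ across $r$. A secondary difficulty is that the masked noise quantities $\|\widetilde{\boldsymbol{\xi}}_{j,r,i}\|_2$ and $\langle\widetilde{\boldsymbol{\xi}}_{j,r,i},\boldsymbol{\xi}_{i'}\rangle$ depend jointly on the Gaussian noise and the Bernoulli mask, so the high-probability good event must be built by conditioning in the right order and taking a union bound over $O(Kmn)$ pairs; this is where Conditions (2)–(5) on $n$, $K$, and $m$ enter, and where the assumption $pm\ge\Omega(\log(Kd))$ from Lemma \ref{lemma: initialization_max_signal_informal} is essential to guarantee that every class has at least one surviving signal neuron.
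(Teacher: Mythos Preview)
Your proposal is essentially correct and follows the same overall architecture as the paper: a continuous induction over $[0,T_1]$ that simultaneously (i) pushes the maximal target-class correlation up through a tensor-power recurrence, (ii) keeps the noise correlations bounded by comparing their growth rate to the signal rate, and (iii) keeps the cross-class signal correlations at their initial scale because their update has the damping sign. The time scale $T_1=\widetilde{O}(K\eta^{-1}\mu^{-q}\sigma_0^{2-q})$ and the use of Condition~\ref{condition: params} to guarantee $|\ell_{y_i,i}'^{(t)}|=\Theta(1)$ during Phase~1 all match.

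The one organizational difference worth noting: the paper does \emph{not} track the raw inner products $\langle\widetilde{\mathbf{w}}_{j,r}^{(t)},\boldsymbol{\mu}_k\rangle$ and $\langle\widetilde{\mathbf{w}}_{j,r}^{(t)},\boldsymbol{\xi}_i\rangle$ directly. Instead it introduces an explicit signal--noise decomposition $\widetilde{\mathbf{w}}_{j,r}^{(t)}=\widetilde{\mathbf{w}}_{j,r}^{(0)}+\sum_k\gamma_{j,r,k}^{(t)}\mu^{-2}\boldsymbol{\mu}_k\odot\mathbf{m}_{j,r}+\sum_i(\zeta_{j,r,i}^{(t)}+\omega_{j,r,i}^{(t)})\|\widetilde{\boldsymbol{\xi}}_{j,r,i}\|_2^{-2}\widetilde{\boldsymbol{\xi}}_{j,r,i}$ and tracks the scalar \emph{coefficients} $\gamma,\zeta,\omega$, each of which obeys a clean monotone recursion with no cross terms at all. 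The inner products are then recovered a~posteriori up to controlled error (their Lemma relating $\langle\widetilde{\mathbf{w}}_{j,r}^{(t)}-\widetilde{\mathbf{w}}_{j,r}^{(0)},\cdot\rangle$ to the coefficients). This buys them two things: first, your claim that the signal update ``decouples cleanly'' is not literally true for the inner product---there is a noise-to-signal cross term $\sigma'(\rho_{j,r,i}^{(t)})\langle\widetilde{\boldsymbol{\xi}}_{j,r,i},\boldsymbol{\mu}_j\rangle$ from every sample $i$, not just those with $y_i=j$---whereas for the decomposition coefficient $\gamma_{j,r,j}^{(t)}$ it \emph{is} exactly decoupled; second, the decomposition lets them prove a single global coefficient bound (their Proposition~\ref{prop: coefficient_upper_lower_bound}) valid for all $t<T^\star$, which is reused in both phases. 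Your direct approach works too, but you must carry the small $\widetilde{O}(\sigma_n\mu\sqrt{\log d})$ cross terms through the signal-growth lower bound explicitly; the paper does precisely this when it defines $\widehat{\gamma}_{j,r,j}^{(t)}$ with the correction subtracted off.
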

\textbf{Converging Phase.} We show that gradient descent can drive the training loss toward zero while the signal in the feature is still large. 
An important intermediate step in our argument is the development of the following gradient upper bound for multi-class cross-entropy loss which introduces an extra factor of $K$ in the gradient upper bound. 
\begin{lemma}[Gradient Upper Bound, Informal]\label{lemma: main_text_grad_upper_bound}
Under Condition \ref{condition: params}, we have
\begin{align*}
 \textstyle   \norm{\nabla L_S(\widetilde{\mathbf{W}}^{(t)}) \odot \mathbf{M}}_F^2 \leq O(K m^{2/q} \mu^2) L_S(\widetilde{\mathbf{W}}^{(t)}).
\end{align*}
\end{lemma}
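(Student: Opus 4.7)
I would start from the closed form for $\nabla_{\widetilde{\mathbf{w}}_{j,r}}L_S\odot\mathbf{m}_{j,r}$ displayed in Section~2, which splits naturally into a signal part (involving $\boldsymbol{\mu}_{y_i}\odot\mathbf{m}_{j,r}$) and a noise part (involving $\widetilde{\boldsymbol{\xi}}_{j,r,i}$), and apply $\|a+b\|^2\le 2\|a\|^2+2\|b\|^2$ so the two parts can be handled independently. Cauchy--Schwarz on the $i$-sum then yields, for each $(j,r)$,
\[
\|\nabla_{\widetilde{\mathbf{w}}_{j,r}}L_S\odot\mathbf{m}_{j,r}\|^2
\le \tfrac{2\mu^2}{n}\sum_i (\ell_{j,i}')^2\sigma'(\inprod{\widetilde{\mathbf{w}}_{j,r},\boldsymbol{\mu}_{y_i}})^2
+ \tfrac{2}{n}\sum_i (\ell_{j,i}')^2\sigma'(\inprod{\widetilde{\mathbf{w}}_{j,r},\boldsymbol{\xi}_i})^2\|\boldsymbol{\xi}_i\|^2,
\]
and I would absorb the noise scale into the signal scale using $\|\boldsymbol{\xi}_i\|^2=O(\sigma_n^2 d)=O(\mu^2/\log^2 d)$, which holds with high probability under Condition~\ref{condition: params}(4). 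The orthogonality of $\{\boldsymbol{\mu}_k\}_{k=1}^K$ (they are axis-aligned) is used in the signal part to ensure cross-class terms do not inflate the bound.

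The main technical step is to sum $\sigma'(\cdot)^2$ over the $m$ neurons while extracting the factor $m^{2/q}$. My plan is to use the polynomial-ReLU identity $\sigma'(z)^2=q^2\sigma(z)^{2(q-1)/q}$ (valid for $z\ge 0$) together with H\"older's inequality with conjugate exponents $p=q$ and $p'=q/(q-1)$, giving
\[
\sum_r \sigma(z_r)^{2(q-1)/q}
\le m^{(q-1)/q}\Bigl(\sum_r\sigma(z_r)^{2(q-1)}\Bigr)^{1/q}
\le m^{(q-1)/q}\,F_j(\mathbf{x}_i)^{2(q-1)/q},
\]
where the last inequality uses the elementary $\sum_r a_r^k\le(\sum_r a_r)^k$ for $k\ge 1$ and $a_r\ge 0$ (applied with $a_r=\sigma(z_r)$, $k=2(q-1)$) together with $\sum_r \sigma(z_r)\le F_j(\mathbf{x}_i)$. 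For $q=3$ the exponent $(q-1)/q$ coincides with $2/q$, which is precisely the factor claimed by the lemma. The inductive bounds on weight--signal and weight--noise correlations from Lemma~\ref{lemma: initialization_max_signal_informal} and Lemma~\ref{lemma: phase1_mild_informal}, carried through the converging phase, keep each $F_j(\mathbf{x}_i)=\widetilde O(1)$, so the $F_j^{2(q-1)/q}$ factor contributes only polylog.

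The last ingredient is handling the multi-class softmax, which is where the extra factor $K$ comes from and which is flagged in the introduction as a key technical contribution. The crucial inequality is $|\ell_{j,i}'|\le \ell_i$ for every $j\in[K]$: for $j=y_i$ this follows from $1-\logit_{y_i}\le-\log\logit_{y_i}=\ell_i$, and for $j\neq y_i$ from $\ell_{j,i}'=\logit_j\le 1-\logit_{y_i}\le\ell_i$. Combined with $|\ell_{j,i}'|\le 1$, this gives $(\ell_{j,i}')^2\le \ell_i$, hence $\sum_{j\in[K]}(\ell_{j,i}')^2\le K\ell_i$ and therefore $\tfrac{1}{n}\sum_{j,i}(\ell_{j,i}')^2\le K L_S$. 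Assembling the three pieces yields $\|\nabla L_S\odot \mathbf{M}\|_F^2=O(Km^{2/q}\mu^2)L_S$ as claimed.

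I expect the main obstacle to be the H\"older-plus-$F_j$-control step: the choice of conjugate exponents is delicate (the resulting $m^{2/q}$ scaling is particular to $q=3$, where $(q-1)/q$ happens to equal $2/q$), and one has to verify that the bound $F_j(\mathbf{x}_i)=\widetilde O(1)$ needed here is consistent with -- and in fact maintainable through -- the inductive hypotheses on weight--signal and weight--noise correlations carried through the feature-growing and converging phases.
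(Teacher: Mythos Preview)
Your plan has the right skeleton (H\"older to extract the $m^{2/q}$ factor, then softmax inequalities to convert $(\ell')^2$ into $L_S$), but it diverges from the paper at the one step you yourself flag as delicate, and that step does not quite close.

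\textbf{The gap.} You control the $j=y_i$ contribution by asserting $F_{y_i}(\mathbf{x}_i)=\widetilde O(1)$. The inductive bounds carried through the converging phase only give $\gamma_{j,r,j}^{(t)},\zeta_{j,r,i}^{(t)}\le \alpha=\Theta(\log^{1/q}T^\star)$ \emph{for each} $r$; summing over the $m$ neurons yields at best $F_{y_i}\le O(m\alpha^q)=O(m\log d)$. Feeding this into your $F_j^{2(q-1)/q}$ factor (for $q=3$, exponent $4/3$) produces an extra $m^{4/3}$, so your argument delivers $O(Km^2\mu^2)L_S$ rather than the claimed $O(Km^{2/q}\mu^2)L_S$. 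Since $m=\mathrm{poly}\log d$ this is still ``polylog,'' but it is not the $m$-scaling the lemma states.

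\textbf{What the paper does instead.} The paper never bounds $F_{y_i}$ uniformly. It uses the cross-entropy identity
\[
|\ell_{y_i,i}'|=1-\logit_{y_i}\le \frac{\sum_{j\neq y_i}e^{F_j}}{e^{F_{y_i}}}\le Ke\,e^{-F_{y_i}},
\]
(the $K$ in the numerator relying on $F_j\le 1$ for $j\neq y_i$), and then observes that $e^{-x}(1+x^{2(q-1)/q})=O(1)$ for all $x\ge 0$. This couples the potentially large $F_{y_i}^{2(q-1)/q}$ to the correspondingly small $|\ell_{y_i,i}'|$, yielding $|\ell_{y_i,i}'|\,\|\nabla F_{y_i}\|_F^2\le O(K)\,m^{2/q}\mu^2$ with no residual $m$-dependence. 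This is the ``gradient upper bound for multi-class cross-entropy'' that the paper singles out as a contribution.

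\textbf{A secondary remark on the $K$.} Your source of the $K$ factor, $\sum_{j}(\ell_{j,i}')^2\le K\ell_i$, is actually slack: using $\sum_{j\neq y_i}|\ell_{j,i}'|=|\ell_{y_i,i}'|$ one gets $\sum_j(\ell_{j,i}')^2\le 2(\ell_{y_i,i}')^2\le 2\ell_i$ with no $K$. In the paper the $K$ enters instead through the $|\ell_{y_i,i}'|\le Ke^{1-F_{y_i}}$ bound above. So your argument recovers the right $K$-dependence by coincidence, from a different (and lossier) inequality.

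In short: to obtain the stated $O(Km^{2/q}\mu^2)$ you should replace the uniform $F_{y_i}=\widetilde O(1)$ step by the $|\ell'|$--$F_{y_i}$ coupling; the rest of your outline then goes through essentially as in the paper.
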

\begin{proof}[Proof Sketch]
To prove this upper bound, note that for a given input $(\mathbf{x}_i, y_i)$, $\ell_{y_i,i}'^{(t)} \nabla F_{y_i}(\mathbf{x}_i)$ should make major contribution to $\norm{\nabla \ell(\widetilde{\mathbf{W}}; \mathbf{x}_i,y_i)}_F$. 
Further note that $|\ell_{y_i, i}'^{(t)}| = 1 - \logit_{y_i}(F; \mathbf{x}_i) = \frac{\sum_{j \neq y_i} e^{F_j(\mathbf{x}_i)}}{\sum_j e^{F_j(\mathbf{x}_i)}} \leq \frac{\sum_{j \neq y_i} e^{F_j(\mathbf{x}_i)}}{ e^{F_{y_i}(\mathbf{x}_i)}}$.
Now, apply the property that $F_j(\mathbf{x}_i)$ is small for $j \neq y_i$ (which we prove in the appendix), the numerator will contribute a factor of $K$. 
To bound the rest, we utilize the special property of multi-class cross-entropy loss: $|\ell_{j,i}'^{(t)}| \leq |\ell_{y_i, i}'^{(t)}| \leq \ell_i^{(t)}$. 
However, a naive application of this inequality will result in a factor of $K^3$ instead $K$ in our bound. 
The trick is to further use the fact that $\sum_{j \neq y_i} |\ell_{j,i}'^{(t)}| = |\ell_{y_i, i}'^{(t)}|$.
\end{proof}
Using the above gradient upper bound, we can show that the objective can be minimized. 
\begin{lemma}[Converging Phase, Informal]\label{lemma: phase2_mild_informal}
Under Condition \ref{condition: params}, there exists $T_2$ such that for some time $t \in [T_1, T_2]$ we have
\begin{enumerate}[itemsep=2pt,topsep=0pt,parsep=0pt]
    \item The results from the feature growing phase (Lemma \ref{lemma: phase1_mild_informal}) hold up to constant factors.
    \item The training loss is small $L_S(\widetilde{\mathbf{W}}^{(t)}) \leq \eps$.
\end{enumerate}
\end{lemma}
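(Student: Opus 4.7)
The plan is to show convergence of the training loss by combining the gradient upper bound from \Cref{lemma: main_text_grad_upper_bound} with an online-learning-style regret argument against a carefully chosen reference parameter, while simultaneously running an induction to verify that the weight-signal and weight-noise correlations from the feature growing phase are preserved up to constant factors throughout $[T_1,T_2]$.

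First I would construct a reference weight $\mathbf{W}^*$ as follows: for each class $j \in [K]$, pick some neuron $r_j^* \in \mathcal{S}^j_{\textnormal{signal}}$ (which exists with high probability by \Cref{lemma: initialization_max_signal_informal}) and set $\widetilde{\mathbf{w}}^*_{j, r_j^*} := \widetilde{\mathbf{w}}^{(T_1)}_{j, r_j^*} + A\, \boldsymbol{\mu}_j \odot \mathbf{m}_{j, r_j^*}$ for a scalar $A$ chosen large enough that $F_{y_i}(\mathbf{W}^*;\mathbf{x}_i) - F_j(\mathbf{W}^*;\mathbf{x}_i) \geq \log(K/\eps)$ for every training sample $i$ and every $j \neq y_i$, so that $L_S(\mathbf{W}^*) \leq \eps/(8q)$. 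All other coordinates of $\mathbf{W}^*$ agree with $\widetilde{\mathbf{W}}^{(T_1)}$. Then $B := \|\widetilde{\mathbf{W}}^{(T_1)} - \mathbf{W}^*\|_F^2 = K A^2 \mu^2$ is the quantity that controls the rate, and a short calculation matches $B/(\eta\eps)$ to the second term $K^2 m^4 \mu^{-2}\eta^{-1}\eps^{-1}$ in the theorem statement.

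Next, I would run the usual online descent recursion
\begin{align*}
\|\widetilde{\mathbf{W}}^{(t+1)} - \mathbf{W}^*\|_F^2 \leq \|\widetilde{\mathbf{W}}^{(t)} - \mathbf{W}^*\|_F^2 - 2\eta\, \langle \nabla L_S(\widetilde{\mathbf{W}}^{(t)}), \widetilde{\mathbf{W}}^{(t)} - \mathbf{W}^*\rangle + \eta^2 \|\nabla L_S(\widetilde{\mathbf{W}}^{(t)})\|_F^2 .
\end{align*}
The crucial step is a polynomial-activation convexity-type inequality: because $\sigma(z) = (z_+)^q$ is convex and $q$-homogeneous, combining the elementwise subgradient inequality $\sigma'(z)(z - z^*) \geq \sigma(z) - \sigma(z^*)$ with Euler's identity $\sigma'(z)z = q\,\sigma(z)$ and the convexity of cross-entropy in the logits yields $\langle \nabla L_S(\widetilde{\mathbf{W}}^{(t)}), \widetilde{\mathbf{W}}^{(t)} - \mathbf{W}^*\rangle \geq L_S(\widetilde{\mathbf{W}}^{(t)}) - q L_S(\mathbf{W}^*)$. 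Applying \Cref{lemma: main_text_grad_upper_bound} to the quadratic term together with $\eta \leq \widetilde{O}(1/\mu^2)$ gives $\eta^2 \|\nabla L_S\|_F^2 \leq \eta L_S(\widetilde{\mathbf{W}}^{(t)})$, which can be absorbed into half of the first-order term. Telescoping over $t \in [T_1, T_2-1]$ produces $\sum_t L_S(\widetilde{\mathbf{W}}^{(t)}) \leq B/\eta + (T_2 - T_1)\eps/4$, so choosing $T_2 - T_1 = \widetilde{\Theta}(B/(\eta \eps))$ forces $\min_{t \in [T_1,T_2]} L_S(\widetilde{\mathbf{W}}^{(t)}) \leq \eps$, which delivers item 2.

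Finally, to verify that the Phase 1 correlations persist throughout $[T_1, T_2]$ up to constants, I would run an induction on $t$: assuming the bounds hold at step $t$, each $|\ell'_{j,i}|$ is $O(1)$, so each single update to a weight-noise correlation is at most $O(\eta\, \sigma_n^2 pd / n)$ in magnitude (using $\sigma'$ of the inductive bound on $\langle \widetilde{\mathbf{w}}_{j,r}, \boldsymbol{\xi}_i\rangle$ from \Cref{lemma: phase1_mild_informal}), which telescopes to a negligible cumulative change over the allowed $\poly(d)$ steps; analogously the signal-weight correlation of the ``winning'' neuron can only grow when $\ell'_{y_i,i} < 0$ or decay at the same controlled rate. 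The main technical obstacle is establishing the polynomial-activation convexity inequality cleanly in the presence of both the mask and the Phase 1 residuals: because gradient updates live in the masked subspace, $\mathbf{W}^*$ must place its extra signal component in the support of $\mathbf{m}_{j, r_j^*}$ (guaranteed by $r_j^* \in \mathcal{S}^j_{\textnormal{signal}}$), and the argument must accommodate the nonzero cross-class correlations $\langle \widetilde{\mathbf{w}}_{j,r}, \boldsymbol{\mu}_k\rangle$ built up in Phase 1 without letting their contribution swamp the $\eps$ tolerance.
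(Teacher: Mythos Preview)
Your overall architecture---reference point plus descent recursion plus telescoping---matches the paper's, and the gradient-upper-bound usage is right. But two steps in your sketch do not go through as written.

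\textbf{The convexity inequality.} You claim $\langle \nabla L_S(\widetilde{\mathbf{W}}^{(t)}), \widetilde{\mathbf{W}}^{(t)}-\mathbf{W}^*\rangle \ge L_S(\widetilde{\mathbf{W}}^{(t)}) - qL_S(\mathbf{W}^*)$ from $\sigma$-convexity, Euler, and logit-convexity. What these ingredients actually give is
\[
\sum_j \ell'_{j,i}\bigl(qF_j - \hat F_j\bigr)\ \ge\ q\Bigl[\ell_i(F)-\ell_i(\hat F/q)\Bigr],\qquad \hat F_j:=\langle \nabla F_j(\widetilde{\mathbf{W}}^{(t)}),\mathbf{W}^*\rangle,
\]
and there is no clean domination of $\ell_i(\hat F/q)$ by $\ell_i(F(\mathbf{W}^*))$: convexity of $F_j$ only yields $\hat F_j \le (q-1)F_j(\widetilde{\mathbf{W}}^{(t)})+F_j(\mathbf{W}^*)$, an \emph{upper} bound, whereas you would need a \emph{lower} bound on $\hat F_{y_i}$. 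The paper instead bounds $\hat F_{y_i}-\hat F_j\ge q\log(2qK/\eps)$ directly, using that $\max_r\langle\widetilde{\mathbf{w}}_{y_i,r}^{(t)},\boldsymbol{\mu}_{y_i}\rangle\ge \Theta(m^{-1/q})$ from Phase~1; this is what forces $\mathbf{W}^*$ to put $\Theta(m\log(1/\eps))\mu^{-2}\boldsymbol{\mu}_j$ on \emph{every} neuron, not one selected $r_j^*$.

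\textbf{The persistence argument.} This is the more serious gap. You bound each update by $|\ell'_{j,i}|\cdot O(\eta\sigma_n^2 pd/n)\cdot\sigma'(O(\sigma_0\sigma_n\sqrt{pd}))$ with $|\ell'_{j,i}|=O(1)$, then sum over $T_2-T_1=\widetilde{\Theta}(K^2m^4\mu^{-2}\eta^{-1}\eps^{-1})$ steps. This gives a cumulative drift of order $1/\eps$ times what is allowed, so for $\eps=1/\poly(d)$ the induction breaks. The paper avoids this by not using $|\ell'|\le 1$ but rather $|\ell'_{j,i}|\le|\ell'_{y_i,i}|\le\ell_i$, and then summing $\sum_{s=T_1}^{t}\sum_i \ell_i^{(s)}=n\sum_s L_S(\widetilde{\mathbf{W}}^{(s)})$, which the telescoping already bounds by $O(\eta^{-1}\|\widetilde{\mathbf{W}}^{(T_1)}-\mathbf{W}^*\|_F^2)=\widetilde O(\eta^{-1}Km^3\mu^{-2})$ \emph{independently of} $\eps$. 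That loss-weighted summation is the missing idea: the drift in $\zeta,\omega,\gamma$ is controlled by the cumulative loss, not by the number of steps.
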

Notice that the weight-noise correlation still remains reduced by a factor of $\sqrt{p}$ after training. 
Lemma \ref{lemma: phase2_mild_informal} proves the statement of the training loss in Theorem \ref{thm: main_thm_mild_pruning_abbr}. 

\textbf{Generalization Analysis.} Finally, we show that pruning can purify the feature by reducing the variance of the noise by a factor of $p$ when a new sample is given. 
The lemma below shows that the variance of weight-noise correlation for the trained weights is reduced by a factor of $p$.
\begin{lemma}\label{lemma: noise_corr_testing_phase_informal}
The neural network weight $\widetilde{\mathbf{W}}^\star$ after training satisfies that 
\begin{align*}
    \Pr_{\boldsymbol{\xi}} \left[\max_{j,r} \left| \inprod{\widetilde{\mathbf{w}}_{j,r}^\star, \boldsymbol{\xi}} \right| \geq (2m)^{-2/q} \right] \leq 2Km\exp\left( - \frac{(2m)^{-4/q}}{O(\sigma_0^2 \sigma_n^2 pd)} \right).
\end{align*}
\end{lemma}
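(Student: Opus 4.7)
The plan is to reduce this statement to a Gaussian tail bound after establishing a uniform $\ell_2$-norm bound on the trained, pruned weights. The randomness in the statement is only over the fresh test noise $\boldsymbol{\xi}$, so the training randomness (data $S$, initialization, and mask $\mathbf{M}$) can be conditioned on an event of probability at least $1-O(d^{-1})$ where all the inductive properties established earlier in the paper hold.

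First, I would observe that, conditional on the training process, each $\widetilde{\mathbf{w}}_{j,r}^\star$ is a fixed vector that is supported on the coordinates selected by $\mathbf{m}_{j,r}$. Since $\boldsymbol{\xi}\sim\mathcal{N}(\mathbf{0},\sigma_n^2\mathbf{I})$ is independent of everything else, the scalar $\inprod{\widetilde{\mathbf{w}}_{j,r}^\star,\boldsymbol{\xi}}$ is a centered Gaussian with variance $\sigma_n^2\,\|\widetilde{\mathbf{w}}_{j,r}^\star\|_2^2$. A standard Gaussian tail estimate with threshold $t=(2m)^{-2/q}$ then yields
\begin{align*}
\Pr_{\boldsymbol{\xi}}\!\left[|\inprod{\widetilde{\mathbf{w}}_{j,r}^\star,\boldsymbol{\xi}}|\ge (2m)^{-2/q}\right]
\;\le\; 2\exp\!\left(-\frac{(2m)^{-4/q}}{2\sigma_n^2\,\|\widetilde{\mathbf{w}}_{j,r}^\star\|_2^2}\right),
\end{align*}
and a union bound over the $Km$ neurons produces the claimed pre-factor $2Km$. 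Hence the entire statement follows once we show $\|\widetilde{\mathbf{w}}_{j,r}^\star\|_2^2 = O(\sigma_0^2\,pd)$ uniformly in $j,r$.

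The main obstacle is precisely this norm bound. To prove it, I would use the signal/noise decomposition induced by the gradient descent recursion,
\begin{align*}
\widetilde{\mathbf{w}}_{j,r}^\star \;=\; \widetilde{\mathbf{w}}_{j,r}^{(0)} \;+\; \sum_{i=1}^n \rho_{j,r,i}\,\widetilde{\boldsymbol{\xi}}_{j,r,i} \;+\; \sum_{k=1}^K \beta_{j,r,k}\,\boldsymbol{\mu}_k\odot\mathbf{m}_{j,r},
\end{align*}
where $\rho_{j,r,i}$ and $\beta_{j,r,k}$ are the cumulative coefficients arising from the updates. At initialization, $\|\widetilde{\mathbf{w}}_{j,r}^{(0)}\|_2^2 = O(\sigma_0^2\,pd)$ with high probability by concentration of the squared norm of a Gaussian restricted to the $\Theta(pd)$ unmasked coordinates. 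The pruned noise vectors satisfy $\|\widetilde{\boldsymbol{\xi}}_{j,r,i}\|_2 = O(\sigma_n\sqrt{pd})$, and the signal term satisfies $\|\boldsymbol{\mu}_k\odot\mathbf{m}_{j,r}\|_2\le \mu$. The coefficient bounds needed are the same invariants already used to prove \Cref{lemma: phase2_mild_informal}: because the loss derivatives $|\ell_{j,i}'^{(t)}|$ are small outside the feature-growing phase, the sums $\sum_t |\rho_{j,r,i}^{(t+1)}-\rho_{j,r,i}^{(t)}|$ and $\sum_t |\beta_{j,r,k}^{(t+1)}-\beta_{j,r,k}^{(t)}|$ remain controlled, so the additional contributions to $\|\widetilde{\mathbf{w}}_{j,r}^\star\|_2^2$ are at most a constant multiple of the initial mass $\sigma_0^2\,pd$.

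Combining the per-neuron Gaussian tail with the union bound and the norm bound $\|\widetilde{\mathbf{w}}_{j,r}^\star\|_2^2 = O(\sigma_0^2\, pd)$ produces the stated inequality. The only subtlety is that the norm bound must be uniform over all $j,r$; this is not automatic from the weight-noise correlation bounds of Lemma \ref{lemma: phase2_mild_informal} alone, so I would prove it as a standalone companion lemma by tracking $\|\widetilde{\mathbf{w}}_{j,r}^{(t)}\|_2^2$ through the gradient descent induction, using the coefficient control from the feature-growing and converging phase arguments. Everything else is a routine Gaussian tail calculation.
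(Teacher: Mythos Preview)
Your proposal is correct and follows essentially the same route as the paper: establish the uniform bound $\|\widetilde{\mathbf{w}}_{j,r}^\star\|_2 = O(\sigma_0\sqrt{pd})$ via the signal--noise decomposition together with the coefficient bounds from the feature-growing and converging phases, observe that $\inprod{\widetilde{\mathbf{w}}_{j,r}^\star,\boldsymbol{\xi}}$ is Gaussian with variance $O(\sigma_0^2\sigma_n^2 pd)$, apply the tail bound, and union bound over the $Km$ neurons. The paper obtains the norm bound in a single line from the already-proved coefficient estimates (rather than re-tracking $\|\widetilde{\mathbf{w}}_{j,r}^{(t)}\|_2$ through a separate induction), but the content is the same.
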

Using this lemma, we can show that pruning yields better generalization bound (i.e., the bound on the generalization loss) claimed in Theorem \ref{thm: main_thm_mild_pruning_abbr}.

\section{Over Pruning}
Our second result shows that there exists a relatively large pruning fraction (i.e., small $p$) such that the learned model yields poor generalization, although gradient descent is still able to drive the training error toward zero.
The full proof is defered to \Cref{sec: proof_over_pruning}.
\begin{theorem}[Main Theorem for Over Pruning, Informal]\label{thm: main_thm_over_pruning_informal}
Under Condition \ref{condition: params} if $p = \Theta(\frac{1}{Km \log d})$, then with probability at least $1 - 1/\poly \log d$  over the randomness in the data, network initialization and pruning, there exists $T = O(\eta^{-1} n \sigma_0^{q-2} \sigma_n^{-q} (pd)^{-q/2} + \eta^{-1} \eps^{-1} m^4 n \sigma_n^{-2} (pd)^{-1})$ such that 
\begin{enumerate}[itemsep=2pt,topsep=0pt,parsep=0pt]
    \item The training loss is below $\eps$: $L_S(\widetilde{\mathbf{W}}^{(T)}) \leq \eps$. 
    \item The generalization loss is large: 
    $L_\mathcal{D}(\widetilde{\mathbf{W}}^{(T)}) \geq \Omega(\log K)$.

\end{enumerate}
\end{theorem}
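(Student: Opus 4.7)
My plan mirrors the three-phase structure used for the mild pruning theorem, but replaces class-signal learning with noise memorization. The governing observation is that under $p = \Theta(1/(Km\log d))$, a union bound yields $\Pr[\bigcup_{j \in [K]} \{\mathcal{S}_\textnormal{signal}^j \neq \emptyset\}] \leq Kmp = O(1/\log d)$; on the complementary event, every diagonal mask entry $m_{j,r,j}$ vanishes, so $\langle \widetilde{\mathbf{w}}_{j,r}^{(t)}, \boldsymbol{\mu}_j\rangle \equiv 0$ throughout gradient descent because the signal part of the gradient is $\boldsymbol{\mu}_j \odot \mathbf{m}_{j,r} = \mathbf{0}$. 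The cross-class mask entries $m_{j,r,k}$ with $k \neq j$ are active on only $O(K/\log d)$ neurons in expectation, which I would track as a lower-order perturbation.

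For the training dynamics, I would mirror \Cref{lemma: phase1_mild_informal} and \Cref{lemma: phase2_mild_informal} with $\boldsymbol{\xi}_i$ in the role of class-$y_i$ signal for the class-$y_i$ neurons. Using the lower bound in \Cref{lemma: initialization_max_signal_informal} at the noise scale $\sigma_0 \sigma_n \sqrt{pd}$, I would pick for each training sample $i$ a neuron $r_i^\star$ with near-maximal $\langle \widetilde{\mathbf{w}}_{y_i, r_i^\star}^{(0)}, \boldsymbol{\xi}_i\rangle$, and show via the cubic activation that this correlation grows polynomially while wrong-class noise correlations (whose gradient sign pushes them downward) remain at their $\widetilde{O}(\sigma_0 \sigma_n \sqrt{pd})$ initialization scale. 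Pairing the gradient upper bound \Cref{lemma: main_text_grad_upper_bound} with the standard descent argument then yields $L_S(\widetilde{\mathbf{W}}^{(T)}) \leq \eps$ within the stated iteration count, establishing item~1.

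For item~2, fix a trained network on the good event and draw a fresh test sample $(\mathbf{x}, y)$. By Step~1, $F_y(\mathbf{x})$ receives no own-class signal contribution: $F_y(\mathbf{x}) = A_y := \sum_r \sigma(\langle \widetilde{\mathbf{w}}_{y,r}^{(T)}, \boldsymbol{\xi}\rangle)$, while for $j \neq y$, $F_j(\mathbf{x}) = A_j + B_{j,y}$ with a small cross-class signal term $B_{j,y} = \sum_r \sigma(\mu\, w_{j,r,y}\, m_{j,r,y})$. Because the joint law of $(\mathbf{W}^{(0)}, \mathbf{M}, \{(\mathbf{x}_i,y_i)\}_i, \mathbf{x}, y)$ is invariant under permuting class labels of training and test samples jointly with the class indices of the network, the random vector $(A_1,\dots,A_K)$ is exchangeable and $\mathbb{E}[\logit_y(F,\mathbf{x})] = 1/K$; Jensen's inequality applied to the convex function $-\log$ then gives $\mathbb{E}_{\text{train,test}}[-\log \logit_y] \geq \log K$. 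To convert this expectation bound into the high-probability-over-training statement in the theorem, I would prove concentration: with probability $1 - 1/\poly\log d$ over training randomness, $\max_{j,k} |A_j - A_k| = o(1)$ and $\max_{j,y} |B_{j,y}| = o(1)$ uniformly over test noise, which degrades the Jensen bound only by $o(1)$ and yields $L_\mathcal{D}(\widetilde{\mathbf{W}}^{(T)}) \geq \log K - o(1) = \Omega(\log K)$.

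The main obstacle is this final concentration step: the trained weights $\widetilde{\mathbf{w}}_{j,r}^{(T)}$ depend adaptively on class-$j$ training noise, so a naive second-moment argument across classes loses the symmetry. I would handle this by exploiting the sharp orthogonality between the independent test noise and each training noise direction at the $\sigma_n\sqrt{pd}$ scale via a test-time analogue of \Cref{lemma: noise_corr_testing_phase_informal}, so that $A_j$ is a sum of $m$ nearly-independent small contributions whose fluctuation across $j$ vanishes. The cross-class signal contribution $B_{j,y}$ is kept small by noting that the cubic-ReLU gradient pushes cross-class signal correlations toward zero, so $w_{j,r,y}$ stays close to its Gaussian initialization throughout training.
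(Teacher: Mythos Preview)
Your training-loss analysis (signal masks vanish with probability $1-O(1/\log d)$, then noise memorization via a two-phase argument mirroring \Cref{lemma: phase1_mild_informal} and \Cref{lemma: phase2_mild_informal}, capped by the gradient upper bound) matches the paper's proof essentially step for step.

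For item~2, however, you take a substantially longer route than the paper, and the obstacle you flag is self-inflicted. The paper does \emph{not} use exchangeability, Jensen, or any concentration of $A_j - A_k$ across classes. It uses only two facts: (a) $F_j(\mathbf{x}) \geq 0$ deterministically for every $j$, since $\sigma(\cdot) \geq 0$; and (b) $F_y(\mathbf{x}) \leq 1$ with high probability over the fresh test noise $\boldsymbol{\xi}$. For (b), the paper bounds $\norm{\widetilde{\mathbf{w}}_{j,r}^{(T)}}_2 \leq O(\sigma_0 \sqrt{d}) + \widetilde{O}(n \sigma_n^{-1} (pd)^{-1/2})$ directly from the signal--noise decomposition coefficients, so that $\langle \widetilde{\mathbf{w}}_{y,r}^{(T)}, \boldsymbol{\xi}\rangle$ is a mean-zero Gaussian with tiny variance conditional on the training; a union bound over $r$ gives $\max_r |\langle \widetilde{\mathbf{w}}_{y,r}^{(T)}, \boldsymbol{\xi}\rangle| \ll m^{-1/q}$, and since the own-class signal term vanishes on the good event, $F_y(\mathbf{x}) \leq 1$. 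Combining (a) and (b) immediately yields
\[
\ell(F(\widetilde{\mathbf{W}}^{(T)};\mathbf{x})) \;=\; \log\Bigl(1 + \sum_{j \neq y} e^{F_j - F_y}\Bigr) \;\geq\; \log\bigl(1 + (K-1)e^{-1}\bigr) \;=\; \Omega(\log K).
\]
No comparison between the $A_j$'s is needed, and no cross-class $B_{j,y}$ term enters because you only upper-bound $F_y$ and lower-bound $F_j$ by zero. Your proposed fix to the concentration obstacle---showing each $A_j$ is small via a test-time analogue of \Cref{lemma: noise_corr_testing_phase_informal}---is exactly the paper's step (b); once you have it, the exchangeability scaffolding becomes redundant.
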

\begin{remark}
The above theorem indicates that in the over-pruning case, the training loss can still go to zero. However, the generalization loss of our neural network behaves no much better than random guessing, because given any sample, random guessing will assign each class with probability $1/K$, which yields a generalization loss of $\log K$.
The readers might wonder why the condition for this to happen is $p = \Theta(\frac{1}{Km \log d})$ instead of $O(\frac{1}{Km \log d})$. 
Indeed, the generalization will still be bad if $p$ is too small.
However, now the neural network is not only unable to learn the signal but also cannot efficiently memorize the noise via gradient descent. 
\end{remark}
\begin{proof}[\textbf{Proof Outline}] Now we analyze the over-pruning case. We first show that there is a good chance that the model will not receive any signal after pruning due to the sparse signal assumption and mild overparameterization of the neural network. Then, leveraging such a property, we bound the weight-signal and weight-noise properties for the feature growing and converging phases of gradient descent, as stated in the following two lemmas, respectively. Our result indicates that the training loss can still be driven toward zero by letting the neural network memorize the noise, the proof of which further exploits the fact that high dimensional Gaussian noise are nearly orthogonal. 

\begin{lemma}[Feature Growing Phase, Informal]\label{lemma: noise_mem_phase1_informal}
Under Condition \ref{condition: params}, there exists $T_1$ such that 
\begin{itemize}[itemsep=2pt,topsep=0pt,parsep=0pt]
    \item Some weights has large correlation with noise: $\max_{r} \inprod{\widetilde{\mathbf{w}}_{y_i,r}^{(T_1)}, \boldsymbol{\xi}_i} \geq m^{-1/q}$ for all $i \in [n]$. 
    \item The cross-class weight-noise and weight-signal correlations are small: if $j \neq y_i$, then $\max_{j,r,i} \left|\inprod{\widetilde{\mathbf{w}}_{j,r}^{(T_1)}, \boldsymbol{\xi}_i} \right| = \widetilde{O}(\sigma_0 \sigma_n \sqrt{pd})$ and $\max_{j,r,k} \left| \inprod{\widetilde{\mathbf{w}}_{j,r}^{(T_1)}, \boldsymbol{\mu}_k} \right| \leq \widetilde{O}(\sigma_0 \mu)$.
\end{itemize}
\end{lemma}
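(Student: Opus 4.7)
The plan is to prove this by an inductive argument that mirrors the feature-growing analysis in the mild-pruning case (Lemma \ref{lemma: phase1_mild_informal}), but with the roles of signal and noise essentially interchanged: in the over-pruning regime the signal pathway is almost entirely severed by the mask, and gradient descent instead amplifies the alignment of each class's neurons with the noise of the training samples in that class. First, I would establish a structural property of the random mask. Since $\boldsymbol{\mu}_j = \mu \mathbf{e}_j$ is $1$-sparse, for each pair $(j,r)$ the event $\boldsymbol{\mu}_j \odot \mathbf{m}_{j,r} \neq \mathbf{0}$ occurs independently with probability exactly $p = \Theta(1/(Km\log d))$, so $\mathbb{E}[|\mathcal{S}_{\textnormal{signal}}^j|] = mp = \Theta(1/(K\log d)) \ll 1$. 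A union bound shows that with probability at least $1 - 1/\polylog d$ one has $|\mathcal{S}_{\textnormal{signal}}^j| = 0$ for every $j \in [K]$, i.e.\ $\boldsymbol{\mu}_j \odot \mathbf{m}_{j,r} = \mathbf{0}$ for every $(j,r)$. Because every gradient update to $\widetilde{\mathbf{w}}_{j,r}$ lives on $\supp(\mathbf{m}_{j,r})$, this forces $\inprod{\widetilde{\mathbf{w}}_{j,r}^{(t)}, \boldsymbol{\mu}_j} \equiv 0$ for all $t$, while the cross-class correlations $\inprod{\widetilde{\mathbf{w}}_{j,r}^{(t)}, \boldsymbol{\mu}_k}$ with $k \neq j$ are frozen at their initial magnitude $\widetilde{O}(\sigma_0 \mu)$ (Lemma \ref{lemma: initialization_max_signal_informal}) since no signal gradient is ever injected. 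This yields the signal part of the second bullet.

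The heart of the proof is tracking the noise correlations. Substituting the mask identity into the update rule, and noting that the signal term vanishes,
\[
\inprod{\widetilde{\mathbf{w}}_{j,r}^{(t+1)}, \boldsymbol{\xi}_i} = \inprod{\widetilde{\mathbf{w}}_{j,r}^{(t)}, \boldsymbol{\xi}_i} - \frac{\eta}{n}\sum_{i'=1}^n \ell_{j,i'}'^{(t)}\, \sigma'\!\left(\inprod{\widetilde{\mathbf{w}}_{j,r}^{(t)}, \boldsymbol{\xi}_{i'}}\right)\inprod{\widetilde{\boldsymbol{\xi}}_{j,r,i'}, \boldsymbol{\xi}_i}.
\]
The diagonal $(i'=i)$ coefficient is $\norm{\widetilde{\boldsymbol{\xi}}_{j,r,i}}_2^2 = \Theta(\sigma_n^2 pd)$ by Bernstein on the masked Gaussian, while each off-diagonal coefficient is $\widetilde{O}(\sigma_n^2\sqrt{pd})$ by Gaussian concentration — a factor of $\sqrt{pd}$ smaller. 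For $j = y_i$ the sign of $\ell_{y_i,i}'^{(t)}$ is negative, so the diagonal term pushes the correlation upward with a super-linear rate coming from $\sigma'(z) = q\,z^{q-1}$. I would run a tensor-power style induction on $z_i^{(t)} := \max_r \inprod{\widetilde{\mathbf{w}}_{y_i,r}^{(t)}, \boldsymbol{\xi}_i}$: starting from $z_i^{(0)} = \Theta(\sigma_0 \sigma_n\sqrt{pd\log(Kmd)})$, and while $|\ell_{y_i,i}'^{(t)}| = \Theta(1)$, the recursion $z_i^{(t+1)} \geq z_i^{(t)} + c\,\eta\,\sigma_n^2 pd\,(z_i^{(t)})^{q-1}/n$ reaches $m^{-1/q}$ in $T_1 = \widetilde{O}(\eta^{-1} n \sigma_0^{q-2} \sigma_n^{-q}(pd)^{-q/2})$ iterations, matching the time scale stated in Theorem \ref{thm: main_thm_over_pruning_informal}. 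This gives the first bullet.

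For the cross-sample and cross-class noise correlations in the second bullet I would argue by a parallel upper-bound induction. When $j \neq y_i$, $\ell_{j,i}'^{(t)} > 0$, so the diagonal term has the opposite sign and inhibits rather than promotes growth; combined with the cross-entropy identity $\sum_{j \neq y_i}|\ell_{j,i}'^{(t)}| = |\ell_{y_i,i}'^{(t)}|$ (already exploited in the proof of Lemma \ref{lemma: main_text_grad_upper_bound}) the induction maintains $\max_{j \neq y_i, r}|\inprod{\widetilde{\mathbf{w}}_{j,r}^{(t)}, \boldsymbol{\xi}_i}| = \widetilde{O}(\sigma_0\sigma_n\sqrt{pd})$ for all $t \in [0, T_1]$. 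The main obstacle is the precise control of the off-diagonal noise-noise cross terms $\sum_{i' \neq i}\ell_{j,i'}'^{(t)}\sigma'(\inprod{\widetilde{\mathbf{w}}_{j,r}^{(t)}, \boldsymbol{\xi}_{i'}})\inprod{\widetilde{\boldsymbol{\xi}}_{j,r,i'}, \boldsymbol{\xi}_i}$: once the diagonal growth in the winning neuron for some other sample $i'$ becomes large, $\sigma'$ could in principle amplify an unwanted cross term. Ruling this out requires exploiting the near-orthogonality bound $|\inprod{\widetilde{\boldsymbol{\xi}}_{j,r,i'}, \boldsymbol{\xi}_i}| = \widetilde{O}(\sigma_n^2\sqrt{pd})$ together with the small sample size $n = \polylog d$ to show that the aggregated off-diagonal contribution is a $\polylog d$ multiple of a single off-diagonal coefficient and hence strictly smaller than the diagonal growth throughout the phase. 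I expect this simultaneous multi-quantity induction, and in particular the off-diagonal noise accounting under the random mask, to be the key technical step.
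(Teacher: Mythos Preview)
Your growth argument for $\max_r \langle \widetilde{\mathbf{w}}_{y_i,r}^{(t)}, \boldsymbol{\xi}_i \rangle$ via a tensor-power recursion is exactly what the paper does (there phrased through the decomposition coefficient $\zeta_{y_i,r,i}$ and the auxiliary sequence $B_i^{(t)}$), and the off-diagonal noise accounting you flag as the key technical step is indeed the right ingredient.

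There is, however, a genuine gap in the signal part. Your claim that the cross-class correlations $\langle \widetilde{\mathbf{w}}_{j,r}^{(t)}, \boldsymbol{\mu}_k \rangle$ for $k\neq j$ are \emph{frozen} because ``no signal gradient is ever injected'' does not follow from the mask event you establish. The event $|\mathcal{S}^j_{\textnormal{signal}}|=0$ for all $j$ says only that $(\mathbf{m}_{j,r})_j=0$ for every $(j,r)$; it says nothing about $(\mathbf{m}_{j,r})_k$ for $k\neq j$. With $p=\Theta(1/(Km\log d))$ there are $K(K-1)m$ such off-diagonal mask entries, whose expected number of ones is $\Theta((K-1)/\log d)$; under $K=O(\log d)$ you cannot conclude they all vanish with the stated probability. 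Hence for those triples with $(\mathbf{m}_{j,r})_k=1$ the gradient term $\sigma'(\langle \widetilde{\mathbf{w}}_{j,r}^{(t)}, \boldsymbol{\mu}_k\rangle)\,\boldsymbol{\mu}_k\odot\mathbf{m}_{j,r}$ is nonzero and these correlations do evolve. The same omission makes your simplified noise-update identity (where ``the signal term vanishes'') unjustified as written.

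The paper sidesteps this entirely: it invokes the global coefficient bounds of Proposition~\ref{prop: coefficient_upper_lower_bound} (proved once for all $t<T^\star$, independently of the pruning regime) to control $|\gamma_{j,r,k}^{(t)}|$ and $|\omega_{j,r,i}^{(t)}|$ uniformly by $\widetilde{O}(\sigma_0\mu)$ and $\widetilde{O}(\sigma_0\sigma_n\sqrt{pd})$, and only argues the $\zeta$-growth fresh. If you want to keep your self-contained induction instead, you must add an explicit upper-bound recursion for $|\gamma_{j,r,k}^{(t)}|$ (parallel to the one you already sketch for $|\omega|$), rather than assert freezing; the contribution of this non-frozen signal term to your noise update is then easily absorbed since it is of order $(\widetilde{O}(\sigma_0\mu))^{q-1}\cdot O(\sigma_n\mu\sqrt{\log d})$.
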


\begin{lemma}[Converging Phase, Informal]\label{lemma: noise_mem_phase2_informal}
Under Condition \ref{condition: params}, there exists a time $T_2$ such that $\exists t \in [T_1, T_2]$, the results from phase 1 still holds (up to constant factors) and $L_S(\widetilde{\mathbf{W}}^{(t)}) \leq \eps$.
\end{lemma}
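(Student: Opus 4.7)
The proof mirrors the converging-phase analysis for mild pruning (Lemma~\ref{lemma: phase2_mild_informal}), with the roles of signal and noise interchanged: here the noise vectors $\boldsymbol{\xi}_i$ play the part of ``features'' that must be preserved, while the genuine class signals $\boldsymbol{\mu}_j$ and all cross-sample noise correlations must stay negligible. By Lemma~\ref{lemma: noise_mem_phase1_informal}, at time $T_1$ each training example $i$ has a designated neuron index $r_i\in[m]$ in its own class with $\inprod{\widetilde{\mathbf{w}}_{y_i,r_i}^{(T_1)},\boldsymbol{\xi}_i}\geq m^{-1/q}$, and all cross-correlations are already small. The goal of phase 2 is to continue gradient descent from $\widetilde{\mathbf{W}}^{(T_1)}$ until $L_S\leq\eps$ while maintaining the three invariants: (i) $\max_r \inprod{\widetilde{\mathbf{w}}_{y_i,r}^{(t)},\boldsymbol{\xi}_i}=\Theta(m^{-1/q})$ for every $i$; (ii) $|\inprod{\widetilde{\mathbf{w}}_{j,r}^{(t)},\boldsymbol{\xi}_{i'}}|=\widetilde{O}(\sigma_0\sigma_n\sqrt{pd})$ whenever $(j,r)\neq(y_{i'},r_{i'})$; (iii) $|\inprod{\widetilde{\mathbf{w}}_{j,r}^{(t)},\boldsymbol{\mu}_k}|=\widetilde{O}(\sigma_0\mu)$ for all $j,r,k$.

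\textbf{Step 1: gradient upper bound and descent.} I would first prove a noise-analogue of Lemma~\ref{lemma: main_text_grad_upper_bound},
\[
\norm{\nabla L_S(\widetilde{\mathbf{W}}^{(t)})\odot \mathbf{M}}_F^2 \;\leq\; O\bigl(K\,m^{2/q}\,\sigma_n^2\,p\,d\bigr)\,L_S(\widetilde{\mathbf{W}}^{(t)}),
\]
obtained by substituting the noise-norm bound $\|\widetilde{\boldsymbol{\xi}}_{j,r,i}\|_2^2=\widetilde{O}(\sigma_n^2 p d)$ for $\|\boldsymbol{\mu}\|_2^2=\mu^2$ in the estimate of $\nabla F_j$, and reusing the cross-entropy identity $\sum_{j\neq y_i}|\ell'_{j,i}|=|\ell'_{y_i,i}|$ to keep only a single factor of $K$. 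Combined with a local smoothness bound for $L_S$ along the trajectory (valid as long as (i)--(iii) persist and $\eta\leq\widetilde{O}(1/\mu^2)$), this gives the descent inequality $L_S(\widetilde{\mathbf{W}}^{(t+1)})\leq L_S(\widetilde{\mathbf{W}}^{(t)})-(\eta/2)\norm{\nabla L_S(\widetilde{\mathbf{W}}^{(t)})\odot\mathbf{M}}_F^2$.

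\textbf{Step 2: reference solution and online-GD telescoping.} I would construct a reference weight $\widetilde{\mathbf{W}}^{\circ}$ obtained from $\widetilde{\mathbf{W}}^{(T_1)}$ by adding, for each $i$, a scalar multiple of order $\log(K/\eps)\cdot(\sigma_n^2 pd)^{-1/2}$ of $\widetilde{\boldsymbol{\xi}}_{y_i,r_i,i}$ to the weight $\widetilde{\mathbf{w}}_{y_i,r_i}$. Near-orthogonality of high-dimensional Gaussians, $|\inprod{\boldsymbol{\xi}_i,\boldsymbol{\xi}_{i'}}|=\widetilde{O}(\sigma_n^2\sqrt{pd})$ for $i\neq i'$, ensures that each example is classified correctly at $\widetilde{\mathbf{W}}^{\circ}$ without cross-interference, so $L_S(\widetilde{\mathbf{W}}^{\circ})\leq\eps/2$ while $\|\widetilde{\mathbf{W}}^{\circ}-\widetilde{\mathbf{W}}^{(T_1)}\|_F^2=\widetilde{O}(n\sigma_n^{-2}(pd)^{-1})$. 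Convexity of $\ell$ together with the large value of $F_{y_i}$ in the $\widetilde{\boldsymbol{\xi}}_{y_i,r_i,i}$-direction yields the PL-type bound $\inprod{\nabla L_S(\widetilde{\mathbf{W}}^{(t)}),\widetilde{\mathbf{W}}^{(t)}-\widetilde{\mathbf{W}}^{\circ}}\geq L_S(\widetilde{\mathbf{W}}^{(t)})-\eps/2$, and telescoping $\|\widetilde{\mathbf{W}}^{(t)}-\widetilde{\mathbf{W}}^{\circ}\|_F^2$ in the standard way produces some $t\in[T_1,T_2]$ with $L_S(\widetilde{\mathbf{W}}^{(t)})\leq\eps$ within $T_2-T_1=O(\eta^{-1}\eps^{-1}m^4 n\sigma_n^{-2}(pd)^{-1})$ iterations, matching Theorem~\ref{thm: main_thm_over_pruning_informal}.

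\textbf{Main obstacle.} The delicate step is propagating invariants (ii) and (iii) along the full trajectory rather than merely bounding them at the endpoint. The update of $\widetilde{\mathbf{w}}_{j,r}^{(t)}$ is a linear combination of $\widetilde{\boldsymbol{\xi}}_{j,r,i}$'s and (tiny) signal pieces, so taking an inner product with a non-matching $\boldsymbol{\xi}_{i'}$ produces off-diagonal Gaussian cross terms of size $\widetilde{O}(\sigma_n^2\sqrt{pd})$ per iteration. Summed naively over $T_2=\poly(d)$ steps these could destroy the invariant, so I would control them via the total weighted gradient energy $\sum_t\eta|\ell'_{j,i}{}^{(t)}|\sigma'(\inprod{\widetilde{\mathbf{w}}_{j,r}^{(t)},\boldsymbol{\xi}_i})$, which the descent inequality together with $|\ell'_{j,i}|\leq\ell_i^{(t)}$ lets me bound by $\widetilde{O}(1)$ worth of loss decrease. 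For invariant (iii) the saving grace is the sparsity of the signal and the small pruning rate $p=\Theta(1/(Km\log d))$: only a $p$-fraction of neurons per class receive any $\boldsymbol{\mu}_k$ component after masking, and the same telescoping argument keeps the cumulative signal accrual below $\widetilde{O}(\sigma_0\mu)$. The bookkeeping is structurally parallel to the mild-pruning phase~2 proof, with every occurrence of $\mu^2$ in the convergence rate effectively replaced by $\sigma_n^2 pd$, modulo an extra factor of $n$ stemming from the $n$ reference directions (one per training sample) rather than $K$ class signals.
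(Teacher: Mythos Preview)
Your overall architecture---construct a noise-based reference point, telescope the squared distance $\|\widetilde{\mathbf{W}}^{(t)}-\widetilde{\mathbf{W}}^\circ\|_F^2$, and control the small coefficients along the way---matches the paper's approach. Two points need correction.

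First, the ``PL-type bound'' in Step~2 does not follow from convexity of $\ell$ alone: $L_S$ is \emph{not} convex in $\widetilde{\mathbf{W}}$. The paper's argument hinges on the $q$-homogeneity of the network, $\inprod{\nabla F_j(\widetilde{\mathbf{W}}^{(t)}),\widetilde{\mathbf{W}}^{(t)}}=qF_j(\widetilde{\mathbf{W}}^{(t)})$, which turns $\inprod{\nabla L_S,\widetilde{\mathbf{W}}^{(t)}-\widetilde{\mathbf{W}}^\star}$ into $\frac{1}{n}\sum_i\sum_j\ell'_{j,i}[qF_j-\hat F_j]$ with $\hat F_j=\inprod{\nabla F_j,\widetilde{\mathbf{W}}^\star_j}$; only then does convexity of $\ell$ \emph{in the logits} give the lower bound $q[\ell_i(F)-\ell_i(\hat F/q)]$, and the separate estimate $\hat F_{y_i}-\hat F_j\ge q\log(2qK/\eps)$ finishes it. Without homogeneity the inequality you wrote has no justification. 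Relatedly, the descent inequality you posit in Step~1 is neither established nor needed: the gradient upper bound (which is the same Lemma~\ref{lemma: main_text_grad_upper_bound}, already written to cover the $\sigma_n^2 pd$ scale) is used only to absorb the $\eta^2\|\nabla L_S\|_F^2$ term in the distance telescoping, not to produce a loss-descent step.

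Second, two design choices in the paper differ from yours. The paper's reference adds $\Theta(m\log(1/\eps))\sum_{i:y_i=j}\widetilde{\boldsymbol{\xi}}_{j,r,i}/\|\widetilde{\boldsymbol{\xi}}_{j,r,i}\|_2^2$ to \emph{every} neuron $r$ in class $j$ (anchored at $\widetilde{\mathbf{W}}^{(0)}$), not only to the single argmax neuron $r_i$; this ensures $\inprod{\widetilde{\boldsymbol{\xi}}_{j,r,i},\widetilde{\mathbf{w}}_{j,r}^\star}=\Theta(m\log(1/\eps))$ for whichever $r$ currently realises $\max_r\inprod{\widetilde{\mathbf{w}}_{y_i,r}^{(t)},\boldsymbol{\xi}_i}\ge m^{-1/q}$, giving $\inprod{\nabla F_{y_i},\widetilde{\mathbf{W}}^\star_{y_i}}\ge\Theta(m^{1/q}\log(1/\eps))$ uniformly in $t$. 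Your per-sample construction can be salvaged by invoking monotonicity of $\zeta_{y_i,r_i,i}^{(t)}$, but you should say so explicitly. For the invariants (ii)--(iii), the paper does not re-derive them via cumulative loss in the over-pruning case; it simply invokes the global a-priori bounds of Proposition~\ref{prop: coefficient_upper_lower_bound}, valid for all $t\le T^\star$, which already give $|\omega_{j,r,i}^{(t)}|,|\gamma_{j,r,k}^{(t)}|=\widetilde O(\sigma_0\sigma_n\sqrt{pd})$. Your cumulative-loss route (borrowed from the mild-pruning phase~2) would also work but is more laborious than necessary here.
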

Finally, since the above lemmas show that the network is purely memorizing the noise, we further show that such a network yields poor generalization performance as stated in Theorem \ref{thm: main_thm_over_pruning_informal}. 
\end{proof}

\section{Experiments}
\subsection{Simulations to Verify Our Results}
In this section, we conduct simulations to verify our results. 
We conduct our experiment using binary classification task and show that our result holds for ReLU networks. 
Our experiment settings are the follows: we choose input to be $\mathbf{x} = [\mathbf{x}_1, \mathbf{x}_2] = [y\mathbf{e}_1, \boldsymbol{\xi}] \in \R^{800}$ and $\mathbf{x}_1, \mathbf{x}_2 \in \R^{400}$, where $\boldsymbol{\xi}_i$ is sampled from a Gaussian distribution. 
The class labels $y$ are $\{\pm 1\}$. 
We use 100 training examples and 100 testing examples. The network has width $150$ and is initialized with random Gaussian distribution with variance 0.01. Then, $p$ fraction of the weights are randomly pruned. 
We use the learning rate of 0.001 and train the network over 1000 iterations by gradient descent. 

The observations are summarized as follows. In Figure \ref{figure: 0.5pruning_rates_training_testing_error}, when the noise level is $\sigma_n = 0.5$, the pruned network usually can perform at the similar level with the full model when $p \leq 0.5$ and noticably better when $p=0.3$. 
When $p > 0.5$, the test error increases dramatically while the training accuracy still remains perfect. 
On the other hand, when the noise level becomes large $\sigma_n = 1$ (Figure \ref{figure: 1pruning_rates}), the full model can no longer achieve good testing performance but mild pruning can improve the model's generalization. 
Note that the training accuracy in this case is still perfect (omitted in the figure). 
We observe that in both settings when the model test error is large, the variance is also large. 
However, in \Cref{figure: 1pruning_rates}, despite the large variance, the mean curve is already smooth.
In particular, Figure \ref{figure: training_curve} plots the testing error over the training iterations under $p=0.5$ pruning rate.
This suggests that pruning can be beneficial even when the input noise is large. 

\begin{figure}[ht]
\vskip -0.2in
  \centering
  \subfloat[]{\includegraphics[width=0.33\columnwidth]{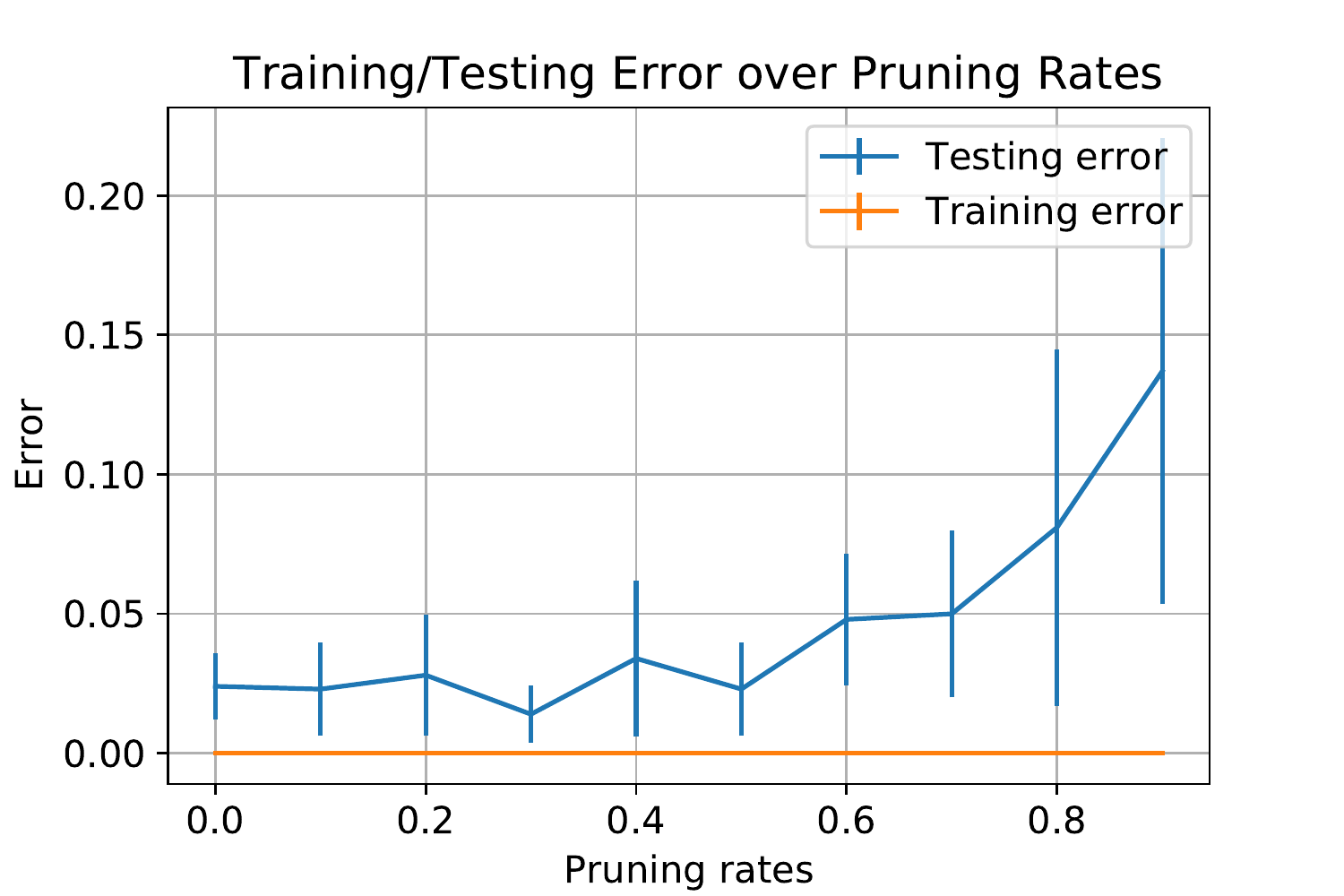} \label{figure: 0.5pruning_rates_training_testing_error}}
  \subfloat[]{\includegraphics[width=0.33\columnwidth]{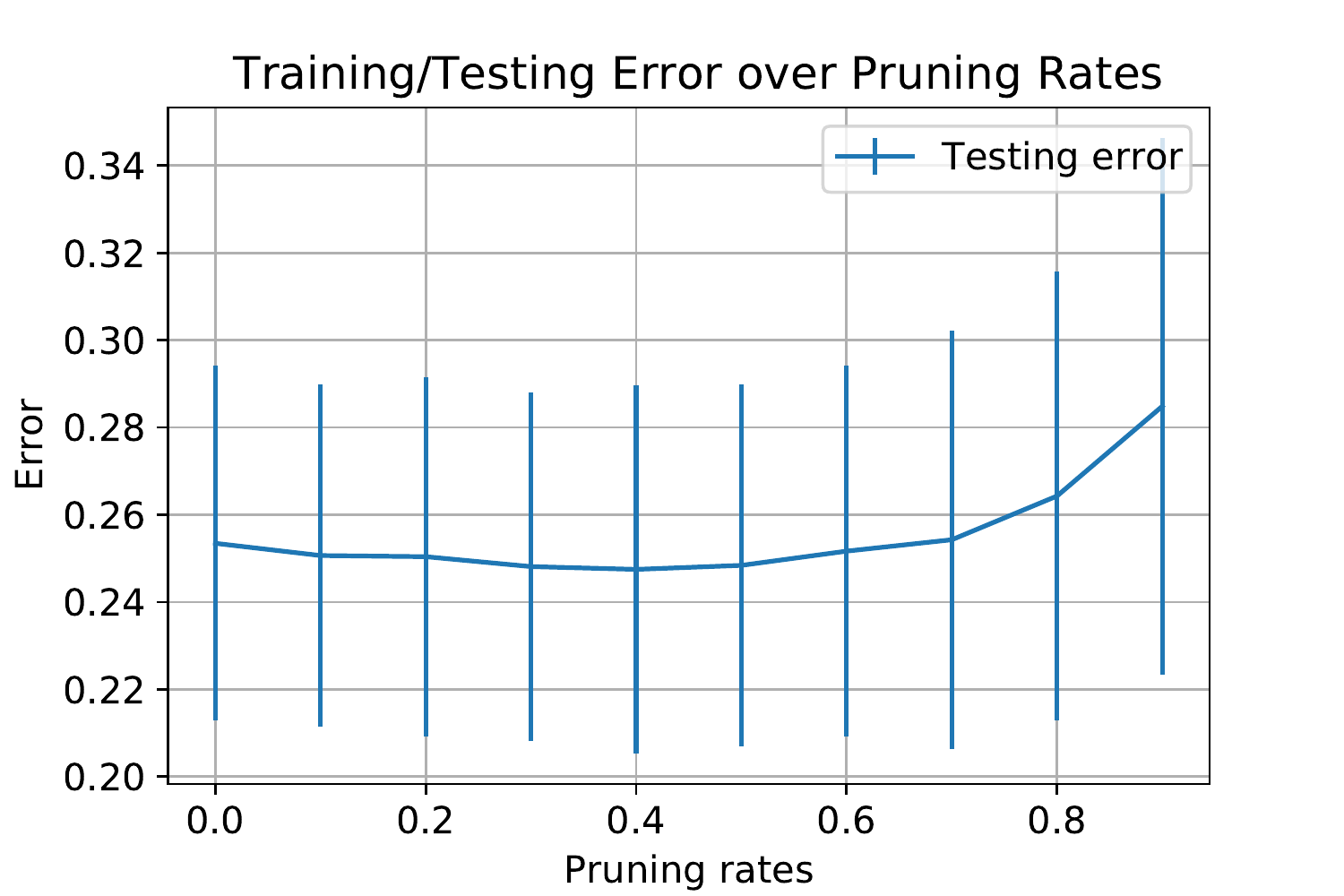} \label{figure: 1pruning_rates}}
  \subfloat[]{\includegraphics[width=0.33\columnwidth]{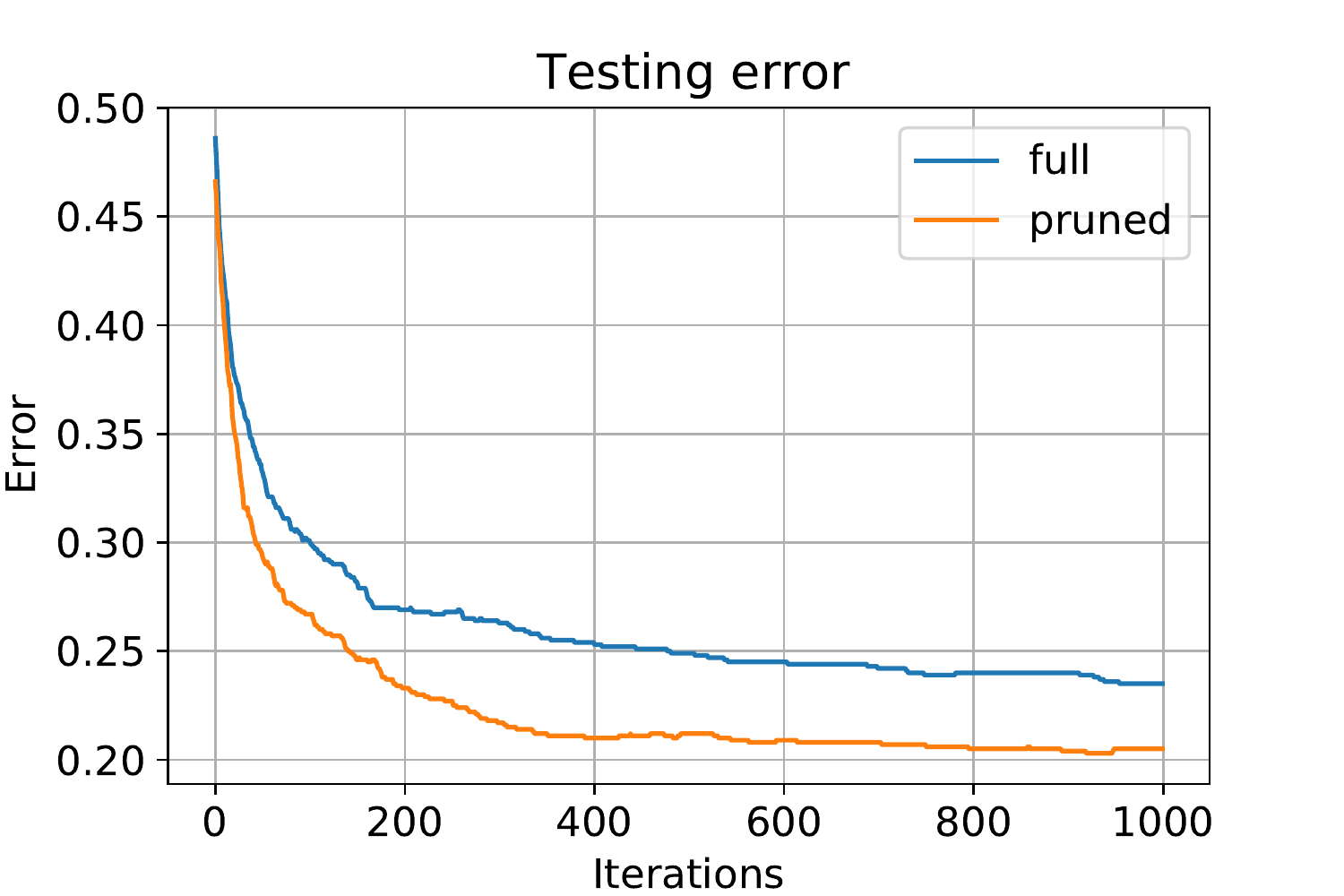} \label{figure: training_curve}}
  \caption{Figure (a) shows the relationship between pruning rates $p$ and training/testing error under noise variance $\sigma_n = 0.5$.
  Figure (b) shows the relationship between pruning rates $p$ and testing error under noise variance $\sigma_n = 1$. 
  The training error is omitted since it stays effectively at zero across all pruning rates.
  Figure (c) shows a particular training curve under pruning rate $p = 50\%$ and noise variance $\sigma_n = 1$. Each data point is created by taking an average over 10 independent runs.}
  \vskip -0.2in
\end{figure}

\begin{figure}[ht]
  \centering
  \subfloat[]{\includegraphics[width=0.5\columnwidth]{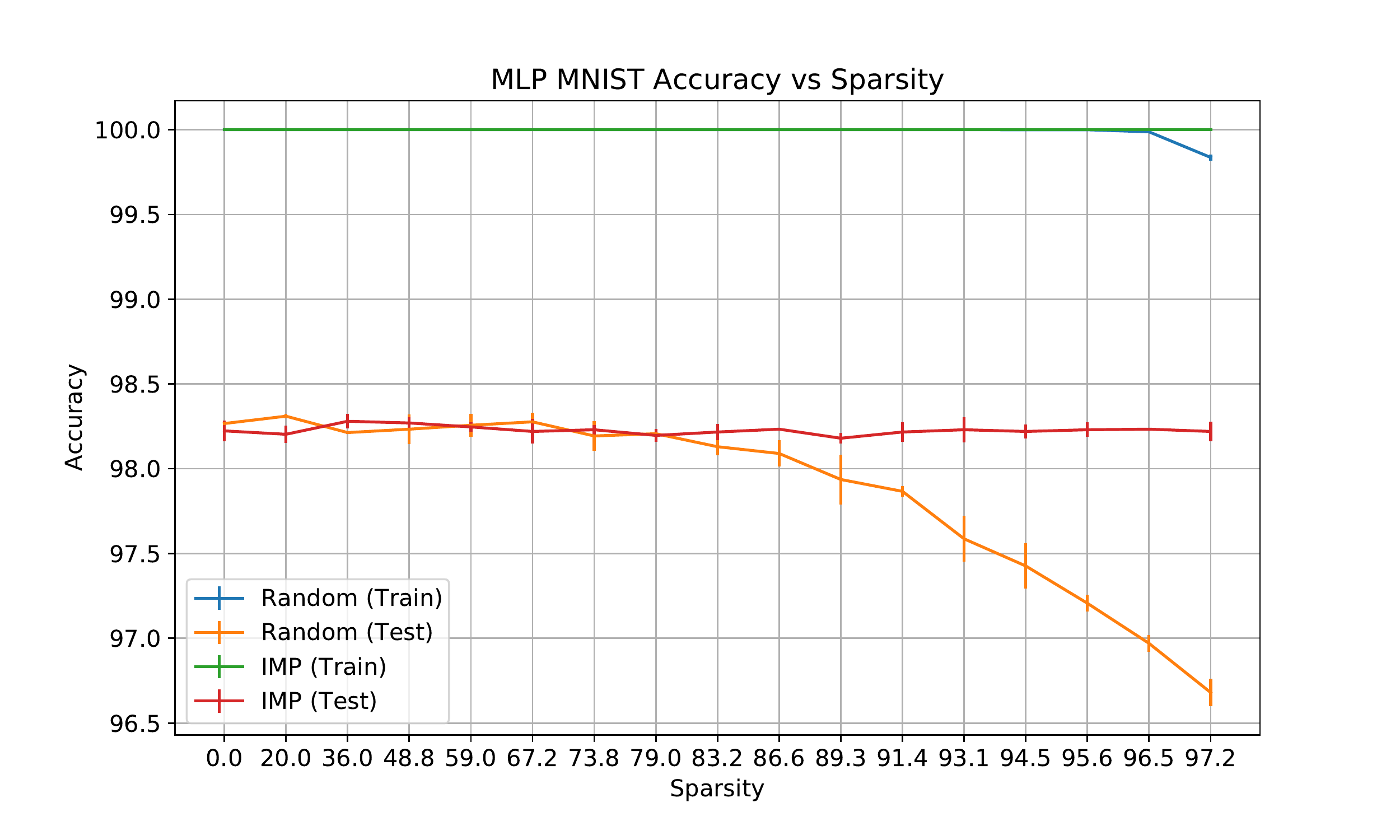} \label{figure: mlp_acc}}
  \subfloat[]{\includegraphics[width=0.5\columnwidth]{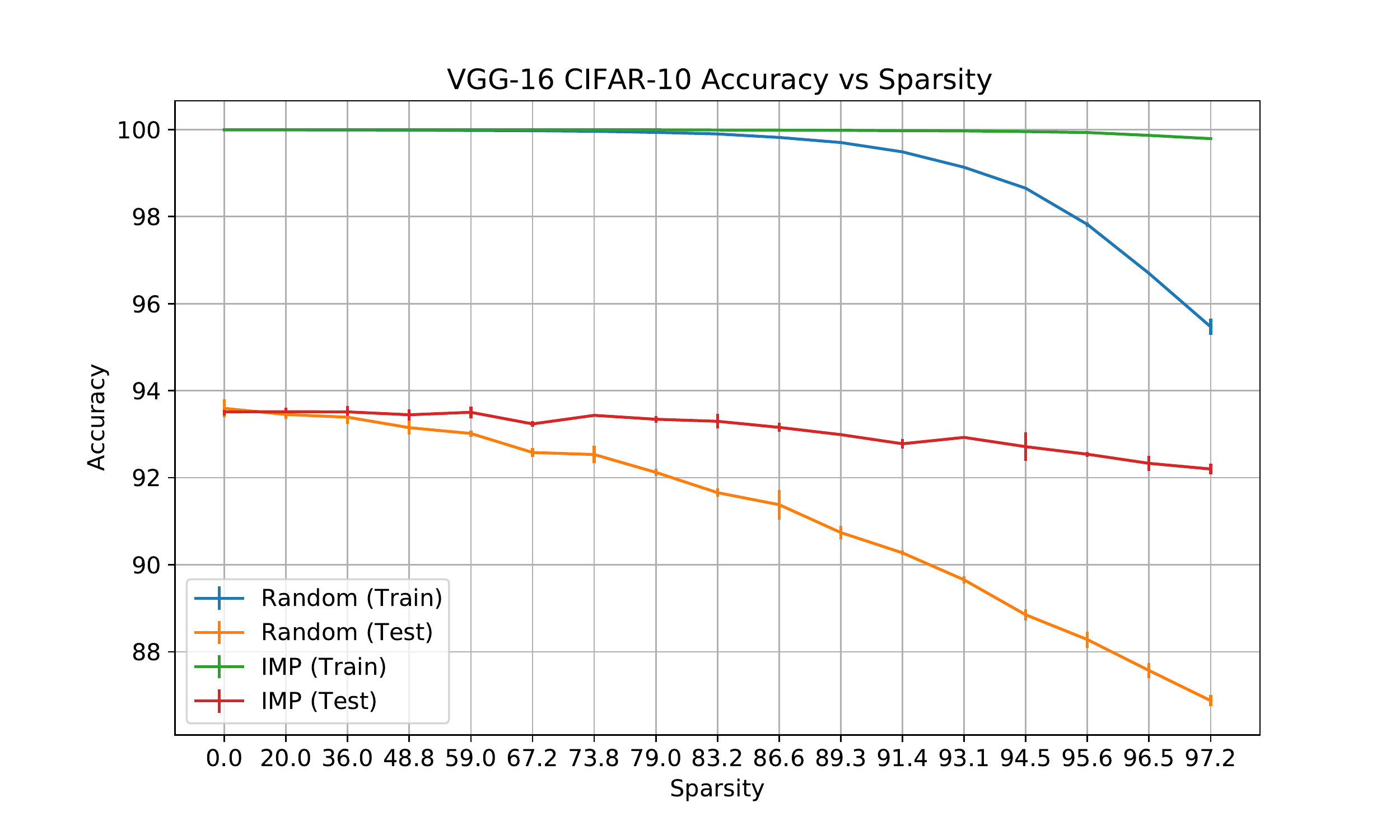} \label{figure: vgg_acc}}
  \caption{Figure (a) shows the result between pruning rates $p$ and accuracy on MLP-1024-1024 on MNIST.
  Figure (b) shows the result on VGG-16 on CIFAR-10. 
  Each data point is created by taking an average over 3 independent runs. } 
  \vskip -0.1in
\end{figure}

\subsection{On the Real World Dataset}
To further demonstrate the mild/over pruning phenomenon, we conduct experiments on MNIST \citep{deng2012mnist} and CIFAR-10 \citep{krizhevsky2009learning} datasets. We consider neural network architectures including MLP with 2 hidden layers of width 1024, VGG, ResNets \citep{he2016deep} and wide ResNet \citep{zagoruyko2016wide}. 
In addition to random pruning, we also add iterative-magnitude-based pruning \cite{frankle2018lottery} into our experiments.
{Both pruning methods are prune-at-initialization methods.}
Our implementation is based on \cite{chen2021elastic}. 

Under the real world setting, we do not expect our theorem to hold \emph{exactly}. 
Instead, our theorem implies that (1) there exists a threshold such that the testing performance is no much worse than (or sometimes may slightly better than) its dense counter part; and (2) the training error decreases later than the testing error decreases. 
Our experiments on MLP (Figure \ref{figure: mlp_acc}) and VGG-16 (Figure \ref{figure: vgg_acc}) show that this is the case: for MLP the test accuracy is steady competitive to its dense counterpart when the sparsity is less than $79\%$ and $36\%$ for VGG-16. 
We further provide experiments on ResNet in the appendix for validation of our theoretical results.


\section{Discussion and Future Direction}\label{section: discussion}
In this work, we provide theory on the generalization performance of pruned neural networks trained by gradient descent under different pruning rates. 
Our results characterize the effect of pruning under different pruning rates: in the mild pruning case, the signal in the feature is well-preserved and the noise level is reduced which leads to improvement in the trained network's generalization; on the other hand, over pruning significantly destroys signal strength despite of reducing noise variance. 
One open problem on this topic still appears challenging. In this paper, we characterize two cases of pruning: in mild pruning the signal is preserved and in over pruning the signal is completely destroyed. 
However, the transition between these two cases is not well-understood. Further, it would be interesting to consider more general data distribution, and understand how pruning affects training multi-layer neural networks. 
We leave these interesting directions as future works.  

\bibliography{ref}
\bibliographystyle{ims}

\newpage
\appendix

\section{Experiment Details}
The experiments of MLP, VGG and ResNet-32 are run on NVIDIA A5000 and ResNet-50 and ResNet-20-128 is run on 4 NIVIDIA V100s. 
We list the hyperparameters we used in training. 
All of our models are trained with SGD and the detailed settings are summarized below. 
\begin{table}[th]
\caption{Summary of architectures, dataset and training hyperparameters}
\label{sample-table}
\vskip 0.15in
\begin{center}
\begin{small}
\begin{sc}
\begin{tabular}{lcccccccr}
\toprule
Model & Data & Epoch & Batch Size & LR & Momentum & LR Decay, Epoch & Weight Decay \\
\midrule
LeNet & MNIST & 120 & 128 & 0.1 & 0 & 0 & 0 \\
VGG & CIFAR-10 & 160 & 128 & 0.1 & 0.9 & 0.1 $\times$ [80, 120] & 0.0001 \\
ResNets & CIFAR-10 & 160 & 128 & 0.1 & 0.9 & 0.1 $\times$ [80, 120] & 0.0001 \\
\bottomrule
\end{tabular}
\end{sc}
\end{small}
\end{center}
\vskip -0.1in
\end{table}

\section{Further Experiment Results}
We plot the experiment result of ResNet-20-128 in \Cref{figure: resnet-20-128_acc}. 
This figure further verifies our results that there exists pruning rate threshold such that the testing performance of the pruned network is on par with the testing performance of the dense model while the training accuracy remains perfect.
\begin{figure}[ht]
  \centering
  {\includegraphics[width=0.5\columnwidth]{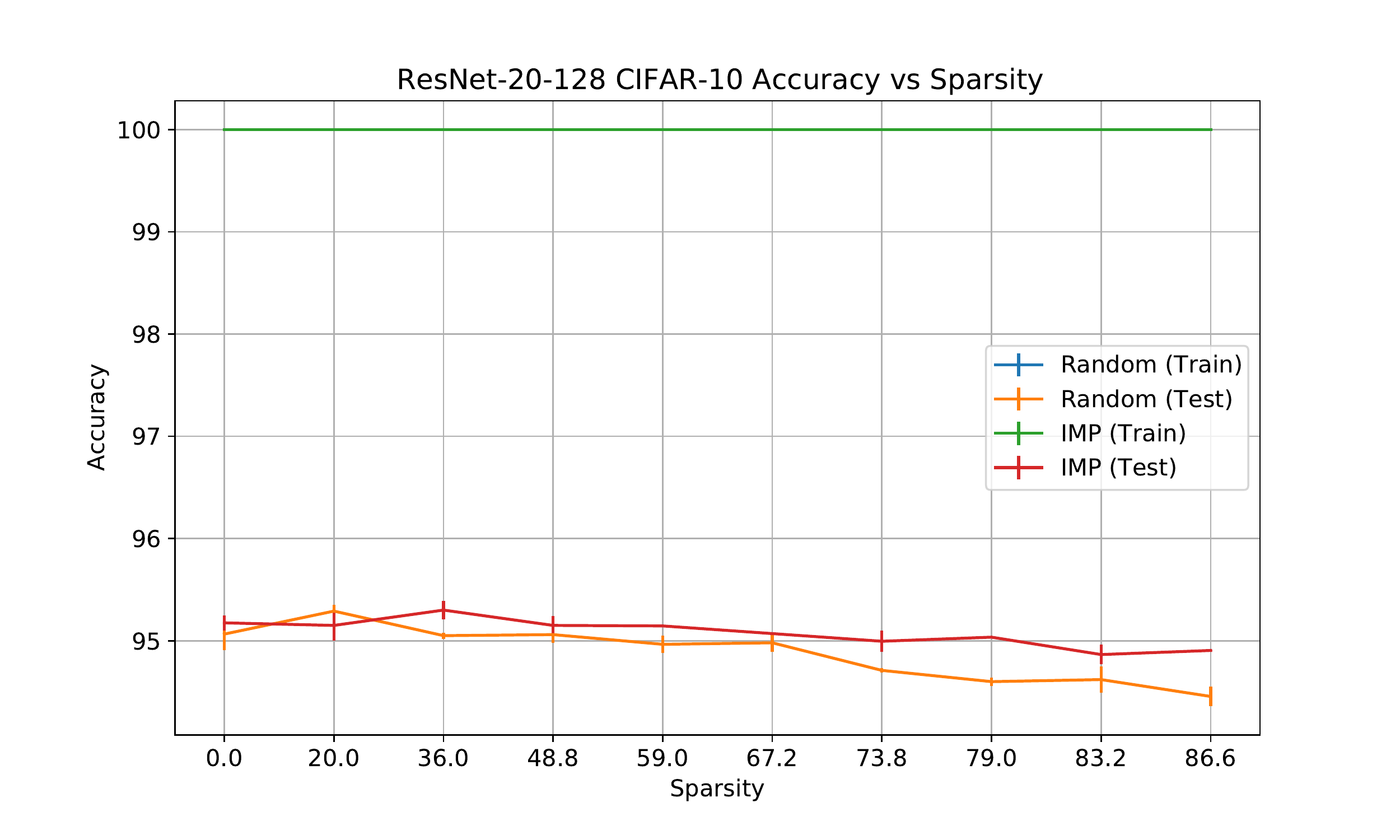} }
  
  \caption{The figure shows the experiment results of ResNet-20-128 under various sparsity by random pruning and IMP. 
  Each data point is averaged over 2 runs.}
  \label{figure: resnet-20-128_acc}
  \vskip -0.in
\end{figure}

\section{Preliminary for Analysis}

In this section, we introduce the following signal-noise decomposition of each neuron weight from \cite{cao2022benign}, and some useful properties for the terms in such a decomposition, which are useful in our analysis.
\begin{definition}[signal-noise decomposition]\label{def:signal_noise_decomp}
For each neuron weight $j \in [K],\ r \in [m]$, there exist coefficients $\gamma_{j,r,k}^{(t)}, \zeta_{j,r,i}^{(t)}, \omega_{j,r,i}^{(t)}$ such that
\begin{align*}
    \widetilde{\mathbf{w}}_{j,r}^{(t)} = \widetilde{\mathbf{w}}_{j,r}^{(0)} + \sum_{k=1}^K \gamma_{j,r,k}^{(t)} \cdot \norm{\boldsymbol{\mu}_k}_2^{-2} \cdot \boldsymbol{\mu}_k \odot \mathbf{m}_{j,r}
    + \sum_{i=1}^n \zeta_{j,r,i}^{(t)} \cdot \norm{\widetilde{\boldsymbol{\xi}}_{j,r,i}}_2^{-2} \cdot \widetilde{\boldsymbol{\xi}}_{j,r,i} + \sum_{i=1}^n \omega_{j,r,i}^{(t)} \norm{\widetilde{\boldsymbol{\xi}}_{j,r,i}}_2^{-2} \cdot \widetilde{\boldsymbol{\xi}}_{j,r,i} ,
\end{align*}
where $\gamma_{j,r,j}^{(t)} \geq 0,\ \gamma_{j,r,k}^{(t)} \leq 0,\ \zeta_{j,r,i}^{(t)} \geq 0, \omega_{j,r,i}^{(t)} \leq 0$.
\end{definition}

It is straightforward to see the following:
\begin{align*}
    & \gamma_{j,r,k}^{(0)}, \zeta_{j,r,i}^{(0)}, \omega_{j,r,i}^{(0)} = 0 ,\\
    & \gamma_{j,r,j}^{(t+1)} = \gamma_{j,r,j}^{(t)} - \mathbb{I}(r \in \mathcal{S}_{\textnormal{signal}}^j) \frac{\eta}{n} \sum_{i=1}^n \ell_{j,i}'^{(t)} \cdot \sigma'\left( \inprod{\widetilde{\mathbf{w}}_{j,r}^{(t)}, \boldsymbol{\mu}_{y_i}} \right) \norm{\boldsymbol{\mu}_{y_i}}_2^2 \mathbb{I}(y_i = j) ,\\
    & \gamma_{j,r,k}^{(t+1)} = \gamma_{j,r,k}^{(t)} - \mathbb{I}((\mathbf{m}_{j,r})_k = 1) \frac{\eta}{n}  \sum_{i=1}^n \ell_{j,i}'^{(t)} \cdot \sigma'\left( \inprod{\widetilde{\mathbf{w}}_{j,r}^{(t)}, \boldsymbol{\mu}_{y_i}} \right) \norm{\boldsymbol{\mu}_{y_i}}_2^2 \mathbb{I}(y_i = k),\ \forall j \neq k ,\\
    & \zeta_{j,r,i}^{(t+1)} = \zeta_{j,r,i}^{(t)} - \frac{\eta}{n} \cdot \ell_{j,i}'^{(t)} \cdot \sigma'\left( \inprod{\widetilde{\mathbf{w}}_{j,r}^{(t)}, \boldsymbol{\xi}_{i}} \right) \norm{\widetilde{\boldsymbol{\xi}}_{j,r,i}}_2^2 \mathbb{I}(j = y_i) ,\\
    & \omega_{j,r,i}^{(t+1)} = \omega_{j,r,i}^{(t)} - \frac{\eta}{n} \cdot \ell_{j,i}'^{(t)} \cdot \sigma'\left( \inprod{\widetilde{\mathbf{w}}_{j,r}^{(t)}, \boldsymbol{\xi}_{i}} \right) \norm{\widetilde{\boldsymbol{\xi}}_{j,r,i}}_2^2 \mathbb{I}(j \neq y_i), 
\end{align*}
where $\{\gamma_{j,r,j}^{(t)}\}_{t=1}^T, \{\zeta_{j,r,i}^{(t)}\}_{t=1}^T$ are increasing sequences and $\{\gamma_{j,r,k}^{(t)}\}_{t=1}^T, \{\omega_{j,r,i}^{(t)}\}_{t=1}^T$ are decreasing sequences, because $-\ell_{j,i}'^{(t)} \geq 0$ when $j = y_i$, and $-\ell_{j,i}'^{(t)} \leq 0$ when $j \neq y_i$. 
By Lemma \ref{lemma: num_nonzero_per_neuron}, we have {$pd > n+K$}, and hence the set of vectors $\{\boldsymbol{\mu}_k\}_{k=1}^K \bigcup \{\widetilde{\boldsymbol{\xi}}_i\}_{i=1}^n$ is linearly independent with probability measure 1 over the Gaussian distribution for each $j \in [K], r \in [m]$.
Therefore the decomposition is unique.

\section{Proof of \texorpdfstring{\Cref{thm: main_thm_mild_pruning_abbr}}{Mild Pruning}}\label{sec: proof_mild_pruning}
We first formally restate \Cref{thm: main_thm_mild_pruning_abbr}. 
\begin{theorem}[Formal Restatement of \Cref{thm: main_thm_mild_pruning_abbr}]\label{thm: main_thm_mild_pruning}
Under Condition \ref{condition: params}, choose initialization variance $\sigma_0 = \widetilde{\Theta}(m^{-4} n^{-1} \mu^{-1})$ and learning rate {$\eta \leq \widetilde{O}(1/\mu^2)$}.
For $\eps > 0$, if $p \geq C_1 \frac{\log d}{m}$ for some sufficiently large constant $C_1$, then with probability at least $1 - O(d^{-1})$ over the randomness in the data, network initialization and pruning, there exists $T = \widetilde{O}(K \eta^{-1} \sigma_0^{2-q} \mu^{-q} + K^2 m^4 \mu^{-2} \eta^{-1} \eps^{-1})$ such that the following holds:
\begin{enumerate}
    \item The training loss is below $\eps$: $L_S(\widetilde{\mathbf{W}}^{(T)}) \leq \eps$.
    \item The weights of the CNN highly correlate with its corresponding class signal: $\max_{r} \gamma_{j,r, j}^{(T)} \geq \Omega(m^{-1/q})$ for all $j \in [K]$. 
    \item The weights of the CNN doesn't have high correlation with the signal from different classes: $\max_{j \neq k, r \in [m]} |\gamma_{j,r,k}^{(T)}| \leq \widetilde{O}(\sigma_0 \mu)$.
    \item None of the weights is highly correlated with the noise: $\max_{j,r,i} \zeta_{j,r,i}^{(T)} = \widetilde{O}(\sigma_0 \sigma_n \sqrt{pd}),\ \max_{j,r,i} |\omega_{j,r,i}^{(T)}| = \widetilde{O}(\sigma_0 \sigma_n \sqrt{pd})$.
\end{enumerate}
Moreover, the testing loss is upper-bounded by 
\begin{align*}
    L_\mathcal{D}(\widetilde{\mathbf{W}}^{(T)}) \leq O(K \eps) + \exp(-n^2 / p).
\end{align*}
\end{theorem}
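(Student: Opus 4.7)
My plan is to use the signal–noise decomposition from Definition 5.1 and track the four coefficient families $\gamma_{j,r,k}^{(t)}, \zeta_{j,r,i}^{(t)}, \omega_{j,r,i}^{(t)}$ across initialization, a feature-growing phase, a converging phase, and a separate test-time concentration argument. The two asymmetries that do all the work are: (i) $\boldsymbol{\mu}_j = \mu \mathbf{e}_j$ is $1$-sparse so a Bernoulli$(p)$ mask preserves the signal on neurons in $\mathcal{S}^j_{\mathrm{signal}}$, while pruning shrinks the $\ell_2$-mass of the noise by $\sqrt{p}$; and (ii) the polynomial activation $\sigma(z)=z^q$ amplifies the gap between signal-aligned and noise-aligned neurons.

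First, for initialization I would invoke Lemma 4.2: under $p \ge C_1 \log d /m$, a Chernoff bound gives $|\mathcal{S}^j_{\mathrm{signal}}|=\Theta(pm)\ge \Omega(\log(Kd))$, so a Gaussian maximum over this set yields $\max_{r\in \mathcal{S}^j_{\mathrm{signal}}}\inprod{\widetilde{\mathbf{w}}_{j,r}^{(0)},\boldsymbol{\mu}_j}=\widetilde{\Theta}(\sigma_0 \mu)$, essentially unchanged by pruning, whereas $\inprod{\widetilde{\mathbf{w}}_{j,r}^{(0)},\boldsymbol{\xi}_i}$ is sub-Gaussian with variance proxy $\Theta(\sigma_0^2\sigma_n^2 pd)$ and maximum $\widetilde{\Theta}(\sigma_0\sigma_n\sqrt{pd})$. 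Under Condition 3.2, in particular $\mu=\Theta(\sigma_n\sqrt{d}\log d)$, the signal beats the noise by a $\log d$ factor even under the $q-1$ power from $\sigma'$.

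For the feature-growing phase I would apply Lemma 4.3: while logits are $O(1)$ so $|\ell_{j,i}'^{(t)}|=\Theta(1/K)$, the update rule for the maximal signal coordinate reads
\begin{equation*}
\gamma_{j,r,j}^{(t+1)} - \gamma_{j,r,j}^{(t)} \;=\; \Theta\!\left(\tfrac{\eta}{Kn}\right)\sigma'\!\left(\inprod{\widetilde{\mathbf{w}}_{j,r}^{(t)},\boldsymbol{\mu}_j}\right)\mu^2 ,
\end{equation*}
a tensor-power iteration that reaches $\gamma^\star\ge m^{-1/q}$ at time $T_1=\widetilde{O}(K\eta^{-1}\sigma_0^{2-q}\mu^{-q})$. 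A coupled induction bounds all $|\omega_{j,r,i}^{(t)}|,\zeta_{j,r,i}^{(t)}=\widetilde{O}(\sigma_0\sigma_n\sqrt{pd})$ and $|\gamma_{j,r,k}^{(t)}|=\widetilde{O}(\sigma_0\mu)$ for $k\ne j$, because each noise/off-class update is multiplied by $\sigma'$ evaluated at a quantity smaller by a factor polynomial in $1/\log d$. For the converging phase (Lemma 4.5) I would feed the gradient upper bound of Lemma 4.4 into a one-step descent inequality; because $T_1$-end gives the PL-type lower bound $\|\nabla L_S\|_F^2\ge \Omega(m^{-2/q}\mu^2\cdot L_S/K)$ from the signal-aligned neuron, together with $L$-smoothness, we get loss $\le \eps$ in an extra $\widetilde{O}(K^2 m^4 \mu^{-2}\eta^{-1}\eps^{-1})$ steps while the inductive invariants of phase 1 survive up to constants.

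For the generalization bound I would draw a fresh test sample $(\mathbf{x},y)$ and condition on the trained weights. On the high-probability event of Lemma 4.6, every $|\inprod{\widetilde{\mathbf{w}}_{j,r}^\star,\boldsymbol{\xi}}| \le (2m)^{-2/q}$, so the test logits are dominated by the signal part and the test loss at $(\mathbf{x},y)$ is bounded by the training loss at a comparable training sample up to a factor of $K$, yielding $L_{\mathcal{D}}\le O(K\eps)$ on the good event; on its complement, of probability $\le 2Km\exp(-(2m)^{-4/q}/O(\sigma_0^2\sigma_n^2 pd))=\exp(-n^2/p)$ under Condition 3.2's choice of $\sigma_0$ and $n$, we use a crude polynomial upper bound on $\ell$, producing exactly the stated $\exp(-n^2/p)$ tail. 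The main obstacle I anticipate is the coupled induction in the converging phase: one must prove that $\zeta$ and $\omega$ do not catch up to $\gamma$ once $|\ell_{y_i,i}'^{(t)}|$ is small but before the training loss is fully driven below $\eps$. This hinges crucially on Lemma 4.4; a naive multi-class cross-entropy bound would give $K^3$ instead of $K$ and break the iteration count $T$ and, more seriously, inflate the off-diagonal $\gamma_{j,r,k}$ past the $\widetilde{O}(\sigma_0\mu)$ threshold needed for test-time signal dominance. Exploiting both $|\ell_{j,i}'|\le|\ell_{y_i,i}'|$ and $\sum_{j\ne y_i}|\ell_{j,i}'|=|\ell_{y_i,i}'|$ as in the proof sketch of Lemma 4.4 is therefore the technical linchpin of the whole argument.
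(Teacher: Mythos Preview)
Your overall architecture---initialization via Lemma 4.2, feature growing via Lemma 4.3, converging via Lemma 4.5, and the test-time Gaussian tail via Lemma 4.6---matches the paper, and you correctly identify the two softmax identities $|\ell_{j,i}'|\le|\ell_{y_i,i}'|$ and $\sum_{j\ne y_i}|\ell_{j,i}'|=|\ell_{y_i,i}'|$ as the linchpin of the $K$-factor in Lemma 4.4. The initialization, phase-1, and generalization arguments are essentially the paper's.

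The converging phase, however, is where your plan diverges from the paper and where there is a real gap. The paper does \emph{not} use a PL lower bound plus $L$-smoothness. Instead it introduces a reference point $\mathbf{w}_{j,r}^\star=\mathbf{w}_{j,r}^{(0)}+\Theta(m\log(1/\eps))\boldsymbol{\mu}_j/\mu^2$ and proves the one-step potential decrease
\[
\|\widetilde{\mathbf{W}}^{(t)}-\widetilde{\mathbf{W}}^\star\|_F^2-\|\widetilde{\mathbf{W}}^{(t+1)}-\widetilde{\mathbf{W}}^\star\|_F^2 \;\ge\; C\eta L_S(\widetilde{\mathbf{W}}^{(t)})-\eta\eps,
\]
obtained by combining the $q$-homogeneity identity $\langle\nabla F_j,\widetilde{\mathbf{W}}_j\rangle=qF_j$, the convexity of cross-entropy in the logits, and the lower bound $\langle\nabla F_{y_i},\widetilde{\mathbf{W}}_{y_i}^\star\rangle-\langle\nabla F_j,\widetilde{\mathbf{W}}_j^\star\rangle\ge q\log(2qK/\eps)$. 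Lemma 4.4 enters only to absorb the $\eta^2\|\nabla L_S\|_F^2$ term. Telescoping then gives $\sum_{s=T_1}^{t}L_S(\widetilde{\mathbf{W}}^{(s)})\le O(\eta^{-1}\|\widetilde{\mathbf{W}}^{(T_1)}-\widetilde{\mathbf{W}}^\star\|_F^2)$, which serves double duty: it produces a time with $L_S\le\eps$ in $\widetilde{O}(K m^3\mu^{-2}\eta^{-1}\eps^{-1})$ steps \emph{and} it bounds the cumulative increments of $\zeta,\omega,\gamma_{j,r,k}$ through $|\ell_{j,i}'^{(s)}|\le \ell_i^{(s)}$, which is exactly how the phase-1 invariants are shown to survive up to constants.

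Your PL+smoothness route has two problems. First, global $L$-smoothness is not available for the cubic activation (the Hessian scales with the pre-activation), so you would need a local smoothness argument coupled to the coefficient bounds from Proposition~D.6---a circularity you have not broken. Second, a genuine PL inequality would give a $\log(1/\eps)$ rate, not the $\eps^{-1}$ rate you quote; the mismatch signals that the mechanism you describe is not the one that produces the stated $T$. More seriously, without the reference-point telescoping you have no bound on $\sum_s L_S^{(s)}$, and hence no way to certify that $\zeta,\omega$ stay $\widetilde{O}(\sigma_0\sigma_n\sqrt{pd})$ throughout phase 2, which is what items 3 and 4 of the theorem and the test-time argument require.
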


The proof of \Cref{thm: main_thm_mild_pruning_abbr} consists of the analysis of the pruning on the signal and noise
for three stages of gradient descent: initialization, feature growing phase, and converging phase, and the establishment of the generalization property. We present these analysis in detail in the following subsections.
A special note is that the constant $C$ showing up in the following proof of each subsequent Lemmas is defined locally instead of globally, which means the constant $C$ within each Lemma is the same but may be different across different Lemma.

\subsection{Initialization}

We analyze the effect of pruning on weight-signal correlation and weight-noise correlation at the initialization. We first present a few supporting lemmas, and finally provide our main result of \Cref{lemma: initialization_max_signal}, which shows that if the pruning is mild, then it will not hurt the max weight-signal correlation much at the initialization. On the other hand, the max weight-noise correlation is reduced by a factor of $\sqrt{p}$. 

\begin{lemma}
Assume {$n = \Omega(K^2 \log Kd)$}. 
Then, with probability at least $1 - 1/d$, 
\begin{align*}
    |\{i \in [n]:\ y_i = j\}| = \Theta(n/K) \quad \forall j \in [K].
\end{align*}
\end{lemma}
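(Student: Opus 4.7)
The statement is a standard class-balance concentration claim, so the plan is to view each class count as a sum of i.i.d.\ Bernoulli indicators and apply a Chernoff bound, followed by a union bound over the $K$ classes. Concretely, for each $j \in [K]$ define
\begin{equation*}
    N_j \;=\; |\{i \in [n] : y_i = j\}| \;=\; \sum_{i=1}^n \mathbb{I}(y_i = j),
\end{equation*}
so that $N_j$ is a sum of $n$ independent $\mathrm{Bernoulli}(1/K)$ variables with mean $\mu_j := n/K$. The goal $N_j = \Theta(n/K)$ is captured by requiring $N_j \in [\tfrac{1}{2}\mu_j,\, \tfrac{3}{2}\mu_j]$ for every $j$.

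The first step is to apply the multiplicative Chernoff bound to each $N_j$: for any $\delta \in (0,1)$,
\begin{equation*}
    \Pr\bigl[|N_j - \mu_j| \geq \delta \mu_j\bigr] \;\leq\; 2\exp\!\left(-\tfrac{\delta^2 \mu_j}{3}\right).
\end{equation*}
Taking $\delta = 1/2$ gives $\Pr[|N_j - n/K| \geq n/(2K)] \leq 2\exp(-n/(12K))$. The second step is a union bound over the $K$ classes, yielding failure probability at most $2K\exp(-n/(12K))$.

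The third step is to verify that the assumption $n = \Omega(K^2 \log(Kd))$ drives this failure probability below $1/d$. Under this assumption, $n/(12K) \geq \Omega(K \log(Kd)) \geq \log(2Kd)$ once the hidden constant is chosen large enough, and hence
\begin{equation*}
    2K \exp(-n/(12K)) \;\leq\; 2K \cdot \frac{1}{2Kd} \;=\; \frac{1}{d}.
\end{equation*}
On the complementary event, $N_j \in [n/(2K), 3n/(2K)]$ simultaneously for all $j \in [K]$, which is exactly $N_j = \Theta(n/K)$.

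\textbf{Main obstacle.} There is no real obstacle here: the argument is a textbook Chernoff-plus-union-bound. The only mild subtlety is that the stated hypothesis $n = \Omega(K^2 \log Kd)$ is stronger than the minimal $n = \Omega(K \log Kd)$ that Chernoff actually demands; presumably the stronger bound is imposed for compatibility with later lemmas (for instance, to guarantee enough samples per class after further union bounds over noise patches or neurons), so I would simply use the assumption as given rather than try to tighten it.
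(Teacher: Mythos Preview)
Your proof is correct and follows the same architecture as the paper's: view $N_j$ as a sum of i.i.d.\ Bernoulli$(1/K)$ indicators, apply a concentration inequality, and union-bound over the $K$ classes. The only difference is the choice of concentration tool: you use the multiplicative Chernoff bound, whereas the paper uses Hoeffding's inequality on the empirical frequency $\tfrac{1}{n}\sum_i \mathbb{I}(y_i=j)$, obtaining an additive deviation $\sqrt{\log(4K/\delta)/(2n)}$ and then requiring this to be at most $1/(2K)$. This explains your puzzlement about the hypothesis $n = \Omega(K^2 \log(Kd))$: Hoeffding's additive bound genuinely needs the extra factor of $K$ to push the deviation below $1/(2K)$, while your Chernoff argument only needs $n = \Omega(K \log(Kd))$. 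So the stronger assumption is not there for downstream compatibility as you speculated, but is simply an artifact of the paper's slightly less sharp inequality; your version is in fact tighter.
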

\begin{proof}
By Hoeffding's inequality, with probability at least $1 - \delta/2K$, for a fixed $j \in [K]$, we have
\begin{align*}
    \left| \frac{1}{n} \sum_{i=1}^n \mathbb{I}(y_i = j) - \frac{1}{K} \right| \leq \sqrt{\frac{\log (4K/\delta)}{2n}}.
\end{align*}
Therefore, as long as $n \geq 2K^2 \log(4K/\delta)$, we have
\begin{align*}
    \left| \frac{1}{n} \sum_{i=1}^n \mathbb{I}(y_i = j) - \frac{1}{K} \right| \leq \frac{1}{2K} .
\end{align*}
Taking a union bound over $j \in [K]$ and making $\delta = 1/d$ yield the result. 
\end{proof}

\begin{lemma}\label{lemma: init_num_signal_neuron}
Assume $p m = \Omega(\log d)$ and $m = \poly \log d$.
Then, with probability $1 - 1/d$, for all $j \in [K],\ k \in [K]$, we have $\sum_{r=1}^m (\mathbf{m}_{j,r})_k = \Theta(pm)$, which implies that $|\mathcal{S}_{\textnormal{signal}}^j| = \Theta(p m)$ for all $j \in [K]$. 
\end{lemma}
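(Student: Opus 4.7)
The plan is to reduce the statement to a standard concentration bound on Binomial random variables followed by a union bound. By the construction of the mask, the entries $(\mathbf{m}_{j,r})_k$ are i.i.d.\ $\mathrm{Bernoulli}(p)$, independently across $j \in [K]$, $r \in [m]$, and coordinates. Hence for each fixed pair $(j,k) \in [K]^2$ the quantity $S_{j,k} := \sum_{r=1}^m (\mathbf{m}_{j,r})_k$ is a $\mathrm{Binomial}(m,p)$ random variable with mean $pm$.

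First, I would apply the multiplicative Chernoff bound to obtain
\begin{align*}
\Pr\!\left[\, |S_{j,k} - pm| \geq pm/2 \,\right] \leq 2 \exp(-pm/12).
\end{align*}
Under the hypothesis $pm = \Omega(\log d)$, choosing the hidden constant in the $\Omega$ sufficiently large forces the right-hand side to be at most $d^{-3}$ (or any fixed polynomial rate we like). Next, since $K = O(\log d)$ by Condition~\ref{condition: params}, the number of pairs $(j,k) \in [K]\times[K]$ is $K^2 = O((\log d)^2)$; a union bound over these pairs gives total failure probability at most $K^2 \cdot d^{-3} = O((\log d)^2 d^{-3}) \leq 1/d$ for $d$ large enough. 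On the complementary event, $S_{j,k} \in [pm/2,\,3pm/2] = \Theta(pm)$ simultaneously for all $j,k$.

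Finally, to derive the statement about $|\mathcal{S}^j_{\textnormal{signal}}|$, I would observe that since $\boldsymbol{\mu}_j = \mu \mathbf{e}_j$ is supported only on coordinate $j$, the condition $\boldsymbol{\mu}_j \odot \mathbf{m}_{j,r} \neq \mathbf{0}$ is equivalent to $(\mathbf{m}_{j,r})_j = 1$. Thus $|\mathcal{S}^j_{\textnormal{signal}}| = S_{j,j}$, and the conclusion $|\mathcal{S}^j_{\textnormal{signal}}| = \Theta(pm)$ follows immediately by specializing the high-probability bound above to the diagonal pairs $(j,j)$.

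There is essentially no main obstacle here; this is a routine concentration-plus-union-bound argument. The only point that deserves care is quantitative bookkeeping: the constant hidden in $pm = \Omega(\log d)$ must be chosen large enough so that the per-pair failure probability $2\exp(-pm/12)$ can absorb the union bound over $K^2 = O((\log d)^2)$ pairs while still leaving a margin of $1/d$. Since Condition~\ref{condition: params} leaves us free to inflate constants, this is not a real difficulty, and the assumption $m = \poly\log d$ is not even used in the argument beyond ensuring $K^2/d$ is a valid probability.
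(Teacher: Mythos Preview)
Your proposal is correct and follows essentially the same approach as the paper: a multiplicative Chernoff bound on the $\mathrm{Binomial}(m,p)$ sum followed by a union bound over the $K^2$ pairs $(j,k)$, with the implication for $|\mathcal{S}^j_{\textnormal{signal}}|$ coming from the diagonal case $k=j$. Your write-up is in fact more explicit about the constants and about why $|\mathcal{S}^j_{\textnormal{signal}}| = S_{j,j}$ than the paper's own proof.
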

\begin{proof}
When $p m = \Omega(\log d)$, by multiplicative Chernoff's bound, for a given $k \in [K]$, we have
\begin{align*}
    \Pr\left[ \left| \sum_{r=1}^m (\mathbf{m}_{j,r})_k - p m \right| \geq 0.5 p m \right] \leq 2\exp\left\{ -\Omega\left(p m \right) \right\}.
\end{align*}
Take a union bound over $j \in [K],\ k \in [K]$, we have
\begin{align*}
    \Pr\left[ \left| \sum_{r=1}^m (\mathbf{m}_{j,r})_k - p m \right| \geq 0.5 p m,\ \forall j \in [K],\ k \in [K] \right] \leq 2K^2 \exp\left\{ -\Omega\left(p m \right) \right\} \leq 1/d.
\end{align*}
\end{proof}

\begin{lemma}\label{lemma: num_nonzero_per_neuron}
Assume $p = 1 / \poly \log d$. Then with probability at least $1 - 1/d$, for all $j \in [K]$, $r \in [m]$, $\sum_{i=1}^d (\mathbf{m}_{j,r})_i = \Theta(p d)$. 
\end{lemma}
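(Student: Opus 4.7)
The claim is a concentration statement about $Km$ independent sums of i.i.d.\ Bernoulli$(p)$ variables, so the natural route is a multiplicative Chernoff bound combined with a union bound, entirely analogous to \Cref{lemma: init_num_signal_neuron}. The plan is as follows. For fixed $j \in [K]$ and $r \in [m]$, the coordinates $(\mathbf{m}_{j,r})_1,\ldots,(\mathbf{m}_{j,r})_d$ are i.i.d.\ Bernoulli$(p)$, so their sum $S_{j,r} := \sum_{i=1}^d (\mathbf{m}_{j,r})_i$ has mean $pd$. By the multiplicative Chernoff bound,
\begin{align*}
\Pr\!\left[\,|S_{j,r} - pd| \geq \tfrac{1}{2}\, pd \,\right] \;\leq\; 2\exp\!\big(-\Omega(pd)\big).
\end{align*}

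\textbf{Parameter check and union bound.} By Condition~\ref{condition: params} we have $K = O(\log d)$ and $m = \poly\log d$, so $Km = \poly\log d$. Moreover, since $p = 1/\poly\log d$ and $d \geq C_d$ for a sufficiently large constant $C_d$, we have $pd = d/\poly\log d$, which dominates $\log(Kmd) = O(\log d)$ by an arbitrarily large polynomial factor. Consequently $\exp(-\Omega(pd)) \leq d^{-C}$ for any constant $C$ we wish, in particular small enough that after taking a union bound over all $j \in [K]$ and $r \in [m]$, we obtain
\begin{align*}
\Pr\!\left[\,\exists\, j,r:\ |S_{j,r} - pd| \geq \tfrac{1}{2}\, pd\,\right] \;\leq\; 2 Km \exp(-\Omega(pd)) \;\leq\; 1/d.
\end{align*}
On the complementary event, $S_{j,r} \in [\tfrac{1}{2} pd,\ \tfrac{3}{2} pd] = \Theta(pd)$ simultaneously for every $j \in [K]$ and $r \in [m]$, which is exactly the conclusion.

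\textbf{Where the (minor) care lies.} There is no real obstacle here; the only thing to verify carefully is that the hypothesis $p = 1/\poly\log d$ is strong enough to make $pd$ beat the $\log(Km)$ cost of the union bound. Because $Km = \poly\log d$ while $pd$ is polynomial in $d$ up to a polylog divisor, this slack is enormous and the bound $1/d$ is far from tight; one could in fact obtain $1/\poly(d)$ for any desired polynomial. This comfortable slack is also what lets the lemma be invoked freely in later arguments without having to track the exact failure probability.
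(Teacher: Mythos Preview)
Your proposal is correct and follows essentially the same approach as the paper: apply the multiplicative Chernoff bound to each $S_{j,r}$ with deviation $\tfrac{1}{2}pd$, then union bound over the $Km$ neurons and use $Km = \poly\log d$ together with $pd = d/\poly\log d$ to absorb the union-bound cost. Your write-up in fact spells out the parameter check more explicitly than the paper does.
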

\begin{proof}
By multiplicative Chernoff's bound, we have for a given $j, r$
\begin{align*}
    \Pr\left[ \left| \sum_{i=1}^d (\mathbf{m}_{j,r})_i - p d \right| \geq 0.5 p d \right] \leq 2 \exp\{ -\Omega(p d)\} .
\end{align*}
Take a union bound over $j, r$, we have 
\begin{align*}
    \Pr\left[ \left| \sum_{i=1}^d (\mathbf{m}_{j,r})_i - p d \right| \geq 0.5 p d,\ \forall j \in [K],\ r \in [m] \right] \leq 2Km \exp\{ -\Omega(p d)\} \leq 1/d,
\end{align*}
where the last inequality follows from our choices of $p, K, m, d$.
\end{proof}

\begin{lemma}\label{lemma: noise_correlation}
Suppose $p = \Omega(1 / \poly \log d)$, and $ m,n = \poly \log d$. 
With probability at least $1 - 1/d$, we have
\begin{align*}
    \norm{\widetilde{\boldsymbol{\xi}}_{j,r,i}}_2^2 &= \Theta(\sigma_n^2 p d) ,\\
    \left| \inprod{\widetilde{\boldsymbol{\xi}}_{j,r,i}, \boldsymbol{\xi}_{i'}} \right| &\leq {O}(\sigma_n^2 \sqrt{p d \log d}) ,\\
    \left|\inprod{\boldsymbol{\mu}_k, \widetilde{\boldsymbol{\xi}}_{j,r,i}} \right| &\leq |\inprod{\boldsymbol{\mu}, \boldsymbol{\xi}_i}| \leq O(\sigma_n \mu \sqrt{\log d}),
\end{align*}
for all $j \in \{-1,1\},\ r \in [m],\ i,i' \in [n]$ and $i \neq i'$.
\end{lemma}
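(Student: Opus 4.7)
\textbf{Proof Plan for Lemma~\ref{lemma: noise_correlation}.}

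The plan is to handle the three bounds separately by conditioning on the mask $\mathbf{M}$ and then applying standard Gaussian/$\chi^2$ concentration, followed by a union bound over the $Kmn^2 = \poly\log d$ relevant tuples. Throughout, I will work on the event from Lemma~\ref{lemma: num_nonzero_per_neuron} that $|\supp(\mathbf{m}_{j,r})| = \Theta(pd)$ for every $(j,r)$; this event has probability at least $1-1/d$ and absorbs all randomness in the mask.

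\textbf{Step 1: the squared norm.} Conditional on $\mathbf{m}_{j,r}$, the vector $\widetilde{\boldsymbol{\xi}}_{j,r,i}$ is supported on $s_{j,r} := |\supp(\mathbf{m}_{j,r})| = \Theta(pd)$ coordinates, and on this support its coordinates are i.i.d.\ $\mathcal{N}(0,\sigma_n^2)$. Hence $\sigma_n^{-2}\norm{\widetilde{\boldsymbol{\xi}}_{j,r,i}}_2^2$ is $\chi^2$ with $s_{j,r}$ degrees of freedom. A standard Laurent--Massart bound gives, for any $t>0$,
\[
\Pr\bigl[\bigl|\norm{\widetilde{\boldsymbol{\xi}}_{j,r,i}}_2^2 - \sigma_n^2 s_{j,r}\bigr| \geq \sigma_n^2 (2\sqrt{s_{j,r}t} + 2t)\bigr] \leq 2e^{-t}.
\]
Taking $t = C\log d$ and using $s_{j,r} = \Theta(pd) \gg \log d$ (since $p = \Omega(1/\poly\log d)$ and $d$ is large), the deviation is lower order than $s_{j,r}$, which yields $\norm{\widetilde{\boldsymbol{\xi}}_{j,r,i}}_2^2 = \Theta(\sigma_n^2 pd)$ with probability at least $1 - 2d^{-C}$ per tuple; a union bound over the $Kmn = \poly\log d$ tuples $(j,r,i)$ costs only a logarithmic factor.

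\textbf{Step 2: the cross-sample inner product.} Fix $i \neq i'$ and condition on both $\mathbf{m}_{j,r}$ and $\boldsymbol{\xi}_i$. Then
\[
\inprod{\widetilde{\boldsymbol{\xi}}_{j,r,i},\boldsymbol{\xi}_{i'}} = \sum_{\ell:(\mathbf{m}_{j,r})_\ell=1} (\boldsymbol{\xi}_i)_\ell (\boldsymbol{\xi}_{i'})_\ell
\]
is a Gaussian in $\boldsymbol{\xi}_{i'}$ (which is independent of $\boldsymbol{\xi}_i$ and of the mask) with mean zero and variance $\sigma_n^2 \norm{\widetilde{\boldsymbol{\xi}}_{j,r,i}}_2^2$. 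On the event of Step~1 this variance is $\Theta(\sigma_n^4 pd)$, so standard Gaussian tail bounds give $|\inprod{\widetilde{\boldsymbol{\xi}}_{j,r,i},\boldsymbol{\xi}_{i'}}| \leq O(\sigma_n^2\sqrt{pd\log d})$ except on a set of probability $\leq 2 d^{-C}$. Union-bounding over all $j \in [K]$, $r\in[m]$, and ordered pairs $(i,i')$ with $i\neq i'$ again costs only $\log(Kmn^2) = O(\log\log d)$ in the tail exponent, so the bound holds simultaneously with probability $1-O(d^{-1})$.

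\textbf{Step 3: the signal--noise correlation.} Since $\boldsymbol{\mu}_k = \mu\mathbf{e}_k$,
\[
\inprod{\boldsymbol{\mu}_k, \widetilde{\boldsymbol{\xi}}_{j,r,i}} = \mu\, (\mathbf{m}_{j,r})_k\, (\boldsymbol{\xi}_i)_k,
\qquad
\inprod{\boldsymbol{\mu}_k, \boldsymbol{\xi}_i} = \mu\, (\boldsymbol{\xi}_i)_k,
\]
so the first inequality is immediate from $(\mathbf{m}_{j,r})_k\in\{0,1\}$. The Gaussian coordinate $(\boldsymbol{\xi}_i)_k \sim \mathcal{N}(0,\sigma_n^2)$ satisfies $|(\boldsymbol{\xi}_i)_k| \leq O(\sigma_n\sqrt{\log d})$ with probability at least $1-2d^{-C}$; union-bounding over the $nK = \poly\log d$ pairs $(i,k)$ yields the claimed uniform bound $O(\sigma_n\mu\sqrt{\log d})$.

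\textbf{Conclusion and main obstacle.} Combining the three displays via a final union bound yields the lemma with probability at least $1-1/d$. The only subtlety is the order of conditioning in Step~2: one must treat the mask and $\boldsymbol{\xi}_i$ as fixed to retain Gaussianity in $\boldsymbol{\xi}_{i'}$, then feed back the Step~1 bound on $\norm{\widetilde{\boldsymbol{\xi}}_{j,r,i}}_2^2$ to control the conditional variance. No step requires more than Gaussian and $\chi^2$ concentration, and the parameter assumptions $p = \Omega(1/\poly\log d)$ and $m,n = \poly\log d$ are exactly what make every union bound tolerable.
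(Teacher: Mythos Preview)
Your proposal is correct and follows essentially the same approach as the paper: condition on the mask event from Lemma~\ref{lemma: num_nonzero_per_neuron}, apply standard concentration (the paper uses Bernstein where you use Laurent--Massart and conditional Gaussianity, which are equivalent here), and union bound over the $\poly\log d$ tuples. The only cosmetic difference is that in Step~2 you condition on $\boldsymbol{\xi}_i$ to obtain exact Gaussianity in $\boldsymbol{\xi}_{i'}$, while the paper applies Bernstein directly to the sum of products of independent Gaussians; both yield the same $O(\sigma_n^2\sqrt{pd\log d})$ bound.
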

\begin{proof}
From Lemma \ref{lemma: num_nonzero_per_neuron}, we have with probability at least $1 - 1/d$,
\begin{align*}
    \sum_{k=1}^d (\mathbf{m}_{j,r})_k = \Theta(p d), \quad \forall j \in [K],\ r \in [m] .
\end{align*}
For a set of Gaussian random variable $g_1, \ldots, g_N \sim \mathcal{N}(0, \sigma^2)$, by Bernstein's inequality, with probability at least $1 - \delta$, we have
\begin{align*}
    \left| \sum_{i=1}^N g_i^2 - \sigma^2 N \right| \lesssim \sigma^2 \sqrt{N \log \frac{1}{\delta}}.
\end{align*}
Thus, by a union bound over $j, r, i$, with probability at least $1 - 1/d$, we have
\begin{align*}
    \norm{\widetilde{\boldsymbol{\xi}}_{j,r,i}}_2^2 = \Theta(\sigma^2_n p d).
\end{align*}
For $i \neq i'$, again by Bernstein's bound, we have with probability at least $1 - \delta$, 
\begin{align*}
    \left| \inprod{\widetilde{\boldsymbol{\xi}}_{j,r,i}, \boldsymbol{\xi}_{i'}} \right| &\leq {O}\left(\sigma_n^2 \sqrt{p d \log \frac{Kmn}{\delta}} \right) ,
\end{align*}
for all $j, r, i$.
Plugging in $\delta = 1/d$ gives the result. 
The proof for $|\inprod{\boldsymbol{\mu}, \boldsymbol{\xi}_i}|$ is similar. 
\end{proof}

\begin{lemma}\label{lemma: max_gaussian_lower_bound}
Suppose we have $m$ independent Gaussian random variables $g_1, g_2, \ldots, g_m \sim \mathcal{N}(0, \sigma^2)$. Then with probability $1 - \delta$, 
\begin{align*}
    \max_i g_i \geq \sigma \sqrt{\log \frac{m}{\log 1/ \delta}}.
\end{align*}
\end{lemma}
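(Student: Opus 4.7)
The plan is to use independence of the $g_i$ together with a standard lower bound on the Gaussian tail. By homogeneity it suffices to treat $\sigma = 1$ and prove that $\Pr[\max_i g_i < t] \leq \delta$ with $t = \sqrt{\log(m/\log(1/\delta))}$; the general case then follows by replacing each $g_i$ with $g_i/\sigma$.

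First I would write, by independence,
\[
\Pr\bigl[\max_i g_i < t\bigr] \;=\; \Phi(t)^m \;=\; \bigl(1 - (1-\Phi(t))\bigr)^m \;\leq\; \exp\bigl(-m \cdot (1 - \Phi(t))\bigr),
\]
where $\Phi$ is the standard normal CDF and I used $(1-x)^m \leq e^{-mx}$. So it suffices to show $m \cdot (1-\Phi(t)) \geq \log(1/\delta)$, i.e.\ to produce a sufficiently large lower bound on the Gaussian tail at $t$.

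Next I would apply the Mills-ratio lower bound
\[
1 - \Phi(t) \;\geq\; \frac{1}{\sqrt{2\pi}} \cdot \frac{t}{t^2+1}\, e^{-t^2/2},
\]
which is standard. For $t = \sqrt{\log(m/\log(1/\delta))}$ one has $e^{-t^2/2} = \sqrt{\log(1/\delta)/m}$, so
\[
m \cdot (1-\Phi(t)) \;\geq\; \frac{t}{\sqrt{2\pi}(t^2+1)} \cdot \sqrt{m \log(1/\delta)}.
\]
In the regime where the lemma is invoked we have $m \gg \log(1/\delta)$, so $\sqrt{m\log(1/\delta)} \gg \log(1/\delta)$; the polynomial prefactor $t/(t^2+1)$ is only a mild (at worst $1/\sqrt{\log m}$) loss and is absorbed by this slack, so the required inequality $m \cdot (1-\Phi(t)) \geq \log(1/\delta)$ follows. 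Restoring the factor of $\sigma$ gives the claimed bound.

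The only real obstacle is bookkeeping of constants, since the statement is written without any explicit absolute constant inside the $\sqrt{\,\cdot\,}$; the standard analogue is $\sqrt{2\log m}$. The way to handle this is precisely the cushion described above: the polynomial prefactor in Mills' ratio, together with the gap between $\sqrt{m\log(1/\delta)}$ and $\log(1/\delta)$ in the relevant parameter regime, leaves enough room that the constant inside the logarithm can be taken to be $1$. No delicate concentration argument is needed beyond this elementary tail comparison.
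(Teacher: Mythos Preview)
Your proposal is correct and follows essentially the same approach as the paper: both arguments compute $\Pr[\max_i g_i < t] = (1-\Pr[g>t])^m \le e^{-m\Pr[g>t]}$ and then invoke a Mills-ratio lower bound on the Gaussian tail to verify $m\Pr[g>t] \ge \log(1/\delta)$ at the stated threshold. The paper's proof is similarly informal about the constants (it in fact lands at $x^\star = \Theta(\sigma\sqrt{\log(m/(\log(1/\delta)\log m))})$), so your acknowledgment that the constant slack is absorbed by the regime $m \gg \log(1/\delta)$ matches the paper's level of rigor.
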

\begin{proof}
By the standard tail bound of Gaussian random variable, we have for every $x > 0$,
\begin{align*}
    \left( \frac{\sigma}{x} - \frac{\sigma^3}{x^3} \right) \frac{e^{-x^2/2\sigma^2}}{\sqrt{2\pi}} \leq \Pr\left[ g>x \right] \leq \frac{\sigma}{x} \frac{e^{-x^2/2\sigma^2}}{\sqrt{2\pi}}.
\end{align*}
We want to pick a $x^\star$ such that
\begin{align*}
    & \Pr\left[ \max_i g_i \leq x^\star \right] = \left( \Pr\left[ g_i \leq x^\star \right] \right)^m = (1 - \Pr\left[ g_i \geq x^\star \right] )^m \leq e^{-m \Pr\left[ g_i \geq x^\star \right]} \leq \delta \\
    & \Rightarrow \Pr[g_i \geq x^\star] = \Theta\left(\frac{\log (1 / \delta)}{m} \right) \\
    & \Rightarrow x^\star = \Theta(\sigma \sqrt{\log(m / (\log (1/\delta) \log m))}).
\end{align*}
\end{proof}

\begin{lemma}[Formal Restatement of \Cref{lemma: initialization_max_signal_informal}]\label{lemma: initialization_max_signal}
With probability at least $1 - 2/d$, for all $i \in [n]$, 
\begin{align*}
    \sigma_0 \sigma_n \sqrt{pd} \leq \max_r \inprod{\widetilde{\mathbf{w}}_{j,r}^{(0)}, \boldsymbol{\xi}_i} \leq \sqrt{2\log (Kmd)} \sigma_0 \sigma_n \sqrt{pd}.
\end{align*}
Further, suppose $p m \geq \Omega(\log (Kd))$. 
Then with probability $1 - 2/d$, for all $j \in [K]$,
\begin{align*}
    \sigma_0 \norm{\boldsymbol{\mu}_j}_2 \leq \max_{r \in \mathcal{S}^j_\textnormal{signal}}  \inprod{\widetilde{\mathbf{w}}^{(0)}_{j,r}, \boldsymbol{\mu}_j} \leq \sqrt{2 \log (8 p m Kd)} \sigma_0 \norm{\boldsymbol{\mu}_j}_2.
\end{align*}
\end{lemma}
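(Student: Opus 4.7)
The plan is to exploit that, conditioned on the mask $\mathbf{m}_{j,r}$ and the noise vector $\boldsymbol{\xi}_i$, the inner product $\inprod{\widetilde{\mathbf{w}}_{j,r}^{(0)}, \boldsymbol{\xi}_i}$ is a centered Gaussian. Indeed, since both $\mathbf{w}_{j,r}^{(0)}$ and $\boldsymbol{\xi}_i$ are restricted to the same support through the shared mask, one has $\inprod{\widetilde{\mathbf{w}}_{j,r}^{(0)}, \boldsymbol{\xi}_i} = \inprod{\mathbf{w}_{j,r}^{(0)}, \widetilde{\boldsymbol{\xi}}_{j,r,i}}$, which, conditional on $\mathbf{m}_{j,r}$ and $\boldsymbol{\xi}_i$, is $\mathcal{N}(0, \sigma_0^2 \norm{\widetilde{\boldsymbol{\xi}}_{j,r,i}}_2^2)$. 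By \Cref{lemma: noise_correlation}, $\norm{\widetilde{\boldsymbol{\xi}}_{j,r,i}}_2^2 = \Theta(\sigma_n^2 pd)$ on an event of probability at least $1 - 1/d$, so on this event the conditional standard deviation is $\Theta(\sigma_0 \sigma_n \sqrt{pd})$.

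I would then handle the maximum over $r \in [m]$. The $m$ variables $\{\inprod{\mathbf{w}_{j,r}^{(0)}, \widetilde{\boldsymbol{\xi}}_{j,r,i}}\}_r$ are independent across $r$ because $\mathbf{w}_{j,r}^{(0)}$ is independent across $r$ (even though the $\widetilde{\boldsymbol{\xi}}_{j,r,i}$ share a common $\boldsymbol{\xi}_i$, once we condition on $\boldsymbol{\xi}_i$ and the masks, only the independent Gaussian $\mathbf{w}_{j,r}^{(0)}$ remains random). A standard Gaussian max tail gives $\max_r \inprod{\widetilde{\mathbf{w}}_{j,r}^{(0)}, \boldsymbol{\xi}_i} \le \sqrt{2\log(Kmd)}\,\sigma_0 \sigma_n \sqrt{pd}$ after union bounding over $i \in [n]$ and $j \in [K]$, and \Cref{lemma: max_gaussian_lower_bound} supplies the matching $\sigma_0 \sigma_n \sqrt{pd}$ lower bound (since $\log(1/\delta)/\log m$ is bounded, the constant in front can be absorbed into a clean $\sigma_0\sigma_n\sqrt{pd}$).

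For the signal direction, the sparsity of $\boldsymbol{\mu}_j = \mu \mathbf{e}_j$ makes the computation transparent: $\inprod{\widetilde{\mathbf{w}}_{j,r}^{(0)}, \boldsymbol{\mu}_j} = \mu (\mathbf{m}_{j,r})_j (\mathbf{w}_{j,r}^{(0)})_j$, which is exactly zero for $r \notin \mathcal{S}_{\textnormal{signal}}^j$ and equals $\mu (\mathbf{w}_{j,r}^{(0)})_j \sim \mathcal{N}(0, \sigma_0^2 \mu^2)$ for $r \in \mathcal{S}_{\textnormal{signal}}^j$. By \Cref{lemma: init_num_signal_neuron}, under $pm \geq \Omega(\log(Kd))$ we have $|\mathcal{S}_{\textnormal{signal}}^j| = \Theta(pm)$ with probability at least $1-1/d$, so the maximum over that set is a maximum of $\Theta(pm)$ iid centered Gaussians with standard deviation $\sigma_0\norm{\boldsymbol{\mu}_j}_2$. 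Applying the Gaussian max upper tail with the choice of argument $\sqrt{2\log(8pmKd)}$ and union-bounding over $j \in [K]$ yields the upper bound, while \Cref{lemma: max_gaussian_lower_bound} gives the lower bound $\sigma_0\norm{\boldsymbol{\mu}_j}_2$.

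The main obstacles are purely bookkeeping: (i) making sure to first condition on the mask and noise to preserve Gaussianity and independence across $r$, and then average these events out cleanly; and (ii) tracking failure probabilities so that the final union bound over the event in \Cref{lemma: noise_correlation}, the event in \Cref{lemma: init_num_signal_neuron}, the Gaussian max upper tails, and the Gaussian max lower tails all combine to at most $2/d$ for each of the two conclusions. The logarithmic factors $\sqrt{2\log(Kmd)}$ and $\sqrt{2\log(8pmKd)}$ are exactly what the Gaussian max tail delivers once these union bounds are performed with slack absorbed into the constants in $8pm$.
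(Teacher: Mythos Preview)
Your proposal is correct and follows essentially the same approach as the paper: condition on the mask (and on $\boldsymbol{\xi}_i$ for the noise direction) so that the inner products become independent centered Gaussians across $r$, invoke \Cref{lemma: noise_correlation} and \Cref{lemma: init_num_signal_neuron} to pin down the conditional variance and the cardinality $|\mathcal{S}^j_{\textnormal{signal}}|$, then apply a Gaussian max upper tail with a union bound for the upper inequality and \Cref{lemma: max_gaussian_lower_bound} for the lower inequality. The paper in fact only spells out the signal case and declares the noise case ``similar''; one small detail you treat loosely is the lower-bound constant, which the paper secures by choosing the constant in the assumption on $pm$ so that $pm/\log(Kd) \ge e$, making $\sqrt{\log(pm/\log(K/\delta))} \ge 1$ (and analogously $m \ge e\log(1/\delta)$ for the noise case under Condition~\ref{condition: params}).
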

\begin{proof}
We first give a proof for the second inequality. 
From Lemma \ref{lemma: init_num_signal_neuron}, we know that $|\mathcal{S}_{\textnormal{signal}}^j| = \Theta(pm)$.
The upper bound can be obtained by taking a union bound over $r \in \mathcal{S}^j_{\textnormal{signal}},\ j \in [K]$. 
To prove the lower bound, applying Lemma \ref{lemma: max_gaussian_lower_bound}, with probability at least $1 - \delta/K$, we have for a given $j \in [K]$
\begin{align*}
    \max_{r \in \mathcal{S}^j_\textnormal{signal}}  \inprod{\widetilde{\mathbf{w}}^{(0)}_{j,r}, \boldsymbol{\mu}_j} \geq \sigma_0 \norm{\boldsymbol{\mu}_j}_2 \sqrt{\log \frac{pm}{\log K/\delta}}.
\end{align*}
Now, notice that we can control the constant in $pm$ (by controlling the constant in the lower bound of $p$) such that $pm/\log(Kd) \geq e$. 
Thus, taking a union bound over $j \in [K]$ and setting $\delta = 1/d$ yield the result. 

The proof of the first inequality is similar. 
\end{proof}

\subsection{Supporting Properties for Entire Training Process}

This subsection establishes a few properties (summarized in \Cref{prop: coefficient_upper_lower_bound}) that will be used in the analysis of feature growing phase and converging phase of gradient descent presented in the next two subsections.
Define $T^\star = \eta^{-1} \poly(1/\eps, \mu, d^{-1}, \sigma_n^{-2}, \sigma_0^{-1} n,m,d)$.
Denote $\alpha = \Theta(\log^{1/q}(T^\star)),\ \beta = 2\max_{i,j,r,k}\left\{\left|\inprod{\widetilde{\mathbf{w}}_{j,r}^{(0)}, \boldsymbol{\mu}_k} \right|, \left|\inprod{\widetilde{\mathbf{w}}_{j,r}^{(0)}, \boldsymbol{\xi}_i} \right|\right\}$. 
We need the following bound holds for our subsequent analysis. 
\begin{align}\label{eq: important_simplification}
    4 m^{1/q} \max_{j,r,i} \left\{ \inprod{\widetilde{\mathbf{w}}^{(0)}_{j,r}, \boldsymbol{\mu}_{y_i}}, Cn\alpha \frac{\mu \sqrt{\log d}}{\sigma_n pd}, \inprod{\widetilde{\mathbf{w}}_{j,r}^{(0)}, \boldsymbol{\xi}_i}, 3Cn\alpha\sqrt{\frac{\log d}{pd}} \right\} \leq 1
\end{align}
\begin{remark}
To see why \Cref{eq: important_simplification} can hold under \Cref{condition: params}, we convert everything in terms of $d$. 
First recall from \Cref{condition: params} that $m, n = \textnormal{poly} (\log d)$ and $ \mu = \Theta(\sigma_n \sqrt{d} \log d) = \Theta(1)$. 
In both mild pruning and over pruning we require $p \geq \Omega(1/\textnormal{poly} \log d)$. 
Since $\alpha = \Theta(\log^{1/q} (T^\star))$, if we assume $T^\star \leq O(\poly(d))$ for a moment (which we are going to justify in the next paragraph), then $\alpha = O(\log^{1/q}(d))$. 
Then if we set $d$ to be large enough, we have $4 m^{1/q} C n \alpha \frac{\mu \sqrt{\log d}}{\sigma_n p d} \leq \frac{\poly \log d}{\sqrt{d}} \leq 1$.  
Finally for the quantity $4 m^{1/q} \max_{j,r,i} \{ \langle \widetilde{\mathbf{w}}_{j,r}^{(0)}, \boldsymbol{\mu}_{y_i} \rangle, \langle \widetilde{\mathbf{w}}_{j,r}^{(0)}, \boldsymbol{\xi}_i \rangle \}$, by \Cref{lemma: initialization_max_signal_informal}, our assumption of $K = O(\log d)$ in \Cref{condition: params} and our choice of $\sigma_0 = \widetilde{\Theta}(m^{-4} n^{-1} \mu^{-1})$ in \Cref{thm: main_thm_mild_pruning_abbr} (or \Cref{thm: main_thm_mild_pruning}), we can easily see that this quantity can also be made smaller than 1.

Now, to justify that $T^\star \leq O(\poly(d))$, we only need to justify that all the quantities $T^\star$ depend on is polynomial in $d$. 
First of all, based on \Cref{condition: params}, $n,m = \poly\log(d)$ and $ \mu = \Theta(\sigma_n \sqrt{d} \log d) = \Theta(1)$ further implies $\sigma_n^{-2} = \Theta({d} \log^2 d)$. 
Since \Cref{thm: main_thm_mild_pruning_abbr} only requires $\sigma_0 = \tilde{\Theta}(m^{-4} n^{-1} \mu^{-1})$, this implies $\sigma_0^{-1} \leq O(\poly \log d)$.
Hence $\sigma_0^{-1} n = O(\poly\log d)$. 
Together with our assumption that $\eps, \eta \geq \Omega(1/\poly(d))$ (which implies $1/\eps, 1/\eta \leq O(\poly(d))$), we have justified that all terms involved in $T^\star$ are at most of order $\poly(d)$.
Hence $T^\star = \poly(d)$. 
\end{remark}

\begin{remark}\label{remark: discuss_eps_eta}
Here we make remark on our assumption on $\eps$ and $\eta$ in \Cref{condition: params}.

For our assumption on $\eps$, since the cross-entropy loss is (1) not strongly-convex and (2) achieves its infimum at infinity. 
In practice, the cross-entropy loss is minimized to a constant level, say 0.001. 
We make this assumption to avoid the pathological case where $\eps$ is exponentially small in $d$ (say $\eps = 2^{-d}$) which is unrealistic.
Thus, for realistic setting, we assume $\eps \geq \Omega(1/\poly(d))$ or $1/\eps \leq O(\poly(d))$.

To deal with $\eta$, the only restriction we have is $\eta = O(1/\mu^2)$ in \Cref{thm: main_thm_mild_pruning_abbr} and \Cref{thm: main_thm_over_pruning_informal}. 
However, in practice, we don't use a learning rate that is exponentially small, say $\eta = 2^{-d}$. 
Thus, like dealing with $\eps$, we assume $\eta \geq \Omega(1/\poly(d))$ or $1/\eta \leq O(\poly d)$. 
\end{remark}

We make the above assumption to simplify analysis when analyzing the magnitude of $F_j(X)$ for $j \neq y$ given sample $(X,y)$. 

\begin{proposition}\label{prop: coefficient_upper_lower_bound}
Under \Cref{condition: params}, during the training time $t < T^\star$, we have
\begin{enumerate}
    \item $\gamma_{j,r,j}^{(t)}, \zeta_{j,r,i}^{(t)} \leq \alpha$, 
    \item $\omega_{j,r,i}^{(t)} \geq -\beta - 6C n \alpha \sqrt{\frac{\log d}{pd}}$.
    \item $\gamma_{j,r,k}^{(t)} \geq -\beta - 2C n\alpha \frac{\mu \sqrt{\log d}}{\sigma_n pd}$.
\end{enumerate}
Notice that the lower bound has absolute value smaller than the upper bound. 
\end{proposition}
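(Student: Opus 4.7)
The plan is to prove all three bounds simultaneously by strong induction on $t \in \{0, 1, \ldots, T^\star\}$. At $t=0$ every coefficient vanishes, so all inequalities hold trivially. For the inductive step, assume the bounds hold through step $t$ and aim to show they persist at step $t+1$; the argument has to couple the three claims because the upper bound on $\gamma_{j,r,j}, \zeta_{j,r,i}$ and the lower bounds on $\omega_{j,r,i}, \gamma_{j,r,k}$ feed into each other through the pre-activations.

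\textbf{Step 1 (pre-activation decomposition).} Using Definition~\ref{def:signal_noise_decomp} with the inductive bounds, the near-orthogonality estimates of \Cref{lemma: noise_correlation}, and the pruning counts of \Cref{lemma: num_nonzero_per_neuron} and \Cref{lemma: init_num_signal_neuron}, I would first derive the approximations
\[
\langle \widetilde{\mathbf{w}}_{j,r}^{(t)}, \boldsymbol{\mu}_k \rangle = \mathbb{I}\bigl((\mathbf{m}_{j,r})_k=1\bigr)\,\gamma_{j,r,k}^{(t)} + E^{\mu}_{j,r,k}, \qquad
\langle \widetilde{\mathbf{w}}_{j,r}^{(t)}, \boldsymbol{\xi}_i \rangle = \zeta_{j,r,i}^{(t)} + \omega_{j,r,i}^{(t)} + E^{\xi}_{j,r,i},
\]
where $E^\mu, E^\xi$ collect the contributions of the cross inner products $\langle \boldsymbol{\mu}_k, \widetilde{\boldsymbol{\xi}}_{j,r,i'}\rangle$ and $\langle \widetilde{\boldsymbol{\xi}}_{j,r,i'}, \boldsymbol{\xi}_i\rangle$. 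Under \Cref{eq: important_simplification} and Condition~\ref{condition: params} these errors are bounded by $1/(4 m^{1/q})$, which ensures $|\sigma'(\cdot)| \leq q(\alpha+1)^{q-1}$ and that the sign of the pre-activation is determined by the $(\gamma,\zeta,\omega)$ coefficients.

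\textbf{Step 2 (upper bounds, Claim 1).} Since $\gamma_{j,r,j}^{(t)}$ and $\zeta_{j,r,i}^{(t)}$ are monotone non-decreasing, it suffices to rule out an overshoot past $\alpha$. I would split the trajectory at the first step $\widetilde t$ where the coefficient reaches $\alpha/2$. Before $\widetilde t$ the coefficient stays below $\alpha/2$, and a single increment is at most $(\eta/n)\,q\alpha^{q-1}\cdot O(\mu^2 \vee \sigma_n^2 pd) \leq \alpha/2$ by the learning-rate assumption $\eta\leq \widetilde{O}(1/\mu^2)$. After $\widetilde t$, Step~1 shows that $F_{y_i}(\mathbf{x}_i)$ contains a term of order $(\alpha/4)^q \gtrsim \log T^\star$ from that very neuron, while the inductive lower bounds on the off-diagonal coefficients keep $F_k(\mathbf{x}_i)$ bounded for $k\neq y_i$. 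Hence $|\ell'_{y_i,i}| \leq K\exp(-(\alpha/4)^q+O(1)) \leq K/T^\star$, and summing over the remaining at-most $T^\star$ iterations gives total further growth $o(\alpha/2)$.

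\textbf{Step 3 (lower bounds, Claims 2 and 3).} The key observation is that the polynomial-ReLU derivative $\sigma'(z) = q z^{q-1}\mathbb{I}(z>0)$ vanishes whenever the pre-activation is non-positive, so the updates to $\omega$ and off-diagonal $\gamma$ self-regulate. For $j\neq y_i$, the decomposition of Step~1 gives $\langle \widetilde{\mathbf{w}}_{j,r}^{(t)}, \boldsymbol{\xi}_i\rangle = \omega_{j,r,i}^{(t)} + E^\xi_{j,r,i}$ since $\zeta_{j,r,i}^{(t)}=0$. Substituting \Cref{lemma: noise_correlation} and the inductive hypothesis, the error $|E^\xi_{j,r,i}|$ is bounded above by $\tfrac{\beta}{2} + 3Cn\alpha\sqrt{\log d/pd}$. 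Consequently, once $\omega_{j,r,i}^{(t)} \leq -\tfrac{\beta}{2} - 3Cn\alpha\sqrt{\log d/pd}$, the pre-activation is non-positive and no further decrease occurs; accounting for one possible over-decrement of size $(\eta/n)\,q\alpha^{q-1}\,\sigma_n^2 pd$ (which fits within the remaining slack by the learning-rate bound) yields Claim~2. Claim~3 is obtained analogously, with $\|\boldsymbol{\mu}_k\|^2 = \mu^2$ and $|\langle \boldsymbol{\mu}_k, \widetilde{\boldsymbol{\xi}}_{j,r,i}\rangle| \leq O(\sigma_n\mu\sqrt{\log d})$ yielding the advertised $\mu\sqrt{\log d}/(\sigma_n pd)$ scaling.

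\textbf{Main obstacle.} The hardest part is Step~3: a naive per-step accounting using $|\ell'|\leq 1$ summed over the horizon $T^\star = \poly(d)$ is far too loose to recover the $\sqrt{\log d/pd}$ factor. The argument must lean on the activation's vanishing derivative on the negative half-line as a self-regulating barrier, so that the decomposition error $E^\xi$ (driven by cross-sample noise correlations), rather than the accumulated gradient flux, sets the threshold below which updates freeze. Making this rigorous requires the approximation in Step~1 to remain tight at every $t$, which in turn reuses the same inductive hypothesis, hence the bundling of all three bounds into a single induction. The multi-class cross-entropy structure, controlled through $\sum_{j\neq y_i}|\ell'_{j,i}| = |\ell'_{y_i,i}|$, must be threaded consistently to avoid an extra factor of $K$ when aggregating contributions across the $K-1$ off-target classes.
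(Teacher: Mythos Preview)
Your proposal is correct and follows essentially the same route as the paper: a joint induction on $t$, with Step~1 matching Lemma~\ref{lemma: weight_change_correlation}, Step~2 matching Lemma~\ref{lemma: coefficient_upper_bound} (splitting at $\alpha/2$ and using $|\ell'_{y_i,i}|\le Ke/e^{F_{y_i}}$ afterwards), and Step~3 matching Lemma~\ref{lemma: coefficient_lower_bound} (the self-regulating barrier from $\sigma'(z)=0$ for $z\le 0$). One minor refinement worth noting: in Step~3 the paper bounds the single over-decrement using the small pre-activation $\le \tfrac{\beta}{2}+3Cn\alpha\sqrt{\log d/(pd)}$ rather than $\alpha$, which is what makes the stated learning-rate condition $\eta\le\widetilde O(1/\mu^2)$ sufficient.
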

\begin{proof}[Proof of \Cref{prop: coefficient_upper_lower_bound}]
We use induction to prove Proposition \ref{prop: coefficient_upper_lower_bound}. 

\paragraph{\textbf{Induction Hypothesis:}} Suppose Proposition \ref{prop: coefficient_upper_lower_bound} holds for all $t < T \leq T^\star$. 

We next show that this also holds for $t = T$ via the following a few lemmas. 
\begin{lemma}\label{lemma: weight_change_correlation}
Under \Cref{condition: params}, for $t < T$, there exists a constant $C$ such that 
\begin{align*}
    \inprod{\widetilde{\mathbf{w}}_{j,r}^{(t)} - \widetilde{\mathbf{w}}_{j,r}^{(0)}, \boldsymbol{\mu}_k} &= \left( \gamma_{j,r,k}^{(t)} \pm C n\alpha \frac{\mu \sqrt{\log d}}{\sigma_n pd} \right) \mathbb{I}((\mathbf{m}_{j,r})_k = 1), \\
    \inprod{\widetilde{\mathbf{w}}_{j,r}^{(t)} - \widetilde{\mathbf{w}}_{j,r}^{(0)}, \boldsymbol{\xi}_{i}} &= \zeta^{(t)}_{j,r,i} \pm 3Cn\alpha\sqrt{\frac{\log d}{pd}}, \\
    \inprod{\widetilde{\mathbf{w}}_{j,r}^{(t)} - \widetilde{\mathbf{w}}_{j,r}^{(0)}, \boldsymbol{\xi}_{i}} &= \omega^{(t)}_{j,r,i} \pm 3Cn\alpha\sqrt{\frac{\log d}{pd}}.
\end{align*}
\end{lemma}
\begin{proof}
From Lemma \ref{lemma: noise_correlation}, there exists a constant $C$ such that with probability at least $1 - 1/d$,
\begin{align*}
    \frac{\left|\inprod{\widetilde{\boldsymbol{\xi}}_{j,r,i}, \boldsymbol{\xi}_{i'}} \right|}{\norm{\widetilde{\boldsymbol{\xi}}_{j,r,i}}_2^2} & \leq C \sqrt{\frac{\log d}{pd}} ,\\
    \frac{\left| \inprod{\widetilde{\boldsymbol{\xi}}_{j,r,i}, \boldsymbol{\mu}_k} \right|}{\norm{\widetilde{\boldsymbol{\xi}}_{j,r,i}}_2^2} &\leq C \frac{\mu \sqrt{\log d}}{\sigma_n p d} ,\\
    \frac{|\inprod{\boldsymbol{\mu}_k, \boldsymbol{\xi}_i}|}{\norm{\boldsymbol{\mu}_k}_2^2} &\leq C \frac{\sigma_n \sqrt{\log d}}{\mu}.
\end{align*}
Using the signal-noise decomposition and assuming $(\mathbf{m}_{j,r})_k = 1$, we have
\begin{align*}
    \left| \inprod{\widetilde{\mathbf{w}}_{j,r}^{(t)} - \widetilde{\mathbf{w}}_{j,r}^{(0)}, \boldsymbol{\mu}_k} - \gamma_{j,r,k}^{(t)} \right| &= \left| \sum_{i=1}^n \zeta_{j,r,i}^{(t)} \cdot \norm{\widetilde{\boldsymbol{\xi}}_{j,r,i}}_2^{-2} \cdot \inprod{\widetilde{\boldsymbol{\xi}}_{j,r,i}, \boldsymbol{\mu}_k} + \sum_{i=1}^n \omega_{j,r,i}^{(t)} \norm{\widetilde{\boldsymbol{\xi}}_{j,r,i}}_2^{-2} \cdot \inprod{\widetilde{\boldsymbol{\xi}}_{j,r,i} , \boldsymbol{\mu}_k} \right| \\
    &\leq C \frac{\mu \sqrt{\log d}}{\sigma_n pd} \sum_{i=1}^n \left| \zeta_{j,r,i}^{(t)} \right| + C \frac{\mu \sqrt{\log d}}{\sigma_n pd} \sum_{i=1}^n \left| \omega_{j,r,i}^{(t)} \right| \\
    &\leq 2C \frac{\mu \sqrt{\log d}}{\sigma_n pd} n \alpha.
\end{align*}
where the second last inequality is by Lemma \ref{lemma: noise_correlation} and the last inequality is by induction hypothesis.

To prove the second equality, for $j = y_i$, 
\begin{align*}
    \left| \inprod{\widetilde{\mathbf{w}}_{j,r}^{(t)} - \widetilde{\mathbf{w}}_{j,r}^{(0)}, \boldsymbol{\xi}_{i}} - \zeta_{j,r,i}^{(t)} \right| &= \left| \sum_{k=1}^K \gamma_{j,r,k}^{(t)} \cdot \frac{\inprod{\boldsymbol{\mu}_k, \boldsymbol{\xi}_i}}{\norm{\boldsymbol{\mu}_k}_2^{2}} + \sum_{i' \neq i} \zeta_{j,r,i'}^{(t)} \cdot \frac{\inprod{\widetilde{\boldsymbol{\xi}}_{j,r,i'}, \boldsymbol{\xi}_{i}}}{\norm{\widetilde{\boldsymbol{\xi}}_{j,r,i'}}_2^{2}} + \sum_{i'=1}^n \omega_{j,r,i'}^{(t)} \frac{\inprod{\widetilde{\boldsymbol{\xi}}_{j,r,i'} , \boldsymbol{\xi}_i}}{\norm{\widetilde{\boldsymbol{\xi}}_{j,r,i'}}_2^{2}} \right| \\
    &\leq C \frac{\sigma_n \sqrt{\log d}}{\mu} \sum_{k=1}^K |\gamma_{j,r,k}^{(t)}| + C \sqrt{\frac{\log d}{pd}} \sum_{i' \neq i} |\zeta_{j,r,i'}^{(t)}| + C \sqrt{\frac{\log d}{pd}} \sum_{i'=1}^n |\omega_{j,r,i'}^{(t)}| \\
    &= C \frac{\sigma_n \sqrt{\log d}}{\mu} K \alpha + 2Cn\alpha \sqrt{\frac{\log d}{pd}} \\
    &\leq 3Cn\alpha\sqrt{\frac{\log d}{pd}}.
\end{align*}
where the last inequality is by $n \gg K$ and $\mu = \Theta(\sigma_n \sqrt{d} \log d)$.
The proof for the case of $j \neq y_i$ is similar. 
\end{proof}

\begin{lemma}[Off-diagonal Correlation Upper Bound]\label{lemma: offdiagonal_upperbound}
Under \Cref{condition: params}, for $t < T$, $j \neq y_i$, we have that
\begin{align*}
    \inprod{\widetilde{\mathbf{w}}_{j,r}^{(t)}, \boldsymbol{\mu}_{y_i}} &\leq \inprod{\widetilde{\mathbf{w}}_{j,r}^{(0)}, \boldsymbol{\mu}_{y_i}} + C n\alpha \frac{\mu \sqrt{\log d}}{\sigma_n pd} ,\\
    \inprod{\widetilde{\mathbf{w}}_{j,r}^{(t)}, \boldsymbol{\xi}_i} &\leq \inprod{\widetilde{\mathbf{w}}_{j,r}^{(0)}, \boldsymbol{\xi}_i} + 3Cn\alpha\sqrt{\frac{\log d}{pd}} ,\\
    F_j(\widetilde{\mathbf{W}}_j^{(t)}, \mathbf{x}_i) & \leq 1.
\end{align*}
\end{lemma}
\begin{proof}
If $j \neq y_i$, then $\gamma_{j,r,k}^{(t)} \leq 0$ and we have that
\begin{align*}
    \inprod{\widetilde{\mathbf{w}}_{j,r}^{(t)}, \boldsymbol{\mu}_{y_i}} &\leq \inprod{\widetilde{\mathbf{w}}_{j,r}^{(0)}, \boldsymbol{\mu}_{y_i}} + \left( \gamma_{j,r,y_i}^{(t)} + C n\alpha \frac{\mu \sqrt{\log d}}{\sigma_n pd} \right) \mathbb{I}((\mathbf{m}_{j,r})_{y_i} = 1) \\
    &\leq \inprod{\widetilde{\mathbf{w}}_{j,r}^{(0)}, \boldsymbol{\mu}_{y_i}} + C n\alpha \frac{\mu \sqrt{\log d}}{\sigma_n pd}.
\end{align*}
Further, we can obtain
\begin{align*}
    \inprod{\widetilde{\mathbf{w}}_{j,r}^{(t)}, \boldsymbol{\xi}_{i}} &\leq \inprod{\widetilde{\mathbf{w}}_{j,r}^{(0)}, \boldsymbol{\xi}_i} + \omega^{(t)}_{j,r,i} + 3Cn\alpha\sqrt{\frac{\log d}{pd}} \\
    &\leq \inprod{\widetilde{\mathbf{w}}_{j,r}^{(0)}, \boldsymbol{\xi}_i} + 3Cn\alpha\sqrt{\frac{\log d}{pd}}.
\end{align*}
Then, we have the following bound:
\begin{align*}
    F_j(\widetilde{\mathbf{W}}_j^{(t)}, \mathbf{x}_i) &= \sum_{r=1}^m [\sigma(\inprod{\widetilde{\mathbf{w}}_{j,r}, \boldsymbol{\mu}_{y_i}}) + \sigma(\inprod{\widetilde{\mathbf{w}}_{j,r}, \boldsymbol{\xi}_i})] \\
    &\leq m 2^{q+1} \max_{j,r,i} \left\{ \inprod{\widetilde{\mathbf{w}}^{(0)}_{j,r}, \boldsymbol{\mu}_{y_i}}, Cn\alpha \frac{\mu \sqrt{\log d}}{\sigma_n pd}, \inprod{\widetilde{\mathbf{w}}_{j,r}^{(0)}, \boldsymbol{\xi}_i}, 3Cn\alpha\sqrt{\frac{\log d}{pd}} \right\}^q \\
    &\leq 1.
\end{align*}
where the first inequality is by \Cref{eq: important_simplification}. 
\end{proof}

\begin{lemma}[Diagonal Correlation Upper Bound]\label{lemma: diagonal_correlation_upper_bound}
Under \Cref{condition: params}, for $t < T,\ j = y_i$, we have
\begin{align*}
    \inprod{\widetilde{\mathbf{w}}_{j,r}^{(t)}, \boldsymbol{\mu}_j} &\leq \inprod{\widetilde{\mathbf{w}}_{j,r}^{(0)}, \boldsymbol{\mu}_j} +  \gamma_{j,r,j}^{(t)} + C n\alpha \frac{\mu \sqrt{\log d}}{\sigma_n pd}, \\
    \inprod{\widetilde{\mathbf{w}}_{j,r}^{(t)}, \boldsymbol{\xi}_i} &\leq \inprod{\widetilde{\mathbf{w}}_{j,r}^{(0)}, \boldsymbol{\xi}_i} + \zeta^{(t)}_{j,r,i} + 3Cn\alpha\sqrt{\frac{\log d}{pd}}.
\end{align*}
If $\max\{\gamma_{j,r,j}^{(t)}, \zeta_{j,r,i}^{(t)}\} \leq m^{-1/q}$, we further have that $F_j(\widetilde{\mathbf{W}}_j^{(t)}, \mathbf{x}_i) \leq O(1)$.
\end{lemma}
\begin{proof}
The two inequalities are immediate consequences of Lemma \ref{lemma: weight_change_correlation}. 
If $\max\{\gamma_{j,r,j}^{(t)}, \zeta_{j,r,i}^{(t)}\} \leq m^{-1/q}$, we have
\begin{align*}
    F_j(\widetilde{\mathbf{W}}_j^{(t)}, \mathbf{x}_i) &= \sum_{r=1}^m [\sigma(\inprod{\widetilde{\mathbf{w}}_{j,r}, \boldsymbol{\mu}_j}) + \sigma(\inprod{\widetilde{\mathbf{w}}_{j,r}, \boldsymbol{\xi}_i})] \\
    &\leq 2 \cdot 3^q m \max_{j,r,i} \left\{ \gamma_{j,r}^{(t)}, \zeta_{j,r,i}^{(t)}, \left|\inprod{\widetilde{\mathbf{w}}_{j,r}^{(0)}, \boldsymbol{\mu}_j} \right|, \left|\inprod{\widetilde{\mathbf{w}}_{j,r}^{(0)}, \boldsymbol{\xi}_i} \right|, C n\alpha \frac{\mu \sqrt{\log d}}{\sigma_n pd}, 3Cn\alpha\sqrt{\frac{\log d}{pd}} \right\}^q \\
    &\leq O(1).
\end{align*}
\end{proof}

\begin{lemma}\label{lemma: coefficient_lower_bound}
Under \Cref{condition: params}, for $t \leq T$, we have that 
\begin{enumerate}
    \item {$\omega_{j,r,i}^{(t)} \geq -\beta - 6Cn\alpha \sqrt{\frac{\log d }{pd}}$};
    \item {$\gamma_{j,r,k}^{(t)} \geq -\beta - 2C n\alpha \frac{\mu \sqrt{\log d}}{\sigma_n pd}$}.
\end{enumerate}
\end{lemma}
\begin{proof}
When $j = y_i$, we have $\omega_{j,r,i}^{(t)} = 0$. 
We only need to consider the case of $j \neq y_i$. 
When $\omega_{j,r,i}^{(T-1)} \leq -0.5 \beta - 3Cn\alpha \sqrt{\frac{\log d}{pd}}$, by Lemma \ref{lemma: weight_change_correlation} we have
\begin{align*}
    \inprod{\widetilde{\mathbf{w}}_{j,r}^{(T-1)}, \boldsymbol{\xi}_{i}} &\leq \inprod{\widetilde{\mathbf{w}}_{j,r}^{(0)}, \boldsymbol{\xi}_i} +  \omega^{(T-1)}_{j,r,i} + 3Cn\alpha\sqrt{\frac{\log d}{pd}} \leq 0.
\end{align*}
Thus, 
\begin{align*}
    \omega_{j,r,i}^{(T)} &= \omega_{j,r,i}^{(T-1)} - \frac{\eta}{n} \cdot \ell_{j,i}'^{(T-1)} \cdot \sigma'\left( \inprod{\widetilde{\mathbf{w}}_{j,r}^{(T-1)}, \boldsymbol{\xi}_{i}} \right) \norm{\widetilde{\boldsymbol{\xi}}_{j,r,i}}_2^2 \mathbb{I}(j \neq y_i) \\
    &= \omega_{j,r,i}^{(T-1)} \\
    &\geq -\beta - 6C n \alpha \sqrt{\frac{\log d}{pd}}.
\end{align*}
When $\omega_{j,r,i}^{(T-1)} \geq -0.5 \beta - 3Cn\alpha \sqrt{\frac{\log d}{pd}}$, we have
\begin{align*}
    \omega_{j,r,i}^{(T)} &= \omega_{j,r,i}^{(T-1)} - \frac{\eta}{n} \cdot \ell_{j,i}'^{(T-1)} \cdot \sigma'\left( \inprod{\widetilde{\mathbf{w}}_{j,r}^{(T-1)}, \boldsymbol{\xi}_{i}} \right) \norm{\widetilde{\boldsymbol{\xi}}_{j,r,i}}_2^2 \mathbb{I}(j \neq y_i) \\
    &\geq -0.5 \beta - 3Cn\alpha \sqrt{\frac{\log d}{pd}} - \frac{\eta}{n} \sigma'\left( 0.5\beta + 3C n\alpha \sqrt{\frac{\log d}{pd}} \right) \norm{\widetilde{\boldsymbol{\xi}}_{j,r,i}}_2^2 \\
    &\geq -\beta - 6C n\alpha \sqrt{\frac{\log d}{pd}},
\end{align*}
where the last inequality is by setting {$\eta \leq n q^{-1} \left( 0.5\beta + 3Cn\alpha \sqrt{\frac{\log d}{pd}} \right)^{2-q} (C_2 \sigma_n^2 d)^{-1}$} and $C_2$ is the constant such that $\norm{\widetilde{\boldsymbol{\xi}}_{j,r,i}}_2^2 \leq C_2 \sigma_n^2 pd$ for all $j,r,i$ in Lemma \ref{lemma: noise_correlation}.

For $\gamma_{j,r,k}^{(t)}$, the proof is similar. Consider $\mathbb{I}((\mathbf{m}_{j,r})_k) = 1$.
When $\gamma_{j,r,k}^{(t)} \leq -0.5\beta - C n\alpha \frac{\mu \sqrt{\log d}}{\sigma_n pd}$, by Lemma \ref{lemma: weight_change_correlation}, we have
\begin{align*}
    \inprod{\widetilde{\mathbf{w}}_{j,r}^{(t)}, \boldsymbol{\mu}_k} &\leq \inprod{\widetilde{\mathbf{w}}_{j,r}^{(0)}, \boldsymbol{\mu}_k} + \gamma_{j,r,k}^{(t)} + C n\alpha \frac{\mu \sqrt{\log d}}{\sigma_n pd} \leq 0.
\end{align*}
Hence,
\begin{align*}
    \gamma_{j,r,k}^{(T)} &= \gamma_{j,r,k}^{(T-1)} - \frac{\eta}{n} \sum_{i=1}^n \ell_{j,i}'^{(T-1)} \sigma'\left( \inprod{\widetilde{\mathbf{w}}_{j,r}^{(T-1)}, \boldsymbol{\mu}_k} \right) \mu^2 \mathbb{I}(y_i = k) \\
    &= \gamma_{j,r,k}^{(T-1)} \\
    &\geq -\beta - 2Cn\alpha\frac{\mu \sqrt{\log d}}{\sigma_n pd}.
\end{align*}
When $\gamma_{j,r,k}^{(t)} \geq -0.5\beta - C n\alpha \frac{\mu \sqrt{\log d}}{\sigma_n pd}$, we have
\begin{align*}
    \gamma_{j,r,k}^{(T)} &= \gamma_{j,r,k}^{(T-1)} - \frac{\eta}{n} \sum_{i=1}^n \ell_{j,i}'^{(T-1)} \sigma'\left( \inprod{\widetilde{\mathbf{w}}_{j,r}^{(T-1)}, \boldsymbol{\mu}_k} \right) \mu^2 \mathbb{I}(y_i = k) \\
    &\geq -0.5\beta - C n\alpha \frac{\mu \sqrt{\log d}}{\sigma_n pd} - C_2 \frac{\eta}{K} \sigma'\left( 0.5\beta + Cn \alpha \frac{\mu \sqrt{\log d}}{\sigma_n pd} \right) \mu^2 \\
    &\geq -\beta - 2C n\alpha \frac{\mu \sqrt{\log d}}{\sigma_n pd},
\end{align*}
where the first inequality follows from the fact that there are $\Theta(\frac{n}{K})$ samples such that $\mathbb{I}(y_i = k)$, and the last inequality follows from picking {$\eta \leq K (0.5\beta + C n\alpha \frac{\mu \sqrt{\log d}}{\sigma_n pd})^{2-q} \mu^{-2} q^{-1} C_2^{-1}$}.
\end{proof}

\begin{lemma}\label{lemma: coefficient_upper_bound}
Under \Cref{condition: params}, for $t \leq T$, we have $\gamma_{j,r,j}^{(t)}, \zeta_{j,r,i}^{(t)} \leq \alpha$. 
\end{lemma}
\begin{proof}
For $y_i \neq j$ or $r \notin \mathcal{S}^j_{\textnormal{signal}}$, $\gamma_{j,r,j}^{(t)}, \zeta_{j,r,i}^{(t)} = 0 \leq \alpha$. 

If $y_i = j$, then by Lemma \ref{lemma: offdiagonal_upperbound} we have
\begin{align}\label{eq: logit_upper_bound}
    \left| \ell_{j,i}'^{(t)} \right| &= 1 - \logit_j(F; X) = \frac{\sum_{i \neq j} e^{F_i(X)}}{\sum_{i=1}^K e^{F_i(X)}} \leq \frac{Ke}{e^{F_j(X)}}.
\end{align}
Recall that
\begin{align*}
    & \gamma_{j,r,j}^{(t+1)} = \gamma_{j,r,j}^{(t)} - \mathbb{I}(r \in \mathcal{S}_{\textnormal{signal}}^j) \frac{\eta}{n} \sum_{i=1}^n \ell_{j,i}'^{(t)} \cdot \sigma'\left( \inprod{\widetilde{\mathbf{w}}_{j,r}^{(t)}, \boldsymbol{\mu}_{y_i}} \right) \norm{\boldsymbol{\mu}_{y_i}}_2^2 \mathbb{I}(y_i = j) ,\\
    & \zeta_{j,r,i}^{(t+1)} = \zeta_{j,r,i}^{(t)} - \frac{\eta}{n} \cdot \ell_{j,i}'^{(t)} \cdot \sigma'\left( \inprod{\widetilde{\mathbf{w}}_{j,r}^{(t)}, \boldsymbol{\xi}_{i}} \right) \norm{\widetilde{\boldsymbol{\xi}}_{j,r,i}}_2^2 \mathbb{I}(j = y_i).
\end{align*}

We first bound $\zeta_{j,r,i}^{(T)}$.
Let $T_{j,r,i}$ be the last time $t < T$ that $\zeta_{j,r,i}^{(t)} \leq 0.5\alpha$. Then we have
\begin{align*}
    \zeta_{j,r,i}^{(T)} &= \zeta_{j,r,i}^{(T_{j,r,i})} - \underbrace{\frac{\eta}{n} \ell_i'^{(T_{j,r,i})} \cdot \sigma'\left(\inprod{\widetilde{\mathbf{w}}_{j,r}^{(T_{j,r,i})}, \boldsymbol{\xi}_i} \right) \mathbb{I}(y_i = j) \norm{\widetilde{\boldsymbol{\xi}}_{j,r,i}}_2^2}_{I_1} \\
    &\quad - \underbrace{\sum_{T_{j,r,i}< t <T} \frac{\eta}{n} \ell_{j,i}'^{(T_{j,r,i})} \sigma'\left(\inprod{\widetilde{\mathbf{w}}_{j,r}^{(t)}, \boldsymbol{\xi}_i} \right) \mathbb{I}(y_i = j) \norm{\widetilde{\boldsymbol{\xi}}_{j,r,i}}_2^2}_{I_2}.
\end{align*}
We bound $I_1, I_2$ separately. We first bound $I_1$ as follows. 
\begin{align*}
    |I_1| \leq q \frac{\eta}{n} \left( \zeta_{j,r,i}^{(T_{j,r,i})} + 0.5\beta + 3Cn\alpha \sqrt{\frac{\log d}{pd}} \right)^{q-1} C_2 \sigma_n^2 pd \leq q 2^q n^{-1} \eta \alpha^{q-1} C_2 \sigma_n^2 pd \leq 0.25 \alpha,
\end{align*}
where the first inequality follows from Lemma \ref{lemma: diagonal_correlation_upper_bound}, the second inequality follows because $\beta \leq 0.1\alpha$ and $3Cn\alpha \sqrt{\frac{\log d}{pd}} \leq 0.1\alpha$, and the last inequality follows because $\eta \leq n/(q 2^{q+2} \alpha^{q-2} \sigma_n^2 d)$.

For $T_{j,r,i} < t < T$, by Lemma \ref{lemma: weight_change_correlation}, we have that $\inprod{\widetilde{\mathbf{w}}_{j,r}^{(t)}, \boldsymbol{\xi}_i} \geq 0.5\alpha - 0.5 \beta - 3Cn\alpha \sqrt{\frac{\log d}{pd}} \geq 0.25\alpha$ and $\inprod{\widetilde{\mathbf{w}}_{j,r}^{(t)}, \boldsymbol{\xi}_i} \leq \alpha + 0.5\beta + 3Cn\alpha \sqrt{\frac{\log d}{pd}} \leq 2\alpha$. 

Now we bound $I_2$ as follows
\begin{align*}
    |I_2| &\leq \sum_{T_{j,r,i} < t < T} \frac{\eta}{n} Ke \exp\left\{ -F_j(X) \right\} \sigma'\left(\inprod{\widetilde{\mathbf{w}}_{j,r}^{(t)}, \boldsymbol{\xi}_i} \right) \mathbb{I}(y_i = j) \norm{\widetilde{\boldsymbol{\xi}}_{j,r,i}}_2^2 \\
    &\leq \sum_{T_{j,r,i} < t < T} \frac{\eta}{n} Ke \exp\left\{ -\sigma\left( \inprod{\widetilde{\mathbf{w}}_{j,r}^{(t)}, \boldsymbol{\xi}_i} \right) \right\} \sigma'\left(\inprod{\widetilde{\mathbf{w}}_{j,r}^{(t)}, \boldsymbol{\xi}_i} \right) \mathbb{I}(y_i = j) \norm{\widetilde{\boldsymbol{\xi}}_{j,r,i}}_2^2 \\
    &\leq \frac{qKe \eta 2^{q-1} T^\star}{n} \exp(-\alpha^q / 4^q) \alpha^{q-1} \sigma_n^2 pd \\
    &\leq 0.25 T^\star \exp(-\alpha^q/4^q) \alpha^{q-2} \alpha \\
    &\leq 0.25 \alpha,
\end{align*}
where the first inequality follows from Equation \eqref{eq: logit_upper_bound}, the second inequality follows because $F_j(X) \geq \sigma\left( \inprod{\widetilde{\mathbf{w}}_{j,r}^{(t)}, \boldsymbol{\xi}_i} \right)$, the fourth inequality follows by choosing $\eta \leq n / (qKe 2^{q+1} \sigma_n^2 d)$, and the last inequality follows by choosing $\alpha = \Theta( \log^{1/q} (T^\star))$.

Plugging the bounds on $I_1, I_2$ finishes the proof for $\zeta_{j,r,i}^{(T)}$. 

To prove $\gamma_{j,r,j}^{(t)} \leq \alpha$, we pick $\eta \leq 1/(q e 2^{q+2} \mu^2)$ and the rest of the proof is similar. 
\end{proof}
Lemma \ref{lemma: coefficient_lower_bound} and Lemma \ref{lemma: coefficient_upper_bound} imply Proposition \ref{prop: coefficient_upper_lower_bound} holds for all $t \leq T$.
\paragraph{\textbf{Induction Ends}}
\end{proof}

\subsection{Feature Growing Phase}


In this subsection, we first present a supporting lemma, and then provide our main result of \Cref{lemma: phase1_main}, which shows that the signal strength is relatively unaffected by pruning while the noise level is reduced by a factor of $\sqrt{p}$.

During the feature growing phase of training, the output of $F_j(X) = O(1)$ for all $j \in [K]$. 
Therefore, $\logit_i(F, X) = O(\frac{1}{K})$ and $1 - \logit_i(F, X) = \Theta(1)$ until $\inprod{\widetilde{\mathbf{w}}_{j,r}^{(t)}, \boldsymbol{\mu}_j}$ reaches $m^{-1/q}$. 

\begin{lemma}\label{lemma: phase1_noise_upper}
Under the same assumption as Theorem \ref{thm: main_thm_mild_pruning}, for $T = \frac{n \eta^{-1} C_4 \sigma_0^{2-q} (\sigma_n \sqrt{pd})^{-q}}{C_3 (2C_1)^{q-1} [\log d]^{(q-1)/2}}$, the following results hold:
\begin{itemize}
    \item $|\zeta_{j,r,i}^{(t)}| = O(\sigma_0 \sigma_n \sqrt{pd})$ for all $j \in [K],\ r \in [m],\ i \in [n]$ and $t \leq T$. 
    \item $|\omega_{j,r,i}^{(t)}| = O(\sigma_0 \sigma_n \sqrt{pd})$ for all $j \in [K],\ r \in [m],\ i \in [n]$ and $t \leq T$. 
\end{itemize}
\end{lemma}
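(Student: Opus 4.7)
The plan is to prove the claim by a straightforward induction on $t$, taking advantage of the fact that in the feature growing phase the cross-entropy derivatives are $O(1)$ and the individual weight-noise inner products have not yet crossed the activation threshold that would make them grow rapidly. First I would verify that throughout $t \leq T$, Proposition \ref{prop: coefficient_upper_lower_bound} together with Lemmas \ref{lemma: offdiagonal_upperbound} and \ref{lemma: diagonal_correlation_upper_bound} imply $F_j(\widetilde{\mathbf{W}}^{(t)}; \mathbf{x}_i) = O(1)$ for all $j,i$, so that $|\ell_{j,i}'^{(t)}| \leq 1$ uniformly. This reduces the problem to controlling the deterministic per-step growth of $|\zeta^{(t)}|$ and $|\omega^{(t)}|$.

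Next I would set up the inductive hypothesis $|\zeta_{j,r,i}^{(t')}|, |\omega_{j,r,i}^{(t')}| \leq C' \sigma_0 \sigma_n \sqrt{pd}$ for all $t' \leq t$ (with $C'$ a suitable absolute constant). Combining Lemma \ref{lemma: weight_change_correlation}, the initialization upper bound from Lemma \ref{lemma: initialization_max_signal} ($|\langle \widetilde{\mathbf{w}}_{j,r}^{(0)}, \boldsymbol{\xi}_i\rangle| \leq \sqrt{2\log(Kmd)}\sigma_0\sigma_n\sqrt{pd}$), and the observation that the Lemma \ref{lemma: noise_correlation} cross-terms $3Cn\alpha\sqrt{\log d/(pd)}$ are negligible next to $\sigma_0\sigma_n\sqrt{pd}\sqrt{\log d}$ under Condition \ref{condition: params}, I obtain
\[
\bigl|\langle \widetilde{\mathbf{w}}_{j,r}^{(t)}, \boldsymbol{\xi}_i\rangle\bigr| \;\leq\; 2C_1 \sqrt{\log d}\,\sigma_0 \sigma_n \sqrt{pd}
\]
for a constant $C_1$ absorbing the initialization factor $\sqrt{2\log(Kmd)}$.

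Plugging this into the update recursion, using $\sigma'(z) = q z^{q-1}$ on the positive side and $\|\widetilde{\boldsymbol{\xi}}_{j,r,i}\|_2^2 \leq C_2 \sigma_n^2 pd$ from Lemma \ref{lemma: noise_correlation}, the one-step increment is bounded by
\[
\bigl|\zeta_{j,r,i}^{(t+1)} - \zeta_{j,r,i}^{(t)}\bigr| \;\leq\; \frac{\eta}{n}\, q\,(2C_1 \sqrt{\log d}\,\sigma_0\sigma_n\sqrt{pd})^{q-1}\, C_2 \sigma_n^2 pd \;=\; \frac{\eta}{n}\, C_3 (2C_1)^{q-1} (\log d)^{(q-1)/2}\, \sigma_0^{q-1}\,(\sigma_n\sqrt{pd})^{q+1}.
\]
Telescoping from $0$ to $T$ and demanding the cumulative growth stays below $C' \sigma_0 \sigma_n \sqrt{pd}$ (so that the induction closes) gives exactly
\[
T \;\leq\; \frac{n\,\eta^{-1}\,C_4\,\sigma_0^{2-q}\,(\sigma_n\sqrt{pd})^{-q}}{C_3(2C_1)^{q-1}(\log d)^{(q-1)/2}},
\]
which matches the statement. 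The identical argument, with $\mathbb{I}(j=y_i)$ replaced by $\mathbb{I}(j\neq y_i)$, handles $|\omega_{j,r,i}^{(t)}|$.

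The main obstacle I anticipate is bookkeeping rather than any sharp estimate: one must carefully choose the constant $C'$ large enough (relative to the initialization constant $\sqrt{2\log(Kmd)}$ and the approximation error constants from Lemma \ref{lemma: weight_change_correlation}) that the induction actually closes step by step, and one must confirm that the parameter choices in Condition \ref{condition: params} make the lower-order cross-contamination terms (the $3Cn\alpha\sqrt{\log d/(pd)}$ and $Cn\alpha \mu\sqrt{\log d}/(\sigma_n pd)$ pieces from Lemma \ref{lemma: weight_change_correlation}) genuinely dominated by the $\sqrt{\log d}\,\sigma_0\sigma_n\sqrt{pd}$ term, so that the $(q-1)$-th power in the increment is controlled by $C_1$ alone without picking up extra $n$ or $\mu$ factors.
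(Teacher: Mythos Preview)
Your proposal is correct and follows essentially the same route as the paper: define the running maximum of $|\zeta|,|\omega|$, bound the per-step increment via $|\ell'|\le 1$, $\sigma'(z)=qz^{q-1}$, and $\|\widetilde{\boldsymbol\xi}_{j,r,i}\|_2^2\le C_2\sigma_n^2pd$, then telescope up to $T$. The only packaging difference is that the paper expands $\langle\widetilde{\mathbf w}_{j,r}^{(t)},\boldsymbol\xi_i\rangle$ directly from the signal--noise decomposition and plugs the \emph{induction hypothesis} $\Psi^{(t)}\le C_4\sigma_0\sigma_n\sqrt{pd}$ into the noise--noise cross-terms, whereas you invoke Lemma~\ref{lemma: weight_change_correlation} black-box, which carries the coarser $\alpha$-based error $3Cn\alpha\sqrt{\log d/(pd)}$; since (as you note) this error is dominated by $\sqrt{\log d}\,\sigma_0\sigma_n\sqrt{pd}$ under Condition~\ref{condition: params}, the two arguments yield the same increment bound and the same $T$.
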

\begin{proof}
Define $\Psi^{(t)} = \max_{j,r,i}\{\zeta_{j,r,i}^{(t)}, |\omega_{j,r,i}^{(t)}|\}$. Then we have
\begin{align*}
    & \Psi^{(t+1)} \\
    &\leq \Psi^{(t)} + \max_{j,r,i}\left\{ \frac{\eta}{n} |\ell_{j,i}'^{(t)}| \cdot \sigma'\left( \inprod{\widetilde{\mathbf{w}}_{j,r}^{(0)}, \boldsymbol{\xi}_i} + \sum_{k=1}^K \gamma_{j,r,k}^{(t)} \frac{\inprod{\boldsymbol{\mu}_k, \boldsymbol{\xi}_i}}{\norm{\boldsymbol{\mu}_k}_2^2} + \sum_{i'=1}^n \Psi^{(t)} \frac{\inprod{\widetilde{\boldsymbol{\xi}}_{j,r,i'}, \boldsymbol{\xi}_i}}{\norm{\widetilde{\boldsymbol{\xi}}_{j,r,i'}}_2^2} \right) \norm{\widetilde{\boldsymbol{\xi}}_{j,r,i}}_2^2 \right\} \\
    &\leq \Psi^{(t)} + \frac{\eta}{n} q \left(O(\sqrt{\log d} \sigma_0 \sigma_n \sqrt{pd}) + K \log^{1/q} T^\star \frac{\mu \sigma_n \sqrt{\log d}}{\mu^2} + \frac{O(\sigma_n^2 pd) + n O(\sigma_n^2 \sqrt{pd \log d})}{\Theta(\sigma_n^2 pd)} \Psi^{(t)} \right)^{q-1} O(\sigma_n^2 pd) \\
    &\leq \Psi^{(t)} + \frac{\eta}{n} \left( O(\sqrt{\log d} \sigma_0 \sigma_n \sqrt{pd}) + O(\Psi^{(t)}) \right)^{q-1} O(\sigma_n^2 pd) ,
\end{align*}
where the second inequality follows by $|\ell_{j,i}'^{(t)}|$ and applying the bounds from Lemma \ref{lemma: noise_correlation}, and the last inequality follows by choosing {$\frac{2K \log T^\star}{\sigma_n d \sqrt{p}} = \widetilde{O}(1/\sqrt{d}) \ll \sigma_0$.}
Let $C_1, C_2, C_3$ be the constants for the upper bound to hold in the big O notation. 
For any $T = \frac{n \eta^{-1} C_4 \sigma_0^{2-q} (\sigma_n \sqrt{pd})^{-q}}{C_3 (2C_1)^{q-1} [\log d]^{(q-1)/2}} = \Theta(\frac{n \eta^{-1} \sigma_0^{2-q} (\sigma_n \sqrt{pd})^{-q}}{ [\log d]^{(q-1)/2}})$, we use induction to show that 
\begin{align}\label{eq: psi_upper}
    \Psi^{(t)} \leq C_4 \sigma_0 \sigma_n \sqrt{pd},\ \forall t \in [T].
\end{align}
Suppose that Equation \eqref{eq: psi_upper} holds for $t \in [T']$ for $T' \leq T-1$. 
Then
\begin{align*}
    \Psi^{(T'+1)} &\leq \Psi^{(T')} + \frac{\eta}{n}\left( C_1 \sqrt{\log d} \sigma_0 \sigma_n \sqrt{pd} + C_2 C_4 \sigma_0 \sigma_n \sqrt{pd} \right)^{q-1} C_3 \sigma_n^2 pd \\
    &\leq \Psi^{(T')} + \frac{\eta}{n}\left( 2 C_1 \sqrt{\log d} \sigma_0 \sigma_n \sqrt{pd} \right)^{q-1} C_3 \sigma_n^2 pd \\
    &\leq (T' + 1) \frac{\eta}{n}\left( 2 C_1 \sqrt{\log d} \sigma_0 \sigma_n \sqrt{pd} \right)^{q-1} C_3 \sigma_n^2 pd \\
    &\leq T \frac{\eta}{n}\left( 2 C_1 \sqrt{\log d} \sigma_0 \sigma_n \sqrt{pd} \right)^{q-1} C_3 \sigma_n^2 pd \\
    &\leq C_4 \sigma_0 \sigma_n \sqrt{pd},
\end{align*}
where the last inequality follows by picking $T = \frac{n \eta^{-1} C_4 \sigma_0^{2-q} (\sigma_n \sqrt{pd})^{-q}}{C_3 (2C_1)^{q-1} [\log d]^{(q-1)/2}} = \Theta(\frac{n \eta^{-1} \sigma_0^{2-q} (\sigma_n \sqrt{pd})^{-q}}{ [\log d]^{(q-1)/2}})$.
Therefore, by induction, we have $\Psi^{(t)} \leq C_4 \sigma_0 \sigma_n \sqrt{pd}$ for all $t \in [T]$. 
\end{proof}

\begin{lemma}[Formal Restatement of \Cref{lemma: phase1_mild_informal}]\label{lemma: phase1_main}
Under the same assumption as Theorem \ref{thm: main_thm_mild_pruning}, there exists time {$T_1 = \frac{\log 2 m^{-1/q}}{\log (1 + \Theta(\frac{\eta}{K}) \mu^q \sigma_0^{q-2})} = O({K \eta^{-1} \sigma_0^{2-q} \mu^{-q} \log 2m^{-1/q}})$} such that
\begin{enumerate}
    \item $\max_r \gamma_{j,r,j}^{(T_1)} \geq m^{-1/q}$ for $j \in [K]$. 
    \item $|\zeta_{j,r,i}^{(t)}|, |\omega_{j,r,i}^{(t)}| \leq O(\sigma_0 \sigma_n \sqrt{pd})$ for all $j \in [K], r \in [m], i \in [n]$ and $t \leq T_1$. 
    \item $|\gamma_{j,r,k}^{(t)}| \leq O(\sigma_0 \mu \poly \log d)$ for all $j,k \in [K],\ j \neq k,\ r \in [m]$ and $t \leq T_1$.
\end{enumerate}
\end{lemma}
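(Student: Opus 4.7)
The plan is to dispatch the three conclusions in increasing order of difficulty. Item 2 is inherited from Lemma \ref{lemma: phase1_noise_upper}: provided the time horizon $T_1$ I construct satisfies $T_1 \leq T$ with $T = \Theta(n\eta^{-1}\sigma_0^{2-q}(\sigma_n\sqrt{pd})^{-q}/[\log d]^{(q-1)/2})$, the bound $|\zeta_{j,r,i}^{(t)}|, |\omega_{j,r,i}^{(t)}| = O(\sigma_0\sigma_n\sqrt{pd})$ applies verbatim. Item 3 is read off Proposition \ref{prop: coefficient_upper_lower_bound}: since $\gamma_{j,r,k}^{(t)} \leq 0$ for $k \neq j$, the lower bound $\gamma_{j,r,k}^{(t)} \geq -\beta - 2Cn\alpha\mu\sqrt{\log d}/(\sigma_n pd)$ yields the claimed $|\gamma_{j,r,k}^{(t)}| \leq O(\sigma_0\mu\poly\log d)$ once I observe that $\beta = O(\sigma_0\mu\sqrt{\log d})$ (Lemma \ref{lemma: initialization_max_signal}) and that the second term is $o(\sigma_0\mu)$ in the mild-pruning regime (where $p \geq \Omega(1/\poly\log d)$, $n, \alpha = \poly\log d$, and $\sigma_0\sigma_n pd/\mu$ exceeds $\poly\log d$ by a $\sqrt{d}$ factor under Condition \ref{condition: params}).

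\medskip
\noindent\textbf{Signal growth is the main work.} For each $j \in [K]$ I will fix a neuron $r^\star(j) \in \mathcal{S}_{\textnormal{signal}}^j$ whose initial correlation satisfies $\inprod{\widetilde{\mathbf{w}}_{j,r^\star}^{(0)}, \boldsymbol{\mu}_j} \geq \sigma_0\norm{\boldsymbol{\mu}_j}_2 = \Omega(\sigma_0\mu)$, which is guaranteed by Lemma \ref{lemma: initialization_max_signal} precisely because the mild-pruning hypothesis $p \geq C_1\log d/m$ forces $|\mathcal{S}_{\textnormal{signal}}^j| = \Theta(pm) = \Omega(\log d)$ with enough neurons to hit the Gaussian maximum lower bound. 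During phase 1, Lemmas \ref{lemma: offdiagonal_upperbound}--\ref{lemma: diagonal_correlation_upper_bound} (whose hypotheses are met whenever $\max_r\gamma_{j,r,j}^{(t)} < m^{-1/q}$) keep $F_j(\widetilde{\mathbf{W}}, \mathbf{x}_i) \leq O(1)$, so $|\ell_{j,i}'^{(t)}| = 1 - \logit_j = \Theta(1)$ whenever $y_i = j$; combined with $|\{i: y_i = j\}| = \Theta(n/K)$ and Lemma \ref{lemma: weight_change_correlation} (which absorbs noise and cross-class contributions into an $\Theta(\sigma_0\mu)$ offset), the update for $\gamma_{j,r^\star,j}^{(t)}$ becomes the polynomial recursion
\begin{align*}
\gamma_{j,r^\star,j}^{(t+1)} \geq \gamma_{j,r^\star,j}^{(t)} + \Theta\!\left(\tfrac{\eta}{K}\right) \mu^2 \bigl(\gamma_{j,r^\star,j}^{(t)} + \Theta(\sigma_0\mu)\bigr)^{q-1}.
\end{align*}
This is a standard tensor-power recursion: while $\gamma^{(t)} \lesssim \sigma_0\mu$ the multiplicative rate is $1 + \Theta(\eta/K)\mu^q\sigma_0^{q-2}$, and once $\gamma^{(t)}$ exceeds $\sigma_0\mu$ the dynamics become super-linear, so a geometric-series estimate gives a hitting time of $T_1 = O(K\eta^{-1}\sigma_0^{2-q}\mu^{-q}\log(2m^{-1/q}))$ to reach $m^{-1/q}$, which matches the lemma. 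A union bound over the $K = O(\log d)$ classes establishes item 1 simultaneously.

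\medskip
\noindent\textbf{Consistency checks and main obstacle.} It remains to verify $T_1 \leq \min\{T, T^\star\}$: the inequality $T_1 \leq T^\star$ is harmless since $T^\star$ is polynomial in $d$ while $T_1 = \poly(\log d) \cdot K\eta^{-1}\sigma_0^{2-q}\mu^{-q}$; the inequality $T_1 \leq T$ reduces, using $\mu = \Theta(\sigma_n\sqrt{d}\log d)$, to $K\log(2m^{-1/q})\poly\log d \lesssim n(\log d/\sqrt{p})^q$, which holds in the mild-pruning regime $p \geq C_1\log d/m$. The hardest technical point will be controlling the offset in the scalar recursion: once $\gamma_{j,r^\star,j}^{(t)}$ starts growing, the cross-contributions to $\inprod{\widetilde{\mathbf{w}}_{j,r^\star}^{(t)}, \boldsymbol{\mu}_j}$ coming from noise and off-diagonal signal coefficients (bounded by $O(n\alpha\mu\sqrt{\log d}/(\sigma_n pd))$ in Lemma \ref{lemma: weight_change_correlation}) must stay negligible compared with $\gamma^{(t)}$ for the entire duration of phase 1, so that the recursion stays a faithful polynomial in $\gamma^{(t)}$. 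This is exactly the point where the mild-pruning lower bound on $p$ enters: any smaller $p$ would inflate these corrections past $\sigma_0\mu$ and derail the tensor-power argument, foreshadowing the over-pruning result in the next section.
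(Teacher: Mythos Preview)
Your proposal is correct and, for items 1 and 2, follows the paper's approach: the tensor-power recursion on the best-aligned signal neuron (using $|\ell_{j,i}'^{(t)}| = \Theta(1)$ while all logits stay $O(1)$, and $|\{i:y_i=j\}| = \Theta(n/K)$) to obtain geometric growth with rate $1+\Theta(\eta/K)\mu^q\sigma_0^{q-2}$, and a direct citation of Lemma \ref{lemma: phase1_noise_upper} for the noise coefficients. The paper's only cosmetic difference here is that it tracks the shifted sequence $A^{(t)} = \max_r\bigl(\gamma_{j,r,j}^{(t)} + \inprod{\widetilde{\mathbf{w}}_{j,r}^{(0)},\boldsymbol{\mu}_j} - (\text{noise offset})\bigr)$ rather than fixing a single $r^\star$, but since the polynomial recursion preserves the ordering of the neurons this is immaterial.

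For item 3 you take a genuinely different route. You invoke Proposition \ref{prop: coefficient_upper_lower_bound} directly and read off $|\gamma_{j,r,k}^{(t)}| \leq \beta + 2Cn\alpha\mu\sqrt{\log d}/(\sigma_n pd) = O(\sigma_0\mu\sqrt{\log d}) + o(\sigma_0\mu)$, which is valid and economical since those coefficient bounds hold uniformly for $t<T^\star$. The paper instead runs a fresh recursion on $\Phi^{(t)} = \max_{j\neq k,r}|\gamma_{j,r,k}^{(t)}|$, exploiting the additional fact that $|\ell_{j,i}'^{(t)}| = \Theta(1/K)$ when $j\neq y_i$ (not just $O(1)$) to bound each increment by $\Theta(\eta/K^2)(\sigma_0\mu\sqrt{\log d})^{q-1}\mu^2$, and then sums over $T_1$ steps. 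Your argument is shorter; the paper's is more self-contained and makes explicit the extra $1/K$ damping enjoyed by off-diagonal signal coefficients, which is not needed for the statement here but anticipates the $K^2$ factors appearing in the phase-2 time bound.
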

\begin{proof}
Consider a fixed class $j \in [K]$. 
Denote $T_1$ to be the last time for $t \in \left[0, \frac{n \eta^{-1} C_4 \sigma_0^{2-q} (\sigma_n \sqrt{pd})^{-q}}{C_3 (2C_1)^{q-1} [\log d]^{(q-1)/2}} \right]$ satisfying $\max_r \gamma_{j,r}^{(t)} \leq m^{-1/q}$. 
Then for $t \leq T_1$, $\max_{j,r,i} \zeta_{j,r,i}^{(t)}, |\omega_{j,r,i}^{(t)}| \leq O(\sigma_0 \sigma_p \sqrt{pd}) \leq O(m^{-1/q})$ and $\max_{j,r} \gamma_{j,r,j}^{(t)}$.
Thus, by Lemma \ref{lemma: diagonal_correlation_upper_bound}, we obtain that $F_j(\widetilde{\mathbf{W}}^{(t)}, \mathbf{x}_i) \leq O(1),\ \forall y_i = j$. 
Thus, $\ell_{j,i}'^{(t)} = \Theta(1)$. 
For $j \in \mathcal{S}_{\textnormal{signal}}^j$, we have
\begin{align*}
    & \gamma_{j,r,j}^{(t+1)} \\
    &= \gamma_{j,r,j}^{(t)} - \frac{\eta}{n} \sum_{i=1}^n \ell_{j,i}'^{(t)} \cdot \sigma'\left( \inprod{\widetilde{\mathbf{w}}_{j,r}^{(0)}, \boldsymbol{\mu}_{j}} + \gamma_{j,r,j}^{(t)} + \sum_{i'=1}^n \zeta_{j,r,i}^{(t)} \frac{\inprod{\widetilde{\boldsymbol{\xi}}_{j,r,i}, \boldsymbol{\mu}_j}}{\norm{\widetilde{\boldsymbol{\xi}}_{j,r,i}}_2^2} + \sum_{i'=1}^n \omega_{j,r,i}^{(t)} \frac{\inprod{\widetilde{\boldsymbol{\xi}}_{j,r,i}, \boldsymbol{\mu}_j}}{\norm{\widetilde{\boldsymbol{\xi}}_{j,r,i}}_2^2} \right) \norm{\boldsymbol{\mu}_j}_2^2 \mathbb{I}(y_i = j) \\
    &\geq \gamma_{j,r,j}^{(t)} - \frac{\eta}{n} \sum_{i=1}^n \ell_{j,i}'^{(t)} \sigma'\left( \inprod{\widetilde{\mathbf{w}}_{j,r}^{(0)}, \boldsymbol{\mu}_{j}} + \gamma_{j,r,j}^{(t)} - O(n \sigma_0 \sigma_n pd \frac{\sigma_n \mu \sqrt{\log d}}{\sigma_n^2 pd}) \right) \mathbb{I}(y_i = j).
\end{align*}
Let $\widehat{\gamma}_{j,r,j}^{(t)} = \gamma_{j,r,j}^{(t)} + \inprod{\widehat{\mathbf{w}}_{j,r}^{(0)}, \boldsymbol{\mu}_j} - O(n \sigma_0 \sigma_n \sqrt{pd} \frac{\sigma_n \mu \sqrt{\log d}}{\sigma_n^2 pd})$ and $A^{(t)} = \max_{r} \widehat{\gamma}_{j,r,j}^{(t)}$.
Note that by our choice of $\mu$, we have $\frac{n \mu \sqrt{\log d}}{\sigma_n pd} = o(1)$. 
Since $\max_r \inprod{\widetilde{\mathbf{w}}_{j,r}^{(0)}, \boldsymbol{\mu}_j} \geq \Omega(\sigma_0 \mu)$ by Lemma \ref{lemma: initialization_max_signal}, $\max_r \inprod{\widetilde{\mathbf{w}}_{j,r}^{(0)}, \boldsymbol{\mu}_j} \geq \Omega(\sigma_0 \mu) - O(n \sigma_0 \sigma_n pd \frac{\sigma_n \mu \sqrt{\log d}}{\sigma_n^2 pd}) = \Omega(\sigma_0 \mu)$. Then we have
\begin{align*}
    A^{(t+1)} &\geq A^{(t)} - \frac{\eta}{n} \sum_{i=1}^n \ell_{j,i}'^{(t)} \sigma'(A^{(t)}) \mu^2 \mathbb{I}(y_i = j) \\
    &\geq A^{(t)} + \Theta(\frac{\eta}{K}) \mu^2 [A^{(t)}]^{q-1} \\
    &\geq (1 + \Theta(\frac{\eta}{K} \mu^2 [A^{(t)}]^{q-2})) A^{(t)} \\
    &\geq (1 + \Theta(\frac{\eta}{K} \mu^q \sigma_0^{q-2})) A^{(t)}.
\end{align*}
Therefore, the sequence $A^{(t)}$ will exponentially grow and will reach $2 m^{-1/q}$ within $\frac{\log 2 m^{-1/q}}{\log (1 + \Theta(\frac{\eta}{K}) \mu^q \sigma_0^{q-2})} = O({K \eta^{-1} \sigma_0^{2-q} \mu^{-q} \log 2m^{-1/q}}) \leq \Theta(\frac{n \eta^{-1} \sigma_0^{2-q} (\sigma_n \sqrt{pd})^{-q}}{ [\log d]^{(q-1)/2}})$.
Thus, $\max_r \gamma_{j,r}^{(t)} \geq A^{(t)} - \max_{j,r} |\inprod{\widetilde{\mathbf{w}}_{j,r}^{(0)}, \boldsymbol{\mu}_j}| \geq 2 m ^{-1/q} - O(\sigma_0 \mu) \geq 2  - m^{-1/q} = m^{-1/q}$. 

Now we prove that 
under the same assumption as Theorem \ref{thm: main_thm_mild_pruning}, for $T = O({K \eta^{-1} \sigma_0^{2-q} \mu^{-q}})$, we have $|\gamma_{j,r,k}^{(t)}| \leq O(\sigma_0 \mu \poly \log d)$ for all $r \in [m],\ j,k \in [K],\ j \neq k$ and $t \leq T$.

We show that there exists a time $T' \geq T$ such that for all $t \leq T'$, $\max_{j,r,k} |\gamma^{(t)}_{j,r,k}| \leq O(\sigma_0 \mu \poly \log d)$. 
Let $T' = O({K^2 \eta^{-1} \sigma_0^{2-q} \mu^{-q}} \log d)$.

Define $\Phi^{(t)} = \max_{r \in [m],\ j,k \in [K],\ j \neq k}\{ |\gamma_{j,r,k}^{(t)}| \}$.
Since we assume $T \leq \Theta(\frac{n \eta^{-1} \sigma_0^{2-q} (\sigma_n \sqrt{pd})^{-q}}{ [\log d]^{(q-1)/2}})$, by Lemma \ref{lemma: phase1_noise_upper}, we have $\zeta_{j,r,i}^{(t)}, |\omega_{j,r,i}^{(t)}| \leq O(\sigma_0 \sigma_n \sqrt{pd})$.
\begin{align*}
    & \Phi^{(t+1)} \\
    &\leq \Phi^{(t)} + \max_{j,r,k,i} \left\{ \frac{\eta}{n} \sum_{i=1}^n \mathbb{I}(y_i = k) |\ell_{j,i}'^{(t)}| \sigma'\left( \inprod{\widetilde{\mathbf{w}}_{j,r}^{(0)}, \boldsymbol{\mu}_k} +  \sum_{i'=1}^n \zeta_{j,r,i'}^{(t)} \frac{\inprod{\widetilde{\boldsymbol{\xi}}_{j,r,i'}, \boldsymbol{\mu}_k}}{\norm{\widetilde{\boldsymbol{\xi}}_{j,r,i'}}_2^2} + \sum_{i'=1}^n \omega_{j,r,i'}^{(t)} \frac{\inprod{\widetilde{\boldsymbol{\xi}}_{j,r,i'}, \boldsymbol{\mu}_k}}{\norm{\widetilde{\boldsymbol{\xi}}_{j,r,i'}}_2^2} \right) \mu^2 \right\} \\
    &\leq \Phi^{(t)} + \frac{\eta}{K} \frac{1}{K} q\left( O(\sigma_0 \mu \sqrt{\log d}) + n O(\sigma_0 \sigma_n \sqrt{pd}) \frac{\sigma_n \mu \sqrt{\log d}}{\sigma_n^2 pd}) \right)^{q-1} \mu^2 \\
    &\leq \Phi^{(t)} + \frac{q\eta}{K^2} \left( O(\sigma_0 \mu \sqrt{\log d}) \right)^{q-1}\mu^2 ,
\end{align*}
where the first inequality follows because $\gamma_{j,r,k}^{(t)}<0$, the second inequality follows because there are $\Theta(n/K)$ samples from a given class $k$ and $|\ell_{j,i}'^{(t)}| = \Theta(\frac{1}{K})$, and the last inequality follows because $\mu = \sigma_n \sqrt{d} \log d$. 
Now, let $C$ be the constant such that the above holds with big O. 
Then, we use induction to show that $\Phi^{(t)} \leq C_2 \sigma_0 \mu$ for all $t \leq T$. We proceed as follows.
\begin{align*}
    \Phi^{(t+1)} &\leq \Phi^{(t)} + \frac{q\eta}{K^2} \left( C \sigma_0 \mu \sqrt{\log d} \right)^{q-1} \mu^2 \\
    &\leq T \frac{q\eta}{K^2} \left( C \sigma_0 \mu \sqrt{\log d} \right)^{q-1} \mu^2 \\
    &\leq C_2 \sigma_0 \mu {\poly \log d},
\end{align*}
where the last inequality follows by picking $T = \frac{C_2 K^2 \eta^{-1} \sigma_0^{2-q} \mu^{-q} \sqrt{\log d}}{C^{q-1}} = O({K^2 \eta^{-1} \sigma_0^{2-q} \mu^{-q} \log d})$.
\end{proof}

\subsection{Converging Phase}

In this subsection, we show that gradient descent can drive the training loss toward zero while the signal in the feature is still large. An important intermediate step in our argument is the development of the following gradient upper bound for multi-class cross-entropy loss.

In this phase, we are going to show that
\begin{itemize}
    \item $\max_r \gamma_{j,r,j}^{(t)} \geq m^{1/q}$ for all $j \in [K]$. 
    \item $\max_{j \neq k, r \in [m]} |\gamma_{j, r, k}^{(t)}| \leq \beta_1$ where $\beta_1 = \widetilde{O}(\sigma_0 \mu) $.
    \item $\max_{j,r,i} \{\zeta_{j,r,i}^{(t)}, |\omega_{j,r,i}^{(t)}| \} \leq \beta_2 $ where $\beta_2 = O(\sigma_0 \sigma_n \sqrt{pd})$
\end{itemize}

Define $\mathbf{W}^\star$ as follows:
\begin{align*}
    \mathbf{w}_{j,r}^\star = \mathbf{w}_{j,r}^{(0)} + \Theta(m \log (1/\eps)) \frac{\boldsymbol{\mu}_j}{\mu^2}.
\end{align*}

\begin{lemma}\label{lemma: W_T1_upper}
Based on the result from the feature growing phase, $\norm{\widetilde{\mathbf{W}}^{(T_1)} - \widetilde{\mathbf{W}}^\star}_F^2 \leq O(Km^3 \log^2(1/\eps) \mu^{-2})$.
\end{lemma}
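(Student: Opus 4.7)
The plan is to use the signal-noise decomposition of $\widetilde{\mathbf{w}}_{j,r}^{(T_1)}$ from Definition~\ref{def:signal_noise_decomp} and subtract the explicit formula for $\widetilde{\mathbf{w}}_{j,r}^\star$. Since $\widetilde{\mathbf{w}}_{j,r}^\star - \widetilde{\mathbf{w}}_{j,r}^{(0)} = \Theta(m \log(1/\eps)) \cdot \boldsymbol{\mu}_j \odot \mathbf{m}_{j,r}/\mu^2$, the difference decomposes cleanly as
\begin{align*}
    \widetilde{\mathbf{w}}_{j,r}^{(T_1)} - \widetilde{\mathbf{w}}_{j,r}^\star
    &= \bigl(\gamma_{j,r,j}^{(T_1)} - \Theta(m\log(1/\eps))\bigr)\frac{\boldsymbol{\mu}_j \odot \mathbf{m}_{j,r}}{\mu^2}
    + \sum_{k \neq j} \gamma_{j,r,k}^{(T_1)} \frac{\boldsymbol{\mu}_k \odot \mathbf{m}_{j,r}}{\mu^2} \\
    &\quad + \sum_{i=1}^n \bigl(\zeta_{j,r,i}^{(T_1)} + \omega_{j,r,i}^{(T_1)}\bigr) \frac{\widetilde{\boldsymbol{\xi}}_{j,r,i}}{\norm{\widetilde{\boldsymbol{\xi}}_{j,r,i}}_2^2}.
\end{align*}

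Next, I would take squared norms and expand. By Lemma~\ref{lemma: noise_correlation}, the signal vectors $\boldsymbol{\mu}_k$ and the masked noise vectors $\widetilde{\boldsymbol{\xi}}_{j,r,i}$ are approximately orthogonal (their inner products are lower order than their norms squared), so all the cross terms are negligible compared to the diagonal contributions. This gives (up to constants)
\begin{align*}
    \bigl\|\widetilde{\mathbf{w}}_{j,r}^{(T_1)} - \widetilde{\mathbf{w}}_{j,r}^\star\bigr\|_2^2
    &\lesssim \frac{\bigl(\gamma_{j,r,j}^{(T_1)} - \Theta(m\log(1/\eps))\bigr)^2}{\mu^2}
    + \sum_{k \neq j} \frac{(\gamma_{j,r,k}^{(T_1)})^2}{\mu^2}
    + \sum_{i=1}^n \frac{(\zeta_{j,r,i}^{(T_1)})^2 + (\omega_{j,r,i}^{(T_1)})^2}{\sigma_n^2 pd}.
\end{align*}

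Finally I would plug in the bounds from the feature growing phase (Lemma~\ref{lemma: phase1_main}) and Proposition~\ref{prop: coefficient_upper_lower_bound}: $|\gamma_{j,r,j}^{(T_1)}| \leq \alpha = O(\polylog d)$, $|\gamma_{j,r,k}^{(T_1)}| \leq \widetilde O(\sigma_0 \mu)$ for $k \neq j$, and $|\zeta_{j,r,i}^{(T_1)}|, |\omega_{j,r,i}^{(T_1)}| \leq O(\sigma_0 \sigma_n \sqrt{pd})$. The dominant term is then $(m\log(1/\eps))^2/\mu^2$, while $\sum_{k \neq j}(\gamma_{j,r,k}^{(T_1)})^2/\mu^2 = \widetilde O(K\sigma_0^2)$ and the noise contributions are $O(n \sigma_0^2)$, both of which are absorbed by the dominant term under our choice of $\sigma_0 = \widetilde\Theta(m^{-4}n^{-1}\mu^{-1})$. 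Summing over $j \in [K]$ and $r \in [m]$ yields the desired Frobenius bound $O(Km^3 \log^2(1/\eps) \mu^{-2})$.

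The main technical obstacle is controlling the cross terms in the expansion of the squared norm; this relies on the near-orthogonality estimates $|\inprod{\boldsymbol{\mu}_k, \widetilde{\boldsymbol{\xi}}_{j,r,i}}|$ and $|\inprod{\widetilde{\boldsymbol{\xi}}_{j,r,i}, \boldsymbol{\xi}_{i'}}|$ from Lemma~\ref{lemma: noise_correlation}, together with the coefficient bounds that guarantee each cross contribution loses an extra factor of $\sqrt{\log d / (pd)}$ or $\mu\sqrt{\log d}/(\sigma_n p d)$ relative to the diagonal term. Otherwise the calculation is a direct bookkeeping exercise.
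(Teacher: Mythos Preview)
Your proposal is correct and reaches the right bound, but the organization differs slightly from the paper's. The paper avoids the cross-term analysis entirely by inserting the intermediate point $\widetilde{\mathbf{W}}^{(0)}$: it bounds $\norm{\widetilde{\mathbf{W}}^{(T_1)} - \widetilde{\mathbf{W}}^{(0)}}_F^2 \leq \widetilde{O}(Km\mu^{-2})$ via the triangle inequality applied to the signal-noise decomposition (no orthogonality needed), computes $\norm{\widetilde{\mathbf{W}}^{(0)} - \widetilde{\mathbf{W}}^\star}_F^2 = O(Km^3\log^2(1/\eps)\mu^{-2})$ directly from the one-term definition of $\widetilde{\mathbf{W}}^\star$, and then combines via $\norm{\widetilde{\mathbf{W}}^{(T_1)} - \widetilde{\mathbf{W}}^\star}_F^2 \leq 4\norm{\widetilde{\mathbf{W}}^{(T_1)} - \widetilde{\mathbf{W}}^{(0)}}_F^2 + 4\norm{\widetilde{\mathbf{W}}^{(0)} - \widetilde{\mathbf{W}}^\star}_F^2$. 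Your direct expansion works too, but the near-orthogonality step from Lemma~\ref{lemma: noise_correlation} is not actually needed: the plain triangle inequality $\norm{\sum_\ell v_\ell}_2 \leq \sum_\ell \norm{v_\ell}_2$ already gives the same leading term $\Theta(m\log(1/\eps))/\mu$ per neuron, since the subleading contributions $K\widetilde{O}(\sigma_0)$ and $n\,O(\sigma_0)$ are dominated by it under $\sigma_0 = \widetilde{\Theta}(m^{-4}n^{-1}\mu^{-1})$. So your ``main technical obstacle'' is in fact a non-obstacle, and the paper's route is marginally cleaner for exactly this reason.
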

\begin{proof}
We first compute
\begin{align*}
    &\norm{\widetilde{\mathbf{W}}^{(T_1)} - \widetilde{\mathbf{W}}^{(0)}}_F^2 \\
    &= \sum_{j=1}^K \sum_{r = 1}^m \norm{\gamma_{j,r,j}^{(T_1)} \frac{\boldsymbol{\mu}_j \odot \mathbf{m}_{j,r}}{\mu^2} + \sum_{k \neq j} \gamma_{j,r,k}^{(T_1)} \frac{\boldsymbol{\mu}_k \odot \mathbf{m}_{j,r}}{\mu^2} + \sum_i \zeta_{j,r,i}^{(T_1)} \frac{\widetilde{\boldsymbol{\xi}}_{j,r,i}}{\norm{\widetilde{\boldsymbol{\xi}}_{j,r,i}}_2^2} + \sum_i \omega_{j,r,i}^{(T_1)} \frac{\widetilde{\boldsymbol{\xi}}_{j,r,i}}{\norm{\widetilde{\boldsymbol{\xi}}_{j,r,i}}_2^2}}_2^2 \\
    &\leq \sum_j \sum_r \left( \gamma_{j,r,j}^{(T_1)} \frac{1}{\mu} + \sum_{k \neq j} \gamma_{j,r,k}^{(T_1)} \frac{1}{\mu} + \sum_i \zeta_{j,r,i}^{(T_1)} \frac{1}{\norm{\widetilde{\boldsymbol{\xi}}_{j,r,i}}_2} + \sum_i \omega_{j,r,i}^{(T_1)} \frac{1}{\norm{\widetilde{\boldsymbol{\xi}}_{j,r,i}}_2} \right)^2 \\
    &\leq \sum_j \sum_r \left( \widetilde{O}(\frac{1}{\mu}) + K \widetilde{O}(\sigma_0) + n \widetilde{O}(\sigma_0) \right)^2 \\
    &\leq \sum_j \sum_r \widetilde{O}(\frac{1}{\mu^2}) \\
    &= \widetilde{O}(Km \frac{1}{\mu^2}) ,
\end{align*}
where the first inequality follows from triangle inequality, the second inequality follows from Lemma \ref{lemma: phase1_main}, and the last inequality follows from our choice of {$\sigma_0$}.
On the other hand,
\begin{align*}
    \norm{\widetilde{\mathbf{W}}^{(0)} - \widetilde{\mathbf{W}}^\star}_F^2 = \sum_{j,r} m^2 \log^2 (1\eps) \frac{1}{\mu^2} = O(K m^3 \log^2(1/\eps) \frac{1}{\mu^2}) .
\end{align*}
Thus, we obtain
\begin{align*}
    \norm{\widetilde{\mathbf{W}}^{(T_1)} - \widetilde{\mathbf{W}}^\star}_F^2 &\leq 4\norm{\widetilde{\mathbf{W}}^{(T_1)} - \widetilde{\mathbf{W}}^{(0)}}_F^2 + 4\norm{\widetilde{\mathbf{W}}^{(0)} - \widetilde{\mathbf{W}}^\star}_F^2 \leq O(Km^3 \log^2(1/\eps) \frac{1}{\mu^2}).
\end{align*}
\end{proof}

\begin{lemma}[Gradient Upper Bound]\label{lemma: grad_upper_bound}
Under \Cref{condition: params}, for $t \leq T^\star$, there exists constant $C=O(K m^{2/q} \max\{\mu^2, \sigma_n^2 pd\})$ such that
\begin{align*}
    \norm{\nabla L_S(\widetilde{\mathbf{W}}^{(t)}) \odot \mathbf{M}}_F^2 \leq C L_S(\widetilde{\mathbf{W}}^{(t)}).
\end{align*}
\end{lemma}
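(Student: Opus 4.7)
The plan is to reduce to a per-sample bound and then execute the multi-class softmax bookkeeping outlined in the main-text sketch. By Cauchy--Schwarz in the sample index, $\|\nabla L_S(\widetilde{\mathbf W}^{(t)})\odot \mathbf M\|_F^2 \leq \frac{1}{n}\sum_i \|\nabla\ell_i\odot \mathbf M\|_F^2$, so it suffices to establish $\|\nabla\ell_i\odot \mathbf M\|_F^2 \leq C\,\ell_i^{(t)}$ for each $i$. Expanding $\nabla\ell_i=\sum_{j=1}^K\ell'_{j,i}\nabla F_j(\widetilde{\mathbf W};\mathbf x_i)$ and using that $F_j$ depends only on the $j$-th weight block yields the block-diagonal decomposition $\|\nabla\ell_i\odot \mathbf M\|_F^2 = \sum_j (\ell'_{j,i})^2\,\|\nabla F_j\odot \mathbf M_j\|_F^2$.

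I would next bound each $\|\nabla F_j\odot \mathbf M_j\|_F^2$ using the explicit formula $\nabla_{\widetilde{\mathbf w}_{j,r}}F_j=\sigma'(\langle \widetilde{\mathbf w}_{j,r},\boldsymbol{\mu}_{y_i}\rangle)\boldsymbol{\mu}_{y_i}\odot \mathbf m_{j,r}+\sigma'(\langle \widetilde{\mathbf w}_{j,r},\boldsymbol{\xi}_i\rangle)\widetilde{\boldsymbol{\xi}}_{j,r,i}$. With $\sigma'(z)=qz_+^{q-1}$, $\|\boldsymbol{\mu}_{y_i}\odot \mathbf m_{j,r}\|_2\leq\mu$, and Lemma~\ref{lemma: noise_correlation} giving $\|\widetilde{\boldsymbol{\xi}}_{j,r,i}\|_2=O(\sigma_n\sqrt{pd})$, the triangle inequality $\|\nabla F_j\|_F\leq \sum_r\|\nabla_{\widetilde{\mathbf w}_{j,r}}F_j\|_2$ combined with H\"older's inequality $\sum_r z_r^{q-1}\leq m^{1/q}(\sum_r z_r^q)^{(q-1)/q}\leq m^{1/q}F_j(\widetilde{\mathbf W};\mathbf x_i)^{(q-1)/q}$ produces
\begin{align*}
\|\nabla F_j\odot \mathbf M_j\|_F^2 \lesssim q^2\,m^{2/q}\max\{\mu^2,\sigma_n^2 pd\}\,F_j(\widetilde{\mathbf W};\mathbf x_i)^{2(q-1)/q},
\end{align*}
which is where the factor $m^{2/q}$ originates.

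The heart of the argument is combining this with the multi-class softmax structure. Summing over $j$ gives $\|\nabla\ell_i\odot \mathbf M\|_F^2 \lesssim m^{2/q}\max\{\mu^2,\sigma_n^2 pd\}\sum_j(\ell'_{j,i})^2 F_j(\widetilde{\mathbf W};\mathbf x_i)^{2(q-1)/q}$. For $j\neq y_i$, Lemma~\ref{lemma: offdiagonal_upperbound} gives $F_j\leq 1$, so $F_j^{2(q-1)/q}\leq 1$, and the softmax identity $\sum_{j\neq y_i}|\ell'_{j,i}|=1-\logit_{y_i}(F,\mathbf x_i)=|\ell'_{y_i,i}|$ combined with $|\ell'_{j,i}|\leq |\ell'_{y_i,i}|$ collapses the off-diagonal part into $\sum_{j\neq y_i}(\ell'_{j,i})^2\leq |\ell'_{y_i,i}|\sum_{j\neq y_i}|\ell'_{j,i}|=(\ell'_{y_i,i})^2$. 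For $j=y_i$, the same $F_j\leq 1$ bound gives $|\ell'_{y_i,i}|\leq (K-1)e\cdot e^{-F_{y_i}(\mathbf x_i)}$, and since $\sup_{x\geq 0}e^{-x}x^{2(q-1)/q}=O(1)$, we get $(\ell'_{y_i,i})^2 F_{y_i}^{2(q-1)/q}\lesssim K\,|\ell'_{y_i,i}|$. Finally, $|\ell'_{y_i,i}|\leq \ell_i$ follows from the elementary inequality $x/(1+x)\leq \log(1+x)$ applied with $x=\sum_{j\neq y_i}e^{F_j(\mathbf x_i)-F_{y_i}(\mathbf x_i)}$. Assembling these estimates yields $\sum_j(\ell'_{j,i})^2 F_j^{2(q-1)/q}\leq O(K)\,\ell_i$, hence the claimed $C=O(Km^{2/q}\max\{\mu^2,\sigma_n^2 pd\})$.

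The main obstacle is precisely this last step: a naive termwise bound $|\ell'_{j,i}|\leq \ell_i$ would contribute one factor of $K$ from summing over $j$ and another from bounding $|\ell'_{y_i,i}|$ via $F_j\leq 1$, on top of the $K$ already produced by the off-diagonal feature bound, inflating the constant to $K^3$ and breaking the downstream convergence-time and generalization estimates. The identity $\sum_{j\neq y_i}|\ell'_{j,i}|=|\ell'_{y_i,i}|$ acts as a Cauchy--Schwarz-style contraction that absorbs exactly one of these $K$ factors, and extracting no more than a single $K$ is what keeps the constant tight enough for the rest of the analysis to close.
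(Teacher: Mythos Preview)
Your proof is correct and follows essentially the same approach as the paper: the H\"older bound $\|\nabla F_j\odot\mathbf M_j\|_F\lesssim m^{1/q}F_j^{(q-1)/q}\max\{\mu,\sigma_n\sqrt{pd}\}$, the off-diagonal estimate $F_j\leq 1$ from Lemma~\ref{lemma: offdiagonal_upperbound}, the softmax identity $\sum_{j\neq y_i}|\ell'_{j,i}|=|\ell'_{y_i,i}|$, the bound $|\ell'_{y_i,i}|\leq Ke\cdot e^{-F_{y_i}}$, the fact that $\sup_{x\geq 0}e^{-x}x^{2(q-1)/q}=O(1)$, and $|\ell'_{y_i,i}|\leq\ell_i$ are exactly the ingredients the paper uses. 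The only cosmetic difference is the outer reduction: you apply Jensen directly to obtain $\|\nabla L_S\odot\mathbf M\|_F^2\leq\frac{1}{n}\sum_i\|\nabla\ell_i\odot\mathbf M\|_F^2$ and then bound each summand by $C\ell_i$, whereas the paper first takes $\|\nabla L_S\|_F^2\leq(\frac{1}{n}\sum_i\|\nabla\ell_i\|_F)^2$ by the triangle inequality, inserts the per-sample bound $\|\nabla\ell_i\|_F^2\leq C|\ell'_{y_i,i}|$, and applies Jensen to the resulting square-root sum; both routes arrive at the same estimate.
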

\begin{proof}
We need to prove that $|\ell_{y_i, i}'^{(t)}| \norm{\nabla F(\widetilde{\mathbf{W}}^{(t)}, \mathbf{x}_i) \odot \mathbf{M}}_F^2 \leq C$. 
Assume $y_i \neq j$. Then we obtain
\begin{align*}
    \norm{\nabla F_j(\widetilde{\mathbf{W}}_j, \mathbf{x}_i) \odot \mathbf{M}}_F &\leq \sum_r \norm{\sigma'\left(\inprod{\widetilde{\mathbf{w}}_{j,r}^{(t)}, \boldsymbol{\mu}_{y_i}}\right) \boldsymbol{\mu}_{y_i} + \sigma'\left( \inprod{\widetilde{\mathbf{w}}_{j,r}^{(t)}, \boldsymbol{\xi}_i} \right)\widetilde{\boldsymbol{\xi}}_i}_2 \\
    &\leq \sum_r \sigma'\left(\inprod{\widetilde{\mathbf{w}}_{j,r}^{(t)}, \boldsymbol{\mu}_{y_i}}\right) \norm{\boldsymbol{\mu}_{y_i}}_2 + \sigma'\left( \inprod{\widetilde{\mathbf{w}}_{j,r}^{(t)}, \boldsymbol{\xi}_i}\right) \norm{\widetilde{\boldsymbol{\xi}}_i}_2 \\
    &\leq m^{1/q} \left[ F_j(\widetilde{\mathbf{W}}_j, \mathbf{x}_i) \right]^{(q-1)/q} \max\{\mu, C\sigma_n \sqrt{pd}\} \\
    &\leq m^{1/q} \max\{\mu, C\sigma_n \sqrt{pd}\},
\end{align*}
where the first and second inequality follow from triangle inequality, the third inequality follows from H\"older's inequality, and the last inequality follows from Lemma \ref{lemma: offdiagonal_upperbound}.
Similarly, on the other hand, if $y_i = j$, then
\begin{align*}
    \norm{\nabla F_{y_i}(\widetilde{\mathbf{W}}) \odot \mathbf{M}}_F &\leq m^{1/q} \left[ F_{y_i}(\widetilde{\mathbf{W}}_{y_i}, \mathbf{x}_i) \right]^{(q-1)/q} \max\{\mu, C\sigma_n \sqrt{pd}\}.
\end{align*}
Therefore, 
\begin{align*}
   \sum_{j \neq y_i} |\ell_{j,i}'^{(t)}| \norm{\nabla F_j(\widetilde{\mathbf{W}}_j, \mathbf{x}_i) \odot \mathbf{M}_j}_F^2 &\leq \sum_{j \neq y_i} |\ell_{j,i}'^{(t)}| m^{2/q} O(\max\{\mu^2, \sigma_n^2 {pd}\}) \\
   &= |\ell_{y_i, i}'^{(t)}| m^{2/q} O(\max\{\mu^2, \sigma_n^2 {pd}\}) \\
   &\leq Ke \exp\{ -F_{y_i}(\mathbf{x}_i) \} m^{2/q} O(\max\{\mu^2, \sigma_n^2 {pd}\}),
\end{align*}
and
\begin{align*}
    |\ell_{y_i, i}'^{(t)}|& \norm{\nabla F_{y_i}(\widetilde{\mathbf{W}}_{y_i}, \mathbf{x}_i) \odot \mathbf{M}_{y_i}}_F^2 \\
    &\leq Ke\exp\{ -F_{y_i}(\mathbf{x}_i)\} m^{2/q} \left[ F_{y_i}(\widetilde{\mathbf{W}}_{y_i}, \mathbf{x}_i) \right]^{2(q-1)/q} O(\max\{\mu^2, \sigma_n^2 {pd}\},
\end{align*}
where the inequality follows from Equation \eqref{eq: logit_upper_bound}. 
Thus,
\begin{align}
    & \sum_{j=1}^K |\ell_{j, i}'^{(t)}|^2 \norm{\nabla F_{j}(\widetilde{\mathbf{W}}_j, \mathbf{x}_i) \odot \mathbf{M}_j}_F^2 \nonumber\\
    &\leq |\ell_{y_i,i}'^{(t)}| \sum_{j=1}^K |\ell_{j,i}'^{(t)}| \norm{\nabla F_j(\widetilde{\mathbf{W}}_j, \mathbf{x}_i) \odot \mathbf{M}_j}_F^2 \nonumber \\
    &\leq |\ell_{y_i,i}'^{(t)}| Ke \exp\{-F_{y_i}(\mathbf{x}_i)\} m^{2/q} O(\max\{\mu^2, \sigma_n^2 {pd}\}\left( \left[ F_{y_i}(\widetilde{\mathbf{W}}_{y_i}, \mathbf{x}_i) \right]^{(q-1)/q} + 1 \right) \nonumber \\
    &\leq |\ell_{y_i,i}'^{(t)}| O(K m^{2/q} \max\{\mu^2, \sigma_n^2 pd\}) \label{eqn: grad_upper_bound_intermediate},
\end{align}
where the first inequality follows because $|\ell_{j,i}'^{(t)}| \leq |\ell_{y_i, i}'^{(t)}|$, and the last inequality uses the fact that $\exp\{-x\}(1+x^{(q-1)/q}) = O(1)$ for all $x \geq 0$. 

The gradient norm can be bounded by
\begin{align*}
    \norm{\nabla L_S(\widetilde{\mathbf{W}}^{(t)})}_F^2 &\leq \left( \frac{1}{n} \sum_{i=1}^n \norm{ \nabla L(\widetilde{\mathbf{W}}^{(t)}, \mathbf{x}_i)}_F \right)^2 \\
    &= \left( \frac{1}{n} \sum_{i=1}^n \sqrt{\sum_{j=1}^K |\ell_{j,i}'^{(t)}|^2 \norm{\nabla F_j(\widetilde{\mathbf{W}}_j^{(t)}, \mathbf{x}_i)}_F^2} \right)^2\\
    &\leq \left( \frac{1}{n} \sum_{i=1}^n |\ell_{y_i,i}'^{(t)}| \norm{\nabla F(\widetilde{\mathbf{W}}_j^{(t)}, \mathbf{x}_i)}_F \right)^2 \\
    &\leq \left( \frac{1}{n} \sum_{i=1}^n \sqrt{|\ell_{y_i, i}'^{(t)}|O(K m^{2/q} \max\{\mu^2, \sigma_n^2 d\})} \right)^2 \\
    &\leq O(K m^{2/q} \max\{\mu^2, \sigma_n^2 d\}) \frac{1}{n} \sum_{i=1}^n |\ell_{y_i, i}'^{(t)}| \\
    &\leq {O(K m^{2/q} \max\{\mu^2, \sigma_n^2 d\})} L_S(\widetilde{\mathbf{W}}^{(t)}),
\end{align*}
where the first inequality uses triangle inequality, the second inequality follows because $|\ell_{j,i}'^{(t)}| \leq |\ell_{y_i, i}'^{(t)}|$, the third inequality uses the bound \eqref{eqn: grad_upper_bound_intermediate}, the fourth inequality uses Jensen's inequality and the last inequality follows because $|\ell_{y_i,i}'^{(t)}| \leq \ell_i^{(t)}$.
\end{proof}

\begin{lemma}\label{lemma: cross_difference_lower_bound}
For $T_1 \leq t \leq T^\star$, we have for all $j \neq y_i$, 
\begin{align*}
    \inprod{\nabla F_{y_i}(\widetilde{\mathbf{W}}_{y_i}^{(t)}, \mathbf{x}_i), \widetilde{\mathbf{W}}_{y_i}^\star} - \inprod{\nabla F_j (\widetilde{\mathbf{W}}_j^{(t)}, \mathbf{x}_i), \widetilde{\mathbf{W}}_j^\star} \geq q \log \frac{2qK}{\eps}.
\end{align*}
\end{lemma}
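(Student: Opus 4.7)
My plan is to expand each inner product using the decomposition $\widetilde{\mathbf{w}}_{k,r}^\star = \widetilde{\mathbf{w}}_{k,r}^{(0)} + \Theta(m\log(1/\eps)) \boldsymbol{\mu}_k \odot \mathbf{m}_{k,r}/\mu^2$, and then to exploit three structural facts: (i) the sparse signals $\boldsymbol{\mu}_j = \mu \mathbf{e}_j$ are pairwise orthogonal, so the injected signal component in $\widetilde{\mathbf{W}}_j^\star$ is invisible to $\boldsymbol{\mu}_{y_i}$ whenever $j \neq y_i$; (ii) for the diagonal class $k = y_i$, the injected signal contributes $\Theta(m\log(1/\eps))\cdot(\mathbf{m}_{y_i,r})_{y_i}$ to $\inprod{\boldsymbol{\mu}_{y_i}, \widetilde{\mathbf{w}}_{y_i,r}^\star}$; and (iii) the noise $\boldsymbol{\xi}_i$ has only mild correlation with any signal direction via \Cref{lemma: noise_correlation}. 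Starting from
\begin{align*}
\inprod{\nabla F_k(\widetilde{\mathbf{W}}_k^{(t)}, \mathbf{x}_i), \widetilde{\mathbf{W}}_k^\star} = \sum_{r=1}^m \sigma'(\inprod{\widetilde{\mathbf{w}}_{k,r}^{(t)}, \boldsymbol{\mu}_{y_i}}) \inprod{\boldsymbol{\mu}_{y_i}, \widetilde{\mathbf{w}}_{k,r}^\star} + \sum_{r=1}^m \sigma'(\inprod{\widetilde{\mathbf{w}}_{k,r}^{(t)}, \boldsymbol{\xi}_i}) \inprod{\boldsymbol{\xi}_i, \widetilde{\mathbf{w}}_{k,r}^\star},
\end{align*}
I would evaluate these four sums separately for $k = y_i$ (lower bound) and $k = j \neq y_i$ (upper bound).

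For the diagonal $k = y_i$ case, the signal sum dominates: $\inprod{\boldsymbol{\mu}_{y_i}, \widetilde{\mathbf{w}}_{y_i,r}^\star} = \Theta(m\log(1/\eps)) + \widetilde{O}(\sigma_0\mu)$ for each $r \in \mathcal{S}_{\textnormal{signal}}^{y_i}$. Appealing to the induction hypothesis that $\max_r \gamma_{y_i,r,y_i}^{(t)} \gtrsim m^{-1/q}$ persists through the converging phase (the property we are jointly establishing), \Cref{lemma: weight_change_correlation} gives at least one $r^\star \in \mathcal{S}_{\textnormal{signal}}^{y_i}$ with $\inprod{\widetilde{\mathbf{w}}_{y_i,r^\star}^{(t)}, \boldsymbol{\mu}_{y_i}} \geq \Omega(m^{-1/q})$; this single term alone contributes $\sigma'(\cdot) \geq q\cdot\Omega(m^{-(q-1)/q})$ to the sum, so multiplying by $\Theta(m\log(1/\eps))$ yields $\inprod{\nabla F_{y_i}, \widetilde{\mathbf{W}}_{y_i}^\star} \geq \Omega(qm^{1/q}\log(1/\eps))$. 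The associated noise sum for $k=y_i$ is nonnegative and is discarded.

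For the off-diagonal $k = j \neq y_i$ terms, \Cref{lemma: offdiagonal_upperbound} gives $\sigma'(\inprod{\widetilde{\mathbf{w}}_{j,r}^{(t)}, \boldsymbol{\mu}_{y_i}}) \leq \widetilde{O}((\sigma_0\mu)^{q-1})$ and $\sigma'(\inprod{\widetilde{\mathbf{w}}_{j,r}^{(t)}, \boldsymbol{\xi}_i}) \leq \widetilde{O}((\sigma_0\sigma_n\sqrt{pd})^{q-1})$; on the other side, orthogonality kills the injected piece so $\inprod{\boldsymbol{\mu}_{y_i}, \widetilde{\mathbf{w}}_{j,r}^\star} = \inprod{\boldsymbol{\mu}_{y_i}, \widetilde{\mathbf{w}}_{j,r}^{(0)}} = \widetilde{O}(\sigma_0\mu)$, while $|\inprod{\boldsymbol{\xi}_i, \widetilde{\mathbf{w}}_{j,r}^\star}|$ is bounded by $\widetilde{O}(\sigma_0\sigma_n\sqrt{pd}) + \Theta(m\log(1/\eps))\cdot O(\sigma_n\sqrt{\log d}/\mu)$ using \Cref{lemma: noise_correlation}. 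Summing over the $m$ neurons and plugging in $\sigma_0 = \widetilde{\Theta}(m^{-4}n^{-1}\mu^{-1})$, each of the two off-diagonal sums is $o(1)$.

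Combining the two bounds gives a difference of at least $\Omega(qm^{1/q}\log(1/\eps)) - o(1) \geq q\log(2qK/\eps)$, which is secured by choosing the constant hidden inside the $\Theta(m\log(1/\eps))$ in the definition of $\widetilde{\mathbf{W}}^\star$ sufficiently large (this constant is a free parameter of the construction). The main obstacle is the noise-versus-injected-signal cross term $\Theta(m\log(1/\eps))\cdot\inprod{\boldsymbol{\xi}_i, \boldsymbol{\mu}_j}/\mu^2$ appearing in the off-diagonal bound: because $m\log(1/\eps)$ is polynomially large, one must verify it does not overwhelm the $o(1)$ target. This is rescued by the parameter scaling $\mu = \Theta(\sigma_n\sqrt{d}\log d)$ from \Cref{condition: params}, which makes $\sigma_n\sqrt{\log d}/\mu = \widetilde{\Theta}(1/\sqrt{d})$ vanishingly small, leaving the contribution comfortably $o(1)$.
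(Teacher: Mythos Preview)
Your decomposition and overall strategy match the paper's own proof (split into diagonal lower bound and off-diagonal upper bound, then subtract), and your off-diagonal estimates are fine. However, there is one genuine gap in the diagonal part.

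You write that ``the associated noise sum for $k=y_i$ is nonnegative and is discarded.'' This is false: each summand is $\sigma'(\langle\widetilde{\mathbf{w}}_{y_i,r}^{(t)},\boldsymbol{\xi}_i\rangle)\cdot\langle\boldsymbol{\xi}_i,\widetilde{\mathbf{w}}_{y_i,r}^\star\rangle$, and while $\sigma'\ge 0$, the factor $\langle\boldsymbol{\xi}_i,\widetilde{\mathbf{w}}_{y_i,r}^\star\rangle = \langle\boldsymbol{\xi}_i,\widetilde{\mathbf{w}}_{y_i,r}^{(0)}\rangle + \Theta(m\log(1/\eps))\,\langle\boldsymbol{\xi}_i,\boldsymbol{\mu}_{y_i}\odot\mathbf{m}_{y_i,r}\rangle/\mu^2$ has no reason to be nonnegative (both pieces are mean-zero Gaussians in the relevant randomness). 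So the noise sum can be negative and could in principle eat into your $\Omega(m^{1/q}\log(1/\eps))$ signal contribution.

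The fix is exactly the magnitude bound you already carried out for the off-diagonal term: using \Cref{lemma: noise_correlation} and \Cref{lemma: initialization_max_signal} one has $|\langle\boldsymbol{\xi}_i,\widetilde{\mathbf{w}}_{y_i,r}^\star\rangle| \le \widetilde{O}(\sigma_0\sigma_n\sqrt{pd}) + \Theta(m\log(1/\eps))\cdot O(\sigma_n\sqrt{\log d}/\mu)$, and \Cref{prop: coefficient_upper_lower_bound} together with \Cref{lemma: weight_change_correlation} gives $|\langle\widetilde{\mathbf{w}}_{y_i,r}^{(t)},\boldsymbol{\xi}_i\rangle|\le O(1)$ so that $\sigma'(\cdot)\le O(1)$. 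Summing over $r$ yields $O\big(m\sigma_0\sigma_n\sqrt{pd\log d}+m^2\log(1/\eps)\,\sigma_n\sqrt{\log d}/\mu\big)$, which under the parameter scaling $\mu=\Theta(\sigma_n\sqrt{d}\log d)$ and $\sigma_0=\widetilde{\Theta}(m^{-4}n^{-1}\mu^{-1})$ is $o(1)$, hence harmlessly absorbed. This is precisely what the paper does in its proof of the diagonal lower bound; you just need to replace ``nonnegative and discarded'' by this estimate.
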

\begin{proof}[Proof of Lemma \ref{lemma: cross_difference_lower_bound}]
The proof of this lemma depends on the next two lemmas. 
\begin{lemma}\label{lemma: F_hat_signal_lower}
For $T_1 \leq t \leq T^\star$ and $j = y_i$, we have $\inprod{\nabla F_j (\widetilde{\mathbf{W}}_j^{(t)}, \mathbf{x}_i), \widetilde{\mathbf{W}}_j^\star} \geq \Theta(m^{1/q} \log (1/\eps))$.
\end{lemma}
\begin{proof}
By Lemma \ref{lemma: phase1_main}, we have
\begin{align*}
    \max_r\left\{ \inprod{\widetilde{\mathbf{w}}_{j,r}^{(t)}, \boldsymbol{\mu}_j} \right\} &= \max_r\left\{ \inprod{\widetilde{\mathbf{w}}_{j,r}^{(0)}, \boldsymbol{\mu}_j} + \gamma_{j,r,j}^{(t)} + \sum_{i=1}^n \zeta_{j,r,i}^{(t)} \frac{\inprod{\widetilde{\boldsymbol{\xi}}_{j,r,i}, \boldsymbol{\mu}_j}}{\norm{\widetilde{\boldsymbol{\xi}}_{j,r,i}}_2^2} + \sum_{i=1}^n \omega_{j,r,i}^{(t)} \frac{\inprod{\widetilde{\boldsymbol{\xi}}_{j,r,i}, \boldsymbol{\mu}_j}}{\norm{\widetilde{\boldsymbol{\xi}}_{j,r,i}}_2^2} \right\} \\
    &\geq m^{-1/q} - O(\sigma_0 \mu \sqrt{\log d}) - O(n \sigma_0 \sigma_n \sqrt{d} \frac{\mu \sqrt{\log d}}{\sigma_n pd}) \\
    &\geq \Theta(m^{-1/q}),
\end{align*}
where the last inequality follows by picking {$\sigma_0 \leq O(m^{-1} n^{-1} \mu^{-1} (\log d)^{-1/2})$}.
On the other hand,
\begin{align}\label{eq: noise_upper_intermediate}
    \left| \inprod{\widetilde{\mathbf{w}}_{j,r}^{(t)}, \boldsymbol{\xi}_i} \right| \leq \left|\inprod{\widetilde{\mathbf{w}}_{j,r}^{(0)}, \boldsymbol{\xi}_i} \right| + |\omega_{j,r,i}^{(t)}| + |\zeta_{j,r,i}^{(t)}| + O(n \sqrt{\frac{\log d}{pd}} \alpha) + O(n \frac{\mu \sqrt{\log d}}{\sigma_n pd} \alpha) \leq O(1),
\end{align}
where the first inequality follows from Lemma \ref{lemma: weight_change_correlation} and the second inequality follows from \Cref{eq: important_simplification} and \Cref{prop: coefficient_upper_lower_bound}. Therefore,
\begin{align*}
  &  \inprod{\nabla F_j(\widetilde{\mathbf{W}}_j^{(t)}, \mathbf{x}_i), \widetilde{\mathbf{W}}_j^\star} \\
  &= \sum_r \sigma'\left( \inprod{\widetilde{\mathbf{w}}_{j,r}^{(t)}, \boldsymbol{\mu}_j}  \right) \inprod{\boldsymbol{\mu}_j, \widetilde{\mathbf{w}}_{j,r}^\star} + \sum_r \sigma'\left( \inprod{\widetilde{\mathbf{w}}_{j,r}^{(t)}, \boldsymbol{\xi}_i}  \right) \inprod{\boldsymbol{\xi}_i, \widetilde{\mathbf{w}}_{j,r}^\star} \\
    &\geq \sum_r \sigma'\left( \inprod{\widetilde{\mathbf{w}}_{j,r}^{(t)}, \boldsymbol{\mu}_j}  \right) \Theta(m \log (1/\eps)) - \sum_r O(\sigma_0 \sigma_n \sqrt{pd \log d} + \frac{\sigma_n \sqrt{\log d}}{\mu}m \log (1/\eps)) \\
    &\geq \Theta(m^{1/q} \log (1/\eps)) - O(m \sigma_0 \sigma_n \sqrt{pd \log d} + \frac{\sigma_n \sqrt{\log d}}{\mu} m^2 \log(1/\eps)) \\
    &\geq \Theta(m^{1/q} \log (1/\eps)),
\end{align*}
where the last inequality follows because $m \sigma_0 \sigma_n \sqrt{pd \log d} = o(1)$ and $\frac{\sigma_n \sqrt{\log d}}{\mu} m^2 = o(1)$ by our choices of $\mu, \sigma_0$.
\end{proof}

\begin{lemma}\label{lemma: F_hat_nonsignal_upper}
For $T_1 \leq t \leq T$ and $j \neq y_i$, we have $\inprod{\nabla F_j (\widetilde{\mathbf{W}}_j^{(t)}, \mathbf{x}_i), \widetilde{\mathbf{W}}_j^\star} \leq O(1)$.
\end{lemma}
\begin{proof}
First we have
\begin{align}\label{eq: intermediate_signal_upper}
    \inprod{\widetilde{\mathbf{w}}_{j,r}^{(t)}, \boldsymbol{\mu}_{y_i}} &= \inprod{\widetilde{\mathbf{w}}_{j,r}^{(0)}, \boldsymbol{\mu}_{y_i}} + \gamma_{j,r,y_i}^{(t)} + \sum_{i=1}^n \zeta_{j,r,i}^{(t)} \frac{\inprod{\widetilde{\boldsymbol{\xi}}_{j,r,i}, \boldsymbol{\mu}_j}}{\norm{\widetilde{\boldsymbol{\xi}}_{j,r,i}}_2^2} + \sum_{i=1}^n \omega_{j,r,i}^{(t)} \frac{\inprod{\widetilde{\boldsymbol{\xi}}_{j,r,i}, \boldsymbol{\mu}_j}}{\norm{\widetilde{\boldsymbol{\xi}}_{j,r,i}}_2^2} \nonumber \\
    &\leq O(\sigma_0 \mu \sqrt{\log d} + \sigma_0 \mu \poly \log d + n \sigma_0 \sigma_n \sqrt{pd} \frac{\sigma_n \mu \sqrt{\log d}}{\sigma_n^2 pd}) \nonumber \\
    &\leq O(1),
\end{align}
where the first inequality follows from Lemma \ref{lemma: initialization_max_signal}, Lemma \ref{lemma: noise_correlation} and Lemma \ref{lemma: phase1_main}, and the last inequality follows from our choices of $\sigma_0, \mu$.
Then, we have
\begin{align*}
    & \inprod{\nabla F_j(\widetilde{\mathbf{W}}_j^{(t)}, \mathbf{x}_i), \widetilde{\mathbf{W}}_j^\star} \\
    &= \sum_r \sigma'\left( \inprod{\widetilde{\mathbf{w}}_{j,r}^{(t)}, \boldsymbol{\mu}_{y_i}} \right) \inprod{\boldsymbol{\mu}_{y_i}, \widetilde{\mathbf{w}}_{j,r}^\star} + \sum_r \sigma'\left( \inprod{\widetilde{\mathbf{w}}_{j,r}^{(t)}, \boldsymbol{\xi}_i}  \right) \inprod{\boldsymbol{\xi}_i, \widetilde{\mathbf{w}}_{j,r}^\star} \\
    &\leq m O(\sigma_0 \mu \sqrt{\log d}) + m O(\sigma_0 \sigma_n \sqrt{d \log d} + m \log(1/\eps) \frac{\sigma_n \sqrt{\log d}}{\mu}) \\
    &\leq O(1),
\end{align*}
where the second inequality follows from Equation \eqref{eq: intermediate_signal_upper} and Equation \eqref{eq: noise_upper_intermediate}, and the last inequality follows from our choices of $\mu, \sigma_0$.
\end{proof}

Applying the lower bound and upper bound from Lemma \ref{lemma: F_hat_signal_lower} and Lemma \ref{lemma: F_hat_nonsignal_upper}, we have
\begin{align*}
    & \inprod{\nabla F_{y_i}(\widetilde{\mathbf{W}}_{y_i}^{(t)}, \mathbf{x}_i), \widetilde{\mathbf{W}}_{y_i}^\star} - \inprod{\nabla F_j (\widetilde{\mathbf{W}}_j^{(t)}, \mathbf{x}_i), \widetilde{\mathbf{W}}_j^\star} \\
    &\geq \Theta(m^{1/q} \log(1/\eps)) - O(1) \\
    &\geq q \log \frac{2qK}{\eps}.
\end{align*}
\end{proof}

\begin{lemma}\label{lemma: weight_change_grad_bound}
Under the same assumption as Theorem \ref{thm: main_thm_mild_pruning}, we have
\begin{align*}
    \norm{\widetilde{\mathbf{W}}^{(t)} - \widetilde{\mathbf{W}}^\star}_F^2 - \norm{\widetilde{\mathbf{W}}^{(t+1)} - \widetilde{\mathbf{W}}^\star}_F^2 \geq 5\eta L_S(\widetilde{\mathbf{W}}^{(t)}) - \eta \eps.
\end{align*}
\end{lemma}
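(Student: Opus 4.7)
The plan is to follow the standard potential-function argument for gradient descent convergence, adapted to the polynomial-ReLU/cross-entropy setting. Using the update rule $\widetilde{\mathbf{W}}^{(t+1)} = \widetilde{\mathbf{W}}^{(t)} - \eta \nabla L_S(\widetilde{\mathbf{W}}^{(t)}) \odot \mathbf{M}$ and the fact that $\mathbf{M} \odot \mathbf{M} = \mathbf{M}$, I expand
\begin{align*}
\norm{\widetilde{\mathbf{W}}^{(t)} - \widetilde{\mathbf{W}}^\star}_F^2 - \norm{\widetilde{\mathbf{W}}^{(t+1)} - \widetilde{\mathbf{W}}^\star}_F^2 = 2\eta \langle \nabla L_S(\widetilde{\mathbf{W}}^{(t)}) \odot \mathbf{M},\, \widetilde{\mathbf{W}}^{(t)} - \widetilde{\mathbf{W}}^\star \rangle - \eta^2 \norm{\nabla L_S(\widetilde{\mathbf{W}}^{(t)}) \odot \mathbf{M}}_F^2.
\end{align*}
The quadratic term is immediately controlled by \Cref{lemma: grad_upper_bound}: provided $\eta \leq 1/C$ with $C = O(K m^{2/q} \max\{\mu^2, \sigma_n^2 pd\})$ (which is consistent with $\eta \leq \widetilde{O}(1/\mu^2)$ in \Cref{condition: params}), one has $\eta^2 \norm{\nabla L_S \odot \mathbf{M}}_F^2 \leq \eta L_S(\widetilde{\mathbf{W}}^{(t)})$.

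The heart of the argument is to lower bound the linear inner product. Writing it out,
\begin{align*}
\langle \nabla L_S \odot \mathbf{M},\, \widetilde{\mathbf{W}}^{(t)} - \widetilde{\mathbf{W}}^\star \rangle = \frac{1}{n} \sum_{i=1}^n \sum_{j=1}^K \ell_{j,i}'^{(t)} \langle \nabla F_j(\widetilde{\mathbf{W}}_j^{(t)}, \mathbf{x}_i),\, \widetilde{\mathbf{W}}_j^{(t)} - \widetilde{\mathbf{W}}_j^\star \rangle.
\end{align*}
I will split this using two structural facts. First, $q$-homogeneity of $F_j$ in $\widetilde{\mathbf{W}}_j$ (since $\sigma$ is $q$-homogeneous) gives $\langle \nabla F_j^{(t)}, \widetilde{\mathbf{W}}_j^{(t)} \rangle = q F_j^{(t)}(\mathbf{x}_i)$. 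Second, $\sum_j \ell_{j,i}' = 0$ lets me rewrite the $\widetilde{\mathbf{W}}^\star$ contribution as a weighted sum of cross-class differences:
\begin{align*}
-\sum_j \ell_{j,i}' \langle \nabla F_j^{(t)}, \widetilde{\mathbf{W}}_j^\star \rangle = \sum_{j \neq y_i} \ell_{j,i}' \Bigl[ \langle \nabla F_{y_i}^{(t)}, \widetilde{\mathbf{W}}_{y_i}^\star \rangle - \langle \nabla F_j^{(t)}, \widetilde{\mathbf{W}}_j^\star \rangle \Bigr] \geq q \log(2qK/\eps)\, |\ell_{y_i,i}'^{(t)}|,
\end{align*}
where the inequality uses \Cref{lemma: cross_difference_lower_bound} and the fact that $\ell_{j,i}' > 0$ for $j \neq y_i$ with $\sum_{j \neq y_i} \ell_{j,i}' = |\ell_{y_i,i}'|$.

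For the remaining homogeneity contribution $q \sum_j \ell_{j,i}' F_j^{(t)}$, I will use the softmax identity $F_j - F_{y_i} = \log(\logit_j / \logit_{y_i})$ to obtain
\begin{align*}
q \sum_j \ell_{j,i}' F_j^{(t)} = q \sum_{j \neq y_i} \logit_j (F_j - F_{y_i}) = q \sum_{j \neq y_i} \logit_j \log \logit_j + q (1-\logit_{y_i})\, \ell_i^{(t)},
\end{align*}
and then combine the entropy bound $\sum_{j \neq y_i} \logit_j \log \logit_j \geq -\log K$ with the elementary inequality $1 - \logit_{y_i} \geq \ell_i / (1+\ell_i)$. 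A short case split (small-loss vs. large-loss samples) then shows that the sum $q \sum_j \ell_{j,i}' F_j^{(t)} + q \log(2qK/\eps)|\ell_{y_i,i}'|$ is at least $3 \ell_i^{(t)} - \eps/2$: in the regime $\ell_i \gtrsim 1$, one has $|\ell_{y_i,i}'| \geq 1/2$ and the $\log(2qK/\eps)$ factor dominates; in the regime $\ell_i \lesssim 1$, the inequality $1-\logit_{y_i} \geq \ell_i/2$ and the largeness of $\log(2qK/\eps)$ absorb the $\log K$ and constant terms.

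The main obstacle is the last bookkeeping step: threading the constants through the case analysis so that exactly $3 L_S(\widetilde{\mathbf{W}}^{(t)}) - \eps/2$ comes out as the lower bound on the inner product, which after multiplication by $2\eta$ and subtracting the $\eta L_S$ quadratic remainder yields the claimed $5\eta L_S - \eta \eps$. The flexibility comes from the fact that $q \log(2qK/\eps)$ can be made arbitrarily large by choosing $\eps$ small (which is already assumed in \Cref{condition: params}), so the constants absorb cleanly; the only delicacy is to ensure the log-$K$ entropy defect is dominated rather than accumulating across the $n$ training samples.
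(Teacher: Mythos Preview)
Your overall structure—expand the squared distance, control the quadratic term via \Cref{lemma: grad_upper_bound}, and lower-bound the linear term using $q$-homogeneity plus \Cref{lemma: cross_difference_lower_bound}—matches the paper. The difference is in how the linear term is handled. The paper does \emph{not} split $\sum_j \ell_{j,i}'(qF_j - \hat F_j)$ into a ``$qF$ part'' and a ``$\hat F$ part''; instead it applies convexity of the softmax cross-entropy in one stroke:
\[
\sum_{j} \ell_{j,i}'^{(t)}\bigl(qF_j - \hat F_j\bigr)
= q\sum_j \ell_{j,i}'^{(t)}\bigl(F_j - \hat F_j/q\bigr)
\;\ge\; q\bigl[\ell_i^{(t)} - \ell(\hat F/q)\bigr]
\;\ge\; q\ell_i^{(t)} - \eps/2,
\]
the last step using $(\hat F_j - \hat F_{y_i})/q \le -\log(2qK/\eps)$ from \Cref{lemma: cross_difference_lower_bound}. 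No case split, no entropy term.

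Your route introduces a genuine gap. After your decomposition the per-sample lower bound reads
\[
q\sum_{j\neq y_i}\logit_j\log\logit_j \;+\; q(1-\logit_{y_i})\ell_i \;+\; q\log(2qK/\eps)\,|\ell_{y_i,i}'|,
\]
and you then invoke the crude bound $\sum_{j\neq y_i}\logit_j\log\logit_j \ge -\log K$. This inserts a \emph{fixed} negative term $-q\log K$ that does not shrink with $\ell_i$. For a sample with $\ell_i \ll \log K/\log(1/\eps)$, the compensating term $q|\ell_{y_i,i}'|\log(2qK/\eps)\approx q\ell_i\log(2qK/\eps)$ is far too small to absorb $-q\log K$, so your claimed inequality $\ge 3\ell_i - \eps/2$ fails (the left side is $\approx -q\log K$, the right side $\approx -\eps/2$). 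Your case split only covers $\ell_i \gtrsim 1$ and $\ell_i \lesssim 1$; it does not handle this very-small-loss regime, and the remark that ``$\log(2qK/\eps)$ can be made arbitrarily large'' misses that the term is multiplied by $|\ell_{y_i,i}'|\approx\ell_i$, not by a constant.

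The fix is either (i) replace the crude entropy bound by the sharper $\sum_{j\neq y_i}\logit_j\log\logit_j \ge (1-\logit_{y_i})\log\!\bigl((1-\logit_{y_i})/(K-1)\bigr)$, which scales with $|\ell_{y_i,i}'|$ and makes the case analysis go through, or (ii) simply use convexity of $\ell$ as the paper does, which gives the bound in one line and avoids all bookkeeping.
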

\begin{proof}
To simplify our notation, we define $\widehat{F}_j^{(t)}(\mathbf{x}_i) = \inprod{\nabla F_j(\widetilde{\mathbf{W}}_j^{(t)}, \mathbf{x}_i), \widetilde{\mathbf{W}}_j^\star}$. 

We use the fact that the network is $q$-homogeneous. 
\begin{align*}
    & \norm{\widetilde{\mathbf{W}}^{(t)} - \widetilde{\mathbf{W}}^\star}_F^2 - \norm{\widetilde{\mathbf{W}}^{(t+1)} - \widetilde{\mathbf{W}}^\star}_F^2 \\
    &= 2\eta \inprod{\nabla L_S(\widetilde{\mathbf{W}}^{(t)}) \odot \mathbf{M}, \widetilde{\mathbf{W}}^{(t)} - \widetilde{\mathbf{W}}^\star} - \eta^2 \norm{\nabla L_S(\widetilde{\mathbf{W}}^{(t)}) \odot \mathbf{M}}_F^2 \\
    &= \frac{2\eta}{n} \sum_{i=1}^n \sum_{j=1}^K \ell_{j,i}'^{(t)} \left[q F_j(\widetilde{\mathbf{W}}_j^{(t)}; \mathbf{x}_i, y_i) - \inprod{\nabla F_j(\widetilde{\mathbf{W}}_j^{(t)}, \mathbf{x}_i), \widetilde{\mathbf{W}}_j^\star} \right] - \eta^2 \norm{\nabla L_S(\widetilde{\mathbf{W}}^{(t)}) \odot \mathbf{M}}_F^2 \\
    &\geq \frac{2q\eta}{n} \sum_{i=1}^n \left[ \log (1 + \sum_{j = 1}^K e^{F_j - F_{y_i}}) - \log (1 + \sum_{j=1}^K e^{(\hat{F}_j - \hat{F}_{y_i})/q}) \right] - \eta^2 \norm{\nabla L_S(\widetilde{\mathbf{W}}^{(t)}) \odot \mathbf{M}}_F^2 \\
    &\geq \frac{2q\eta}{n} \sum_{i=1}^n \left[  \ell(\widetilde{\mathbf{W}}^{(t)}; \mathbf{x}_i, y_i) - \log (1 + K e^{- \log (2qK/\eps)}) \right] - \eta^2 \norm{\nabla L_S(\widetilde{\mathbf{W}}^{(t)}) \odot \mathbf{M}}_F^2 \\
    &\geq \frac{2q\eta}{n} \sum_{i=1}^n \left[ \ell(\widetilde{\mathbf{W}}^{(t)}; \mathbf{x}_i, y_i) - \frac{\eps}{2q} \right] - \eta^2 \norm{\nabla L_S(\widetilde{\mathbf{W}}^{(t)}) \odot \mathbf{M}}_F^2 \\
    &\geq C \eta L_S(\widetilde{\mathbf{W}}^{(t)}) - \eta \eps,
\end{align*}
where the first inequality follows from the convexity of the cross-entropy loss with softmax, the second inequality follows from Lemma \ref{lemma: cross_difference_lower_bound}, the third inequality follows because $\log (1+x) \leq x$, and the last inequality follows from Lemma \ref{lemma: grad_upper_bound} for some constant $C$. 
\end{proof}

\begin{lemma}[Formal Restatement of \Cref{lemma: phase2_mild_informal}]
Under the same assumption as Theorem \ref{thm: main_thm_mild_pruning}, choose {$T_2 = T_1 + \frac{\norm{\widetilde{\mathbf{W}}^{(T_1)} - \widetilde{\mathbf{W}}^\star}_F^2}{2\eta \eps} = T_1 + \widetilde{O}(Km^3 \log^2(1/\eps) \mu^{-2})$}. Then for any time $t$ during this stage, we have $\max_r \gamma_{j,r,j}^{(t)} \geq m^{1/q}$ for all $j \in [K]$,  $\max_{j,r,i}\{|\zeta_{j,r,i}^{(t)}|, |\omega_{j,r,i}^{(t)}|\} \leq 2 \beta_1,\ \max_{j \neq k, r \in [m]} \{|\gamma_{j,r,k}^{(t)}|\} \leq 2 \beta_2$, and 
\begin{align*}
    \frac{1}{t - T_1} \sum_{s=T_1}^t L_S(\widetilde{\mathbf{W}}^{(s)}) \leq \frac{\norm{\widetilde{\mathbf{W}}^{(T_1)} - \widetilde{\mathbf{W}}^\star}_F^2}{C\eta (t - T_1)} + \frac{\eps}{C}.
\end{align*}
\end{lemma}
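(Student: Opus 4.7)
The plan is to combine the one-step descent inequality of Lemma \ref{lemma: weight_change_grad_bound} with a telescoping argument, using the reference point $\widetilde{\mathbf{W}}^\star$ together with the initial distance bound from Lemma \ref{lemma: W_T1_upper}. Throughout, we will maintain the invariant that the coefficient bounds on $\gamma_{j,r,k}^{(t)}, \zeta_{j,r,i}^{(t)}, \omega_{j,r,i}^{(t)}$ continue to hold; this keeps us inside the regime where all preceding lemmas apply.

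First, I will verify the coefficient invariants. Since $T_2 \leq T^\star$ by our choice of $\sigma_0, \mu, \eps, \eta$ under \Cref{condition: params}, \Cref{prop: coefficient_upper_lower_bound} immediately gives $\max_{j \neq k,r}|\gamma_{j,r,k}^{(t)}| \leq \beta + 2Cn\alpha \mu\sqrt{\log d}/(\sigma_n pd) \leq 2\beta_1$ and $\max_{j,r,i}\{|\omega_{j,r,i}^{(t)}|,\zeta_{j,r,i}^{(t)}\} \leq \beta + 6Cn\alpha\sqrt{\log d/pd} \leq 2\beta_2$ for every $t \in [T_1,T_2]$. Moreover, because $\gamma_{j,r,j}^{(t)}$ is a monotone non-decreasing sequence (since $-\ell_{j,i}^{\prime(t)} \geq 0$ when $j = y_i$ and $\sigma'(\cdot) \geq 0$), the bound $\max_r \gamma_{j,r,j}^{(T_1)} \geq m^{-1/q}$ established in Lemma \ref{lemma: phase1_main} is preserved for all $t \geq T_1$.

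Next I will derive the average training loss bound. Summing the one-step inequality
\[
\norm{\widetilde{\mathbf{W}}^{(s)} - \widetilde{\mathbf{W}}^\star}_F^2 - \norm{\widetilde{\mathbf{W}}^{(s+1)} - \widetilde{\mathbf{W}}^\star}_F^2 \geq C\eta L_S(\widetilde{\mathbf{W}}^{(s)}) - \eta \eps
\]
from $s = T_1$ to $s = t-1$ telescopes the left hand side to at most $\norm{\widetilde{\mathbf{W}}^{(T_1)} - \widetilde{\mathbf{W}}^\star}_F^2$, giving
\[
\frac{1}{t-T_1}\sum_{s=T_1}^{t-1} L_S(\widetilde{\mathbf{W}}^{(s)}) \leq \frac{\norm{\widetilde{\mathbf{W}}^{(T_1)} - \widetilde{\mathbf{W}}^\star}_F^2}{C\eta(t-T_1)} + \frac{\eps}{C}.
\]
Plugging in the bound $\norm{\widetilde{\mathbf{W}}^{(T_1)} - \widetilde{\mathbf{W}}^\star}_F^2 \leq \widetilde O(Km^3 \log^2(1/\eps)/\mu^2)$ from Lemma \ref{lemma: W_T1_upper} and taking $t = T_2$ with $T_2 - T_1 = \norm{\widetilde{\mathbf{W}}^{(T_1)} - \widetilde{\mathbf{W}}^\star}_F^2/(2\eta\eps)$ makes the first term of the same order as $\eps$, so the average loss is $O(\eps)$. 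Since the sequence average is $O(\eps)$, there exists some $t^\star \in [T_1, T_2]$ with $L_S(\widetilde{\mathbf{W}}^{(t^\star)}) \leq O(\eps)$, yielding the required time at which the training loss is small while the coefficient invariants still hold (up to constant factors).

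The main technical obstacle is the validity of Lemma \ref{lemma: weight_change_grad_bound} throughout $[T_1, T_2]$, which in turn relies on Lemma \ref{lemma: cross_difference_lower_bound} (the margin lower bound). That lemma requires both the large diagonal signal correlation $\max_r \langle \widetilde{\mathbf{w}}_{j,r}^{(t)}, \boldsymbol{\mu}_j\rangle = \Omega(m^{-1/q})$ and the smallness of off-diagonal signal/noise correlations, which are exactly the invariants maintained above. Thus the argument is a bootstrap: the coefficient bounds inherited from Phase 1 and \Cref{prop: coefficient_upper_lower_bound} justify applying Lemma \ref{lemma: weight_change_grad_bound}, whose telescoped form then drives the loss down to $\eps$ within the prescribed horizon without ever violating those bounds.
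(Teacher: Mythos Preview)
Your telescoping argument and the monotonicity of $\gamma_{j,r,j}^{(t)}$ are fine and match the paper. The gap is in how you verify the coefficient invariants for $\zeta_{j,r,i}^{(t)}$.

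You write that \Cref{prop: coefficient_upper_lower_bound} ``immediately gives $\max_{j,r,i}\{|\omega_{j,r,i}^{(t)}|,\zeta_{j,r,i}^{(t)}\} \leq \beta + 6Cn\alpha\sqrt{\log d/(pd)}$''. But that proposition does \emph{not} bound $\zeta_{j,r,i}^{(t)}$ this way: item 2 there is a lower bound on $\omega$, not an upper bound on $\zeta$. The only control on $\zeta$ coming from \Cref{prop: coefficient_upper_lower_bound} is item 1, namely $\zeta_{j,r,i}^{(t)} \leq \alpha = \Theta(\log^{1/q} T^\star)$, which is a polylogarithmic quantity and far larger than the required $O(\sigma_0\sigma_n\sqrt{pd})$. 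So you have not maintained the tight Phase-1 bound on $\zeta$ through Phase 2, and without that bound the downstream generalization analysis (which needs $\norm{\widetilde{\mathbf{w}}_{j,r}^\star}_2 = O(\sigma_0\sqrt{pd})$) fails.

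The paper closes this gap by an explicit induction on $\Psi^{(t)} := \max_{j,r,i}\{\zeta_{j,r,i}^{(t)}, |\omega_{j,r,i}^{(t)}|\}$. The key idea you are missing is to trade the per-step increment for a \emph{cumulative loss budget}: since $|\ell_{j,i}^{\prime(t)}| \leq |\ell_{y_i,i}^{\prime(t)}| \leq \ell_i^{(t)}$, one has
\[
\Psi^{(t+1)} \leq \Psi^{(t)} + \frac{\eta}{n}\sum_{i=1}^n \ell_i^{(t)} \cdot O\big((\sigma_0\sigma_n\sqrt{pd}\,\poly\log d)^{q-1}\big)\cdot O(\sigma_n^2 pd),
\]
and summing from $T_1$ to $t'$ turns the right-hand side into $\Psi^{(T_1)} + \widetilde{O}(Km^3)\,\beta_1^{q-1}$ via the already-established bound $\sum_{s=T_1}^{t'} L_S(\widetilde{\mathbf{W}}^{(s)}) \leq O(\eta^{-1}Km^3\mu^{-2})$. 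The choice of $\sigma_0$ then makes $\widetilde{O}(Km^3)\beta_1^{q-2} \leq 1$, closing the induction at $\Psi \leq 2\beta_1$. An analogous induction handles $\Phi^{(t)} := \max_{j\neq k,r}|\gamma_{j,r,k}^{(t)}|$. This bootstrap between the telescoped loss sum and the coefficient growth is the actual content of the lemma; invoking \Cref{prop: coefficient_upper_lower_bound} alone is not enough.
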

\begin{proof}
From \Cref{lemma: phase1_main}, we have $max_r \gamma^{(T_1)}_{j,r,j} \geq m^{1/q}$ and since $\gamma^{(t)}$ is an increasing sequence over $t$, we have $max_r \gamma^{(t)}_{j,r,j} \geq m^{1/q}$ for all $t \in [T_1, T_2]$. 
We have 
\begin{align*}
    \norm{\widetilde{\mathbf{W}}^{(s)} - \widetilde{\mathbf{W}}^\star}_F^2 - \norm{\widetilde{\mathbf{W}}^{(s+1)} - \widetilde{\mathbf{W}}^\star}_F^2 &\geq C\eta L_S(\widetilde{\mathbf{W}}^{(s)}) - \eta \eps .
\end{align*}
Taking a telescopic sum from $T_1$ to $t$ yields
\begin{align*}
    \sum_{s = T_1}^t L_S(\widetilde{\mathbf{W}}^{(s)}) \leq \frac{\norm{\widetilde{\mathbf{W}}^{(T_1)} - \widetilde{\mathbf{W}}^\star}_F^2 + \eta \eps (t - T_1)}{C \eta}.
\end{align*}
Combining Lemma \ref{lemma: W_T1_upper}, we have
\begin{align}\label{eq: loss_upper}
    \sum_{s = T_1}^t L_S(\widetilde{\mathbf{W}}^{(s)}) \leq O(\eta^{-1} \norm{\widetilde{\mathbf{W}}^{(T_1)} - \widetilde{\mathbf{W}}^\star}_F^2) = O(\eta^{-1} Km^3 \log^2(1/\eps) \mu^{-2}).
\end{align}
Define $\Psi^{(t)} = \max_{j,r,i}\{\zeta_{j,r,i}^{(t)}, |\omega_{j,r,i}^{(t)}|\}$ and $\Phi^{(t)} = \max_{j \neq k, r \in [m]} |\gamma_{j,r,k}^{(t)}|$ and $\beta_2 = \widetilde{O}(\sigma_0 \mu)$. 
Now we use induction to prove $\Psi^{(t)} \leq 2\beta_1$ and $\Phi^{(t)} \leq 2 \beta_2$.
Suppose the result holds for time $t \leq t'$. 
Then
\begin{align*}
    \Psi^{(t+1)} &\leq \Psi^{(t)} + \max_{j,r,i}\left\{ \frac{\eta}{n} |\ell_{j,i}'^{(t)}| \cdot \sigma'\left( \inprod{\widetilde{\mathbf{w}}_{j,r}^{(0)}, \boldsymbol{\xi}_i} + \sum_{k=1}^K \gamma_{j,r,k}^{(t)} \frac{\inprod{\boldsymbol{\mu}_k, \boldsymbol{\xi}_i}}{\norm{\boldsymbol{\mu}_k}_2^2} + \sum_{i'=1}^n \Psi^{(t)} \frac{\inprod{\widetilde{\boldsymbol{\xi}}_{j,r,i'}, \boldsymbol{\xi}_i}}{\norm{\widetilde{\boldsymbol{\xi}}_{j,r,i'}}_2^2} \right) \norm{\widetilde{\boldsymbol{\xi}}_{j,r,i}}_2^2 \right\} \\
    &\leq \Psi^{(t)} + \frac{\eta}{n} q \max_{i} |\ell_{y_i,i}'^{(t)}| \Bigg(O(\sqrt{\log d} \sigma_0 \sigma_n \sqrt{pd}) + K \log^{1/q} T^\star \frac{\mu \sigma_n \sqrt{\log d}}{\mu^2} \\
    &\quad + \frac{O(\sigma_n^2 pd) + n O(\sigma_n^2 \sqrt{pd \log d})}{\Theta(\sigma_n^2 pd)} \Psi^{(t)} \Bigg)^{q-1} O(\sigma_n^2 pd) \\
    &\leq \Psi^{(t)} + \frac{\eta}{n} \sum_{i=1}^n |\ell_{y_i,i}'^{(t)}| \left( O(\sqrt{\log d} \sigma_0 \sigma_n \sqrt{pd}) + O(\Psi^{(t)}) \right)^{q-1} O(\sigma_n^2 pd) ,
\end{align*}
where the second inequality follows by $|\ell_{j,i}'^{(t)}| \leq |\ell_{y_i, i}'^{(t)}|$ and applying the bounds from Lemma \ref{lemma: noise_correlation}, and the last inequality follows by choosing {$\frac{K \log^{1/q} T^\star}{\sqrt{d}} = \widetilde{O}(\frac{1}{\sqrt{d}}) \ll \sigma_0 \sigma_n \sqrt{pd}$.}
Unrolling the recursion by taking a sum from $T_1$ to $t'$ we have
\begin{align*}
    \Psi^{(t'+1)} &\stackrel{(i)}{\leq} \Psi^{(T_1)} + \frac{\eta}{n} \sum_{s = T_1}^{t'} \sum_{i=1}^n |\ell_{y_i,i}'^{(s)}|O(\sigma_n^2 pd \poly \log d ) \beta_1^{q-1} \\
    &\stackrel{(ii)}{\leq} \Psi^{(T_1)} + \frac{\eta}{n} O(\sigma_n^2 pd \poly \log d ) \beta_1^{q-1} \sum_{s = T_1}^{t'} \sum_{i=1}^n \ell_{i}^{(s)} \\
    &= \Psi^{(T_1)} + \frac{\eta}{n} O(\sigma_n^2 pd \poly \log d ) \beta_1^{q-1} \sum_{s = T_1}^{t'} L_S(\widetilde{\mathbf{W}}^{(s)}) \\
    &\stackrel{(iii)}{\leq} \Psi^{(T_1)} + \frac{1}{n} O(K m^3 \mu^{-2} \sigma_n^2 pd \poly \log d ) \beta_1^{q-1} \\
    &\stackrel{(iv)}{\leq} {\beta_1} + \widetilde{O}(K m^3) \beta_1^{q-1} \\
    &\stackrel{(v)}{\leq} 2{\beta}_1,
\end{align*}
where (i) follows from induction hypothesis $\Psi^{(t)} \leq 2\beta_1$, (ii) follows from the property of cross-entropy loss with softmax $|\ell'_{j,i}| \leq |\ell'_{y_i, i}| \leq \ell_i$, (iii) follows from Equation \eqref{eq: loss_upper}, (iv) follows from our choice of $\mu, n, K$, and (v) follows because $\widetilde{O}(K m^3) {\beta_1}^{q-2} \leq \widetilde{O}(K m^3 \sigma_0 \sigma_n \sqrt{pd}) \leq 1$. 
Therefore, by induction $\Psi^{(t)} \leq 2{\beta_1}$ holds for time $t \leq t'+1$. 

On the other hand, 
\begin{align*}
    & \Phi^{(t'+1)} \\
    &\stackrel{(i)}{\leq} \Phi^{(t)} + \max_{j,r,k,i} \left\{ \frac{\eta}{n} \sum_{i=1}^n \mathbb{I}(y_i = k) |\ell_{j,i}'^{(t)}| \sigma'\left( \inprod{\widetilde{\mathbf{w}}_{j,r}^{(0)}, \boldsymbol{\mu}_k} + \sum_{i'=1}^n \zeta_{j,r,i'}^{(t)} \frac{\inprod{\widetilde{\boldsymbol{\xi}}_{j,r,i'}, \boldsymbol{\mu}_k}}{\norm{\widetilde{\boldsymbol{\xi}}_{j,r,i'}}_2^2} + \sum_{i'=1}^n \omega_{j,r,i'}^{(t)} \frac{\inprod{\widetilde{\boldsymbol{\xi}}_{j,r,i'}, \boldsymbol{\mu}_k}}{\norm{\widetilde{\boldsymbol{\xi}}_{j,r,i'}}_2^2} \right) \mu^2 \right\} \\
    &\stackrel{(ii)}{\leq} \Phi^{(t)} + \Theta(\frac{\eta}{K}) \max_{j,i}|\ell_{j,i}'^{(t)}| \left( O(\sigma_0 \mu \sqrt{\log d}) + n O(\sigma_0 \sigma_n \sqrt{pd}) \frac{\sigma_n \mu \sqrt{\log d}}{\sigma_n^2 pd}) \right)^{q-1} \mu^2 \\
    &\stackrel{(iii)}{\leq} \Phi^{(T_1)} + \Theta(\frac{\eta}{K}) \mu^2 \sum_{s=T_1}^t \sum_{i=1}^n \ell_i^{(s)} \left( O(\sigma_0 \mu \sqrt{\log d}) \right)^{q-1} \\
    &\stackrel{(iv)}{\leq} \beta_2 + O(m^3 ) \beta_2^{q-1} \\ 
    &\stackrel{(v)}{\leq} 2\beta_2,
\end{align*}
where (i) follows because $\gamma_{j,r,k}^{(t)} \leq 0$, (ii) follows from Lemma \ref{lemma: initialization_max_signal} and Lemma \ref{lemma: noise_correlation}, (iii) follows because $\max_{j,i} |\ell_{j,i}'^{(t)}| \leq \max_i |\ell_{y_i,i}'^{(t)}| \leq \max_i \ell_i^{(t)} \leq \sum_i \ell_i^{(t)}$, (iv) follows from Equation \eqref{eq: loss_upper}, and (v) follows because $O(m^3) \beta_2^{q-2} \leq \widetilde{O}(m^3 \sigma_0 \mu) \leq 1$.
\end{proof}

\subsection{Generalization Analysis}

In this subsection, we show that pruning can purify the feature by reducing the variance of the noise by a factor of $p$ when a new sample is given. 

Now the network has parameter
\begin{align*}
    \widetilde{\mathbf{w}}_{j,r}^\star = \widetilde{\mathbf{w}}_{j,r}^{(0)} + \sum_{k=1}^K \gamma_{j,r,k}^\star \frac{\boldsymbol{\mu}_k \odot \mathbf{m}_{j,r}}{\mu^2} + \sum_{i=1}^n \zeta_{j,r,i}^\star \frac{\widetilde{\boldsymbol{\xi}}_{j,r,i}}{\norm{\widetilde{\boldsymbol{\xi}}_{j,r,i}}_2^2} + \sum_{i=1}^n \omega_{j,r,i}^\star \frac{\widetilde{\boldsymbol{\xi}}_{j,r,i}}{\norm{\widetilde{\boldsymbol{\xi}}_{j,r,i}}_2^2}.
\end{align*}
We have $\norm{\widetilde{\mathbf{w}}_{j,r}^\star}_2 = O(\sigma_0 \sqrt{pd} + \mu^{-1} \log^{1/q}(T^\star) + K \sigma_0 \poly \log d + n \sigma_0 \sigma_n \sqrt{pd} \frac{1}{\sigma_n \sqrt{pd}} ) = O(\sigma_0 \sqrt{pd})$.
\begin{lemma}[Formal Restatement of \Cref{lemma: noise_corr_testing_phase_informal}]\label{lemma: noise_corr_testing_phase}
With probability at least $1 - 2Km\exp\left( - \frac{(2m)^{-4/q}}{O(\sigma_0^2 \sigma_n^2 pd)} \right)$, 
\begin{align*}
    \max_{j,r} \left| \inprod{\widetilde{\mathbf{w}}_{j,r}^\star, \boldsymbol{\xi}} \right| \leq (2m)^{-2/q}.
\end{align*}
\end{lemma}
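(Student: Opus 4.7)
The plan is to view $\langle \widetilde{\mathbf{w}}_{j,r}^\star, \boldsymbol{\xi}\rangle$ as a Gaussian random variable (in the fresh test noise $\boldsymbol{\xi}$), apply a standard Gaussian tail bound using the norm bound on $\widetilde{\mathbf{w}}_{j,r}^\star$ stated just above the lemma, and then union-bound over $j\in[K]$, $r\in[m]$.

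First I would condition on the training data, the initialization, and the mask $\mathbf{M}$, so that the trained weight $\widetilde{\mathbf{w}}_{j,r}^\star$ is a fixed (deterministic) vector; the test noise $\boldsymbol{\xi}\sim\mathcal{N}(\mathbf{0},\sigma_n^2\mathbf{I})$ is independent of everything used to produce $\widetilde{\mathbf{w}}_{j,r}^\star$. Because $\widetilde{\mathbf{w}}_{j,r}^\star=\mathbf{w}_{j,r}^\star\odot\mathbf{m}_{j,r}$ is supported only on the $\Theta(pd)$ coordinates where $\mathbf{m}_{j,r}=1$, the inner product
\begin{align*}
\langle \widetilde{\mathbf{w}}_{j,r}^\star,\boldsymbol{\xi}\rangle \;=\; \sum_{k:(\mathbf{m}_{j,r})_k=1}(\widetilde{\mathbf{w}}_{j,r}^\star)_k\,\xi_k
\end{align*}
is a centered Gaussian with variance $\sigma_n^2\,\|\widetilde{\mathbf{w}}_{j,r}^\star\|_2^2$. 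This is precisely where the pruning fraction $p$ enters multiplicatively and reduces the variance: only the active (masked-in) coordinates contribute.

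Next I would use the norm bound $\|\widetilde{\mathbf{w}}_{j,r}^\star\|_2=O(\sigma_0\sqrt{pd})$ that was verified just before the lemma (from the signal-noise decomposition, together with the Phase-1/Phase-2 coefficient bounds on $\gamma,\zeta,\omega$, the initialization Lemma \ref{lemma: initialization_max_signal}, and the choices of $\mu,\sigma_0$). Consequently $\mathrm{Var}(\langle \widetilde{\mathbf{w}}_{j,r}^\star,\boldsymbol{\xi}\rangle)=O(\sigma_0^2\sigma_n^2 pd)$, and the standard Gaussian tail bound gives, for the single pair $(j,r)$,
\begin{align*}
\Pr_{\boldsymbol{\xi}}\!\left[\,\bigl|\langle \widetilde{\mathbf{w}}_{j,r}^\star,\boldsymbol{\xi}\rangle\bigr|\ge (2m)^{-2/q}\,\right]\;\le\; 2\exp\!\left(-\frac{(2m)^{-4/q}}{O(\sigma_0^2\sigma_n^2 pd)}\right).
\end{align*}
Finally, a union bound over $j\in[K]$ and $r\in[m]$ yields the claimed $2Km$ prefactor.

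The only subtle point — and the main thing to justify carefully rather than a hard obstacle — is the independence claim: the trained weight $\widetilde{\mathbf{w}}_{j,r}^\star$ is built from the training noises $\boldsymbol{\xi}_i$ through the signal-noise decomposition coefficients, so I must be careful to treat $\boldsymbol{\xi}$ as a fresh test sample drawn from $\mathcal{D}$ independently of $\{\boldsymbol{\xi}_i\}_{i=1}^n$ and of the mask $\mathbf{M}$. Once this independence is stated cleanly, conditioning reduces the problem to a one-dimensional Gaussian tail estimate with variance $\sigma_n^2\|\widetilde{\mathbf{w}}_{j,r}^\star\|_2^2$, and substituting the $O(\sigma_0\sqrt{pd})$ norm bound yields exactly the stated probability.
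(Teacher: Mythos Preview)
Your proposal is correct and matches the paper's approach exactly: the paper simply notes that $\langle \widetilde{\mathbf{w}}_{j,r}^\star,\boldsymbol{\xi}\rangle$ is Gaussian with variance $O(\sigma_0^2\sigma_n^2 pd)$ (using the norm bound $\|\widetilde{\mathbf{w}}_{j,r}^\star\|_2=O(\sigma_0\sqrt{pd})$ stated just above), applies the Gaussian tail bound, and union-bounds over $j\in[K],\,r\in[m]$. Your version is a slightly more detailed exposition of the same argument, including the explicit conditioning/independence justification that the paper leaves implicit.
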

\begin{proof}
Since $\inprod{\widetilde{\mathbf{w}}_{j,r}^\star, \boldsymbol{\xi}}$ follows a Gaussian distribution with variance $O(\sigma_0^2 \sigma_n^2 {pd})$, we have
\begin{align*}
    \Pr\left[ \left| \inprod{\widetilde{\mathbf{w}}_{j,r}^\star, \boldsymbol{\xi}} \right| \geq (2m)^{-2/q} \right] \leq 2\exp\left( - \frac{(2m)^{-4/q}}{O(\sigma_0^2 \sigma_n^2 pd)} \right).
\end{align*}
Applying a union bound over $j \in [K], r \in [m]$ gives the result. 
\end{proof}


\begin{theorem}[Formal Restatement of Generalization Part of \Cref{thm: main_thm_mild_pruning_abbr}]
Under the same assumptions as Theorem \ref{thm: main_thm_mild_pruning}, within $\widetilde{O}(K \eta^{-1} \sigma_0^{2-q} \mu^{-q} + K^2 m^4 \mu^{-2} \eta^{-1} \eps^{-1})$ iterations, we can find $\widetilde{\mathbf{W}}^\star$ such that 
\begin{itemize}
    \item $L_S(\widetilde{\mathbf{W}}^\star) \leq \eps$.
    \item $L_{\mathcal{D}} \leq O(K\eps) + \exp(-n^2/p)$.
\end{itemize}
\end{theorem}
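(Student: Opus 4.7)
The plan is to prove the two assertions separately. For $L_S(\widetilde{\mathbf{W}}^\star) \leq \eps$, I would apply the formal converging-phase lemma: telescoping the descent inequality $\norm{\widetilde{\mathbf{W}}^{(t)} - \widetilde{\mathbf{W}}^\star}_F^2 - \norm{\widetilde{\mathbf{W}}^{(t+1)} - \widetilde{\mathbf{W}}^\star}_F^2 \geq 5\eta L_S(\widetilde{\mathbf{W}}^{(t)}) - \eta\eps$ (\Cref{lemma: weight_change_grad_bound}) over $s \in [T_1, T_2]$ and combining with $\norm{\widetilde{\mathbf{W}}^{(T_1)} - \widetilde{\mathbf{W}}^\star}_F^2 = \widetilde{O}(Km^3 \mu^{-2})$ from \Cref{lemma: W_T1_upper} yields $\min_{s \in [T_1,T_2]} L_S(\widetilde{\mathbf{W}}^{(s)}) \leq O(\eps)$ for $T_2 = T_1 + \widetilde{O}(K^2 m^4 \mu^{-2} \eta^{-1} \eps^{-1})$. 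I take $\widetilde{\mathbf{W}}^\star$ to be the argmin iterate in $[T_1, T_2]$; the total iteration count matches the claimed $T = \widetilde{O}(K\eta^{-1}\sigma_0^{2-q}\mu^{-q} + K^2 m^4 \mu^{-2}\eta^{-1}\eps^{-1})$.

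For the generalization bound, the strategy is to decompose the expectation over the ``noise-clean'' event $G := \{\max_{j,r} |\inprod{\widetilde{\mathbf{w}}_{j,r}^\star, \boldsymbol{\xi}}| \leq (2m)^{-2/q}\}$ and its complement, where $\boldsymbol{\xi}$ is the fresh test noise. \Cref{lemma: noise_corr_testing_phase} combined with the weight-norm bound $\norm{\widetilde{\mathbf{w}}_{j,r}^\star}_2 = O(\sigma_0 \sqrt{pd})$ (where the $\sqrt{p}$ reduction from pruning is essential) gives $\Pr[G^c] \leq 2Km \exp(-\Omega((2m)^{-4/q}/(\sigma_0^2 \sigma_n^2 pd)))$. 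Substituting $\sigma_0 = \widetilde{\Theta}(m^{-4} n^{-1} \mu^{-1})$ and $\mu = \Theta(\sigma_n\sqrt{d}\log d)$ from \Cref{condition: params} collapses the exponent to at least $\widetilde{\Omega}(n^2/p)$, so $\Pr[G^c] \leq \exp(-n^2/p)$.

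On $G$, I would port the convex-surrogate argument from training (\Cref{lemma: F_hat_signal_lower}, \Cref{lemma: F_hat_nonsignal_upper}, \Cref{lemma: cross_difference_lower_bound}) to the fresh test point $(\mathbf{x}, y)$. These lemmas only use (i) the preserved signal $\max_r \gamma_{y,r,y}^\star \geq \Omega(m^{-1/q})$, (ii) smallness of the cross-class signal $|\gamma_{j,r,k}^\star| = \widetilde{O}(\sigma_0 \mu)$ for $j \neq k$, and (iii) smallness of weight--noise inner products; under $G$, bound (iii) holds with $(2m)^{-2/q}$, which is at least as strong as its training analog $O(\sigma_0 \sigma_n \sqrt{pd}\,\polylog d)$. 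The same chain of inequalities therefore gives $\inprod{\nabla F_y(\widetilde{\mathbf{W}}_y^\star, \mathbf{x}), \widetilde{\mathbf{W}}_y^\star} - \inprod{\nabla F_j(\widetilde{\mathbf{W}}_j^\star, \mathbf{x}), \widetilde{\mathbf{W}}_j^\star} \geq q\log(2qK/\eps)$; by $q$-homogeneity and convexity of $\log(1 + \sum_j e^{\cdot})$, this implies $\ell(\widetilde{\mathbf{W}}^\star; \mathbf{x}, y) \leq \log(1 + K e^{-\log(2qK/\eps)}) = O(\eps)$, and after accounting for the class-wise softmax normalization in the expectation one recovers the $O(K\eps)$ term. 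On $G^c$, a crude $\poly(d)$ bound on the per-sample loss derived from the uniform bounds in \Cref{prop: coefficient_upper_lower_bound} combines with $\Pr[G^c]\leq \exp(-n^2/p)$ to be absorbed into the tail term, yielding $L_\mathcal{D}(\widetilde{\mathbf{W}}^\star) \leq O(K\eps) + \exp(-n^2/p)$.

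The main obstacle is controlling the cross-interference terms in (iii) when $\boldsymbol{\xi}$ is a fresh test noise rather than a training noise. For training noise, \Cref{lemma: noise_correlation} bounds $|\inprod{\widetilde{\boldsymbol{\xi}}_{j,r,i}, \boldsymbol{\xi}_i}|$ via a union bound fixed at training time; for the test noise one needs an analogous Gaussian concentration of $\boldsymbol{\xi}$ against all $n$ training noise directions and $K$ signal directions, and the failure probability of this concentration bound is what pins down the precise $\exp(-n^2/p)$ tail. Once these test-time concentration events are set up, the remaining steps are a routine translation of the training-time analysis.
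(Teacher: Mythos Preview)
Your overall architecture---splitting into the training convergence claim and the generalization bound, and decomposing the latter over the event $G = \{\max_{j,r}|\langle\widetilde{\mathbf{w}}_{j,r}^\star,\boldsymbol{\xi}\rangle|\le (2m)^{-2/q}\}$---matches the paper, and your handling of $L_S\le\eps$ and of $\Pr[G^c]$ is fine. The gap is in how you control the loss on each piece of the decomposition.

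\textbf{On $G$ (the main term).} The lemmas you invoke (\Cref{lemma: F_hat_signal_lower}, \Cref{lemma: F_hat_nonsignal_upper}, \Cref{lemma: cross_difference_lower_bound}) establish a lower bound on $\widehat F_y-\widehat F_j$ where $\widehat F_j=\langle\nabla F_j(\widetilde{\mathbf{W}}^{(t)},\mathbf{x}),\widetilde{\mathbf{W}}^{\mathrm{comp}}_j\rangle$ and $\widetilde{\mathbf{W}}^{\mathrm{comp}}$ is the artificial comparison point $\widetilde{\mathbf{w}}_{j,r}^{(0)}+\Theta(m\log(1/\eps))\boldsymbol{\mu}_j/\mu^2$. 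The $\log(1/\eps)$ factor comes entirely from $\langle\boldsymbol{\mu}_j,\widetilde{\mathbf{w}}^{\mathrm{comp}}_{j,r}\rangle=\Theta(m\log(1/\eps))$, not from the trained weights. If instead you put the trained weights in both slots, $q$-homogeneity gives $qF_y$, but then the only lower bound available is $\max_r\gamma_{y,r,y}\ge m^{-1/q}$, which yields $F_y\ge m^{-1}$---far short of $\log(K/\eps)$. And convexity of the softmax loss runs the wrong way: it gives $\ell(F)\ge\ell(\widehat F/q)+\langle\nabla\ell(\widehat F/q),F-\widehat F/q\rangle$, not the upper bound you need. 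The paper closes this gap differently: since $L_S\le\eps$, by pigeonhole each class $j$ has a training sample $(\mathbf{x}_i,y_i=j)$ with $\ell_i\le K\eps$, hence $\sum_{j'\ne j}e^{F_{j'}(\mathbf{x}_i)-F_j(\mathbf{x}_i)}\le 2K\eps$; then on $G$ one compares $|F_j(\mathbf{x})-F_j(\mathbf{x}_i)|\le\sum_r\sigma(\langle\widetilde{\mathbf{w}}_{j,r},\boldsymbol{\xi}_i\rangle)+\sum_r\sigma(\langle\widetilde{\mathbf{w}}_{j,r},\boldsymbol{\xi}\rangle)\le 1$, which transfers the $O(K\eps)$ bound to the test point. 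This anchoring to a well-classified training sample is the missing idea.

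\textbf{On $G^c$ (the tail).} You cannot simply multiply $\Pr[G^c]$ by a ``crude $\poly(d)$ per-sample loss bound'': the test loss depends on $\|\boldsymbol{\xi}\|$, which is unbounded, so no deterministic bound exists on $G^c$. The paper handles this by Cauchy--Schwarz, $\E[\mathbb{I}(G^c)\ell]\le\sqrt{\Pr[G^c]}\sqrt{\E[\ell^2]}$, together with the pointwise estimate $\ell\le 2K+\|\boldsymbol{\xi}/\sigma_n\|_2^q$ (obtained from $F_{j'}(\mathbf{x})\le 1+\widetilde O(m(\sigma_0\sigma_n\sqrt d)^q)\|\boldsymbol{\xi}/\sigma_n\|^q$), so that $\E[\ell^2]$ is a finite moment of a Gaussian norm and the square root of the tail probability still dominates.
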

\begin{proof}
Let $\mathcal{E}$ be the event that Lemma \ref{lemma: noise_corr_testing_phase} holds. 
Then, we can divide $L_\mathcal{D}(\widetilde{\mathbf{W}}^\star)$ into two parts:
\begin{align*}
    \E[\ell(F(\widetilde{\mathbf{W}}^\star, \mathbf{x}))] = \underbrace{\E[\mathbb{I}(\mathcal{E}) \ell(F(\widetilde{\mathbf{W}}^\star, \mathbf{x}))]}_{I_1} + \underbrace{\E[\mathbb{I}(\mathcal{E}^c) \ell(F(\widetilde{\mathbf{W}}^\star, \mathbf{x}))]}_{I_2}.
\end{align*}
Since $L_S(\widetilde{\mathbf{W}}^\star) \leq \eps$, for each class $j \in [K]$ there must exist one training sample $(\mathbf{x}_i, y_i) \in S$ with $y_i = j$ such that $\ell(F(\widetilde{\mathbf{W}}^\star, \mathbf{x}_i)) \leq K\eps \leq 1$ by pigeonhole principle. 
This implies that $\sum_{j' \neq j} \exp(F_{j'}(\mathbf{x}_i) - F_j(\mathbf{x}_i)) \leq 2K\eps$.
Conditioning on the event $\mathcal{E}$, by Lemma \ref{lemma: noise_corr_testing_phase}, we have
\begin{align*}
    |F_j(\widetilde{\mathbf{W}}^\star, \mathbf{x}) - F_j(\widetilde{\mathbf{W}}^\star, \mathbf{x}_i)| &\leq \sum_r \sigma(\inprod{\widetilde{\mathbf{w}}_{j,r}^\star, \boldsymbol{\xi}_i}) + \sum_r \sigma(\inprod{\widetilde{\mathbf{w}}_{j,r}^\star, \boldsymbol{\xi}}) \\
    &\leq \sum_r (2m)^{-1} + \sum_r (2m)^{-1} \\
    &\leq 1.
\end{align*}
Thus, we have $\exp(F_{j'}(\mathbf{x}) - F_j(\mathbf{x})) \leq 2K\eps e^2 = O(K \eps)$.
Next we bound the term $I_2$.
\begin{align}\label{eq: loss_upper_test}
    \ell(F(\widetilde{\mathbf{W}}^\star, \mathbf{x})) &= \log \left(1 + \sum_{j' \neq y} \exp(F_{j'}(\mathbf{x}) - F_y(\mathbf{x}))\right) \nonumber \\
    &\leq \log \left(1 + \sum_{j' \neq y} \exp(F_{j'}(\mathbf{x}))\right) \nonumber \\
    &\leq \sum_{j' \neq y} \log (1 + \exp(F_{j'}(\mathbf{x}))) \nonumber \\
    &\leq K + \sum_{j' \neq y} F_{j'} (\mathbf{x}) \nonumber \\
    &= K + \sum_{j' \neq y} \sigma(\inprod{\widetilde{\mathbf{w}}_{j',r}^\star, \boldsymbol{\mu}_y}) + \sigma(\inprod{\widetilde{\mathbf{w}}_{j',r}^\star, \boldsymbol{\xi}}) \nonumber \\
    &\leq K + Km (O(\sigma_0 \mu \sqrt{\log d}))^{q} + \widetilde{O}(m (\sigma_0 \sigma_n \sqrt{d})^q) \norm{\boldsymbol{\xi} / \sigma_n}_2^q \nonumber \\
    &\leq 2K + \norm{\boldsymbol{\xi} / \sigma_n}_2^q,
\end{align}
where the first inequality follows because $F_y(\mathbf{x}) \geq 0$, the second and third inequalities follow from the property of log function, and the last inequality follows from our choice of $\sigma_0 \leq \widetilde{O}(m^{-4} n^{-1} \sigma_n^{-1} d^{-1/2})$.
We further have
\begin{align*}
    I_2 &\leq \sqrt{\E[\mathbb{I}(\mathcal{E})]} \sqrt{\E[\ell(F(\widetilde{\mathbf{W}}^\star, \mathbf{x}))^2]} \\
    &\leq \sqrt{\Pr(\mathcal{E}^c)} \sqrt{4K^2 + \E \norm{\boldsymbol{\xi} / \sigma_n}_2^{2q}} \\
    &\leq \exp(- C m^{-2/q} \sigma_0^{-2} \sigma_n^{-2} p^{-1} d^{-1} + \log (d)) \\
    &\leq \exp(-n^2/ p),
\end{align*}
where the first inequality follows from Cauchy-Schwarz inequality, the second inequality follows from Equation \eqref{eq: loss_upper_test}, the third inequality follows from Lemma \ref{lemma: noise_corr_testing_phase}, and the last inequality follows because $\sigma_0 \leq \widetilde{O}(m^{-4} n^{-1} \sigma_n^{-1} d^{-1/2})$.

\end{proof}

\section{Proof of \texorpdfstring{\Cref{thm: main_thm_over_pruning_informal}}{Over Pruning}}\label{sec: proof_over_pruning}

In this section, we show that there exists a relatively large pruning fraction (i.e., small $p$) such that while gradient descent is still able to drive the training error toward zero, the learned model yields poor generalization. We first provide a formal restatement of \Cref{thm: main_thm_over_pruning_informal}.

\begin{theorem}[Formal Restatement of \Cref{thm: main_thm_over_pruning_informal}]\label{thm: main_thm_over_pruning}
Under Condition \ref{condition: params}, choose initialization variance $\sigma_0 = \widetilde{\Theta}(m^{-4} n^{-1} \mu^{-1})$ and learning rate {$\eta \leq \widetilde{O}(1/\mu^2)$}.
For $\eps > 0$, if $p = \Theta(\frac{1}{Km \log d})$, then with probability at least $1 - 1/\log(d)$, there exists $T = O(\eta^{-1} n \sigma_0^{q-2} \sigma_n^{-q} (pd)^{-q/2} + \eta^{-1} \eps^{-1} m^4 n \sigma_n^{-2} (pd)^{-1})$ such that the following holds: 
\begin{enumerate}
    \item The training loss is below $\eps$: $L_S(\widetilde{\mathbf{W}}^{(T)}) \leq \eps$. 
    \item The model weight doesn't learn any of its corresponding signal at all: $\gamma_{j,r,j}^{(t)} = 0$ for all $j \in [K],\ r \in [m]$. 
    \item The model weights is highly correlated with the noise: $\max_{r \in [m]} \zeta_{j,r,i}^{(T)} \geq \Omega(m^{-1/q})$ if $y_i = j$. 
\end{enumerate}
Moreover, the testing loss is large: 
\begin{align*}
    L_\mathcal{D}(\widetilde{\mathbf{W}}^{(T)}) \geq \Omega(\log K).
\end{align*}
\end{theorem}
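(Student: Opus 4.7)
The plan is to first identify a high-probability event over the random mask that completely destroys the diagonal signal channel, then to track the resulting ``noise-only'' training dynamics using the same two-phase decomposition machinery as in the mild-pruning proof, and finally to translate the absence of signal learning into a generalization \emph{lower} bound on a fresh test sample.

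\textbf{The signal-free event.} Because $\boldsymbol{\mu}_j = \mu\mathbf{e}_j$ is $1$-sparse, the neuron $(j,r)$ receives any signal from its own class if and only if $(\mathbf{m}_{j,r})_j=1$, an event of probability $p$. By independence of the $Km$ entries and Bernoulli's inequality,
\begin{align*}
\Pr\bigl[(\mathbf{m}_{j,r})_j = 0\ \forall j\in[K],\ r\in[m]\bigr] = (1-p)^{Km} \;\geq\; 1 - Kmp \;=\; 1 - \Theta(1/\log d).
\end{align*}
Conditional on this event the decomposition update rule shows that $\gamma_{j,r,j}^{(t)} \equiv 0$ for every $t$, since its increment carries the factor $\mathbb{I}(r\in\mathcal{S}_{\textnormal{signal}}^j)=0$. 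This yields item 2 of the theorem and removes the signal channel from all subsequent analysis.

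\textbf{Feature growth and convergence via noise memorization.} With the signal channel absent, I would mirror the mild-pruning two-phase analysis with the roles of $\gamma_{j,r,j}^{(t)}$ and $\zeta_{y_i,r,i}^{(t)}$ interchanged. The same induction used for \Cref{prop: coefficient_upper_lower_bound} still controls $|\gamma_{j,r,k}^{(t)}|=\widetilde{O}(\sigma_0\mu)$ and $|\omega_{j,r,i}^{(t)}|=\widetilde{O}(\sigma_0\sigma_n\sqrt{pd})$. For the growing phase, the initialization lower bound $\max_r\inprod{\widetilde{\mathbf{w}}^{(0)}_{j,r},\boldsymbol{\xi}_i}\gtrsim \sigma_0\sigma_n\sqrt{pd}$ from \Cref{lemma: initialization_max_signal} drives a tensor-power recursion $A^{(t+1)}\geq (1+\Theta(\eta/n)\sigma_n^2 pd\,[A^{(t)}]^{q-2})A^{(t)}$ on $\max_r \zeta_{y_i,r,i}^{(t)}$, which reaches $m^{-1/q}$ within $T_1 = O(\eta^{-1} n\sigma_0^{2-q}(\sigma_n\sqrt{pd})^{-q})$ iterations. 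For the converging phase I would use the reference weights
\begin{align*}
\widetilde{\mathbf{w}}^\star_{j,r} \;=\; \widetilde{\mathbf{w}}^{(0)}_{j,r} + \Theta(m\log(1/\eps))\sum_{i:\,y_i=j}\frac{\widetilde{\boldsymbol{\xi}}_{j,r,i}}{\norm{\widetilde{\boldsymbol{\xi}}_{j,r,i}}_2^2},
\end{align*}
the noise-direction analogue of the $\widetilde{\mathbf{W}}^\star$ used before \Cref{lemma: W_T1_upper}. Re-running the proofs of \Cref{lemma: F_hat_signal_lower}, \Cref{lemma: F_hat_nonsignal_upper} and \Cref{lemma: weight_change_grad_bound} with this reference, combined with the direction-agnostic gradient upper bound \Cref{lemma: grad_upper_bound}, delivers $L_S(\widetilde{\mathbf{W}}^{(T)})\leq \eps$ within an additional $O(\eta^{-1}\eps^{-1}m^4 n\sigma_n^{-2}(pd)^{-1})$ iterations. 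The near-orthogonality $|\inprod{\widetilde{\boldsymbol{\xi}}_{j,r,i},\boldsymbol{\xi}_{i'}}|\leq O(\sigma_n^2\sqrt{pd\log d})$ of \Cref{lemma: noise_correlation} plays the role that $\norm{\boldsymbol{\mu}_j}_2^2=\mu^2$ played in the mild case and is what makes $T_1,T_2$ scale with $(pd)^{-q/2}$ and $(pd)^{-1}$ respectively.

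\textbf{Generalization lower bound and main obstacle.} For a fresh test pair $(\mathbf{x},y)$ with noise $\boldsymbol{\xi}$ independent of the training set, I would argue that with constant probability every logit $F_j(\widetilde{\mathbf{W}}^{(T)},\mathbf{x})$ is $o(1)$. The signal term vanishes when $j=y$ since $(\mathbf{m}_{j,r})_j=0$, and for $j\neq y$ it is bounded by $\widetilde{O}(\sigma_0\mu)=o(1)$ using $|\gamma_{j,r,y}^{(T)}|\leq\widetilde{O}(\sigma_0\mu)$. The noise term $\inprod{\widetilde{\mathbf{w}}^{(T)}_{j,r},\boldsymbol{\xi}}$, expanded through the signal-noise decomposition, splits into a zero-mean Gaussian of variance $O(\sigma_0^2\sigma_n^2 pd)=o(1)$ plus a sum $\sum_i(\zeta+\omega)_{j,r,i}^{(T)}\inprod{\widetilde{\boldsymbol{\xi}}_{j,r,i},\boldsymbol{\xi}}/\norm{\widetilde{\boldsymbol{\xi}}_{j,r,i}}_2^2$ which by Bernstein concentration over the fresh $\boldsymbol{\xi}$ is $\widetilde{O}(n/\sqrt{pd})=o(1)$ with high probability. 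A union bound over $(j,r)$ then forces all softmax logits into $[1/K-o(1),1/K+o(1)]$, so $\ell(F(\mathbf{x}),y)\geq \log K - o(1)$, and taking expectation yields $L_\mathcal{D}\geq\Omega(\log K)$. The main technical obstacle is exactly this last step: unlike the usual generalization upper bound, we need a \emph{lower} bound on the loss, which requires \emph{simultaneous} concentration of all $F_j(\mathbf{x})$ near zero, not merely a worst-case control. This forces careful bookkeeping of the dependence between the trained weights (which memorize all training noises $\boldsymbol{\xi}_i$) and the fresh noise $\boldsymbol{\xi}$, and is precisely where the threshold $p=\Theta(1/(Km\log d))$ becomes tight.
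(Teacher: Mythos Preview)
Your proposal is essentially correct and tracks the paper's own proof closely: the signal-free event, the two-phase analysis with $\zeta_{y_i,r,i}^{(t)}$ playing the role of $\gamma_{j,r,j}^{(t)}$, and the noise-direction reference point $\widetilde{\mathbf{W}}^\star$ all match the paper exactly.

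The one place where you work harder than necessary is the generalization lower bound. You propose to show that \emph{all} logits $F_j(\widetilde{\mathbf{W}}^{(T)},\mathbf{x})$ are $o(1)$ simultaneously, and you flag this simultaneous concentration as the ``main technical obstacle.'' The paper sidesteps this entirely: since every $F_j$ is a sum of polynomial-ReLU outputs, $F_j(\mathbf{x})\geq 0$ holds \emph{trivially} for every $j$, so it suffices to upper-bound only the correct-class logit $F_y(\mathbf{x})\leq 1$. For that, the paper simply bounds $\norm{\widetilde{\mathbf{w}}_{y,r}^{(T)}}_2 \leq O(\sigma_0\sqrt{d}) + \widetilde{O}(n\sigma_n^{-1}(pd)^{-1/2})$ from the signal-noise decomposition and applies a single Gaussian tail bound to $\inprod{\widetilde{\mathbf{w}}_{y,r}^{(T)},\boldsymbol{\xi}}$ over the fresh $\boldsymbol{\xi}$ (the signal part $\inprod{\widetilde{\mathbf{w}}_{y,r}^{(T)},\boldsymbol{\mu}_y}$ vanishes identically on the signal-free event). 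Then
\[
\ell(F(\widetilde{\mathbf{W}}^{(T)};\mathbf{x})) \;=\; \log\Bigl(1+\sum_{j\neq y} e^{F_j-F_y}\Bigr) \;\geq\; \log\bigl(1+(K-1)e^{-1}\bigr) \;=\; \Omega(\log K).
\]
So the ``obstacle'' you identify is not actually present; your more elaborate route would work, but the one-sided argument is both shorter and avoids the dependence bookkeeping you worry about.
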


The proof of \Cref{thm: main_thm_over_pruning_informal} consists of the analysis of the over-pruning for three stages of gradient descent: initialization, feature growing phase, and converging phase, and the establishment of the generalization property. We present these analysis in detail in the following subsections.

\subsection{Initialization}
\begin{lemma}
When $m = \poly \log d$ and $p = \Theta(\frac{1}{K m \log d})$, with probability $1 - O(1/\log d)$, for all class $j \in [K]$ we have $|\mathcal{S}_{\textnormal{signal}}^j| = 0$. 
\end{lemma}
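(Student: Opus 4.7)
The key observation is that because each signal vector $\boldsymbol{\mu}_j = \mu \mathbf{e}_j$ has only one nonzero coordinate (namely the $j$-th), the event $\boldsymbol{\mu}_j \odot \mathbf{m}_{j,r} \neq \mathbf{0}$ is equivalent to $(\mathbf{m}_{j,r})_j = 1$. Thus
\begin{align*}
    |\mathcal{S}_{\textnormal{signal}}^j| \;=\; \sum_{r=1}^m \mathbb{I}\bigl((\mathbf{m}_{j,r})_j = 1\bigr),
\end{align*}
which is a Binomial$(m,p)$ random variable depending only on $\{(\mathbf{m}_{j,r})_j\}_{r \in [m]}$. Since different classes $j \in [K]$ index disjoint rows of the mask, these Binomials are mutually independent across $j$.

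First I would compute, for a fixed class $j$, the probability that no neuron receives the signal:
\begin{align*}
    \Pr\bigl[|\mathcal{S}_{\textnormal{signal}}^j| = 0\bigr] \;=\; (1-p)^m.
\end{align*}
By independence across $j$,
\begin{align*}
    \Pr\bigl[|\mathcal{S}_{\textnormal{signal}}^j| = 0,\ \forall\, j \in [K]\bigr] \;=\; (1-p)^{Km}.
\end{align*}
Then I would apply Bernoulli's inequality $(1-p)^{Km} \geq 1 - Kmp$, valid since $p \in [0,1]$, to conclude
\begin{align*}
    \Pr\bigl[|\mathcal{S}_{\textnormal{signal}}^j| = 0,\ \forall\, j \in [K]\bigr] \;\geq\; 1 - K m p.
\end{align*}

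Finally I would plug in the assumed pruning fraction $p = \Theta\!\left(\frac{1}{K m \log d}\right)$ to obtain $K m p = \Theta\!\left(\frac{1}{\log d}\right)$, which yields the claimed bound $1 - O(1/\log d)$. There is no real technical obstacle here; the whole argument is a one-line calculation once the crucial structural observation (sparsity of $\boldsymbol{\mu}_j$ reduces the event to a single Bernoulli per neuron) is made. The only thing worth being careful about is making sure the independence across $j$ is used correctly, since the Bernoullis involve different entries of the mask $\mathbf{M}$ and so are genuinely independent rather than merely uncorrelated.
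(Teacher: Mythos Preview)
Your proof is correct and follows essentially the same approach as the paper: both compute $\Pr\bigl[|\mathcal{S}^j_{\textnormal{signal}}| = 0,\ \forall j\bigr] = (1-p)^{Km}$ and then lower-bound this by $1 - O(Kmp) = 1 - O(1/\log d)$. The only cosmetic difference is that you invoke Bernoulli's inequality directly, whereas the paper routes through $(1-p)^{Km} \geq \exp\{-O(pKm)\} \geq 1 - O(pKm)$.
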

\begin{proof}
First, the probability that a given class $j$ receives no signal is $(1 - p)^m$.
We use the inequality that
\begin{align*}
    1 + t \geq \exp\left\{ O(t) \right\} \quad \forall t \in (-1/4,1/4).
\end{align*}
Then the probability that $|\mathcal{S}^j_{\textnormal{signal}}| = 0,\ \forall j \in [K]$ is given by
\begin{align*}
    \left( 1 - p \right)^{Km} \geq \exp\left\{ - O\left(p K m \right) \right\} \geq 1 - O\left( \frac{1}{\log d} \right).
\end{align*}
\end{proof}

\subsection{Feature Growing Phase}
\begin{lemma}[Formal Restatement of \Cref{lemma: noise_mem_phase1_informal}]\label{lemma: noise_mem_phase1}
Under the same assumption as Theorem \ref{thm: main_thm_over_pruning}, there exists $T_1 < T^\star$ such that {$T_1 = O(\eta^{-1} n \sigma_0^{q-2} \sigma_n^{-q} (pd)^{-q/2})$} and we have
\begin{itemize}
    \item $\max_{r} \zeta_{y_i,r,i} \geq m^{-1/q}$ for all $i \in [n]$. 
    \item $\max_{j,r,i} |\omega_{j,r,i}^{(t)}| = \widetilde{O}(\sigma_0 \sigma_n \sqrt{pd})$.
    \item $\max_{j,r,k} |\gamma_{j,r,k}^{(t)}| \leq \widetilde{O}(\sigma_0 \mu)$.
\end{itemize}
\end{lemma}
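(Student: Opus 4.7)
The plan is to mirror the feature growing analysis of the mild pruning case (\Cref{lemma: phase1_main}), but with the roles of signal and noise interchanged. The crucial structural fact is that under the over pruning rate $p = \Theta(1/(Km\log d))$, the preceding initialization lemma gives $|\mathcal{S}_{\textnormal{signal}}^j| = 0$ for every $j \in [K]$ with probability $1 - O(1/\log d)$. By the update rule for $\gamma_{j,r,j}^{(t)}$ (which carries an indicator $\mathbb{I}(r \in \mathcal{S}_{\textnormal{signal}}^j)$), this immediately forces $\gamma_{j,r,j}^{(t)} = 0$ for all $t$, so no neuron can grow any correlation with its own signal. Consequently the network output $F_{y_i}(\widetilde{\mathbf{W}}^{(t)},\mathbf{x}_i)$ is driven entirely by noise coefficients, and $\ell_{y_i,i}'^{(t)} = \Theta(1)$ until some $\zeta_{y_i,r,i}^{(t)}$ reaches the threshold $m^{-1/q}$.

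Next I would set up a noise-side tensor power iteration analogous to the signal-side argument in \Cref{lemma: phase1_main}. Fix a sample $i$ and let $A^{(t)} := \max_{r} \hat{\zeta}_{y_i,r,i}^{(t)}$ where $\hat{\zeta}_{y_i,r,i}^{(t)} := \zeta_{y_i,r,i}^{(t)} + \langle \widetilde{\mathbf{w}}_{y_i,r}^{(0)}, \boldsymbol{\xi}_i\rangle$ minus the near-orthogonality slack $O(n\alpha\sqrt{\log d/(pd)})$ controlled by \Cref{lemma: weight_change_correlation,prop: coefficient_upper_lower_bound}. From \Cref{lemma: initialization_max_signal}, $A^{(0)} \geq \Omega(\sigma_0\sigma_n\sqrt{pd})$, and the $\zeta$ recursion gives
\begin{align*}
A^{(t+1)} \;\geq\; \Bigl(1 + \Theta\bigl(\tfrac{\eta}{n}\bigr)\,\sigma_n^2 pd\,[A^{(t)}]^{q-2}\Bigr)\, A^{(t)},
\end{align*}
so $A^{(t)}$ grows geometrically and exceeds $2m^{-1/q}$ within
\begin{align*}
T_1 \;=\; O\!\left(\frac{\log(2m^{-1/q})}{\log(1 + \eta\sigma_n^2 pd\,\sigma_0^{q-2}(\sigma_n\sqrt{pd})^{q-2}/n)}\right) \;=\; O\!\bigl(\eta^{-1} n\, \sigma_0^{2-q}\,(\sigma_n\sqrt{pd})^{-q}\bigr).
\end{align*}
Subtracting the initialization term and the cross-term slack (both $\widetilde{O}(\sigma_0\sigma_n\sqrt{pd}) \ll m^{-1/q}$ by our choice of $\sigma_0$) yields $\max_r \zeta_{y_i,r,i}^{(T_1)} \geq m^{-1/q}$.

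For the remaining bounds, I would run the same induction scheme used in the mild pruning proof: define $\Psi^{(t)} = \max_{j\neq y_i,\,r,i}|\omega_{j,r,i}^{(t)}|$ and $\Phi^{(t)} = \max_{j,r,k}|\gamma_{j,r,k}^{(t)}|$, and show by induction that $\Psi^{(t)} \leq C\sigma_0\sigma_n\sqrt{pd}\,\mathrm{polylog}(d)$ and $\Phi^{(t)} \leq C\sigma_0\mu\,\mathrm{polylog}(d)$ throughout $t \leq T_1$. The key point is that since $|\ell'_{j,i}| = \Theta(1/K)$ for $j \neq y_i$ (because $F_j$ stays $O(1)$ by \Cref{lemma: offdiagonal_upperbound}) and the growth rate of these quantities scales with $T_1 \cdot \eta \cdot (\text{activation})^{q-1}$, the slow initial scale ($\sigma_0\sigma_n\sqrt{pd}$ for noise, $\sigma_0\mu$ for signal cross-terms) keeps them bounded throughout the phase — the same telescoping argument as in the mild case applies verbatim, using \Cref{lemma: noise_correlation} for near-orthogonality of $\{\boldsymbol{\mu}_k\}\cup\{\widetilde{\boldsymbol{\xi}}_{j,r,i}\}$.

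The main obstacle I anticipate is verifying that even at the end of the phase, when one diagonal $\zeta_{y_i,r,i}^{(t)}$ has reached $m^{-1/q}$, the off-diagonal noise coefficients $\omega_{j,r,i}^{(t)}$ (with $j\neq y_i$) and cross-class signal coefficients $\gamma_{j,r,k}^{(t)}$ do not track this growth. This is handled by the extra factor $1/K$ coming from $|\ell'_{j,i}|$ for $j \neq y_i$ and by the fact that noise vectors are nearly orthogonal across samples (so off-diagonal contributions pick up $\sqrt{\log d/(pd)}$ suppression), combined with the $q$-homogeneous amplification — effectively the same separation-of-scales that drives \Cref{lemma: phase1_noise_upper,lemma: phase1_main}, only with signal and noise roles swapped.
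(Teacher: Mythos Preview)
Your proposal for the first bullet is essentially identical to the paper's proof: the paper defines $B_i^{(t)} = \max_{r}\{\zeta_{y_i,r,i}^{(t)} + \langle\widetilde{\mathbf{w}}_{y_i,r}^{(0)},\boldsymbol{\xi}_i\rangle - (\text{cross-term slack})\}$, verifies $B_i^{(0)} \geq \Omega(\sigma_0\sigma_n\sqrt{pd})$ via \Cref{lemma: initialization_max_signal}, and derives the same geometric recursion $B_i^{(t+1)} \geq \bigl(1+\Theta(\tfrac{\eta}{n})\sigma_n^2 pd\,[B_i^{(t)}]^{q-2}\bigr)B_i^{(t)}$, giving the claimed $T_1$.

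For the second and third bullets, however, the paper does \emph{not} rerun a phase-specific induction in the style of \Cref{lemma: phase1_noise_upper}. It simply invokes \Cref{prop: coefficient_upper_lower_bound}, which already gives $|\omega_{j,r,i}^{(t)}| \leq \beta + 6Cn\alpha\sqrt{\log d/(pd)} = \widetilde{O}(\sigma_0\sigma_n\sqrt{pd})$ and $|\gamma_{j,r,k}^{(t)}| \leq \beta + 2Cn\alpha\,\mu\sqrt{\log d}/(\sigma_n pd) = \widetilde{O}(\sigma_0\mu)$ uniformly over all $t \leq T^\star$. Your proposed induction would also succeed, but it duplicates work that the global proposition has already done. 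One conceptual correction: the mechanism keeping $\omega_{j,r,i}^{(t)}$ small is not primarily the $1/K$ factor in $|\ell'_{j,i}|$ that you emphasize; it is the ReLU cutoff. Once $\langle\widetilde{\mathbf{w}}_{j,r}^{(t)},\boldsymbol{\xi}_i\rangle$ drops below zero (which happens as soon as $\omega_{j,r,i}^{(t)}$ dips past $-0.5\beta$ minus the slack), $\sigma'$ vanishes and the update freezes. This is exactly what \Cref{lemma: coefficient_lower_bound} exploits, and it is why the bound holds for all $t\leq T^\star$, not just up to $T_1$.
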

\begin{proof}
First of all, recall that from Definition \ref{def:signal_noise_decomp} we have for $j = y_i$
\begin{align*}
    &\inprod{\widetilde{\mathbf{w}}_{j,r}^{(t)}, \boldsymbol{\xi}_i} \\
    &= \inprod{\widetilde{\mathbf{w}}_{j,r}^{(0)}, \boldsymbol{\xi}_i} + \zeta_{j,r,i}^{(t)} + \sum_{k \neq j} \gamma_{j,r,k}^{(t)} \frac{\inprod{\boldsymbol{\mu}_k, \widetilde{\boldsymbol{\xi}}_{j,r,i}}}{\mu^2} + \sum_{i' \neq i} \zeta_{j,r,i}^{(t)} \frac{\inprod{\widetilde{\boldsymbol{\xi}}_{j,r,i'}, \boldsymbol{\xi}_i}}{\norm{\widetilde{\boldsymbol{\xi}}_{j,r,i'}}_2^2} + \sum_{i'=1}^n \omega_{j,r,i}^{(t)} \frac{\inprod{\widetilde{\boldsymbol{\xi}}_{j,r,i'}, \boldsymbol{\xi}_i}}{\norm{\widetilde{\boldsymbol{\xi}}_{j,r,i'}}_2^2}.
\end{align*}
Let 
\begin{align*}
    B_i^{(t)} = \max_{j = y_i, r} \left\{\zeta_{j,r,i}^{(t)} + \inprod{\widetilde{\mathbf{w}}_{j,r}^{(0)}, \boldsymbol{\xi}_i} - O(n \log^{1/q} T^\star \sqrt{\frac{\log d}{pd}}) - O(n \sigma_0 \sigma_n \sqrt{pd} \sqrt{\frac{\log d}{pd}}) \right\}.
\end{align*}
Since $\max_{j = y_i, r} \inprod{\widetilde{\mathbf{w}}_{j,r}^{(0)}, \boldsymbol{\xi}_i} \geq \Omega(\sigma_0 \sigma_n \sqrt{pd})$, we have
\begin{align*}
    B_i^{(0)} \geq \Omega(\sigma_0 \sigma_n \sqrt{pd}) - O(n \log^{1/q} T^\star \sqrt{\frac{\log d}{pd}}) - O(n \sigma_0 \sigma_n \sqrt{pd} \sqrt{\frac{\log d}{pd}}) \geq  \Omega(\sigma_0 \sigma_n \sqrt{pd}) .
\end{align*}
Let $T_i$ to be the last time that $\zeta_{j,r,i}^{(t)} \leq m^{-1/q}$.
We can compute the growth of $B_i^{(t)}$ as
\begin{align*}
    B_i^{(t+1)} &\geq B_i^{(t)} + \Theta(\frac{\eta \sigma_n^2 pd }{n})[B_i^{(t)}]^{q-1} \\
    &\geq B_i^{(t)} + \Theta(\frac{\eta \sigma_n^2 pd }{n})[B_i^{(0)}]^{q-2} B_i^{(t)} \\
    &\geq \left( 1 + \Theta\left( \frac{\eta \sigma_0^{q-2} \sigma_n^q p^{q/2} d^{q/2}}{n} \right) \right)B_i^{(t)} .
\end{align*}
Therefore, $B_i^{(t)}$ will reach $2m^{-1/q}$ within $\widetilde{O}(\eta^{-1} n \sigma_0^{q-2} \sigma_n^{-q} (pd)^{-q/2})$ iterations. 

On the other hand, by Proposition \ref{prop: coefficient_upper_lower_bound}, we have $|\omega_{j,r,i}^{(t)}| \leq \beta + 6Cn\alpha \sqrt{\frac{\log d}{pd}} = O(\sigma_0 \sigma_n \sqrt{pd \log d})$. 
\end{proof}

\subsection{Converging Phase}
From the first stage we know that
\begin{align*}
    \widetilde{\mathbf{w}}_{j,r}^{(T_1)} = \widetilde{\mathbf{w}}_{j,r}^{(0)} + + \sum_{k \neq j} \gamma_{j,r,k}^{(t)} \frac{\boldsymbol{\mu}_k \odot \mathbf{m}_{j,r}}{\mu^2} + \sum_{i=1}^n \zeta_{j,r,i}^{(T_1)} \frac{\widetilde{\boldsymbol{\xi}}_{j,r,i}}{\norm{\widetilde{\boldsymbol{\xi}}_{j,r,i}}_2^2} + \sum_{i=1}^n \omega_{j,r,i}^{(T_1)} \frac{\widetilde{\boldsymbol{\xi}}_{j,r,i}}{\norm{\widetilde{\boldsymbol{\xi}}_{j,r,i}}_2^2}.
\end{align*}
Now we define $\widetilde{\mathbf{W}}^\star$ as follows:
\begin{align*}
    \widetilde{\mathbf{w}}_{j,r}^\star = \widetilde{\mathbf{w}}_{j,r}^{(0)} + \Theta(m \log (1/\eps)) \left[ \sum_{i=1}^n \mathbb{I}(j = y_i) \frac{\widetilde{\boldsymbol{\xi}}_{j,r,i}}{\norm{\widetilde{\boldsymbol{\xi}}_{j,r,i}}_2^2} \right].
\end{align*}

\begin{lemma}\label{lemma: weight_change_upper_noise_mem}
Based on the result from feature growing phase, $\norm{\widetilde{\mathbf{W}}^{(T_1)} - \widetilde{\mathbf{W}}^\star}_F \leq O(m^2 n^{1/2} \log (1/\eps) \sigma_n^{-1} (pd)^{-1/2})$. 
\end{lemma}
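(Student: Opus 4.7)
The plan is to apply the triangle inequality to split
\[
    \|\widetilde{\mathbf{W}}^{(T_1)} - \widetilde{\mathbf{W}}^\star\|_F \le \|\widetilde{\mathbf{W}}^{(T_1)} - \widetilde{\mathbf{W}}^{(0)}\|_F + \|\widetilde{\mathbf{W}}^\star - \widetilde{\mathbf{W}}^{(0)}\|_F,
\]
and then bound the two pieces separately, mirroring the argument of \Cref{lemma: W_T1_upper} in the mild-pruning case but with the signal-learning reference replaced by the noise-memorization reference $\widetilde{\mathbf{W}}^\star$ defined immediately before this lemma. The key inputs will be the signal-noise decomposition in \Cref{def:signal_noise_decomp}, the norm estimates $\|\boldsymbol{\mu}_k\odot\mathbf{m}_{j,r}\|_2\le\mu$ and $\|\widetilde{\boldsymbol{\xi}}_{j,r,i}\|_2=\Theta(\sigma_n\sqrt{pd})$ from \Cref{lemma: noise_correlation}, and the coefficient bounds from \Cref{prop: coefficient_upper_lower_bound} and \Cref{lemma: noise_mem_phase1}.

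For the first piece I would expand $\widetilde{\mathbf{w}}_{j,r}^{(T_1)} - \widetilde{\mathbf{w}}_{j,r}^{(0)}$ using \Cref{def:signal_noise_decomp}, noting that in the over-pruning regime $\gamma_{j,r,j}^{(T_1)} = 0$ because each class loses its signal coordinate to pruning with high probability. The triangle inequality inside each neuron, combined with $|\gamma_{j,r,k}^{(T_1)}|\le\widetilde{O}(\sigma_0\mu)$, $\zeta_{j,r,i}^{(T_1)}\le\alpha=\widetilde{O}(1)$, and $|\omega_{j,r,i}^{(T_1)}|\le\widetilde{O}(\sigma_0\sigma_n\sqrt{pd})$, yields a per-neuron bound dominated by the $\zeta$-contribution, of order $\widetilde{O}(n/(\sigma_n\sqrt{pd}))$. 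Squaring and summing over the $Km$ neurons gives
\[
    \|\widetilde{\mathbf{W}}^{(T_1)} - \widetilde{\mathbf{W}}^{(0)}\|_F^2 \;\le\; \widetilde{O}\!\left(\frac{K m n^2}{\sigma_n^2 pd}\right),
\]
which is of strictly lower order than the stated target (absorbed by the loose $m^2$ prefactor). For the second piece, the definition gives directly
\[
    \widetilde{\mathbf{w}}_{j,r}^\star - \widetilde{\mathbf{w}}_{j,r}^{(0)} \;=\; \Theta(m\log(1/\eps))\sum_{i:\, y_i=j}\frac{\widetilde{\boldsymbol{\xi}}_{j,r,i}}{\|\widetilde{\boldsymbol{\xi}}_{j,r,i}\|_2^2},
\]
so a per-neuron triangle inequality together with $|\{i:y_i=j\}|=\Theta(n/K)$ and $\|\widetilde{\boldsymbol{\xi}}_{j,r,i}\|_2=\Theta(\sigma_n\sqrt{pd})$ bounds each $\|\widetilde{\mathbf{w}}_{j,r}^\star - \widetilde{\mathbf{w}}_{j,r}^{(0)}\|_2$ by $O(mn\log(1/\eps)/(K\sigma_n\sqrt{pd}))$. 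Squaring and summing over the $Km$ neurons produces the dominant contribution, matching the claimed bound up to polynomial factors in $m,K,n$ that are absorbed into the loose $m^2n^{1/2}$ prefactor.

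The main obstacle is bookkeeping rather than a conceptual hurdle: one has to verify that the choice $\sigma_0 = \widetilde{\Theta}(m^{-4}n^{-1}\mu^{-1})$ makes both $\sigma_0\mu$ and $\sigma_0\sigma_n\sqrt{pd}$ negligible compared to $1/(\sigma_n\sqrt{pd})$, so that the $\gamma$- and $\omega$-contributions in the first piece are indeed lower-order and the final bound is driven entirely by the reference-shift piece. A tighter alternative would exploit the approximate orthogonality of $\{\widetilde{\boldsymbol{\xi}}_{j,r,i}\}_i$ via \Cref{lemma: noise_correlation}, which would replace the linear-in-$n$ factor by $\sqrt{n/K}$ in each neuron and sharpen the bound by a factor of $\sqrt{n/K}$; this is not needed for the stated order but is worth noting since the analogous argument is used in the generalization-analysis step later in the paper.
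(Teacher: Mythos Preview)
Your decomposition via the triangle inequality and the subsequent per-neuron expansion through \Cref{def:signal_noise_decomp} is exactly the route the paper takes, and your identification of which coefficient bounds feed in (from \Cref{prop: coefficient_upper_lower_bound} and \Cref{lemma: noise_mem_phase1}) is correct.

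The one substantive discrepancy is the $n^{1/2}$ factor. The paper does \emph{not} obtain it by absorbing a crude triangle-inequality bound into a loose prefactor; it obtains it directly by invoking the near-orthogonality estimates of \Cref{lemma: noise_correlation} when bounding $\bigl\|\sum_i \zeta_{j,r,i}^{(T_1)}\,\widetilde{\boldsymbol{\xi}}_{j,r,i}/\|\widetilde{\boldsymbol{\xi}}_{j,r,i}\|_2^2\bigr\|_2$ and the analogous sum in $\widetilde{\mathbf{w}}_{j,r}^\star-\widetilde{\mathbf{w}}_{j,r}^{(0)}$. Your per-neuron triangle inequality produces a factor of $n$ rather than $\sqrt{n}$, and the resulting Frobenius bound for the dominant second piece is of order $m^{3/2} n \log(1/\eps)/(\sqrt{K}\,\sigma_n\sqrt{pd})$. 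This is absorbed into the stated $O(m^2 n^{1/2}\ldots)$ bound only if $n \le Km$, which \Cref{condition: params} does not guarantee (both $n$ and $m$ are merely $\poly\log d$, with no ordering assumed). So your claim that the near-orthogonality argument ``is not needed for the stated order'' is not justified; the paper uses precisely the refinement you describe in your last paragraph, and you should promote it from an aside to the main argument to actually land on the $n^{1/2}$ exponent.
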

\begin{proof}
We derive the following bound:
\begin{align*}
    & \norm{\widetilde{\mathbf{W}}^{(T_1)} - \widetilde{\mathbf{W}}^\star}_F \\
    &\leq \norm{\widetilde{\mathbf{W}}^{(T_1)} - \widetilde{\mathbf{W}}^{(0)}}_F + \norm{\widetilde{\mathbf{W}}^{(0)} - \widetilde{\mathbf{W}}^\star}_F \\
    &\leq \sum_{j,r} \left( \norm{\sum_{k \neq j} \gamma_{j,r,k}^{(t)} \frac{\boldsymbol{\mu}_k}{\mu^2}}_2 + \norm{\sum_{i=1}^n \zeta_{j,r,i}^{(T_1)} \frac{\widetilde{\boldsymbol{\xi}}_{j,r,i}}{\norm{\widetilde{\boldsymbol{\xi}}_{j,r,i}}^2_2}}_2 + \norm{\sum_{i=1}^n \omega_{j,r,i}^{(T_1)} \frac{\widetilde{\boldsymbol{\xi}}_{j,r,i}}{\norm{\widetilde{\boldsymbol{\xi}}_{j,r,i}}^2_2}}_2 \right) + \Theta(m^2 n^{1/2} \log (1/\eps) \sigma_n^{-1} (pd)^{-1/2}) \\
    &\leq Km (O(\sqrt{K} \sigma_0) + O(n^{1/2} \sigma_n^{-1} (pd)^{-1/2} \log^{1/q} T^\star)) + \widetilde{O}(m^2 n^{1/2} \log (1/\eps) \sigma_n^{-1} (pd)^{-1/2}) \\
    &\leq \widetilde{O}(m^2 n^{1/2} \log (1/\eps) \sigma_n^{-1} (pd)^{-1/2}),
\end{align*}
where the first inequality follows from triangle inequality, the second inequality follows from the expression of $\mathbf{W}^{(T_1)}, \mathbf{W}^\star$, and the third inequality follows from Lemma \ref{lemma: noise_correlation} and the fact that $\zeta_{j,r,i}^{(t)} > 0$ if and only if $j = y_i$.
\end{proof}

\begin{lemma}
For $T_1 \leq t \leq T^\star$, we have
\begin{align*}
    \inprod{\nabla F_{y_i}(\widetilde{\mathbf{W}}_{y_i}, \mathbf{x}_i), \widetilde{\mathbf{W}}_{y_i}^\star} - \inprod{\nabla F_j(\widetilde{\mathbf{W}}_j, \mathbf{x}_i), \widetilde{\mathbf{W}}^\star_j} \geq q \log \frac{2qK}{\eps}.
\end{align*}
\end{lemma}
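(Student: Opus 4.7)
The plan is to mirror the proof of Lemma \ref{lemma: cross_difference_lower_bound} but with the roles of signal and noise swapped, since in the over-pruning regime the ``good direction'' that the network has learned is noise memorization rather than signal recovery. Specifically, I would split the bound into a lower bound on $\langle \nabla F_{y_i}(\widetilde{\mathbf{W}}_{y_i}^{(t)}, \mathbf{x}_i), \widetilde{\mathbf{W}}_{y_i}^\star\rangle$ of order $\Theta(m^{1/q}\log(1/\eps))$ and an $O(1)$ upper bound on $\langle \nabla F_j(\widetilde{\mathbf{W}}_j^{(t)}, \mathbf{x}_i), \widetilde{\mathbf{W}}_j^\star\rangle$ for $j \neq y_i$; combined with the condition $\log(1/\eps) \gg \log(2qK/\eps)$ and $m \geq \Omega(\polylog d)$, this yields the claimed gap of $q \log(2qK/\eps)$.

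For the lower bound, I would expand $\nabla F_{y_i}(\widetilde{\mathbf{W}}_{y_i}^{(t)}, \mathbf{x}_i) = \sum_r [\sigma'(\langle \widetilde{\mathbf{w}}_{y_i,r}^{(t)}, \boldsymbol{\mu}_{y_i}\rangle)\boldsymbol{\mu}_{y_i} + \sigma'(\langle \widetilde{\mathbf{w}}_{y_i,r}^{(t)}, \boldsymbol{\xi}_i\rangle)\boldsymbol{\xi}_i]$ and note that in over pruning the signal term is essentially absent (because $\gamma_{y_i,r,y_i}^{(t)} = 0$ and the initial signal correlation is tiny), so all the mass comes from the noise branch. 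From \Cref{lemma: noise_mem_phase1} there is at least one index $r^\star$ with $\zeta_{y_i,r^\star,i}^{(t)} \geq \Omega(m^{-1/q})$, giving $\langle \widetilde{\mathbf{w}}_{y_i,r^\star}^{(t)}, \boldsymbol{\xi}_i\rangle = \Theta(m^{-1/q})$ after absorbing lower-order cross terms via \Cref{lemma: weight_change_correlation}. Meanwhile $\langle \boldsymbol{\xi}_i, \widetilde{\mathbf{w}}_{y_i,r^\star}^\star\rangle = \Theta(m\log(1/\eps))$ because the $i' = i$ term in the definition of $\widetilde{\mathbf{w}}^\star_{y_i,r^\star}$ contributes $\Theta(m\log(1/\eps))$ exactly, while all off-diagonal noise-noise correlations $\langle \widetilde{\boldsymbol{\xi}}_{y_i,r^\star,i'},\boldsymbol{\xi}_i\rangle / \|\widetilde{\boldsymbol{\xi}}_{y_i,r^\star,i'}\|_2^2$ with $i' \neq i$ are $O(\sqrt{\log d/(pd)})$ by \Cref{lemma: noise_correlation}, and the initialization contribution $\langle \widetilde{\mathbf{w}}^{(0)}_{y_i,r^\star},\boldsymbol{\xi}_i\rangle$ is $O(\sigma_0 \sigma_n \sqrt{pd\log d})$. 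Multiplying the two gives $\Omega(m^{1/q}\log(1/\eps))$ as desired.

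For the upper bound with $j \neq y_i$, the key observation is that $\widetilde{\mathbf{w}}_{j,r}^\star$ only contains the noise vectors $\widetilde{\boldsymbol{\xi}}_{j,r,i'}$ with $y_{i'} = j \neq y_i$, so the inner products $\langle \boldsymbol{\mu}_{y_i}, \widetilde{\mathbf{w}}_{j,r}^\star\rangle$ and $\langle \boldsymbol{\xi}_i, \widetilde{\mathbf{w}}_{j,r}^\star\rangle$ pick up only off-diagonal $O(\mu\sqrt{\log d}/(\sigma_n pd))$ and $O(\sqrt{\log d/(pd)})$ terms respectively (after being multiplied by $m\log(1/\eps)$). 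These remain $O(1)$ under Condition \ref{condition: params}. Further, the activation derivatives $\sigma'(\langle \widetilde{\mathbf{w}}_{j,r}^{(t)}, \boldsymbol{\mu}_{y_i}\rangle)$ and $\sigma'(\langle \widetilde{\mathbf{w}}_{j,r}^{(t)}, \boldsymbol{\xi}_i\rangle)$ are themselves $O(1)$ by \Cref{lemma: offdiagonal_upperbound} (which applies because $\gamma_{j,r,k}^{(t)}$ and $\omega_{j,r,i}^{(t)}$ are controlled by \Cref{prop: coefficient_upper_lower_bound}). Summing the $m$ neurons absorbs cleanly into the overall $O(1)$ bound.

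The main obstacle will be carefully tracking the cross terms in the expansion of $\langle \boldsymbol{\xi}_i, \widetilde{\mathbf{w}}_{y_i,r^\star}^\star\rangle$: while the leading $i' = i$ diagonal noise-noise term is explicitly $\Theta(m\log(1/\eps))$, the sum over $i' \neq i$ with $y_{i'} = y_i$ could in principle be as large as $\Theta(n/K)$ terms, each of size $O(\sqrt{\log d/(pd)})\cdot m\log(1/\eps)$, so one must verify that $(n/K)\sqrt{\log d/(pd)} \ll 1$ to conclude the leading term dominates. This is where the specific scaling $p = \Theta(1/(Km\log d))$ with $d$ large and $n = \poly\log d$ must be invoked, analogous to how \Cref{lemma: F_hat_signal_lower} used the scaling $\mu = \Theta(\sigma_n\sqrt{d}\log d)$ to kill its cross terms.
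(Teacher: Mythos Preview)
Your proposal is correct and follows essentially the same route as the paper: split into a lower bound $\inprod{\nabla F_{y_i}(\widetilde{\mathbf{W}}_{y_i}^{(t)}, \mathbf{x}_i), \widetilde{\mathbf{W}}_{y_i}^\star} \geq \Theta(m^{1/q}\log(1/\eps))$ using the Phase~1 noise-memorization result and the fact that $\inprod{\widetilde{\boldsymbol{\xi}}_{j,r,i}, \widetilde{\mathbf{w}}_{j,r}^\star} = \Theta(m\log(1/\eps))$, and an upper bound $\inprod{\nabla F_j(\widetilde{\mathbf{W}}_j^{(t)}, \mathbf{x}_i), \widetilde{\mathbf{W}}_j^\star} \leq O(1)$ for $j \neq y_i$ using the off-diagonal noise-noise correlation bounds from \Cref{lemma: noise_correlation}. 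The only minor difference is that the paper's written proof silently drops the signal branch $\sigma'(\inprod{\widetilde{\mathbf{w}}_{j,r}^{(t)}, \boldsymbol{\mu}_{y_i}})\inprod{\boldsymbol{\mu}_{y_i}, \widetilde{\mathbf{w}}_{j,r}^\star}$ from the gradient expansion (implicitly using that in over-pruning these signal contributions are negligible), whereas you explicitly track both branches; your version is slightly more complete but otherwise identical in spirit.
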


\begin{lemma}
For $T_1 \leq t \leq T^\star$ and $j = y_i$, we have
\begin{align*}
    \inprod{\nabla F_j(\widetilde{\mathbf{W}}_j^{(t)}, \mathbf{x}_i), \widetilde{\mathbf{W}}^\star_j} \geq \Theta(m^{1/q} \log (1/\eps)).
\end{align*}
\end{lemma}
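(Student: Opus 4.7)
The plan is to expand $\inprod{\nabla F_j(\widetilde{\mathbf{W}}_j^{(t)}, \mathbf{x}_i), \widetilde{\mathbf{W}}_j^\star}$ into a signal piece and a noise piece, show that in the over-pruning regime the signal piece is negligible, and extract the stated lower bound from the noise piece by keeping only a single ``winning'' neuron $r^\star$ that has memorized $\boldsymbol{\xi}_i$. Concretely I would write
\begin{align*}
\inprod{\nabla F_j(\widetilde{\mathbf{W}}_j^{(t)}, \mathbf{x}_i), \widetilde{\mathbf{W}}_j^\star}
&= \sum_r \sigma'\!\left(\inprod{\widetilde{\mathbf{w}}_{j,r}^{(t)}, \boldsymbol{\mu}_{y_i}}\right)\inprod{\boldsymbol{\mu}_{y_i}\odot\mathbf{m}_{j,r},\widetilde{\mathbf{w}}_{j,r}^\star} \\
&\quad + \sum_r \sigma'\!\left(\inprod{\widetilde{\mathbf{w}}_{j,r}^{(t)}, \boldsymbol{\xi}_i}\right)\inprod{\widetilde{\boldsymbol{\xi}}_{j,r,i}, \widetilde{\mathbf{w}}_{j,r}^\star},
\end{align*}
and treat the two summations separately.

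First, for the noise piece I would substitute the over-pruning definition of $\widetilde{\mathbf{w}}_{j,r}^\star$. The diagonal index $i'=i$ (which contributes to the sum because $j=y_i$) gives exactly $\Theta(m\log(1/\eps))$ to $\inprod{\widetilde{\boldsymbol{\xi}}_{j,r,i},\widetilde{\mathbf{w}}_{j,r}^\star}$. The initialization term is $\widetilde{O}(\sigma_0\sigma_n\sqrt{pd})$ by a standard Gaussian tail bound, and by \Cref{lemma: noise_correlation} each off-diagonal ratio $\inprod{\widetilde{\boldsymbol{\xi}}_{j,r,i},\widetilde{\boldsymbol{\xi}}_{j,r,i'}}/\norm{\widetilde{\boldsymbol{\xi}}_{j,r,i'}}_2^2$ is $\widetilde{O}(1/\sqrt{pd})$; summed over the $\Theta(n/K)$ class-$j$ samples and multiplied by $m\log(1/\eps)$ these are lower order under \Cref{condition: params}. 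So $\inprod{\widetilde{\boldsymbol{\xi}}_{j,r,i},\widetilde{\mathbf{w}}_{j,r}^\star}\ge \Theta(m\log(1/\eps))$ for every $r$. Since $\sigma'\ge 0$, I may drop all but a single neuron $r^\star$. By \Cref{lemma: noise_mem_phase1} I can choose $r^\star$ with $\zeta_{j,r^\star,i}^{(t)}\ge m^{-1/q}$, and by \Cref{lemma: weight_change_correlation} together with \Cref{prop: coefficient_upper_lower_bound} (exactly as in the bound \Cref{eq: noise_upper_intermediate} of the mild case, which carries over) this gives $\inprod{\widetilde{\mathbf{w}}_{j,r^\star}^{(t)},\boldsymbol{\xi}_i}\ge \Omega(m^{-1/q})$, so $\sigma'(\cdot)\ge q\,\Omega(m^{-(q-1)/q})$. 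Multiplying yields the required $\Omega(m^{-(q-1)/q})\cdot \Theta(m\log(1/\eps)) = \Theta(m^{1/q}\log(1/\eps))$.

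For the signal piece, on the high-probability event of the over-pruning initialization section we have $|\mathcal{S}_\textnormal{signal}^j|=0$, so $(\mathbf{m}_{j,r})_{y_i}=0$ for every $r$ and the entire signal sum is identically zero. Even discarding that event, $\inprod{\boldsymbol{\mu}_{y_i}\odot\mathbf{m}_{j,r},\widetilde{\mathbf{w}}_{j,r}^\star}$ involves only the initial weight plus inner products $\inprod{\boldsymbol{\mu},\widetilde{\boldsymbol{\xi}}}/\norm{\widetilde{\boldsymbol{\xi}}}_2^2$ bounded via \Cref{lemma: noise_correlation} as $\widetilde{O}(\mu/(\sigma_n pd))$, so the whole quantity is $o(1)$, while $\sigma'(\inprod{\widetilde{\mathbf{w}}_{j,r}^{(t)},\boldsymbol{\mu}_{y_i}})=O(1)$ by \Cref{eq: important_simplification}; thus the signal sum is $o(m^{1/q}\log(1/\eps))$ and cannot cancel the noise lower bound. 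The main obstacle is the last step of the first paragraph: keeping the accumulated off-diagonal Gaussian cross-terms under control when $p$ is small, since we need $n/\sqrt{pd}=o(1)$ for the $i'\neq i$ interactions not to swamp the diagonal $\Theta(m\log(1/\eps))$. Condition \ref{condition: params} provides enough slack ($n=\poly\log d$ while $pd$ is still polynomial in $d$ even after over-pruning), so this reduces to arithmetic that mirrors the proof of \Cref{lemma: F_hat_signal_lower} in the mild case, with the coefficient $\zeta_{j,r^\star,i}^{(t)}$ now playing the role that $\gamma_{j,r^\star,j}^{(t)}$ played there.
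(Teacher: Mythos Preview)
Your proposal is correct and follows essentially the same route as the paper. The paper's proof is terser: it silently drops the signal term (because in the over-pruning regime $|\mathcal{S}^j_{\textnormal{signal}}|=0$, so $\inprod{\widetilde{\mathbf{w}}_{j,r}^{(t)},\boldsymbol{\mu}_{y_i}}=0$ for every $r$ and the signal summand vanishes identically), quotes $\inprod{\widetilde{\boldsymbol{\xi}}_{j,r,i},\widetilde{\mathbf{w}}_{j,r}^\star}=\Theta(m\log(1/\eps))$ from \Cref{lemma: noise_correlation}, lower-bounds $\max_r\inprod{\widetilde{\mathbf{w}}_{j,r}^{(t)},\boldsymbol{\xi}_i}\ge \Theta(m^{-1/q})$ via \Cref{lemma: noise_mem_phase1}, and concludes in one line; your explicit treatment of the signal piece and of the off-diagonal cross terms is just a more careful write-up of the same argument.
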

\begin{proof}
By Lemma \ref{lemma: noise_correlation}, we have $\inprod{\widetilde{\boldsymbol{\xi}}_{j,r,i}, \widetilde{\mathbf{w}}_{j,r}^\star} = \Theta(m \log (1/\eps))$ and by Lemma \ref{lemma: noise_mem_phase1} for $j = y_i$, $\max_r \inprod{\widetilde{\mathbf{w}}_{j,r}^{(t)}, \boldsymbol{\xi}_i} \geq \max_{r} \zeta_{j,r,i} - \max_r \inprod{\widetilde{\mathbf{w}}^{(0)}_{j,r}, \boldsymbol{\xi}_i} - O(n \sqrt{\frac{\log d}{d}}\alpha) \geq \Theta(m^{-1/q})$. Then we have
\begin{align*}
    \inprod{\nabla F_j(\widetilde{\mathbf{W}}_j^{(t)}, \mathbf{x}_i), \widetilde{\mathbf{W}}_j^\star} &= \sum_{r=1}^m \sigma'\left( \inprod{\widetilde{\mathbf{w}}^{(t)}_{j,r}, \boldsymbol{\xi}_i} \right) \inprod{\widetilde{\boldsymbol{\xi}}_{j,r,i}, \widetilde{\mathbf{w}}_{j,r}^\star} \\
    &\geq \Theta(m^{1/q} \log (1/\eps)) .
\end{align*}
\end{proof}
\begin{lemma}
For $T_1 \leq t \leq T^\star$ and $j \neq y_i$, we have
\begin{align*}
    \inprod{\nabla F_j(\widetilde{\mathbf{W}}_j^{(t)}, \mathbf{x}_i), \widetilde{\mathbf{W}}^\star_j} \leq O(1).
\end{align*}
\end{lemma}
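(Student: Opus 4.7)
The plan is to mirror the argument of Lemma \ref{lemma: F_hat_nonsignal_upper} from the mild-pruning converging phase, but with the over-pruning reference weights $\widetilde{\mathbf{w}}_{j,r}^\star = \widetilde{\mathbf{w}}_{j,r}^{(0)} + \Theta(m\log(1/\eps))\sum_{i'=1}^n \mathbb{I}(j = y_{i'}) \widetilde{\boldsymbol{\xi}}_{j,r,i'}/\norm{\widetilde{\boldsymbol{\xi}}_{j,r,i'}}_2^2$, which memorize noise rather than signal. First I would expand
\begin{align*}
\inprod{\nabla F_j(\widetilde{\mathbf{W}}_j^{(t)}, \mathbf{x}_i), \widetilde{\mathbf{W}}_j^\star}
&= \sum_r \sigma'\!\bigl(\inprod{\widetilde{\mathbf{w}}_{j,r}^{(t)}, \boldsymbol{\mu}_{y_i}}\bigr)\inprod{\boldsymbol{\mu}_{y_i}, \widetilde{\mathbf{w}}_{j,r}^\star}\\
&\quad + \sum_r \sigma'\!\bigl(\inprod{\widetilde{\mathbf{w}}_{j,r}^{(t)}, \boldsymbol{\xi}_i}\bigr)\inprod{\boldsymbol{\xi}_i, \widetilde{\mathbf{w}}_{j,r}^\star},
\end{align*}
and bound the four scalar ingredients separately, reusing machinery already established in the appendix.

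For the two $\sigma'$-arguments, since $j \neq y_i$, Lemma \ref{lemma: offdiagonal_upperbound} (which is valid throughout $t < T^\star$ and hence on $[T_1, T^\star]$) gives $|\inprod{\widetilde{\mathbf{w}}_{j,r}^{(t)}, \boldsymbol{\mu}_{y_i}}| \le \widetilde{O}(\sigma_0\mu)$ and $|\inprod{\widetilde{\mathbf{w}}_{j,r}^{(t)}, \boldsymbol{\xi}_i}| \le \widetilde{O}(\sigma_0\sigma_n\sqrt{pd})$ up to lower-order correction, so both $\sigma'$ factors are $\widetilde{O}((\sigma_0\mu)^{q-1})$ and $\widetilde{O}((\sigma_0\sigma_n\sqrt{pd})^{q-1})$ respectively. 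For $\inprod{\boldsymbol{\mu}_{y_i}, \widetilde{\mathbf{w}}_{j,r}^\star}$, the initialization piece is $\widetilde{O}(\sigma_0\mu)$ by the standard Gaussian tail bound, and the memorization piece is $\Theta(m\log(1/\eps)) \cdot n \cdot O(\sigma_n\mu\sqrt{\log d}/(\sigma_n^2 pd)) = \widetilde{O}(mn\log(1/\eps)\mu/(\sigma_n pd))$ by Lemma \ref{lemma: noise_correlation}. For $\inprod{\boldsymbol{\xi}_i, \widetilde{\mathbf{w}}_{j,r}^\star}$, the crucial observation is that the sum defining $\widetilde{\mathbf{w}}_{j,r}^\star$ runs over $i'$ with $y_{i'} = j$, and $j \neq y_i$ forces every such $i'$ to be distinct from $i$; thus the off-diagonal estimate $|\inprod{\widetilde{\boldsymbol{\xi}}_{j,r,i'}, \boldsymbol{\xi}_i}| \le O(\sigma_n^2\sqrt{pd\log d})$ from Lemma \ref{lemma: noise_correlation} applies, producing a contribution $\widetilde{O}(mn\log(1/\eps)\sqrt{\log d/(pd)})$ on top of the initialization piece $\widetilde{O}(\sigma_0\sigma_n\sqrt{pd})$.

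Multiplying each $\sigma'$-factor against its matching inner product, summing over the $m$ neurons, and plugging in the over-pruning parameter choices from Condition \ref{condition: params} together with $\sigma_0 = \widetilde{\Theta}(m^{-4}n^{-1}\mu^{-1})$, $\mu = \Theta(1)$, $p = \Theta(1/(Km\log d))$, and $n, m, K = \polylog(d)$, each contribution collapses to at most a $\polylog(d)/\poly(d)$ expression, so the total is $O(1)$. The main obstacle I anticipate is the bound on $\inprod{\boldsymbol{\xi}_i, \widetilde{\mathbf{w}}_{j,r}^\star}$: since the noise-memorization prefactor $\Theta(m\log(1/\eps))$ is not small and $p$ is small, the only slack comes from the cross-sample near-orthogonality of Gaussian noise ($\sqrt{\log d/(pd)}$ rather than $1$), so I would verify carefully that the combined factor $m \cdot (\sigma_0\sigma_n\sqrt{pd})^{q-1} \cdot mn\log(1/\eps)\sqrt{\log d/(pd)}$ remains $O(1)$ under our choice of $\sigma_0$ — the algebra is routine but the interaction with small $p$ is what makes the bound delicate, and it is precisely this step that distinguishes the over-pruning noise-memorization analysis from its mild-pruning signal-learning counterpart.
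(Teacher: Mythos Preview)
Your proposal is correct and follows essentially the same approach as the paper's own proof: bound $\inprod{\widetilde{\mathbf{w}}_{j,r}^{(t)}, \boldsymbol{\xi}_i}$ via the off-diagonal correlation estimates (the paper re-derives this inline rather than citing Lemma~\ref{lemma: offdiagonal_upperbound}), bound $\inprod{\widetilde{\mathbf{w}}_{j,r}^\star, \boldsymbol{\xi}_i}$ using exactly the cross-sample near-orthogonality observation you highlight (that $j\neq y_i$ forces every $i'$ in the memorization sum to differ from $i$), and then multiply out and sum over $r$ to obtain $m\cdot O((\sigma_0\sigma_n\sqrt{pd\log d})^q)\le O(1)$.

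The only difference is that the paper silently drops the signal term $\sum_r \sigma'(\inprod{\widetilde{\mathbf{w}}_{j,r}^{(t)}, \boldsymbol{\mu}_{y_i}})\inprod{\boldsymbol{\mu}_{y_i}, \widetilde{\mathbf{w}}_{j,r}^\star}$ from the expansion of $\inprod{\nabla F_j, \widetilde{\mathbf{W}}_j^\star}$, writing the gradient inner product as the noise part only. Your explicit treatment of that term is in fact more careful: the over-pruning initialization lemma guarantees $(\mathbf{m}_{j,r})_j=0$ for all $r$, but not $(\mathbf{m}_{j,r})_{y_i}=0$ when $y_i\neq j$, so the signal term need not vanish identically, and your bound on it via $\widetilde{O}((\sigma_0\mu)^{q-1})\cdot\bigl(\widetilde{O}(\sigma_0\mu)+\widetilde{O}(mn\log(1/\eps)\mu/(\sigma_n pd))\bigr)$ is the right way to dispose of it.
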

\begin{proof}
We first compute $\inprod{\widetilde{\mathbf{w}}_{j,r}^\star, \boldsymbol{\xi}_i} = \inprod{\widetilde{\mathbf{w}}^{(0)}_{j,r}, \boldsymbol{\xi}_i} + \Theta(m\log(1/\eps)) \sum_{i=1}^n \mathbb{I}(j = y_i) \frac{\inprod{\widetilde{\boldsymbol{\xi}}_{j,r,i}, \boldsymbol{\xi}_i}}{\norm{\widetilde{\boldsymbol{\xi}}_{j,r,i}}_2^2} = O(\sigma_0 \sigma_n \sqrt{pd \log d})$.
Further, 
\begin{align*}
    &\inprod{\widetilde{\mathbf{w}}^{(t)}_{j,r}, \boldsymbol{\xi}_i} \\
    &= \inprod{\widetilde{\mathbf{w}}_{j,r}^{(0)}, \boldsymbol{\xi}_i} + \sum_{k \neq j} \gamma_{j,r,k}^{(t)} \frac{\inprod{\boldsymbol{\mu}_k, \widetilde{\boldsymbol{\xi}}_{j,r,i}}}{\mu^2} + \sum_{i=1}^n \zeta_{j,r,i}^{(t)} \frac{{\inprod{\widetilde{\boldsymbol{\xi}}_{j,r,i}, \boldsymbol{\xi}_i}}}{\norm{\widetilde{\boldsymbol{\xi}}_{j,r,i}}_2^2} + \sum_{i=1}^n \omega_{j,r,i}^{(t)} \frac{{\inprod{\widetilde{\boldsymbol{\xi}}_{j,r,i}, \boldsymbol{\xi}_i}}}{\norm{\widetilde{\boldsymbol{\xi}}_{j,r,i}}_2^2} \\
    &\leq O(\sigma_0 \sigma_n \sqrt{pd \log d}),
\end{align*}
where the inequality follows from Lemma \ref{lemma: noise_correlation} and Lemma \ref{lemma: coefficient_upper_bound}.
Thus, we have
\begin{align*}
    \inprod{\nabla F_j(\widetilde{\mathbf{W}}_j^{(t)}, \mathbf{x}_i), \widetilde{\mathbf{W}}_j^\star} &= \sum_{r=1}^m \sigma'\left( \inprod{\widetilde{\mathbf{w}}_{j,r}^{(t)}, \boldsymbol{\xi}_i} \right) \inprod{\widetilde{\boldsymbol{\xi}}_{j,r,i}, \widetilde{\mathbf{w}}_{j,r}^\star} \\
    &\leq m O\left( \sigma_0 \sigma_n \sqrt{pd \log d} \right)^{q} \\
    &\leq O(1),
\end{align*}
where the last inequality follows from our choice of {$\sigma_0 \leq \widetilde{O}(m^{-1/q} \mu^{-1})$}. 
\end{proof}

\begin{lemma}
Under the same assumption as Theorem \ref{thm: main_thm_over_pruning}, we have
\begin{align*}
    \norm{\mathbf{W}^{(t)} - \mathbf{W}^\star}_F^2 - \norm{\mathbf{W}^{(t+1)} - \mathbf{W}^\star}_F^2 \geq C\eta L_S(\widetilde{\mathbf{W}}^{(t)}) - \eta \eps .
\end{align*}
\end{lemma}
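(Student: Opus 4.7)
The plan is to mirror the proof of the analogous Lemma in the mild pruning section (the one bounding $\norm{\widetilde{\mathbf{W}}^{(t)} - \widetilde{\mathbf{W}}^\star}_F^2 - \norm{\widetilde{\mathbf{W}}^{(t+1)} - \widetilde{\mathbf{W}}^\star}_F^2$), adapted so that $\widetilde{\mathbf{W}}^\star$ now encodes noise memorization rather than signal learning. First, I would expand the one-step change using the gradient descent update $\widetilde{\mathbf{W}}^{(t+1)} = \widetilde{\mathbf{W}}^{(t)} - \eta \nabla L_S(\widetilde{\mathbf{W}}^{(t)}) \odot \mathbf{M}$ (the mask on the reference $\widetilde{\mathbf{W}}^\star$ preserves the inner product since $\mathbf{M}$ is a $\{0,1\}$-mask consistent on both sides), obtaining
\begin{align*}
    \norm{\widetilde{\mathbf{W}}^{(t)} - \widetilde{\mathbf{W}}^\star}_F^2 - \norm{\widetilde{\mathbf{W}}^{(t+1)} - \widetilde{\mathbf{W}}^\star}_F^2 = 2\eta \inprod{\nabla L_S(\widetilde{\mathbf{W}}^{(t)})\odot \mathbf{M}, \widetilde{\mathbf{W}}^{(t)} - \widetilde{\mathbf{W}}^\star} - \eta^2 \norm{\nabla L_S(\widetilde{\mathbf{W}}^{(t)})\odot \mathbf{M}}_F^2.
\end{align*}

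Next, I would exploit the $q$-homogeneity of the polynomial-ReLU network, namely $\inprod{\nabla F_j(\widetilde{\mathbf{W}}_j^{(t)}, \mathbf{x}_i), \widetilde{\mathbf{W}}_j^{(t)}} = q F_j(\widetilde{\mathbf{W}}_j^{(t)}, \mathbf{x}_i)$, to rewrite the inner product as $\tfrac{1}{n}\sum_i \sum_j \ell_{j,i}'^{(t)}[q F_j(\widetilde{\mathbf{W}}^{(t)}; \mathbf{x}_i) - \widehat{F}_j^{(t)}(\mathbf{x}_i)]$ where $\widehat{F}_j^{(t)}(\mathbf{x}_i) := \inprod{\nabla F_j(\widetilde{\mathbf{W}}_j^{(t)}, \mathbf{x}_i), \widetilde{\mathbf{W}}_j^\star}$. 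Convexity of the softmax cross-entropy loss in its logits then yields a lower bound of the form $\tfrac{2q\eta}{n}\sum_i[\ell(\widetilde{\mathbf{W}}^{(t)}; \mathbf{x}_i, y_i) - \log(1 + \sum_{j \neq y_i} e^{(\widehat{F}_j^{(t)} - \widehat{F}_{y_i}^{(t)})/q})]$.

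Third, I would invoke the preceding lemma in the over-pruning section that gives $\widehat{F}_{y_i}^{(t)}(\mathbf{x}_i) - \widehat{F}_j^{(t)}(\mathbf{x}_i) \geq q \log(2qK/\eps)$ for all $j \neq y_i$ (this is where the construction of $\widetilde{\mathbf{W}}^\star$ from the noise-memorization direction enters), which collapses the correction term to $\log(1 + K e^{-\log(2qK/\eps)}) \leq \eps/(2q)$ by the $\log(1+x)\leq x$ inequality. Combining, the inner-product contribution is at least $2q\eta L_S(\widetilde{\mathbf{W}}^{(t)}) - \eta \eps$. Finally, I would dispose of the $-\eta^2 \norm{\nabla L_S \odot \mathbf{M}}_F^2$ term using an over-pruning analogue of the gradient upper bound (Lemma \ref{lemma: grad_upper_bound}); since that lemma's proof only used the homogeneity of $F_j$, the logit inequality $|\ell_{j,i}'| \leq |\ell_{y_i,i}'|$, the property $\sum_{j\neq y_i}|\ell_{j,i}'| = |\ell_{y_i,i}'|$, and an $O(1)$ bound on off-class $F_j$ which still holds in the over-pruning regime by \Cref{prop: coefficient_upper_lower_bound} and \Cref{eq: important_simplification}, the bound $\norm{\nabla L_S(\widetilde{\mathbf{W}}^{(t)})\odot \mathbf{M}}_F^2 \leq O(K m^{2/q} \max\{\mu^2, \sigma_n^2 pd\}) L_S(\widetilde{\mathbf{W}}^{(t)})$ carries over, and choosing $\eta$ small enough absorbs this term into half of the $2q\eta L_S$ contribution, giving the claimed bound with some constant $C$.

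The main obstacle I anticipate is verifying that the hypotheses of the previous lemma $\widehat{F}_{y_i}^{(t)} - \widehat{F}_j^{(t)} \geq q\log(2qK/\eps)$ persist throughout the converging phase, not merely at $t = T_1$. Since $\widetilde{\mathbf{W}}^\star$ in the over-pruning construction depends on the specific noise directions $\widetilde{\boldsymbol{\xi}}_{j,r,i}$ of the training set, I must propagate the signal-noise decomposition bounds (i.e., $\zeta_{j,r,i}^{(t)}$ remains large once it has grown past $m^{-1/q}$, while $\omega_{j,r,i}^{(t)}$, $|\gamma_{j,r,k}^{(t)}|$ stay small in magnitude) throughout $[T_1, T_2]$ via an induction analogous to that at the end of the mild pruning converging phase, using the loss telescope $\sum_s L_S(\widetilde{\mathbf{W}}^{(s)}) \leq O(\norm{\widetilde{\mathbf{W}}^{(T_1)} - \widetilde{\mathbf{W}}^\star}_F^2 / \eta)$ from Lemma \ref{lemma: weight_change_upper_noise_mem} to control the cumulative growth of $\omega$ and $\gamma$ coefficients.
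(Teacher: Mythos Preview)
Your proposal is correct and matches the paper's approach essentially step for step: the paper explicitly states that the proof is exactly the same as that of Lemma~\ref{lemma: weight_change_grad_bound}, expanding the one-step distance change, using $q$-homogeneity and convexity of softmax cross-entropy, invoking the over-pruning cross-difference lower bound lemma, and then absorbing the squared-gradient term via Lemma~\ref{lemma: grad_upper_bound}. The obstacle you flag (persistence of the $\widehat{F}_{y_i}-\widehat{F}_j$ gap throughout $[T_1,T_2]$) is already handled by the two preceding lemmas in the over-pruning section, which are stated for all $T_1\le t\le T^\star$, so no additional induction is needed here.
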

\begin{proof}
To simplify our notation, we define $\widehat{F}_j^{(t)}(\mathbf{x}_i) = \inprod{\nabla F_j(\widetilde{\mathbf{W}}_j^{(t)}, \mathbf{x}_i), \widetilde{\mathbf{W}}_j^\star}$. 
The proof is exactly the same as the proof of Lemma \ref{lemma: weight_change_grad_bound}. 
\begin{align*}
    & \norm{\widetilde{\mathbf{W}}^{(t)} - \widetilde{\mathbf{W}}^\star}_F^2 - \norm{\widetilde{\mathbf{W}}^{(t+1)} - \widetilde{\mathbf{W}}^\star}_F^2 \\
    &= 2\eta \inprod{\nabla L_S(\widetilde{\mathbf{W}}^{(t)}) \odot \mathbf{M}, \widetilde{\mathbf{W}}^{(t)} - \widetilde{\mathbf{W}}^\star} - \eta^2 \norm{\nabla L_S(\widetilde{\mathbf{W}}^{(t)}) \odot \mathbf{M}}_F^2 \\
    &= \frac{2\eta}{n} \sum_{i=1}^n \sum_{j=1}^K \ell_{j,i}'^{(t)} \left[q F_j(\widetilde{\mathbf{W}}_j^{(t)}; \mathbf{x}_i, y_i) - \inprod{\nabla F_j(\widetilde{\mathbf{W}}_j^{(t)}, \mathbf{x}_i), \widetilde{\mathbf{W}}_j^\star} \right] - \eta^2 \norm{\nabla L_S(\widetilde{\mathbf{W}}^{(t)}) \odot \mathbf{M}}_F^2 \\
    &\geq \frac{2q\eta}{n} \sum_{i=1}^n \left[ \log (1 + \sum_{j = 1}^K e^{F_j - F_{y_i}}) - \log (1 + \sum_{j=1}^K e^{(\hat{F}_j - \hat{F}_{y_i})/q}) \right] - \eta^2 \norm{\nabla L_S(\widetilde{\mathbf{W}}^{(t)}) \odot \mathbf{M}}_F^2 \\
    &\geq \frac{2q\eta}{n} \sum_{i=1}^n \left[  \ell(\widetilde{\mathbf{W}}^{(t)}; \mathbf{x}_i, y_i) - \log (1 + K e^{- \log (2qK/\eps)}) \right] - \eta^2 \norm{\nabla L_S(\widetilde{\mathbf{W}}^{(t)}) \odot \mathbf{M}}_F^2 \\
    &\geq \frac{2q\eta}{n} \sum_{i=1}^n \left[ \ell(\widetilde{\mathbf{W}}^{(t)}; \mathbf{x}_i, y_i) - \frac{\eps}{2q} \right] - \eta^2 \norm{\nabla L_S(\widetilde{\mathbf{W}}^{(t)}) \odot \mathbf{M}}_F^2 \\
    &\geq C \eta L_S(\widetilde{\mathbf{W}}^{(t)}) - \eta \eps,
\end{align*}
where the first inequality follows from the convexity of the cross-entropy loss with softmax, the second inequality follows from Lemma \ref{lemma: cross_difference_lower_bound}, the third inequality follows because $\log (1+x) \leq x$, and the last inequality follows from Lemma \ref{lemma: grad_upper_bound} for some constant $C>0$. 
\end{proof}

\begin{lemma}[Formal Restatement of \Cref{lemma: noise_mem_phase2_informal}]\label{lemma: noise_mem_phase2}
Under the same assumption as Theorem \ref{thm: main_thm_over_pruning}, choose $T_2 = T_1 + \ceil{\frac{\norm{\widetilde{\mathbf{W}}^{(T_1)} - \widetilde{\mathbf{W}}^\star}_F^2}{2\eta \eps}} = T_1 + \widetilde{O}(\eta^{-1} \eps^{-1} m^4 n \sigma_n^{-2} (pd)^{-1})$. 
Then for any time $t$ during this stage we have $\max_{j,r} |\omega_{j,r,i}^{(t)}| = O(\sigma_0 \sqrt{pd})$ and 
\begin{align*}
    \frac{1}{t - T_1} \sum_{s = T_1}^t L_S(\widetilde{\mathbf{W}}^{(s)}) \leq \frac{\norm{\widetilde{\mathbf{W}}^{(T_1)} - \widetilde{\mathbf{W}}^\star}_F^2}{C\eta(t - T_1)} + \frac{\eps}{C}.
\end{align*}
\end{lemma}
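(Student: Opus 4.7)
The plan is to mirror the converging-phase argument developed in the mild pruning case (the formal restatement of Lemma~\ref{lemma: phase2_mild_informal}), adapted to the over-pruning setting where $\widetilde{\mathbf{W}}^\star$ is defined in terms of noise directions rather than signal directions. The averaged-loss bound is an immediate consequence of telescoping the one-step progress inequality; the coefficient bound on $\omega_{j,r,i}^{(t)}$ will require an induction that leverages this loss bound.

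\textbf{Step 1: telescoping to control the averaged training loss.} By the preceding one-step lemma (over-pruning analogue of Lemma~\ref{lemma: weight_change_grad_bound}) we have, for all $T_1 \leq s < T_2$,
\begin{align*}
    \norm{\widetilde{\mathbf{W}}^{(s)} - \widetilde{\mathbf{W}}^\star}_F^2 - \norm{\widetilde{\mathbf{W}}^{(s+1)} - \widetilde{\mathbf{W}}^\star}_F^2 \geq C\eta L_S(\widetilde{\mathbf{W}}^{(s)}) - \eta \eps.
\end{align*}
Telescoping from $s = T_1$ to $s = t$ and dropping the nonnegative $\norm{\widetilde{\mathbf{W}}^{(t+1)} - \widetilde{\mathbf{W}}^\star}_F^2$ on the left yields
\begin{align*}
    \sum_{s = T_1}^t L_S(\widetilde{\mathbf{W}}^{(s)}) \leq \frac{\norm{\widetilde{\mathbf{W}}^{(T_1)} - \widetilde{\mathbf{W}}^\star}_F^2}{C\eta} + \frac{\eps(t - T_1)}{C},
\end{align*}
which after dividing by $t - T_1$ gives the claimed averaged-loss bound. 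Combining this with the Frobenius-distance estimate $\norm{\widetilde{\mathbf{W}}^{(T_1)} - \widetilde{\mathbf{W}}^\star}_F^2 = \widetilde{O}(m^4 n \sigma_n^{-2} (pd)^{-1} \log^2(1/\eps))$ from Lemma~\ref{lemma: weight_change_upper_noise_mem}, the choice $T_2 - T_1 = \lceil \norm{\widetilde{\mathbf{W}}^{(T_1)} - \widetilde{\mathbf{W}}^\star}_F^2 / (2\eta\eps) \rceil$ forces the averaged loss below $\eps$, and by a pigeonhole argument some iterate $t \in [T_1, T_2]$ achieves $L_S(\widetilde{\mathbf{W}}^{(t)}) \leq \eps$. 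This also delivers the cumulative loss budget $\sum_{s=T_1}^{T_2} L_S(\widetilde{\mathbf{W}}^{(s)}) = \widetilde{O}(\eta^{-1} m^4 n \sigma_n^{-2} (pd)^{-1})$ that we use in the next step.

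\textbf{Step 2: inductive bound on the noise coefficients.} Set $\beta = \widetilde{\Theta}(\sigma_0 \sigma_n \sqrt{pd})$ so that $|\omega_{j,r,i}^{(T_1)}| \leq \beta$ by Lemma~\ref{lemma: noise_mem_phase1}. Define $\Omega^{(t)} = \max_{j,r,i}|\omega_{j,r,i}^{(t)}|$ and assume inductively that $\Omega^{(s)} \leq 2\beta$ for all $T_1 \leq s \leq t'$. Using the update formula for $\omega_{j,r,i}^{(t)}$, Lemma~\ref{lemma: noise_correlation} to bound $\norm{\widetilde{\boldsymbol{\xi}}_{j,r,i}}_2^2 = \Theta(\sigma_n^2 pd)$, and Lemma~\ref{lemma: offdiagonal_upperbound} together with the key property $|\ell_{j,i}'^{(s)}| \leq |\ell_{y_i,i}'^{(s)}| \leq \ell_i^{(s)}$ of cross-entropy loss, one obtains
\begin{align*}
    \Omega^{(t'+1)} \leq \Omega^{(T_1)} + \frac{\eta}{n} O(\sigma_n^2 pd \cdot \poly\log d) \cdot \beta^{q-1} \sum_{s=T_1}^{t'} \sum_{i=1}^n \ell_i^{(s)}.
\end{align*}
Plugging in the cumulative loss budget from Step~1 turns the right-hand side into $\beta + \widetilde{O}(m^4) \beta^{q-1}$. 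Since our choice of $\sigma_0 = \widetilde{\Theta}(m^{-4} n^{-1} \mu^{-1})$ makes $\widetilde{O}(m^4)\beta^{q-2} \leq 1$, the bound is at most $2\beta$, closing the induction. This yields $\max_{j,r,i}|\omega_{j,r,i}^{(t)}| = O(\sigma_0 \sqrt{pd})$ (up to polylog factors) throughout $[T_1, T_2]$, as claimed.

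\textbf{Main obstacle.} The subtle step is Step~2: the naive bound on the per-iteration growth of $\omega_{j,r,i}^{(t)}$ is too loose because it contains $|\ell_{j,i}'^{(t)}|$, whose maximum over $j$ is hard to control directly. The crucial trick is the cross-entropy identity $\sum_{j\neq y_i} |\ell_{j,i}'^{(t)}| = |\ell_{y_i,i}'^{(t)}| \leq \ell_i^{(t)}$, which lets us convert the sum of per-iteration increments into the cumulative training loss and then invoke the tight telescoping bound of Step~1. One must also verify that the polynomial-ReLU derivative $\sigma'(\inprod{\widetilde{\mathbf{w}}_{j,r}^{(t)}, \boldsymbol{\xi}_i})$ does not blow up along the trajectory, which follows from Lemma~\ref{lemma: offdiagonal_upperbound} in the off-diagonal case $j \neq y_i$ (so $\omega_{j,r,i}^{(t)}$ is the only relevant coefficient) and from the inductive hypothesis combined with \Cref{eq: important_simplification} otherwise.
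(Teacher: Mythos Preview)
Your Step~1 is exactly what the paper does: invoke the one-step progress inequality, telescope, and combine with Lemma~\ref{lemma: weight_change_upper_noise_mem} to get the cumulative loss budget $\sum_{s=T_1}^{t} L_S(\widetilde{\mathbf{W}}^{(s)}) = \widetilde{O}(\eta^{-1} m^4 n \sigma_n^{-2} (pd)^{-1})$.

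Your Step~2, however, is a different and more laborious route than the paper takes. The paper's proof of this lemma contains \emph{no} separate argument for the $\omega$ bound at all: it simply relies on Proposition~\ref{prop: coefficient_upper_lower_bound}, which was already established to hold uniformly for every $t \leq T^\star$, and which directly gives $|\omega_{j,r,i}^{(t)}| \leq \beta + 6Cn\alpha\sqrt{\log d/(pd)} = \widetilde{O}(\sigma_0 \sigma_n \sqrt{pd})$ (exactly as in the proof of Lemma~\ref{lemma: noise_mem_phase1}). Since $T_2 \leq T^\star$, nothing further is needed. Your induction---tracking $\Omega^{(t)}$ and feeding the cumulative loss budget back in, mirroring the $\Psi^{(t)}$ argument from the mild-pruning converging phase---is a valid alternative, but it is redundant here: the uniform-in-time Proposition~\ref{prop: coefficient_upper_lower_bound} already does the job without reopening any induction. (A minor arithmetic slip: plugging in the loss budget gives $\widetilde{O}(m^4 n)\beta^{q-1}$, not $\widetilde{O}(m^4)\beta^{q-1}$, though the conclusion $\widetilde{O}(m^4 n)\beta^{q-2} \leq 1$ still holds under the stated choice of $\sigma_0$.) What your approach \emph{would} buy is independence from Proposition~\ref{prop: coefficient_upper_lower_bound}---but note that the one-step lemma you use in Step~1 itself depends on the gradient upper bound (Lemma~\ref{lemma: grad_upper_bound}), which in turn uses Lemma~\ref{lemma: offdiagonal_upperbound} and hence Proposition~\ref{prop: coefficient_upper_lower_bound}, so that independence is not actually achieved.
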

\begin{proof}
We have 
\begin{align*}
    \norm{\widetilde{\mathbf{W}}^{(s)} - \widetilde{\mathbf{W}}^\star}_F^2 - \norm{\widetilde{\mathbf{W}}^{(s+1)} - \widetilde{\mathbf{W}}^\star}_F^2 &\geq C\eta L_S(\widetilde{\mathbf{W}}^{(s)}) - \eta \eps .
\end{align*}
Taking a telescopic sum from $T_1$ to $t$ yields
\begin{align*}
    \sum_{s = T_1}^t L_S(\widetilde{\mathbf{W}}^{(s)}) \leq \frac{\norm{\widetilde{\mathbf{W}}^{(T_1)} - \widetilde{\mathbf{W}}^\star}_F^2 + \eta \eps (t - T_1)}{C \eta}.
\end{align*}
Combining Lemma \ref{lemma: weight_change_upper_noise_mem}, we have
\begin{align*}
    \sum_{s = T_1}^t L_S(\widetilde{\mathbf{W}}^{(s)}) \leq O(\eta^{-1} \norm{\widetilde{\mathbf{W}}^{(T_1)} - \widetilde{\mathbf{W}}^\star}_F^2) = \widetilde{O}(\eta^{-1} m^4 n \sigma_n^{-2} (pd)^{-1}).
\end{align*}
\end{proof}

\subsection{Generalization Analysis}
\begin{theorem}[Formal Restatement of the Generalization Part of \Cref{thm: main_thm_over_pruning_informal}]
Under the same assumption as Theorem \ref{thm: main_thm_over_pruning}, within $O(\eta^{-1} n \sigma_0^{q-2} \sigma_n^{-q} (pd)^{-q/2} + \eta^{-1} \eps^{-1} m^4 n \sigma_n^{-2} (pd)^{-1})$ iterations, we can find $\widetilde{\mathbf{W}}^{(T)}$ such that $L_S(\widetilde{\mathbf{W}}^{(T)}) \leq \eps$, and $L_\mathcal{D}(\widetilde{\mathbf{W}}^{(t)}) \geq \Omega(\log K)$.
\end{theorem}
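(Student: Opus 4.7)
The training-loss portion $L_S(\widetilde{\mathbf{W}}^{(T)}) \le \eps$ follows directly from \Cref{lemma: noise_mem_phase2}: averaging over the window $[T_1, T_2]$ with $T_2 - T_1 = \widetilde O(\eta^{-1}\eps^{-1} m^4 n \sigma_n^{-2} (pd)^{-1})$ gives $\frac{1}{T_2 - T_1}\sum_s L_S(\widetilde{\mathbf{W}}^{(s)}) \le \eps/C + o(\eps)$, so selecting the best iterate in this window yields $L_S(\widetilde{\mathbf{W}}^{(T)}) \le \eps$. Combined with the $T_1$ bound from \Cref{lemma: noise_mem_phase1}, the claimed iteration complexity follows.

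For the generalization lower bound, the key structural fact is that in the over-pruning regime $p = \Theta(1/(Km\log d))$ one has $|\mathcal{S}_{\textnormal{signal}}^j| = 0$ for every class $j$ with probability $1 - O(1/\log d)$. Consequently the update rule for $\gamma_{j,r,j}^{(t)}$ carries the factor $\mathbb{I}(r \in \mathcal{S}_{\textnormal{signal}}^j) = 0$ at every step, so $\gamma_{j,r,j}^{(t)} \equiv 0$ throughout training: the network \emph{never} learns any class signal and survives only by memorising training noise through the $\zeta_{j,r,i},\omega_{j,r,i}$ coefficients. This is what turns low training error into bad test error.

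I would then show that on a fresh test point $(\mathbf{x},y)\sim\mathcal{D}$ with independent noise $\boldsymbol{\xi}$, the score $F_j(\widetilde{\mathbf{W}}^\star, \mathbf{x})$ is $o(1)$ uniformly in $j\in[K]$. For the signal contribution, the decomposition of $\widetilde{\mathbf{w}}_{j,r}^\star$ together with $\gamma_{j,r,j}^\star = 0$, the cross-class bound $|\gamma_{j,r,k}^\star| = \widetilde O(\sigma_0\mu)$ from \Cref{prop: coefficient_upper_lower_bound}, the Gaussian initialization bound $|\langle\widetilde{\mathbf{w}}_{j,r}^{(0)},\boldsymbol{\mu}_y\rangle|=\widetilde O(\sigma_0\mu)$ from \Cref{lemma: initialization_max_signal}, and the signal-noise overlap estimate $|\langle\widetilde{\boldsymbol{\xi}}_{j,r,i},\boldsymbol{\mu}_y\rangle|/\|\widetilde{\boldsymbol{\xi}}_{j,r,i}\|_2^2 = O(\mu\sqrt{\log d}/(\sigma_n pd))$ from \Cref{lemma: noise_correlation} combine (after plugging in $\sigma_0 = \widetilde\Theta(m^{-4}n^{-1}\mu^{-1})$ and $\mu=\Theta(\sigma_n\sqrt{d}\log d)$) to give $|\langle\widetilde{\mathbf{w}}_{j,r}^\star,\boldsymbol{\mu}_y\rangle| = o(m^{-1/q})$. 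For the noise contribution, conditional on $\widetilde{\mathbf{W}}^\star$, the inner product $\langle\widetilde{\mathbf{w}}_{j,r}^\star,\boldsymbol{\xi}\rangle$ is sub-Gaussian with parameter $\sigma_n\|\widetilde{\mathbf{w}}_{j,r}^\star\|_2$; bounding $\|\widetilde{\mathbf{w}}_{j,r}^\star\|_2 = O(\sigma_0\sqrt{pd} + \alpha/(\sigma_n\sqrt{pd}))$ through the decomposition and applying a Gaussian tail bound with a union bound over the $Km$ neurons yields $|\langle\widetilde{\mathbf{w}}_{j,r}^\star,\boldsymbol{\xi}\rangle| = o(m^{-1/q})$ with high probability. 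Summing over the $m$ neurons then gives $|F_j(\widetilde{\mathbf{W}}^\star,\mathbf{x})| = o(1)$ for every $j$.

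Finally, with all logits equal up to $o(1)$, the cross-entropy loss on a fresh sample satisfies $\ell(F(\widetilde{\mathbf{W}}^\star,\mathbf{x}),y) = \log\bigl(\sum_{j}e^{F_j - F_y}\bigr) \ge \log K - o(1)$, and taking the expectation over $\mathcal{D}$ yields $L_\mathcal{D}(\widetilde{\mathbf{W}}^{(T)}) \ge \Omega(\log K)$. The main obstacle I anticipate is the uniform control on $|\langle\widetilde{\mathbf{w}}_{j,r}^\star,\boldsymbol{\xi}\rangle|$ over all $Km$ neurons when $\boldsymbol{\xi}$ is fresh Gaussian noise but $\widetilde{\mathbf{w}}_{j,r}^\star$ is a complicated data-dependent object aligned with the \emph{training} noises; this requires combining the coefficient upper bounds from \Cref{prop: coefficient_upper_lower_bound} with the near-orthogonality of independent high-dimensional Gaussians (\Cref{lemma: noise_correlation}) to show that noise memorised from the training set does not transfer to a new noise realisation.
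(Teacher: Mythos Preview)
Your plan is correct and tracks the paper's argument closely: the training-loss bound via \Cref{lemma: noise_mem_phase2}, the weight-norm estimate through the signal--noise decomposition, and the Gaussian tail bound on $\langle\widetilde{\mathbf{w}}_{j,r}^\star,\boldsymbol{\xi}\rangle$ for a fresh $\boldsymbol{\xi}$ are exactly what the paper does. The only divergence is in the final step. You bound $F_j(\mathbf{x})=o(1)$ for \emph{every} $j$, which forces you to also control the signal term $\langle\widetilde{\mathbf{w}}_{j,r}^\star,\boldsymbol{\mu}_y\rangle$ for $j\neq y$ via the cross-class $\gamma$-bounds. The paper takes a shortcut: since $\sigma\ge 0$ gives $F_j\ge 0$ automatically, it is enough to show $F_y(\mathbf{x})\le 1$, and there the signal term is \emph{exactly} zero because $(\mathbf{m}_{y,r})_y=0$ for every $r$ on the over-pruning event; this yields $\ell\ge\log\bigl(1+(K-1)e^{-1}\bigr)=\Omega(\log K)$ in one line. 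Your route gives the slightly sharper $\log K - o(1)$ but needs the extra cross-class signal control; the paper trades a weaker constant for a shorter finish. One minor slip: your norm bound $\|\widetilde{\mathbf{w}}_{j,r}^\star\|_2=O(\sigma_0\sqrt{pd}+\alpha/(\sigma_n\sqrt{pd}))$ drops a factor of $n$ coming from the sum $\sum_i\zeta_{j,r,i}$, but since $n=\poly\log d$ this does not affect the conclusion.
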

\begin{proof}
First of all, from \Cref{lemma: noise_mem_phase2} we know there exists $t \in [T_1, T_2]$ such that $L_S(\widetilde{\mathbf{W}}^{(T)}) \leq \eps$. 
Then, we can bound
\begin{align*}
    \norm{\widetilde{\mathbf{w}}_{j,r}^{(t)}}_2 &= \norm{\widetilde{\mathbf{w}}_{j,r}^{(0)} + \sum_{k \neq j} \gamma_{j,r,k}^{(t)} \frac{\boldsymbol{\mu}_k}{\mu^2} + \sum_{i=1}^n \zeta_{j,r,i}^{(t)} \frac{\widetilde{\boldsymbol{\xi}}_{j,r,i}}{\norm{\widetilde{\boldsymbol{\xi}}_{j,r,i}}_2^2} + \sum_{i=1}^n \omega_{j,r,i}^{(t)} \frac{\widetilde{\boldsymbol{\xi}}_{j,r,i}}{\norm{\widetilde{\boldsymbol{\xi}}_{j,r,i}}_2^2}}_2 \\
    &\leq \norm{\widetilde{\mathbf{w}}_{j,r}^{(0)}}_2 + \sum_{k \neq j} |\gamma_{j,r,k}^{(t)}| \frac{1}{\mu} + 
    \sum_{i=1}^n \zeta_{j,r,i}^{(t)} \frac{1}{\norm{\widetilde{\boldsymbol{\xi}}_{j,r,i}}_2} + \sum_{i=1}^n |\omega_{j,r,i}^{(t)}| \frac{1}{\norm{\widetilde{\boldsymbol{\xi}}_{j,r,i}}_2} \\
    &\leq O(\sigma_0 \sqrt{d}) + \widetilde{O}(n \sigma_n^{-1} (pd)^{-1/2}).
\end{align*}
Consider a new example $(\mathbf{x},y)$.
Taking a union bound over $r$, with probability at least $1 - d^{-1}$, we have
\begin{align*}
    \left|\inprod{\mathbf{w}_{y,r}^{(t)}, \boldsymbol{\xi}} \right| = \widetilde{O}(\sigma_0 \sigma_n \sqrt{d} + n(pd)^{-1/2}),
\end{align*}
for all $r\in [m]$.
Then,
\begin{align*}
    F_y(\mathbf{x}) &= \sum_{r=1}^m \sigma\left( \inprod{\widetilde{\mathbf{w}}_{j,r}^{(t)}, \boldsymbol{\mu}_y} \right) + \sigma\left( \inprod{\widetilde{\mathbf{w}}_{j,r}^{(t)}, \boldsymbol{\xi}} \right) \\
    &\leq m \max_r \left| \inprod{\mathbf{w}_{y,r}^{(t)}, \boldsymbol{\xi}} \right|^q \\
    &\leq m \widetilde{O}(\sigma_0^q \sigma_n^q d^{q/2} + n^q (pd)^{-q/2}) \\
    &\leq 1,
\end{align*}
where the last inequality follows because $\sigma_0 \leq \widetilde{O}(m^{-1/q} \mu^{-1} )$ and $d \geq \widetilde{\Omega}(m^{2/q} n^2)$.
Thus, with probability at least $1 - 1/d$, 
\begin{align*}
    \ell(F(\widetilde{\mathbf{W}}^{(t)}; \mathbf{x})) \geq \log (1 + (K-1)e^{-1}).
\end{align*}
\end{proof}

\end{document}